\documentclass[lettersize,journal]{IEEEtran}
\usepackage{amsmath,amssymb}
\usepackage{amsthm}
 \usepackage{xcolor}
 \usepackage{bm}
 \usepackage{graphicx}
 \usepackage{epstopdf}
\usepackage{subfigure}
\usepackage{csquotes}
\usepackage[hidelinks]{hyperref}

\hyphenation{op-tical net-works semi-conduc-tor IEEE-Xplore}

\newtheorem{theorem}{Theorem}
\newtheorem{assumption}{Assumption}
\newtheorem{definition}{\textbf {Definition}}
\newtheorem{lemma}{Lemma}

\newtheorem{remark}{Remark}
\usepackage{enumerate}
\usepackage{enumitem}
\usepackage{cite}

\newcommand{\eat}[1]{}
\usepackage{todonotes}


\allowdisplaybreaks
\begin{document}

\title{ The Effectiveness of Local Updates for Decentralized Learning under Data Heterogeneity }

\author{Tongle Wu, Zhize Li and Ying Sun

\thanks{Tongle Wu is with the School of Electrical Engineering and Computer Science, Pennsylvania State University, USA (email: tfw5381@psu.edu).}
\thanks{Zhize Li is with the School of Computing and Information Systems, Singapore Management University, Singapore (email: zhizeli@smu.edu.sg).}
\thanks{Ying Sun is with the School of Electrical Engineering and Computer Science, Pennsylvania State University, USA (email: ybs5190@psu.edu).}
}




\maketitle

\begin{abstract}
We revisit two fundamental decentralized optimization methods, Decentralized Gradient Tracking (DGT) and Decentralized Gradient Descent (DGD), with multiple local updates. We consider two settings and demonstrate that incorporating local update steps can reduce communication complexity. Specifically, for  $\mu$-strongly convex and $L$-smooth loss functions, we proved that local DGT  achieves communication complexity {}{$\tilde{\mathcal{O}} \Big(\frac{L}{\mu(K+1)} + \frac{\delta + {}{\mu}}{\mu (1 - \rho)} + \frac{\rho }{(1 - \rho)^2} \cdot \frac{L+ \delta}{\mu}\Big)$}, 
{where $K$ is the number of additional local update}, $\rho$ measures the network connectivity and $\delta$ measures the second-order heterogeneity of the local losses. Our results reveal the tradeoff between communication and computation and show increasing $K$ can effectively reduce communication costs when the data heterogeneity is low and the network is well-connected. We then consider the over-parameterization regime where the local losses share the same minimums. We proved that employing local updates in DGD, even without gradient correction, achieves exact linear convergence under the Polyak-Łojasiewicz (PL) condition, which can yield a similar effect as DGT in reducing communication complexity. {}{Customization of the result to linear models is further provided, with improved rate expression. }Numerical experiments validate our theoretical results.
\end{abstract}

\begin{IEEEkeywords}
Decentralized optimization, data heterogeneity, communication efficiency, {}{local update.} 
\end{IEEEkeywords}

\section{Introduction}
We consider a network of $m$ agents collaboratively  solving the following \emph{deterministic} finite sum 
minimization problem:
\begin{equation}\label{problemformula}
\underset{\boldsymbol x \in \mathbb R^d}{\min} ~ \left\{f(\boldsymbol x) : = \frac 1 m \sum_{i=1}^m{f_i(\boldsymbol x)} \right\},
\end{equation}
where $\boldsymbol x \in \mathbb{R}^d$ is the decision variable shared by all agents, each $f_i: \mathbb{R}^d \to \mathbb{R}$ is the local loss function private to agent $i$.  The average loss $f$ is assumed to be $L$-smooth. 
The agents are allowed to communicate with each other via a fixed mesh network. Problem~\eqref{problemformula} has found applications in wide areas, including machine learning, signal processing, telecommunications, multi-agent control, etc. {\cite{lian2017can,DBLP:journals/ftml/BoydPCPE11,DBLP:journals/ftml/Sayed14,nedic2009distributed,jakovetic2014fast,shi2015extra,wu2024implicit} }.

Algorithms for solving~\eqref{problemformula} have been intensively studied in the literature \cite{nedic2018network}.  By collaboratively learning over the network, these algorithms aim to benefit from aggregating the resources (computation and samples) of multiple agents and attain improved training efficiency and model precision. However, due to decentralization, achieving such a goal comes at the cost of communication resources, which usually becomes the bottleneck of the algorithm for large-scale problems. Therefore, designing algorithms with reduced communication costs without sacrificing performance is paramount \cite{lian2017can}.

Among methods developed to improve communication efficiency, a well-known approach is to perform multiple local computation steps before communicating with network neighbors. This idea serves as the basis of the local SGD  algorithm~\cite{mcmahan2017communication} and has been employed as a standard practice for algorithm design in both federated learning (FL) and decentralized settings over the mesh network. Running multiple local gradient steps accelerates the optimization process. However, when the $f_i$s have heterogeneous gradients,  modifying algorithms by naively increasing the number of local updates may introduce ``client drift'', leading to non-convergence with a constant step size~\cite{woodworth2020local,woodworth2020minibatch,koloskova2020unified}.
{}{As a remedy, variance reduction (VR)/gradient tracking (GT) techniques \cite{xin2021improved,9084360, xin2021hybrid,xin2020variance,li2022destress}} 
are often employed to correct such bias. The idea leads to the renowned DGT algorithm with exact convergence~\cite{nedic2017achieving,xu2015augmented,di2016next}. Along this line, a fundamental question to understand is: 
\begin{center}
  \emph{Will increasing  local computations be beneficial in  reducing the communication overhead?}  
\end{center}
Furthermore, since these methods eliminate the impact of gradient heterogeneity, it is interesting to explore: 
\begin{center}
\emph{If, and to what degree, will the difference in the $f_i$s   affect the    computation-communication tradeoff?}    
\end{center}

Apart from the general setting where each local loss admits different minimizers, modern learning tasks often employ models with a large number of parameters. This leads to the over-parameterization regime with the models learned perfectly interpolating the training data~\cite{zhang2021understanding,ma2018power}. In this setting, problem~\eqref{problemformula} has the property that all the $f_i$s have at least one common minimizer, implying that the gradient heterogeneity vanishes at shared optimums. This motivates us to revisit the two questions above for the DGD algorithm without bias correction, which still achieves exact convergence~\cite{qin2022decentralized}.

Common to all these scenarios, the algorithms either do not suffer from or actively eliminate the impact of gradient (first-order) heterogeneity on the convergence. A natural conjecture is that under these cases, the difference in the second-order derivatives of the $f_i$s will dominantly affect the convergence rate. This is the central question we investigate in this work.
\vspace{-0.35cm}
\subsection{Summary of contributions}
In this paper, we study the above questions for the DGT and DGD algorithms with additional $K$ steps of local update, whose vanilla versions perform only one gradient update ($K=0$) followed by network communication. We consider two settings: (i) when the average loss $f$ is $\mu$-strongly convex or satisfies the PL condition, and (ii) the over-parameterization regime where all the $f_i$s share at least one minimizer and $f$ {}{satisfies} PL condition with parameter $\mu$ and $\beta$-weakly convex. Our contributions are: \\[1ex]
\noindent$\bullet$ 
In setting (i) with strongly convex $f$, we proved local DGT (DGT with properly incorporated local computations) converges to an $\varepsilon$-optimal solution linearly with communication complexity {}{${\mathcal{O}} \Big( \Big(\frac{L}{\mu (K+1)} + \frac{\delta + {}{\mu}}{\mu (1 - \rho)} + \frac{\rho }{(1 - \rho)^2} \cdot \frac{L+ \delta}{\mu}\Big) \cdot \log \frac{1}{\epsilon}\Big)$}\footnote{${\mathcal{O}} (\cdot)$ hides universal constant}, 
where  $\rho$ measures the network connectivity, $K$ is the {}{additional} number of local updates per communication round, and  $\delta$ measures the \emph{second-order heterogeneity} of the $f_i$s (cf. Assumption~\ref{hessian}). Our result reveals when the heterogeneity degree $\delta$ is small and the network is well-connected, increasing local updates $K$ can significantly reduce communication overhead. The result is further generalized for PL objectives. \\[1ex]   
 \noindent$\bullet$ In setting (ii), we proved when the degree of heterogeneity is small such that $\zeta:= 1 - (\delta/\mu)^2 \in (0,1)$, local DGD converges linearly with communication complexity  
 $\tilde {\mathcal{O}} \left( \frac{L}{\mu {}{(K+1) \zeta}}  + \frac{1}{1-\rho}  + \frac{\beta + \rho^2 L}{\mu (1-\rho)^2 {}{\zeta^2}}  \right)$.
 As such, increasing local updates is effective when 
 $f$ is (near)-convex, and the network is well-connected. \\[1ex]   
\noindent$\bullet$ We further improved the result for setting (ii) under the linear regression model. For arbitrary bounded $\delta$, it takes local DGD $\tilde{\mathcal{O}} \left( \frac{L}{\mu {}{(K+1)}} + \frac{\delta^2}{\mu^2 (1-\rho)}\right)$ communication rounds to reach the minimizer with minimum $\ell_2$-norm, where $\mu$ and $L$ are respectively the smallest and largest nonzero eigenvalue of the {}{empirical} covariance matrix.
\subsection{Related works}
In recent years, the design and analysis of federated/decentralized learning algorithms have been extensively explored in different settings. Many of them aim to show the algorithm of interest is convergent when employing multiple local computation steps. In the stochastic setting, there are some related works proving that performing local updates can reduce the steady-state error terms related to the gradient sampling variance, we refer the readers to the expression of these convergence rates reported in~\cite{koloskova2020unified,huang2023computation,liu2024decentralized}. Essentially, such a speed up is due to the reduction of variance when averaging independent stochastic gradients both across the iterations and the network agents. A majority of them, however, cannot show the benefit of local updates to the \emph{transient terms}{}{~\cite{chen2021accelerating}.} 
The drawback is two-fold. First, when particularized to the deterministic setting, they fail to address the critical algorithm design question of whether applying local updates can accelerate communication. 

Second, in the stochastic setting, they cannot show algorithms of local SGD type is more communication efficient than the simple minibatch SGD baseline~\cite{woodworth2020local,woodworth2020minibatch,mishchenko2022proxskip}. 
In the following, we review and contrast our results to those managed to show local updates are beneficial in reducing the communication complexity either in the deterministic setting or transient terms in the stochastic setting, which are far less.

In the FL setting where a center node is connected to and coordinates all worker nodes, local GD/SGD~\cite{mcmahan2017communication} has been among the most popular algorithms employing local updates in between communication rounds. In the presence of gradient heterogeneity, it is known that local GD and its variations cannot achieve exact convergence with a constant\footnote{The step size does not depend on the optimization horizon.} step size due to ``client drift''~\cite{karimireddy2020scaffold,woodworth2020minibatch}. With a diminishing step size, \cite{woodworth2020minibatch} is the first to show local SGD improves over minibatch SGD  with each $f_i$ being convex. To mitigate the impact of gradient heterogeneity, 
 VR and primal-dual-based methods correct the gradient direction, leading to exact convergence with constant step size, see e.g.~\cite{karimireddy2020scaffold,haddadpour2021federated,mishchenko2022proxskip}. Provable benefit of performing multiple local updates 
 while guaranteeing the exact convergence is established only in restricted settings, such as requiring each $f_i$ to be strongly convex and smooth ~\cite{mishchenko2022proxskip} or quadratic~\cite{karimireddy2020scaffold}. Our result when particularized to the FL setting, incorporates a larger family of problems and recovers/improves existing results, see discussions in Sec.~\ref{sec:compare-literature}.

In the decentralized setting with a mesh communication network,  
it is known that even the vanilla DGD algorithm suffers from non-convergence under gradient heterogeneity~\cite{yuan2016convergence}. Despite being standard techniques to overcome the issue, algorithms employing gradient correction techniques with proactively designed communication-computation pattern {}{are} discussed in only a few recent works {\cite{mishchenko2022proxskip,nguyen2023performance,huang2023computation,liu2024decentralized,ge2023gradient,alghunaim2024local} }. 
These results show that in the stochastic setting, increasing local updates can reduce the steady-state stochastic error but not the transient terms.   
The only exception we are aware of is Scaffnew~\cite{mishchenko2022proxskip}, which shows provable benefit assuming each $f_i$ is strongly convex and smooth.

When it comes to overparameterized objective functions, the performance of DGD has been studied in~\cite{koloskova2020unified,qin2022decentralized}.  Exact linear convergence is shown in~\cite{koloskova2020unified}  assuming each $f_i$ is strongly convex {}{with a common minimizer}, with communication complexity $\Tilde{\mathcal O} \left( \frac{L}{\mu(1-\rho)}\right)$. The result is further improved to $\Tilde{\mathcal O}\left( \frac{L}{\mu K}\right)$ in~\cite{qin2022decentralized} under the same condition, showing network-independent complexity. However, it is worth noting that in the case of\cite{qin2022decentralized}, all $f_i$s will share the same unique minimizer, implying that the agents can just solve their local problems independently without the network. 
On the contrary, we analyzed the convergence under a more general PL condition on the \emph{average loss} $f$, but allowing local $f_i$s to have different sets of minimizers. Consequently, our problem structure is fundamentally different from~\cite{koloskova2020unified} and~\cite{qin2022decentralized}, and network communication is necessary to find a consensus solution.

Finally, we note that there exist works considering distributed optimization in solving some overparameterized problem instances. For example, a compression-based algorithm was proposed in \cite{song2022distributed}, achieving problem dimension-independent communication complexity. A communication-efficient distributed algorithm was developed in~\cite{khanduri2021decentralized} for a class of overparameterized kernel learning.
Linear convergence of local GD/SGD for overparameterized neural networks has been established in \cite{deng2022local,song2023fedavg}. These results are complementary to ours, as they either study under special problem instances, propose new methods, or prove convergence without revealing the benefit of local updates. In contrast, we aim to understand the usefulness of local updates for the arguably most widely used distributed algorithms: DGD and DGT.

\vspace{0.2cm}
\noindent\emph{Notations.} We use lowercase bold letters (e.g., $\boldsymbol x, \boldsymbol y$) and capital bold letters (e.g., $\boldsymbol A, \boldsymbol B$) to denote vectors and matrices, respectively.  $\|\cdot\|$ and $\|\cdot\|_2$ respectively denotes the Frobenius norm and spectral norm (maximum singular value) of a matrix. Denote $[m] \triangleq \{1, 2, \cdots, m\}$ and $\boldsymbol I_m \in \mathbb R^{m\times m}$ is the identity matrix. We write $a_n \lesssim b_n$ if there exists a universal constant $C$ such that $a_n \leq Cb_n$ and $a_n = \mathcal O(b_n)$ have the same meaning. Moreover, $a_n = \Tilde{\mathcal O}(b_n)$ means $a_n \leq Cb_n$ up to some polylogarithmic factors. 


\section{Preliminaries: DGD/DGT with local updates}\label{Pre}

We consider solving problem~\eqref{problemformula} over a mesh network modeled as an undirected graph $\mathcal{G} := \{\mathcal{V}, \mathcal{E}\}$, with nodes $\mathcal{V}: = \{1, \ldots, m\}$ representing the set of agents and edges $\mathcal{E} \subseteq \mathcal{V} \times \mathcal{V}$ representing the communication links. An unordered pair $\{i,j\} \in \mathcal{E}$ if and only if there is a bi-directional communication link between them. The set of one-hop neighbors for agent $i$ is denoted by {}{$\mathcal N_i = \left\{ j \in \mathcal V| (i,j) \in \mathcal E\right\} \bigcup \left\{ i \right\} $}.

For solving problem~\eqref{problemformula}, the local DGD alternates two steps: 1) local optimization where each agent $i$ updates its local $\bm{x}^i$ by performing $K$ steps of gradient descent; 2) network communication where each agent aggregates information from their neighbors by weighted averaging the model parameters following a gossip consensus communication protocol~\cite{gossip2019tutorial}. Formally, let $\bm{x}^i$ denote the local model of agent $i$. In each communication round $r$, agent $i$ performs:
{}{
\begin{align}\label{alg:DGD}
\begin{split}
        \bm{x}_{r,k+1}^i & = \bm{x}_{r,k}^i - \eta  \nabla f_i (\bm{x}_{r,k}^i), \ \forall k = 0, \ldots, K-1,\\
 \bm{x}_{r+1,0}^{i} & = \sum_{j \in \mathcal{N}_i} w_{ij} \left(\bm{x}_{r,K}^j - \eta  \nabla f_j (\bm{x}_{r,K}^j) \right),
\end{split}
\end{align}
}
\begin{remark}\label{remark1}
It is worth noting that when $K=0$ (without additional local updates), there is only one step of local update in \eqref{alg:DGD}, which reduces to the standard DGD method. 
\end{remark}
It is well recognized that when the $f_i$s are heterogeneous, even the vanilla DGD algorithm cannot converge to the exact solution due to the gradient heterogeneity, i.e., $\sum_{i = 1}^m \|\nabla f_i (\bm{x}) - \nabla f(\bm{x})\| $ is definitely positive \cite{yuan2016convergence,zeng2018nonconvex}. As a remedy, the gradient tracking technique \cite{xu2015augmented,di2016next,nedic2017achieving} is employed to correct the descent direction and achieve exact convergence. When incorporating multiple local updates, the local DGT algorithm is developed in~\cite{nguyen2023performance,huang2023computation}. Specifically, in each communication round $r$, agent $i$ performs:
\begin{align}\label{eq:DGT-comp}
\begin{split}
       \bm{x}_{r,k+1}^i & = \bm{x}_{r,k}^i - \eta  \bm{y}_{r,k}^i\\
   \boldsymbol y^i_{r,k+1} &= \boldsymbol y^i_{r,k} + \nabla f_i(\boldsymbol x^i_{r,k+1}) - \nabla f_i(\boldsymbol x^i_{r,k})
\end{split}
\end{align}
for $k = 0, \ldots, K-1$ while communicates in the last step:
\begin{align}\label{stand-dgt}
    \boldsymbol x^i_{r+1,0} & = \sum_{j\in \mathcal N_i}{w_{ij}\left( \boldsymbol x^j_{r,K} - \eta \boldsymbol y^j_{r,K}\right) }\\
    \boldsymbol y^i_{r+1,0} & = \sum_{j\in \mathcal N_i}{w_{ij}\left( \boldsymbol y^j_{r,K} + \nabla f_j(\boldsymbol x^j_{r+1,0}) - \nabla f_j(\boldsymbol x^j_{r,K})\right)}.\nonumber
\end{align}
The auxiliary variable $\bm{y}^i_{r,0}$ is the tracking variable introduced as an estimator of the average gradient $\nabla f (\bm{x}^i_{r,0})$, initialized as $\bm{y}_{0,0}^i = \nabla f_i (\bm{x}_{0,0}^i)$. 
{}{
\begin{remark}
Similar to Remark \ref{remark1}, when $K=0$ (without additional local updates), \eqref{eq:DGT-comp} is skipped, reducing the local DGT method to the standard  Adapt-Then-Combine (ATC) form of DGT method \cite{alghunaim2024enhanced}, where only one step of the local update is performed as \eqref{stand-dgt}.
\end{remark}
}
We give the following standard assumptions for solving \eqref{problemformula}. 
\begin{assumption}\label{sm}
The average loss function $f: \mathbb{R}^d \to \mathbb{R}$ is  $L$-smooth, i.e., for all $\bm{x}, \bm{y} \in \mathbb{R}^d$ it holds
\begin{align}
    \| \nabla f(\bm{x}) - \nabla f(\bm{y})\| \leq L \| \bm{x} - \bm{y}\|.
\end{align}
\end{assumption}

\begin{assumption}\label{W}
    The communication network $\mathcal{G}$ is connected. The weight matrix $\bm{W} = [w_{ij}]_{ij=1}^m$ of graph $\mathcal{G}$  satisfies 
    (i)    $w_{ij} = 0$ for all $\{i,j\} \notin \mathcal{E}$;
    (ii) doubly stochastic: $\mathbf{1}^T_m \bm{W} = \mathbf{1}^T_m$ and $ \bm{W} \mathbf{1}_m = \mathbf{1}_m$;
    (iii) and $\rho := \left \| \bm{W} - (1/m) \bm{1}_m \bm{1}_m^T \right\|_2 < 1$.
\end{assumption}
 We further assume the $f_i$s have bounded second-order heterogeneity, as given in Assumption~\ref{hessian}.
\begin{assumption}\label{hessian}
  Each $f_i$ is $C^1$ and satisfies for all $\bm{x}, \bm{y} \in \mathbb{R}^d$ and $i 
  \in [m]$:
  \begin{equation}
\| \nabla f(\boldsymbol x) - \nabla f_i(\boldsymbol x) -(\nabla f(\boldsymbol y) - \nabla f_i(\boldsymbol y) )\|^2 \leq \delta^2 \|\boldsymbol x - \boldsymbol y\|^2.
\end{equation}
\end{assumption}
Note that when the $f_i$s are twice continuously differentiable continuously differentiable, Assumption~\ref{hessian} can be implied by the following bound on Hessian similarity:
\begin{equation}
\|\nabla^2 f(\boldsymbol x) - \nabla^2 f_i(\boldsymbol x)\| \leq \delta; \quad \forall \boldsymbol x \in \mathbb R^d, \, \forall i \in [m].
\end{equation}
This assumption has been widely used in distributed optimization literature \cite{karimireddy2020scaffold,pmlr-v247-patel24a,lin2024stochastic}.

\section{Main results}\label{result}
This section gives theoretical results of local DGT and DGD algorithms. We first consider the setting where the average loss $f$ is strongly convex. The communication complexity of local DGT  is presented in Section \ref{sec:local-DGT}. The result is further generalized for weakly convex $f$ under the PL condition. Then we consider over-parameterized problems where the local $f_i$s have shared minimizers. The communication complexity of local DGD is provided in Section \ref{sec:local-DGD}. In both cases, 
our result shows the communication-computation tradeoff is affected by the second-order heterogeneity of the $f_i$s.

\subsection{Local DGT under strong convexity}\label{sec:local-DGT}

\begin{assumption}\label{scv}
The average loss function $f$ is $\mu$-strongly convex, i.e., for all $\boldsymbol x, \boldsymbol y \in \mathbb R^d :$
\begin{equation}
f(\boldsymbol y) \geq f(\boldsymbol x) + \langle \nabla f(\boldsymbol x), \boldsymbol y - \boldsymbol x\rangle + \frac{\mu}{2} \|\boldsymbol y - \boldsymbol x\|^2.
\end{equation}
\end{assumption}

To state the convergence, we introduce a potential function that measures the optimality gap at the end of round $R$:
\begin{multline}\label{def:potential-scvx}
      P^R :=  \mu \Big( f(\overline{\boldsymbol x}^{R+1}) - f^\star\Big)    \\
    + \frac{1}{\sum_{r=0}^R{z_r}} \sum_{r=0}^R{  \Bigg( \frac{z_r\sum_{i=1}^m \!{\sum_{k=0}^K{\|\nabla f(\boldsymbol x^i_{r,k})\|^2}}} {m(K+1)} \Bigg)},
\end{multline}
where {}{$\overline{\boldsymbol x}^{R+1}: =  \frac 1 m \sum_{i=1}^m \boldsymbol x_{R+1,0}^i$} and $f^\star$ denotes the minimum of $f$ and $\{z_r\}$ is an increasing geometric sequence whose formula is given in the proof.

\begin{theorem}(Strong convexity).\label{th1}
Consider problem~\eqref{problemformula} with the average loss $f$ satisfying  Assumption~\ref{sm}  and \ref{scv}. Suppose the $f_i$s satisfy Assumption~\ref{hessian}.  Let $\{ \bm{x}_{r,k}^i\}$ be the sequence generated by the local DGT algorithm under Assumption~\ref{W}. 
Then there exists step size $\eta \lesssim  \min\Big\{ \frac 1 L, \frac{1-\rho}{ K\mu}, \frac{1-\rho}{K\delta}, \frac{(1-\rho)^2}{\rho K(\delta+L)} \Big\}$ such that
\begin{align}\label{eq:thm1-rate}
     P^R \leq  {}{{\mathcal O}} \left(\exp \left[ -\frac{ {}{(K+1)}\mu R}{L+ \frac{(\delta+\mu)K}{1-\rho}+\frac{ \rho (\delta + L) K}{(1-\rho)^2}} \right] \right),     
\end{align}
where $\lesssim$ denotes inequalities up to multiplicative absolute constants that do not depend on any problem parameters. Consequently, to reach a solution satisfying $f(\overline{\boldsymbol x}^{R+1}) - f^\star \leq \varepsilon$, the communication rounds  it takes for local DGT is 
\begin{align}\label{eq:complexity-scvx}
 R = {}{{\mathcal O}} \left(\left(\frac{L}{\mu {}{(K+1)}} + \frac{\delta + {}{\mu} }{\mu (1 - \rho)} + \frac{\rho }{(1 - \rho)^2} \frac{ \delta + L}{\mu} \right) \log \frac{1}{ {}{\mu} \varepsilon} \right)  . 
\end{align}
\end{theorem}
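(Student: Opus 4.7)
The plan is to analyze local DGT through a coupled Lyapunov framework that tracks (i) the optimality gap of the network average $\overline{\bm x}^r := (1/m)\sum_i \bm x^i_{r,0}$, (ii) the consensus error $E^r_k := (1/m)\sum_i \|\bm x^i_{r,k} - \overline{\bm x}^r_k\|^2$, and (iii) the gradient-tracking error $T^r_k := (1/m)\sum_i \|\bm y^i_{r,k} - \overline{\bm y}^r_k\|^2$. A fundamental identity to exploit is that, because $\bm W$ is doubly stochastic and $\bm y^i_{0,0} = \nabla f_i(\bm x^i_{0,0})$, the tracking average satisfies $\overline{\bm y}^r_k = (1/m)\sum_i \nabla f_i(\bm x^i_{r,k})$ at every inner and outer step. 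Combined with $L$-smoothness of $f$ and the fact that Assumption~\ref{hessian} together with Assumption~\ref{sm} makes each $f_i$ $(L+\delta)$-smooth, this lets me relate $\overline{\bm y}^r_k$ to $\nabla f(\overline{\bm x}^r_k)$ up to a term controlled by $\sqrt{E^r_k}$.

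Next I would derive a per-step descent inequality from $L$-smoothness and $\mu$-strong convexity applied to $\overline{\bm x}^r_{k+1} = \overline{\bm x}^r_k - \eta\,\overline{\bm y}^r_k$, accumulate it over the $K+1$ updates that compose one communication round, and obtain a bound of the form $f(\overline{\bm x}^{r+1}) - f^\star \le (1-\eta\mu(K+1)/2)(f(\overline{\bm x}^r)-f^\star) - c\eta\sum_k \|\nabla f(\overline{\bm x}^r_k)\|^2 + \text{error}(E,T)$, explaining the $K+1$ gain in the first term of the complexity. Then I would set up the two error recursions. During the $K$ local steps (no mixing), both $E^r_k$ and $T^r_k$ grow, but \emph{the tracking error grows through $\nabla f_i(\bm x^i_{r,k+1})-\nabla f_i(\bm x^i_{r,k})$ minus the corresponding average}, and Assumption~\ref{hessian} bounds this increment by $\eta\delta\|\bm y^i_{r,k}\|$ rather than $\eta(L+\delta)\|\bm y^i_{r,k}\|$; this is the mechanism that produces the $\delta$ (not $L+\delta$) inside the $(1-\rho)^{-1}$ term of the complexity. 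At the communication step, $\bm W$ contracts both errors by $\rho$, absorbing the accumulated $K$ local errors, which is the origin of the $\rho/(1-\rho)^2$ factor multiplying $L+\delta$ in the third term.

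The last stage is to close the loop by forming a Lyapunov function of the form $\Phi^r = a\,(f(\overline{\bm x}^r)-f^\star) + b\,E^{r,\max} + c\,T^{r,\max}$ (with suitable round-level maxima) and choosing the coefficients $a,b,c$ and the geometric weights $z_r$ so that $\Phi^{r+1}\le (1-q)\Phi^r$ with $q \asymp \eta\mu(K+1)$; identifying $\Phi^R$ with the potential $P^R$ defined in \eqref{def:potential-scvx} then yields \eqref{eq:thm1-rate}, and inverting the exponential gives \eqref{eq:complexity-scvx}. The four constraints on $\eta$ arise respectively from needing: $\eta\lesssim 1/L$ for the descent inequality; $\eta K\lesssim (1-\rho)/\mu$ to keep optimality-gap cross terms dominated; $\eta K\lesssim (1-\rho)/\delta$ to keep tracking-error growth under control during local steps using Hessian similarity; and $\eta K \lesssim (1-\rho)^2/(\rho(L+\delta))$ so the communication-step contraction dominates the worst-case per-step growth of the coupled $(E,T)$ system. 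The main obstacle I anticipate is the bookkeeping in the third step: the consensus and tracking errors depend on each other through $\eta$-sized terms, and it is essential to carry the $\delta$-dependence (rather than $L+\delta$) through the local-step recursion so that the middle term of \eqref{eq:complexity-scvx} comes out with $\delta+\mu$ in the numerator; a naive triangle-inequality analysis collapses to $L+\delta$ everywhere and loses the sharpness of the bound.
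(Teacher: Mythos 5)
Your overall architecture---per-round descent, coupled recursions for consensus and tracking errors that contract under $\bm W$ at communication, and geometric weights to close the loop---is the same family of argument as the paper's (which proceeds via Lemmas~\ref{lemma1}--\ref{lemma6} and a weighted-summation lemma). However, there is one concrete step that would fail and that determines whether you get the sharp middle term of \eqref{eq:complexity-scvx}.

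The gap is in your choice of tracking-error metric and the claim attached to it. You define $T^r_k := \frac1m\sum_i\|\bm y^i_{r,k}-\overline{\bm y}^r_k\|^2$ and assert that Assumption~\ref{hessian} bounds its per-local-step increment, namely $[\nabla f_i(\bm x^i_{r,k+1})-\nabla f_i(\bm x^i_{r,k})]$ minus the corresponding cross-agent average, by $\eta\delta\|\bm y^i_{r,k}\|$. Assumption~\ref{hessian} only controls $\|[\nabla f_i(\bm x)-\nabla f_i(\bm y)]-[\nabla f(\bm x)-\nabla f(\bm y)]\|\le\delta\|\bm x-\bm y\|$, i.e., the deviation of agent $i$'s gradient increment from the \emph{average function's} gradient increment evaluated at the \emph{same pair of points}. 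After subtracting that piece you are still left with $[\nabla f(\bm x^i_{r,k+1})-\nabla f(\bm x^i_{r,k})]-\frac1m\sum_j[\nabla f(\bm x^j_{r,k+1})-\nabla f(\bm x^j_{r,k})]$, which compares $\nabla f$ along \emph{different} agents' trajectories and is only $L\eta\|\bm y\|$-sized, not $\delta\eta\|\bm y\|$-sized. With your metric the within-round tracking-error growth therefore carries $L$, and the middle term of \eqref{eq:complexity-scvx} collapses to $\tfrac{L+\delta}{\mu(1-\rho)}$---precisely the pitfall you flag in your final sentence. The paper avoids this by measuring the tracking error as $\Gamma_r=\frac1m\|\bm Y_r-\nabla\bm f(\bm X_r)\|^2$, the deviation of $\bm y^i$ from $\nabla f(\bm x^i)$ (the average function's gradient at agent $i$'s own iterate); the per-step increment of that quantity is exactly what Assumption~\ref{hessian} controls, which is why the drift bound of Lemma~\ref{lemma1} is independent of $L$. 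Repairing your argument requires replacing $T^r_k$ by this quantity (or something equivalent to it up to $\delta$-sized terms).

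Two secondary differences, both workable but worth noting: the paper tracks $S_r$ and $\Gamma_r$ only at round boundaries and handles the inner iterations through the separate drift bound of Lemma~\ref{lemma1}, and it applies the descent lemma at the local iterates $\bm x^i_{r,k}$ (passing to $f(\overline{\bm x}^{r+1})$ by convexity at the end of the round) rather than on the averaged iterate $\overline{\bm x}_{r,k}$ directly. Also, the final rate is obtained from a weighted-summation argument (Lemma~\ref{linearcon}) rather than a single contracting Lyapunov function: the potential $P^R$ contains a weighted running average of gradient norms, which a bound of the form $\Phi^{r+1}\le(1-q)\Phi^r$ alone does not produce, so "identifying $\Phi^R$ with $P^R$" needs the telescoping-with-weights step made explicit.
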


The proof of this theorem is in Section \ref{pr-th1} and Theorem~\ref{th1} provides the following insights on the algorithmic design. 

\noindent{\textbf{Communication-computation tradeoff.}}
The expression \eqref{eq:complexity-scvx} shows when the condition number $\frac{L}{\mu}$ is relatively large, increasing the number of local updates can effectively reduce the communication cost. However, when $K$  exceeds {}{$K^\star :=  \mathcal O \left( \Big\lfloor \frac{L(1-\rho)^2}{\rho(L-\mu)+\delta+\mu} \Big\rfloor \right) $}, 
the last two terms will be dominating. As such, further increasing local updates beyond $K^\star$ offers marginal benefit in reducing communication overhead, resulting in unnecessary computation costs. When the data is significantly heterogeneous or network connectivity is weak, it is better off to communicate per iteration. 

\noindent{\textbf{Influence of data heterogeneity.}} In local DGT, the gradient tracking technique iteratively reconstructs the average gradient $\nabla f (\bm{x}^i)$ using the tracking variable $\bm{y}^i$ for each agent and thus eliminates the impact of gradient (first-order) heterogeneity on the convergence rate. Our result shows in this case, data heterogeneity affects the convergence rate through higher-order terms. Specifically, a smaller $\delta$, indicating the local $f_i$s are more similar, leads to lower communication complexity. Moreover, since $K^\star$ increases as $\delta$ decreases, this suggests local updates will be more effective in saving communication costs when the second-order heterogeneity is relatively small.

\noindent{\textbf{Influence of network connectivity.}} From the last two network dependence terms in~\eqref{eq:complexity-scvx}, the stronger the connectivity is, the fewer communication rounds it takes to reach an optimal solution. Similar to $\delta$, one finds a smaller $\rho$ yields a larger $K^\star$, and thus local updates will be more effective when the network is more connected.

\subsubsection{Comparison to existing works}\label{sec:compare-literature}
By particularizing the 
communication complexity given by~\eqref{eq:complexity-scvx}, we show in special cases that our result recovers/improves existing works.\\[1.5ex]
{\bf The FL setting.} The local DGT algorithm can be implemented over a {}{fully connected work} with $\bm{W} = (1/m) \mathbf{1}_m \mathbf{1}_m^T$, we have  $\rho = 0$ and the expression~\eqref{eq:complexity-scvx} reduces to {}{$\tilde{\mathcal{O}} \left(\frac{L}{\mu (K+1)} + \frac{\delta + {}{\mu}}{\mu} \right)$}. This matches and generalizes the communication complexity of Scaffold~\cite[Theorem IV]{karimireddy2020scaffold}, which requires that each local loss $f_i$ being strongly convex and quadratic.\\[1.5ex]
{\bf Mirror descent type methods.} 
By eliminating the $\bm{y}^i$ variable we can rewrite~\eqref{eq:DGT-comp} as
\begin{align}\label{eq:DGT-comp-reform}
    \begin{split}
       \bm{x}_{r,k+1}^i & = \bm{x}_{r,k}^i - \eta  \left( \boldsymbol y^i_{r,0} + \nabla f_i(\boldsymbol x^i_{r,k}) - \nabla f_i(\boldsymbol x^i_{r,0})\right).
\end{split}
\end{align}
By letting $K \to \infty$, Eq.~\eqref{eq:DGT-comp-reform} can be viewed as solving the following problem using gradient descent:
\begin{align}
    \min_{\bm{x}^i}~f_i(\bm{x}^i) + (\bm{y}^i_{r,0} - \nabla f_i (\bm{x}_{r,0}^i))^T (\bm{x}^i - \bm{x}_{r,0}^i)
\end{align}
with initialization $\bm{x}^i = \bm{x}_{r,0}^i$. This recovers the decentralized mirror descent type algorithms in \cite{sun2022distributed,li2020communication}.

As $K \to \infty$, the first term in~\eqref{eq:complexity-scvx} goes to zero and the expression becomes $\tilde{\mathcal{O}}\left(  \frac{\delta + {}{\mu}}{\mu (1 - \rho)} + \frac{\rho }{(1 - \rho)^2} \cdot \frac{L+ \delta}{\mu} \right)$. Furthermore, under the condition that  $\frac{\rho}{1-\rho} \lesssim \frac{\delta +{\mu} }{L+\delta}$ we obtain $\tilde{\mathcal{O}} \left( \frac{\delta + {}{\mu} }{\mu}\right)$ if $\delta>\mu$. This matches the complexity in \cite{sun2022distributed} but under condition {}{$\frac{\rho}{(1-\rho)^2} \lesssim \frac{1}{\left(1+\frac L \delta \right) \left( \frac {L+\delta} \mu \right)}$} (sufficiently strong network connectivity), and improves over that in \cite{li2020communication}. 

\noindent {\bf Gradient correction methods with local updates.} Some existing works have also analyzed the convergence of decentralized gradient correction methods with local updates. Our analysis improves the results obtained in these works~\cite{nguyen2023performance,huang2023computation,liu2024decentralized,alghunaim2024local,mishchenko2022proxskip}. In detail, the communication complexities of K-GT and Periodical GT in~\cite{liu2024decentralized} and LU-GT in~\cite{nguyen2023performance}  are all $\mathcal O\left( \frac{L}{\epsilon}\right)$ for non-convex $f$. For strongly convex case, FlexGT in \cite{huang2023computation} and LED in \cite{alghunaim2024local} have been proved {}{to converge} with $\tilde{\mathcal O}\left( \frac{L}{\mu(1-\rho)^2}\right)$ and $\tilde{\mathcal O}\left( \frac{L}{\mu(1-\rho)}\right)$ communication complexity, respectively. All these results do not reflect the usefulness of local computations. Scaffnew proposed in \cite{mishchenko2022proxskip} employs a randomized strategy that invokes network communication at each step with probability $p$ and obtains communication complexity $\Tilde{\mathcal O}\left(\frac{Lp}{\mu} + \frac{1}{p(1-\rho)}\right)$. Our result differs from Scaffnew in the following aspects. In terms of the problem class, Scaffnew requires that each local loss $f_i$ be strongly convex and smooth, while we just need a weaker condition that the average $f$ is strongly convex. In terms of the algorithm and analysis, Scaffnew is essentially a primal-dual type method, whereas our analysis for local DGT is primal. This is also the reason why our analysis can be applied to a wider range of problems. In addition, the best rate
Scaffnew can obtain is ${}{\tilde {\mathcal{O}}}\left(\sqrt{\frac L \mu} \right)$ even for $\rho =0$ whereas we can achieve $\mathcal{O}(1)$ complexity 
for small $\rho$ and $\delta$. Implementation-wise, local DGT uses fixed $K$ local updates while Scaffnew employs random local updates, which is more demanding in agent signaling. 

\subsubsection{Generalization to PL objectives}
Our proof can be generalized to weakly convex $f$ under following assumptions.

\begin{assumption}\label{wcv}
 The average loss $f$ is $\beta$-weakly convex, i.e.,  there exists a constant $\beta \geq 0$ such that $f + \frac{\beta}{2}\|\cdot\|^2$ is convex.
\end{assumption}

Note that Assumption~\ref{sm} implies $f$ must be weakly convex. However, the constant $\beta$ can be much smaller than $L$. For example, when $f$ is convex we have $\beta =0$.

\begin{assumption}\label{PL}
The average loss function $f $ satisfies the PL condition with parameter $\mu$, i.e., for all $\bm{x} \in \mathbb{R}^d$ it holds
\begin{equation}
\left\| \nabla f(\boldsymbol x) \right\|^2 \geq 2\mu(f(\boldsymbol x) - f^\star).
\end{equation}
\end{assumption}
Note that a function that is $\mu$-strongly convex will also satisfy the PL condition with parameter $\mu$.


\begin{theorem}
(The PL condition).\label{coro1}
Consider problem~\eqref{problemformula} with the average loss $f$ satisfying   Assumption~\ref{sm}, \ref{wcv} and \ref{PL}. Suppose the $f_i$s satisfy   Assumption~\ref{hessian}.
Let $\{ \bm{x}_{r,k}^i\}$ be the sequence generated by the local DGT algorithm under Assumption~\ref{W}. Then there exists step size $\eta \lesssim \min \Big\{ \frac 1 L, \frac{1-\rho}{K\mu}, \frac{1-\rho}{K\delta}, \frac{1-\rho}{K\beta},  \frac{(1-\rho)^2}{\rho K(\delta + L + \beta)}\Big\}$ such that
\begin{align}\label{GT-wcpl}
    \begin{split}
   \!\! P^R \leq  {}{{\mathcal O}} \left(\exp \left[ -\frac{{}{(K+1)} \mu R}{L+ \frac{(\delta+\mu+\beta)K}{1-\rho}+\frac{\rho (\delta+L + \beta)K}{(1-\rho)^2}} \right] \right) .    
    \end{split}
\end{align}
Consequently, to reach a solution satisfying $f(\overline{\boldsymbol x}^{R+1}) - f^\star \leq \varepsilon$, the communication rounds it takes for local DGT is 
{
\begin{align}\label{eq:complexity-PL}
 R = {}{\tilde{\mathcal O}} \left(\frac{L}{\mu {}{(K+1)}} + \frac{\delta  + \beta + {}{\mu}}{\mu (1 - \rho)} + \frac{\rho }{(1 - \rho)^2} \cdot \frac{\delta + L + \beta}{\mu}\right). 
\end{align}
}
\end{theorem}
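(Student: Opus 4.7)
The plan is to mirror the argument used for Theorem~\ref{th1}, replacing the $\mu$-strong convexity inputs with the weaker pair (PL condition and $\beta$-weak convexity). I would reuse the potential $P^R$ and an increasing geometric sequence $\{z_r\}$ with ratio $1 + \Theta(\mu\eta(K+1))$, and derive a per-round Lyapunov inequality of the form $P^{r+1} \leq (1 - c\mu\eta(K+1))\, P^r$ for some universal constant $c>0$. Unrolling this over $R$ rounds and saturating the smallest of the four terms in the step-size bound then produces \eqref{GT-wcpl} and the complexity \eqref{eq:complexity-PL}.

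First I would derive a descent inequality for the averaged iterate $\overline{\boldsymbol x}^{r+1}$. Because gradient tracking preserves the network-averaged descent direction (the tracking variable is initialized as the local gradient), $L$-smoothness of $f$ gives a one-round decrease of $f(\overline{\boldsymbol x}^r) - f^\star$ controlled by $\eta \sum_k \|\nabla f(\overline{\boldsymbol x}^r)\|^2$ plus the consensus error $\Phi^r := \sum_{i,k}\|\boldsymbol x^i_{r,k} - \overline{\boldsymbol x}^r\|^2$ and tracking error $\Psi^r := \sum_{i,k}\|\boldsymbol y^i_{r,k} - \overline{\boldsymbol y}^r\|^2$. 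The PL condition then converts $\|\nabla f(\overline{\boldsymbol x}^r)\|^2$ into $2\mu(f(\overline{\boldsymbol x}^r) - f^\star)$, which is the replacement for strong convexity. Weak convexity enters when lower-bounding $f(\boldsymbol x^i_{r,k}) - f^\star$ via the gradient evaluated at the local iterate; without strong convexity an additional drift penalty $\tfrac{\beta}{2}\|\boldsymbol x^i_{r,k} - \overline{\boldsymbol x}^r\|^2$ appears, which is precisely how $\beta$ enters the rate.

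Next I would establish contraction recursions for $\Phi^r$ and $\Psi^r$. Assumption~\ref{W} contracts consensus by the factor $\rho$ per communication round, while the $K$ interior gradient steps expand both errors. Assumption~\ref{hessian} bounds the cross-agent gradient perturbation $\|\nabla f_i(\boldsymbol x) - \nabla f(\boldsymbol x) - (\nabla f_i(\boldsymbol y) - \nabla f(\boldsymbol y))\| \leq \delta\|\boldsymbol x - \boldsymbol y\|$, which is what keeps the tracking variable from drifting too far. Combining $L$-smoothness, Assumption~\ref{hessian}, and $\beta$-weak convexity yields coupled recursions $\Phi^{r+1} + \Psi^{r+1} \leq \alpha(\Phi^r + \Psi^r) + \eta^2 K \sum_{i,k}\|\nabla f(\boldsymbol x^i_{r,k})\|^2$ with contraction factor $\alpha < 1$ exactly under the stated step-size bound. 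These are then folded into the descent inequality, weighted by $\{z_r\}$, and summed.

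The main obstacle is the interaction between $\beta$-weak convexity and the PL-based contraction in the descent step: under $\mu$-strong convexity the bound $f - f^\star \leq \tfrac{1}{2\mu}\|\nabla f\|^2$ is globally free, but under PL this inequality must be applied pointwise to each local iterate, introducing the drift cost $\tfrac{\beta}{2}\|\boldsymbol x^i_{r,k} - \overline{\boldsymbol x}^r\|^2$ from weak convexity. Absorbing this cost into the consensus-error recursion without spoiling the linear rate is exactly what forces the additional $(1-\rho)/(K\beta)$ and $(1-\rho)^2/(\rho K\beta)$ terms in the step-size condition. Once the bookkeeping keeps the contraction factor at order $1 - c\mu\eta(K+1)$, the remaining steps are routine unrolling and optimization of $\eta$ to extract \eqref{eq:complexity-PL}.
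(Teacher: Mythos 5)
Your high-level strategy (reuse the Theorem~\ref{th1} machinery, let $\beta$ show up as an extra consensus penalty that must be absorbed under new step-size constraints) is the right one, but the proposal misidentifies the single place where the argument actually breaks when strong convexity is weakened, and therefore misses the one genuinely new ingredient. Strong convexity is used exactly twice in the proof of Theorem~\ref{th1}: once as the inequality $\|\nabla f(\boldsymbol x^i_{r,k})\|^2 \geq 2\mu\bigl(f(\boldsymbol x^i_{r,k}) - f^\star\bigr)$ applied at every \emph{local} iterate inside Lemma~\ref{lemma3}, and once as plain convexity in the Jensen step $f(\overline{\boldsymbol x}_{r,K+1}) \leq \frac{1}{m}\sum_i f(\boldsymbol x^i_{r,K+1})$ that closes each round. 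The first use is literally the PL inequality, so under Assumption~\ref{PL} it carries over verbatim and pointwise at no cost --- contrary to your ``main obstacle'' paragraph, which claims that applying it pointwise to local iterates is what introduces the $\beta$-drift. The step that actually fails is the second one: for $\beta$-weakly convex $f$, Jensen acquires the correction $\frac{\beta}{2m^2}\sum_{i,j}\|\boldsymbol x^i_{r,K+1}-\boldsymbol x^j_{r,K+1}\|^2$ (the paper's Lemma~\ref{weakly-con}), i.e.\ a consensus error evaluated at the \emph{post-local-update, pre-mixing} points $\boldsymbol x^i_{r,K+1}$. This quantity is not $S_r$, nor the drift $\|\boldsymbol x^i_{r,k}-\overline{\boldsymbol x}^r\|^2$ you write down; it needs its own bound in terms of $S_r$, $\Gamma_r$ and $G_r$ (the paper's Lemma~\ref{lemma15}), which is the new lemma your proposal does not supply. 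Without it, the $\beta$-term cannot be folded into the existing recursions and the extra step-size conditions $\eta \lesssim \frac{1-\rho}{K\beta}$ and $\eta\lesssim\frac{(1-\rho)^2}{\rho K(\delta+L+\beta)}$ cannot be extracted.

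A secondary but substantive issue: you propose to run the descent lemma and the PL inequality at the averaged iterate $\overline{\boldsymbol x}^r$, with a tracking error measured against $\overline{\boldsymbol y}^r$. That is the standard route the paper explicitly departs from: its Lemma~\ref{lemma2}/\ref{lemma3} descend on the function values at the local iterates $\boldsymbol x^i_{r,k}$, measure the tracking error against $\nabla f(\boldsymbol x^i_r)$ ($\Gamma_r = \frac{1}{m}\|\boldsymbol Y_r - \nabla\boldsymbol f(\boldsymbol X_r)\|^2$), and exploit Assumption~\ref{hessian} so that the per-step drift (Lemma~\ref{lemma1}) is controlled by $\delta$ rather than $L$. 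Relating $\frac{1}{m}\sum_i\nabla f_i(\boldsymbol x^i_{r,k})$ to $\nabla f(\overline{\boldsymbol x}_{r,k})$ via smoothness, as your sketch suggests, reintroduces $L$ into the drift recursion and would degrade the step size to $\eta\lesssim\frac{1}{KL}$-type conditions, losing the $\frac{L}{\mu(K+1)}$ term that the theorem is designed to exhibit. Since your stated plan is to ``mirror'' Theorem~\ref{th1}, you should keep its per-local-iterate potential $E^{r,k}_i$ intact and confine the modification to the round-closing Jensen step described above.
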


The proof of Theorem \ref{coro1} is in  Section \ref{proof-DGT-PL}. Theorem~\ref{coro1} shows the impact of nonconvexity on the communication cost through parameter $\beta$. In the special case $\beta = 0$, implying $f$ is convex, the bound~\eqref{eq:complexity-PL} reduces to~\eqref{eq:complexity-scvx}.

\subsection{Local DGD under over-parameterization}\label{sec:local-DGD}
 In this section, we consider the over-parameterization setting where the number of model parameters is large enough to interpolate the training data of all agents \cite{koloskova2020unified}, and the local losses have shared minimizers. Formally, we assume the $f_i$s satisfy Assumption~\ref{overpara}.  

\begin{assumption} \label{overpara}
{}{There exists $\boldsymbol x^\star \in \arg\min_{\boldsymbol x \in \mathbb R^d} f(\boldsymbol x)$ such that $\nabla f_i(\boldsymbol x^\star) =0$ for all $i \in [m]$.}
\end{assumption}

In this interpolation regime, the $f_i$s enjoy the property that the gradient difference $\nabla f_i(\bm{x}^i) - \nabla f(\bm{x}^i)$ vanishes at the common minimizer $\bm{x}^\star$. Compared to the general setting where local DGT  actively aligns the local descent direction $\bm{y}^i$ to the average gradient $\nabla f(\bm{x}^i)$ using gradient tracking, the local DGD algorithm converges automatically to the exact minimizer thanks to this nice property \cite{koloskova2020unified,qin2022decentralized}. Given the similarity of the behavior of first-order difference in the two settings (both diminish as the algorithm progresses towards $\bm{x}^\star$),
the study of local DGT in Section \ref{sec:local-DGT} then motivates us to investigate whether higher-order differences of the $f_i$s  will have an analog effect on the convergence rate of local DGD in the over-parameterization setting.

\subsubsection{PL objectives}

We first consider the problem assumptions consistent with Theorem~\ref{coro1}, i.e., the average loss function $f$ is PL. Specific models satisfying this property include over-parameterized linear regression~\cite{woodworth2018graph}, deep neural network~\cite{nguyen2018optimization,nguyen2021tight,charles2018stability} and non-linear systems \cite{liu2022loss}. We further relax the uniform second-order heterogeneity Assumption~\ref{hessian} to the following weaker assumption, which only requires the boundedness with respect to the shared minimizer $\bm{x}^\star$.

\begin{assumption}\label{rehessian}
Each $f_i$ is $C^1$ and satisfies for all $\bm{x}\in \mathbb{R}^d$ and $i 
  \in [m]$:
  \begin{equation}
\|\nabla f(\boldsymbol x) - \nabla f_i(\boldsymbol x) - (\nabla f(\boldsymbol x^\star) - \nabla f_i(\boldsymbol x^\star))\|^2 \leq \delta^2 \|\boldsymbol x - \boldsymbol x^\star\|^2,
\end{equation}
{}{where $\boldsymbol x^\star$ is any  minimizer of $f$ satisfying Assumption \ref{overpara}.}
\end{assumption}
Recall the potential function $P^R$ in \eqref{def:potential-scvx} and {}{the second-order heterogeneity parameter $\zeta:= 1 - (\delta/\mu)^2 \in (0,1)$}, the result of local DGD is provided in the next Theorem~\ref{thm2}.
\begin{theorem}\label{thm2}
 Consider problem~\eqref{problemformula} in the over-parameterization setting satisfying Assumption~\ref{overpara}. Suppose that the  average loss $f$ satisfies  Assumption~\ref{sm}, \ref{wcv} and \ref{PL};  and the $f_i$s satisfy the restricted second-order heterogeneity Assumption~\ref{rehessian} with $\delta < \mu$. 
Let $\{ \bm{x}_{r,k}^i\}$ be generated by the local DGD algorithm under Assumption~\ref{W}. Then there exists a step size   $\eta \lesssim \min \Big\{\frac 1 L, \frac {1-\rho} {K\mu \zeta}, \frac{(1-\rho)\zeta}{K\beta}, \frac{(1-\rho)^2\zeta}{\rho^2K(L + \beta)}\Big\}$ 
such that
\begin{equation}
\begin{aligned}
P^R \leq {}{{\mathcal O}} \left(\exp\left[ - \frac{ {}{(K+1)} \mu R}{ \frac{L}{\zeta}  +  \frac{K\mu}{1-\rho} + \frac{\beta K}{(1-\rho) \zeta^2}+ \frac{ \rho^2 (L+\beta)  K}{(1-\rho)^2 \zeta^2}
}\right] \right).
\end{aligned}
\end{equation}
Consequently, to reach a solution satisfying $f(\overline{\boldsymbol x}^{R+1}) - f^\star \leq \varepsilon$, the communication rounds  it takes for local DGD is 
\begin{align}\label{eq:complexity-DGD-PL}
   R =  {}{{\mathcal O}} \left( \left(\frac{L}{\mu {}{(K+1)} \zeta}  + \frac{1}{1-\rho} + \frac{\beta + \rho^2 L}{\mu (1-\rho)^2 \zeta^2}  \right) \log \frac{1}{ {}{\mu}\varepsilon}\right).
\end{align}
\end{theorem}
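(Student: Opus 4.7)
The plan is to construct a Lyapunov function combining the optimality gap $f(\overline{\boldsymbol x}^{r}) - f^\star$ (together with a weighted distance-to-optimum term $\|\overline{\boldsymbol x}^r - \boldsymbol x^\star\|^2$) and the consensus error $\tfrac{1}{m}\sum_i \|\boldsymbol x_{r,0}^i - \overline{\boldsymbol x}^r\|^2$, and to show it contracts by a factor of order $1 - c\eta\mu(K+1)\zeta$ per communication round. The factor $\zeta = 1 - (\delta/\mu)^2$ will emerge as the leftover contraction after the heterogeneity-induced expansion is subtracted from the PL-driven decay; the strict inequality $\delta < \mu$ is exactly what ensures $\zeta>0$ and hence net contraction without gradient tracking.

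First, because $\boldsymbol W$ is doubly stochastic, the average iterate obeys
\begin{equation*}
\overline{\boldsymbol x}^{r+1} = \overline{\boldsymbol x}^r - \eta \sum_{k=0}^{K} \tfrac{1}{m}\sum_{i=1}^m \nabla f_i(\boldsymbol x_{r,k}^i).
\end{equation*}
I would use $L$-smoothness of $f$ together with the PL inequality to turn this into a descent bound on $f(\overline{\boldsymbol x}^{r+1}) - f^\star$, in which the deviation $\tfrac{1}{m}\sum_i \nabla f_i(\boldsymbol x_{r,k}^i) - \nabla f(\overline{\boldsymbol x}^r) = \tfrac{1}{m}\sum_i[\nabla f_i(\boldsymbol x_{r,k}^i) - \nabla f_i(\overline{\boldsymbol x}^r)]$ is controlled by $L$ times the consensus error, so the first-order heterogeneity cancels by the very definition of $f$. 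The weak-convexity constant $\beta$ enters through the $\|\nabla f\|^2$ bookkeeping and refines the smoothness constant appearing in the step-size conditions.

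Second, the second-order heterogeneity enters exclusively through the local-drift analysis. During the $K$ local steps the deviation $\boldsymbol x_{r,k}^i - \overline{\boldsymbol x}_{r,k}$ is driven by $\nabla f_i(\boldsymbol x_{r,k}^i) - \tfrac{1}{m}\sum_j \nabla f_j(\boldsymbol x_{r,k}^j)$, which I would split into a smoothness piece bounded by $L\|\boldsymbol x_{r,k}^i - \overline{\boldsymbol x}_{r,k}\|$ and a pure heterogeneity piece $\nabla f_i(\overline{\boldsymbol x}_{r,k}) - \nabla f(\overline{\boldsymbol x}_{r,k})$. Using Assumption~\ref{rehessian} and $\nabla f_i(\boldsymbol x^\star) = \nabla f(\boldsymbol x^\star) = 0$, the latter is bounded by $\delta\|\overline{\boldsymbol x}_{r,k} - \boldsymbol x^\star\|$, i.e.\ it vanishes at the common minimizer. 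Unrolling this recursion over the $K$ local steps and then applying the $\rho$-contraction of $\boldsymbol W$ to close the round produces a consensus-error bound whose forcing term is $\sim \eta^2(K+1)^2(\delta^2\|\overline{\boldsymbol x}^r - \boldsymbol x^\star\|^2 + L^2\|\overline{\boldsymbol x}^r - \boldsymbol x^\star\|^2)$, explaining the appearance of the $\rho^2(L+\beta)/((1-\rho)^2\zeta^2)$ term in the complexity.

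Combining the descent bound with the consensus-error recursion into a single potential $P^r$ of the form~\eqref{def:potential-scvx}, weighted by an increasing geometric sequence $z_r$, the heterogeneity contribution appears as a term proportional to $\eta\delta^2\|\overline{\boldsymbol x}^r-\boldsymbol x^\star\|^2$. Using the quadratic-growth consequence of PL plus $L$-smoothness to bound $\|\overline{\boldsymbol x}^r - \boldsymbol x^\star\|^2 \lesssim \tfrac{1}{\mu}(f(\overline{\boldsymbol x}^r)-f^\star)$, this term becomes $\sim \eta(\delta^2/\mu)(f(\overline{\boldsymbol x}^r)-f^\star)$ and combines with the PL contraction $-\eta\mu(K+1)(f(\overline{\boldsymbol x}^r)-f^\star)$ to yield the net rate $\eta\mu(K+1)\zeta$. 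The step-size constraints $\eta \lesssim (1-\rho)/(K\mu\zeta)$ and $\eta \lesssim (1-\rho)^2\zeta/(\rho^2 K(L+\beta))$ are then what is needed to keep all higher-order cross-terms subordinate to this dominant contraction, which upon unrolling and setting $(1-c\eta\mu(K+1)\zeta)^R \leq \varepsilon/\mu$ gives precisely~\eqref{eq:complexity-DGD-PL}.

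The hard part will be the simultaneous control of the $K$-fold amplification of local drift by heterogeneity, which scales like $\eta^2 K^2 \delta^2$, against the PL decay, which scales only like $\eta\mu(K+1)$. Without a gradient-tracking variable to cancel the heterogeneity, the analysis must crucially exploit that the heterogeneity bound in Assumption~\ref{rehessian} is anchored at the shared minimizer, so the forcing term in the consensus-error recursion is proportional to $\|\overline{\boldsymbol x}^r - \boldsymbol x^\star\|^2$ and therefore diminishes along with the iterates. Tuning the step size so that the resulting cross-terms get absorbed into the $\mu\zeta$ contraction rate, while simultaneously accommodating the weak convexity constant $\beta$ and the network contraction $\rho$, is the delicate balancing act that produces the three regimes visible in the bound on $\eta$ and the three additive terms in~\eqref{eq:complexity-DGD-PL}.
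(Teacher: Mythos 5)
Your high-level mechanism is the right one -- the heterogeneity is anchored at a shared minimizer, so its contribution shrinks with the iterates and can be traded off against the PL decay to produce the factor $\zeta = 1-(\delta/\mu)^2$ -- but two steps as written would fail under the stated assumptions. First, in your descent bound you control $\frac{1}{m}\sum_i[\nabla f_i(\boldsymbol x_{r,k}^i)-\nabla f_i(\overline{\boldsymbol x}^r)]$ by ``$L$ times the consensus error.'' The paper does not assume each $f_i$ is smooth (only the average $f$ is), and Assumption~\ref{rehessian} is anchored at $\boldsymbol x^\star$ rather than being a two-point bound, so the only admissible decomposition is $\nabla f_i(\boldsymbol x)-\nabla f_i(\boldsymbol y)=[\nabla f(\boldsymbol x)-\nabla f(\boldsymbol y)]+[(\nabla f_i-\nabla f)(\boldsymbol x)-(\nabla f_i-\nabla f)(\boldsymbol x^\star)]-[(\nabla f_i-\nabla f)(\boldsymbol y)-(\nabla f_i-\nabla f)(\boldsymbol x^\star)]$, which leaves a residual of order $\delta\,(\|\boldsymbol x-\boldsymbol x^\star\|+\|\boldsymbol y-\boldsymbol x^\star\|)$ that is \emph{not} proportional to the consensus error; this residual is precisely where $\delta$ enters the descent, so the claim that first-order heterogeneity ``cancels by the definition of $f$'' overstates the cancellation. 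Second, your Lyapunov function and your key estimate $\|\overline{\boldsymbol x}^r-\boldsymbol x^\star\|^2\lesssim \frac{1}{\mu}(f(\overline{\boldsymbol x}^r)-f^\star)$ use a \emph{fixed} minimizer $\boldsymbol x^\star$. Under the PL condition alone the minimizer set $\mathcal X^\star$ need not be a singleton, and quadratic growth only controls $\mathrm{dist}(\overline{\boldsymbol x}^r,\mathcal X^\star)^2$, not the distance to a fixed point; the argument must be run with the projection $\pi(\cdot)$ onto $\mathcal X^\star$ (Assumption~\ref{rehessian} is valid for any shared minimizer, so this is legitimate, but it has to be done).

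For comparison, the paper avoids both issues by working per agent rather than with the average iterate: it applies the descent lemma for $f$ along each local trajectory $\boldsymbol x^i_{r,k+1}=\boldsymbol x^i_{r,k}-\eta\nabla f_i(\boldsymbol x^i_{r,k})$, completes the square so that the heterogeneity appears as $\frac{\eta}{2}\|\nabla f(\boldsymbol x^i_{r,k})-\nabla f_i(\boldsymbol x^i_{r,k})\|^2\le\frac{\eta\delta^2}{2}\|\boldsymbol x^i_{r,k}-\pi(\boldsymbol x^i_{r,k})\|^2\le\frac{\eta\delta^2}{2\mu^2}\|\nabla f(\boldsymbol x^i_{r,k})\|^2$ via the error-bound consequence of PL (Lemma~\ref{PL-EB-QG}), yielding the net coefficient $-\frac{\eta\zeta}{2}$ on $\|\nabla f(\boldsymbol x^i_{r,k})\|^2$ at every local step, and only reassembles $f(\overline{\boldsymbol x}^{r+1})$ at the end of the round through the Jensen-type inequality for $\beta$-weakly convex functions (Lemma~\ref{weakly-con}) -- which is where $\beta$ actually enters, multiplying the consensus error, rather than through ``$\|\nabla f\|^2$ bookkeeping'' as you suggest. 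The consensus recursion (Lemma~\ref{lemma11}) likewise converts the heterogeneity forcing into $\frac{\delta^2}{\mu^2}\|\nabla f(\boldsymbol x^i_{r,k})\|^2$ via the error bound at the \emph{projected} point, so its driving term is the averaged squared gradient $G_r$ rather than $\|\overline{\boldsymbol x}^r-\boldsymbol x^\star\|^2$. Your average-iterate route could likely be repaired by substituting the projection for the fixed $\boldsymbol x^\star$ and the anchored decomposition for the per-agent smoothness bound, but as proposed these two gaps are real.
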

The proof of this theorem is in Section \ref{pr-th2}. Similar to the analysis of local DGT, the communication complexity~\eqref{eq:complexity-DGD-PL} shows under the condition $\delta < \mu$, performing local updates in DGD is effective when $\beta$ and $\rho$ are small. In the special case with {}{$\beta = \rho=0$} 
and $K \to \infty$, our result implies  $\Tilde{\mathcal O}\left(1\right)$ communication complexity suffices for local DGD to achieve $\varepsilon$-optimality for convex $f$.

\begin{remark}
    The condition $\delta < \mu$ is necessary for our analysis to show the effectiveness of local updates. Notably, when $\delta \to \mu$, the complexity bound~\eqref{eq:complexity-DGD-PL} becomes loose and can be less insightful. In the next Section \ref{sec:OLS} we show that this condition can be removed for the quadratic loss $f_i$, and the communication complexity can be tightened. For a general loss $f$, investigating the possibility of removing the condition remains an interesting future work. {}{We also mention that the convergence rate of local DGT given by Theorem~\ref{coro1} can be directly applied to the over-parameterized problems,  showing the algorithm converges to the set of minimizers without requiring $\delta < \mu$. Nonetheless, it is an interesting question if one can improve upon~\eqref{eq:complexity-PL} under the extra Assumption~\ref{rehessian}.} 
\end{remark}


\subsubsection{A special case: the overparameterized linear regression}\label{sec:OLS}
We strengthen the convergence rate of local DGD for linear regression. In this problem, each agent $i$ owns local data $\mathcal{D}_i$ consists of design matrix $\bm{A}_i \in \mathbb{R}^{n \times d}$ and observation $\bm{b}_i \in \mathbb{R}^n$. The agents collaboratively find a linear model that interpolates the data by solving problem~\eqref{problemformula} with $f_i(\bm{x}) = \frac 1 {2n} \|\boldsymbol A_i \boldsymbol x - \boldsymbol b_i\|^2$. We consider the over-parameterization setting where $d > N$, i.e., the problem dimension exceeds the gross sample size of all agents, which implies Assumption~\ref{overpara} hold.

Given the problem setup, define $L$ and $\mu$ respectively as the largest and smallest \emph{nonzero} eigenvalue of the matrix $(1/m) \cdot \sum_{i= 1}^m \bm{A}_i^T \bm{A}_i$, and let $\boldsymbol x^\star$ be the minimum norm solution of~\eqref{problemformula}. Furthermore, define the following potential function 
\begin{align}\label{ols_conv}
 P^R_{\rm OLS} = \left\| \overline{\boldsymbol x}^R - \boldsymbol x^\star \right\|^2 + \frac 1 m \sum_{i=1}^m {\left\| {}{\boldsymbol x^i_{R,0}} - \overline{\boldsymbol x}^R \right\|^2}.
\end{align} 
\begin{theorem} \label{thm3}
Let $\{ \bm{x}_{r,k}^i\}$ be the sequence generated by the local DGD algorithm for the interpolation over-parameterized linear regression problem under Assumption~\ref{W}, {}{\ref{rehessian}} 
with initialization $\boldsymbol x^i_{0,0} = \boldsymbol 0, \ \forall \ i=1,\cdots,m$. Then there exists a step size $\eta \lesssim \min\Big\{\frac 1 L, \frac{1-\rho}{K\mu}, \frac{\mu(1-\rho)}{K\delta^2}\Big\}$ such that
\begin{equation}
\begin{aligned}
  P^R_{\rm OLS} \leq {}{{\mathcal O}} \left( \exp{\left[ - \frac{{}{(K+1)}R}{\frac{L}{\mu} + \left(\frac{\delta}{\mu}\right)^2\frac{K}{1-\rho}   }\right]} \right). 
\end{aligned}
\end{equation}
Consequently, to reach a solution satisfying {$\left\|\overline{\boldsymbol x}^{R+1} - \boldsymbol x^\star \right\|^2 \leq \varepsilon$}, the communication rounds  it takes for local DGD is 
\begin{align}\label{eq:complexity-OLS}
    R = {}{{\mathcal O}} \left( \left( \frac{L}{\mu {}{\left(K+1\right)} } + \frac{\delta^2}{\mu^2 (1 - \rho)} \right) \log \frac{1}{\varepsilon}\right).
\end{align}
\end{theorem}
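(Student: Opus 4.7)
My plan is to exploit the quadratic structure of each $f_i$ to reduce the entire iteration to a \emph{linear} system on the error vectors $\boldsymbol e^i_r := \boldsymbol x^i_{r,0} - \boldsymbol x^\star$, and then close a coupled mean/consensus Lyapunov analysis in which the second-order heterogeneity $\delta$ enters only through a Hessian-similarity bound on the operator polynomial $\boldsymbol M \mapsto (\boldsymbol I - \eta \boldsymbol M)^{K+1}$. Assumption~\ref{overpara} gives $\frac{1}{n}\boldsymbol A_i^T\boldsymbol b_i = \boldsymbol H_i\boldsymbol x^\star$ with $\boldsymbol H_i := \frac{1}{n}\boldsymbol A_i^T\boldsymbol A_i$, so $\nabla f_i(\boldsymbol x) = \boldsymbol H_i(\boldsymbol x - \boldsymbol x^\star)$ and each local step is the linear contraction $\boldsymbol x^i_{r,k+1} - \boldsymbol x^\star = (\boldsymbol I - \eta\boldsymbol H_i)(\boldsymbol x^i_{r,k} - \boldsymbol x^\star)$; unrolling the $K$ local steps together with the final gradient-plus-gossip yields the one-line round recursion $\boldsymbol e^i_{r+1} = \sum_j w_{ij}(\boldsymbol I - \eta\boldsymbol H_j)^{K+1}\boldsymbol e^j_r$. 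Initializing at $\boldsymbol 0$ and the minimum-norm choice of $\boldsymbol x^\star$ keep every $\boldsymbol e^i_r$ in $\mathcal R(\bar{\boldsymbol H})$, the invariant subspace on which $\mu \boldsymbol I \preceq \bar{\boldsymbol H} \preceq L\boldsymbol I$ holds, so $\mu$ and $L$ act as genuine strong-convexity and smoothness constants throughout.

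I would then track the two potentials $U_r := \|\overline{\boldsymbol e}_r\|^2$ and $V_r := \frac{1}{m}\sum_j \|\boldsymbol e^j_r - \overline{\boldsymbol e}_r\|^2$, whose sum equals $P^R_{\rm OLS}$. Averaging the per-agent recursion yields
\begin{equation*}
\overline{\boldsymbol e}_{r+1} = (\boldsymbol I - \eta\bar{\boldsymbol H})^{K+1}\overline{\boldsymbol e}_r + \bar{\boldsymbol{\Delta}}\,\overline{\boldsymbol e}_r + \frac{1}{m}\sum_j (\boldsymbol I - \eta\boldsymbol H_j)^{K+1}\tilde{\boldsymbol e}^j_r,
\end{equation*}
where $\boldsymbol{\Delta}_j := (\boldsymbol I - \eta\boldsymbol H_j)^{K+1} - (\boldsymbol I - \eta\bar{\boldsymbol H})^{K+1}$, $\bar{\boldsymbol{\Delta}} := \frac{1}{m}\sum_j \boldsymbol{\Delta}_j$, and $\tilde{\boldsymbol e}^j_r := \boldsymbol e^j_r - \overline{\boldsymbol e}_r$; subtracting this from the per-agent update and applying $\|\boldsymbol W - \frac{1}{m}\boldsymbol 1 \boldsymbol 1^T\|_2 \leq \rho$ (Assumption~\ref{W}) gives the companion recursion for the consensus error. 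The central ingredient is a polynomial Hessian-similarity estimate: Assumption~\ref{rehessian} reduces in the quadratic case to $\|\boldsymbol H_j - \bar{\boldsymbol H}\|_2 \leq \delta$, and the telescope $\boldsymbol A^{K+1} - \boldsymbol B^{K+1} = \sum_{l=0}^K \boldsymbol A^l(\boldsymbol A - \boldsymbol B)\boldsymbol B^{K-l}$ applied with $\boldsymbol A = \boldsymbol I - \eta\boldsymbol H_j$ and $\boldsymbol B = \boldsymbol I - \eta\bar{\boldsymbol H}$ (both contractions on $\mathcal R(\bar{\boldsymbol H})$ once $\eta \lesssim 1/L$) gives $\|\boldsymbol{\Delta}_j\|_2 \lesssim \eta\delta(K+1)$, while on $\mathcal R(\bar{\boldsymbol H})$ the clean inequality $\|(\boldsymbol I - \eta\bar{\boldsymbol H})^{K+1}\boldsymbol u\| \leq (1-\eta\mu)^{K+1}\|\boldsymbol u\|$ supplies the $(K+1)$ speed-up visible in the first term of~\eqref{eq:complexity-OLS}.

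Substituting these into the two recursions produces a schematic coupled system $U_{r+1} \lesssim (1-\eta\mu)^{2(K+1)}\,U_r + \eta^2\delta^2(K+1)^2\,U_r + V_r$ and $V_{r+1} \lesssim \rho^2 V_r + \rho^2\eta^2\delta^2(K+1)^2\,U_r$, after which I would form the Lyapunov function $\Phi_r = U_r + c V_r$ with $c$ of order $1/(1-\rho^2)$ and pick the step size as the stated $\eta \lesssim \min\{1/L,\,(1-\rho)/(K\mu),\,\mu(1-\rho)/(K\delta^2)\}$. Under this choice the cross term $\eta^2\delta^2(K+1)^2 U_r$ is dominated by the contraction $\eta\mu(K+1)\,U_r$ and the coupling $\rho^2\eta^2\delta^2(K+1)^2 U_r$ is absorbed into the $(1-\rho^2)V_r$ margin, giving $\Phi_{r+1} \leq (1-\alpha)\Phi_r$ with $\alpha \asymp \min\{(K+1)\eta\mu,\,1-\rho\}$; inverting this linear rate yields the claimed communication complexity.

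The main obstacle is the final step: the telescoping bound on $\|\boldsymbol{\Delta}_j\|_2$ carries the factor $K+1$, so after squaring, the induced cross-coupling is $\eta^2\delta^2(K+1)^2$. Keeping it dominated by the optimization contraction $\eta\mu(K+1)$ is precisely what forces the ceiling $\eta \lesssim \mu(1-\rho)/(K\delta^2)$ and, through the squaring, delivers the sharper $\delta^2/\mu^2$ scaling while eliminating the $\delta<\mu$ condition that was needed in Theorem~\ref{thm2}. Balancing this cross-coupling simultaneously against the $\rho^2$ consensus contraction with a single Lyapunov weight $c$ is the delicate piece that uses the linear-regression structure essentially: all $\boldsymbol H_j$'s are time-invariant and share the invariant subspace $\mathcal R(\bar{\boldsymbol H})$, enabling the clean second-order spectral argument sketched here in a way that does not directly transfer to the general PL setting.
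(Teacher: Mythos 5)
Your proposal is correct in its essential mechanism and reaches the same coupled two-block Lyapunov system as the paper, but it organizes the within-round analysis differently. The paper (Lemmas~\ref{lemma13} and \ref{lemma14}) tracks the mean error $e_{r,k}$ and consensus error $S_{r,k}$ \emph{at every local step}, derives one-step recursions via Young's inequality and the restricted co-coercivity of $f$ on ${\rm row}(\boldsymbol A)$, and then aggregates the sums $\sum_k e_{r,k}$ and $\sum_k S_{r,k}$ over the round. You instead collapse the entire round into the exact linear map $\boldsymbol e^i_{r+1}=\sum_j w_{ij}(\boldsymbol I-\eta\boldsymbol H_j)^{K+1}\boldsymbol e^j_r$ and quantify heterogeneity through the single operator bound $\|(\boldsymbol I-\eta\boldsymbol H_j)^{K+1}-(\boldsymbol I-\eta\bar{\boldsymbol H})^{K+1}\|_2\lesssim\eta\delta(K+1)$ obtained from the telescoping identity for matrix powers; this removes the need to control intermediate consensus errors and makes the origin of the $(K+1)$ speed-up and of the $\eta\lesssim\mu(1-\rho)/(K\delta^2)$ ceiling more transparent. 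Both routes exploit the same two structural facts (time-invariant Hessians with $\|\boldsymbol H_j-\bar{\boldsymbol H}\|_2\le\delta$ from Assumption~\ref{rehessian}, and invariance of ${\rm row}(\boldsymbol A)$ where $\mu\boldsymbol I\preceq\bar{\boldsymbol H}\preceq L\boldsymbol I$), and both land on recursions of the form $U_{r+1}\le(1-c_1(K{+}1)\eta\mu)U_r+c_2\tfrac{K\eta\delta^2}{\mu}V_r$ and $V_{r+1}\le\tfrac{1+\rho}{2}V_r+c_3\tfrac{K^2\eta^2\delta^2}{1-\rho}(U_r+V_r)$, so the final complexity is identical. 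One caution: your schematic mean-error recursion writes the coupling from $V_r$ as ``$+V_r$'' with unit coefficient, which if taken literally would defeat the Lyapunov argument since the $U$-contraction is only $1-\Theta((K{+}1)\eta\mu)$. The correct coefficient, after noting that $(\boldsymbol I-\eta\bar{\boldsymbol H})^{K+1}\tfrac1m\sum_j\tilde{\boldsymbol e}^j_r=\boldsymbol 0$ and applying Cauchy--Schwarz and Young with parameter $\epsilon\asymp(K{+}1)\eta\mu$, is $O(K\eta\delta^2/\mu)$ --- exactly the quantity that your step-size ceiling is designed to suppress and that produces the $\delta^2/(\mu^2(1-\rho))$ term in \eqref{eq:complexity-OLS} --- so this is an imprecision in the write-up rather than a gap in the argument.
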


The proof of this theorem is in Section \ref{pr-th3}. Theorem~\ref{thm3} improves the result of Theorem~\ref{thm2} (with $\beta = 0$) in the following aspects. First, it removes the condition  $\delta<\mu$ and applies to problems with any bounded $\delta$. Second, under the same condition of Theorem~\ref{thm2}, i.e., $\delta < \mu$, the expression~\eqref{eq:complexity-OLS} becomes ${\mathcal{O}} ( ( \frac{L}{\mu {}{(K+1)}} + \frac{1}{ 1 - \rho} ) \log \frac{1}{\varepsilon})$. This improves~\eqref{eq:complexity-DGD-PL} in both the first and the third term.
Two consequences follow: in the limit $\rho \to 1$, \eqref{eq:complexity-OLS} demonstrates a better dependency on the network connectivity; and when $K$ is sufficiently large, {}{terms involving $L$ in ~\eqref{eq:complexity-OLS} becomes negligible while those in} \eqref{eq:complexity-DGD-PL} additionally requires $\rho$ to be small enough. \\[2ex]
{\it When to apply gradient tracking?} Although the local DGD is convergent, one can always employ an extra gradient tracking step in this setting at twice the communication cost per round. The question is if doing so is worthwhile, in the sense that it can help reduce the overall communication complexity. To investigate the answer, we invoke Theorem~\ref{coro1} and set $\beta=0$ in \eqref{eq:complexity-PL}. Since the least squares loss function $f$ is PL, 
this gives the communication complexity of local DGT for over-parameterized linear regression as 
\begin{align}\label{eq:complexity-OLS-DGT}
   {}{{\mathcal{O}}} \left( \left(\frac{L}{\mu {}{(K+1)}} + \frac{\delta + {}{\mu} }{\mu (1 - \rho)} + \frac{\rho }{(1 - \rho)^2} \cdot \frac{ \delta + L}{\mu} \right)  \log \frac{1}{\varepsilon} \right).
\end{align}
Compared to \eqref{eq:complexity-OLS}, we observe that when $\delta$ is large
local DGT exhibits better dependence on the heterogeneity term $\frac{\delta}{\mu}$ in contrast to $\left( \frac{\delta}{\mu}\right)^2$ in local DGD. However, local DGT has an extra term $\frac{\rho}{(1-\rho)^2} \cdot \frac{\delta+L}{\mu}$ involving the smoothness parameter $L$. Therefore, for problems with small $\delta$ and large $\rho$, indicating a small degree of heterogeneity and low network connectivity, {the results suggest that local DGD may be more communication-efficient than local DGT. The conjecture is validated numerically in Section~\ref{sec:sim-OLS}.
} 
\begin{remark}
Theorem~\ref{thm3} shows when initializing $\bm{x}^i$s to be zero, the iterates generated by local DGD will be consensual, and converge to the minimum norm solution $\bm{x}^\star$. In fact, based on \cite[Proposition 2]{shamir2023implicit}, the initialization condition can be relaxed to  $\{\boldsymbol x_{0,0}^i\}_{i=1}^m \in {}{row}\left( \{\boldsymbol A_i\}_{i=1}^m\right)$. Similarly, we can conclude from Theorem~\ref{coro1} that local DGT with the same initialization condition finds $\bm{x}^\star$. In conclusion, both local DGD and -DGT induce an implicit bias akin to centralized gradient \cite{gunasekar2018characterizing} for over-parameterized linear regression. 
\end{remark}

{}{
\begin{remark}
Distributed estimation of linear models have been widely studied in early works such as \cite{cattivelli2009diffusion,lopes2008diffusion,kar2012distributed}. These works consider the setting where the sample size is larger than the problem dimension and focus on characterizing the MSE of the estimators computed by the algorithm.
In complement, our result is for the over-parameterized problems
 where the sample size \( N \) is smaller than dimension \( d \), a setting more common in modern applications such as genomics, bioinformatics, and neuroscience \cite{wainwright2019high}. We prove that among all minimizes, 
 both DGT and DGD converge to the one with smallest $\ell_2$ norm without explicit regularization, which can lead to better generalization, as supported by findings in \cite{bartlett2020benign,shamir2022implicit}. The linear model serves as a prototype for exploring more complicated models, such as neural networks in the NTK regime.
\end{remark}
}

\section{Proof of the theorems}\label{proofs}
Recall that $m$ denotes the number of agents and $K$ denotes the number of extra local updates, $\boldsymbol x^i_{r,k} \in \mathbb R^d$ and $\boldsymbol y^i_{r,k} \in \mathbb R^d$ denote optimization variable and gradient tracking variable of agent $i$ at $k$-th local update in $r$-th communication round, respectively.  To write the algorithm in a compact form, we introduce the stacked variables and their averages as follows
\begin{align}
&\boldsymbol X_{r,k} \triangleq [\boldsymbol x^1_{r,k},\cdots,\boldsymbol x^m_{r,k}] \in \mathbb R^{d\times m},\; \overline{\boldsymbol x}_{r,k} \triangleq \frac 1  m \sum_{i=1}^m{\boldsymbol x^i_{r,k}} \in \mathbb R^d; \nonumber \\ & \overline{\boldsymbol x}^{r} \triangleq \overline{\boldsymbol x}_{r,0} \in \mathbb R^d,\;\overline{\boldsymbol X} \triangleq \overline{\boldsymbol x}\boldsymbol 1^T_m \in \mathbb R^{d\times m},\quad \overline{\boldsymbol y}^r = \overline{\boldsymbol y}_{r,0} \in \mathbb R^d ; \nonumber \\
&  \boldsymbol Y_{r,k} \triangleq [\boldsymbol y^1_{r,k}, \cdots, \boldsymbol y^m_{r,k}] \in \mathbb R^{d\times m}, \quad \overline{\boldsymbol y}_{r,k} \triangleq \frac 1 m \sum_{i=1}^m {\boldsymbol y}^i_{r,k};  \nonumber \\
& \nabla \boldsymbol F(\boldsymbol X_{r,k}) \triangleq \left[ \nabla f_1(\boldsymbol x^1_{r,k}),\cdots,\nabla f_m(\boldsymbol x^m_{r,k}) \right] \in \mathbb R^{d\times m}; \nonumber \\
& \nabla \boldsymbol f(\boldsymbol X_{r,k}) \triangleq [\nabla f(\boldsymbol x^1_{r,k}),\cdots,\nabla f(\boldsymbol x^m_{r,k})]\in \mathbb R^{d\times m}. 
\end{align}
With the above notation,  we rewrite the local DGT in \eqref{eq:DGT-comp}, \eqref{stand-dgt} as follows: $\forall r = 0,1,\cdots$ and $\forall k=0,1,\cdots,K-1$
\begin{align}\label{reiter}
 &\boldsymbol X_{r,k+1} = \boldsymbol X_{r,k} - \eta \Big(\boldsymbol Y_{r} + \nabla \boldsymbol F(\boldsymbol X_{r,k}) - \nabla \boldsymbol F(\boldsymbol X_{r})\Big) \nonumber \\
 & \boldsymbol Y_{r,k+1} = \boldsymbol Y_{r} + \nabla \boldsymbol F(\boldsymbol X_{r,k+1}) -  \nabla \boldsymbol F(\boldsymbol X_{r}) \nonumber \\
& \boldsymbol X_{r+1} = \Big(\boldsymbol X_{r,K} - \eta \boldsymbol Y_{r,K}\Big) \boldsymbol W \nonumber \\
 & \boldsymbol Y_{r+1} =  \Big(\boldsymbol Y_{r,K} + \nabla \boldsymbol F(\boldsymbol X_{r+1})  - \nabla \boldsymbol F(\boldsymbol X_{r,K})\Big)\boldsymbol W,
\end{align}
where we use $\boldsymbol X_r, \boldsymbol Y_r$ to denote $\boldsymbol X_{r,0}, \boldsymbol Y_{r,0}$ for simplicity.

Further, we define $\boldsymbol X_{r,K+1} \triangleq \boldsymbol X_{r,K} - \eta \boldsymbol Y_{r,K}$ and $\overline{\boldsymbol x}_{r,K+1} \triangleq  \frac{1}{m} \sum_{i=1}^m{\boldsymbol x^i_{r,K+1}}$. Using the double stochasticity of $\bm{W}$, $\forall k = 0, 1, \cdots, K-1$, we have
\begin{align}\label{eq:avg-local-GT}
     & \overline{\boldsymbol x}_{r,k+1} = \overline{\boldsymbol x}_{r,k} - \frac{\eta}{m} \sum_{i=1}^m{ \left(\nabla f_i(\boldsymbol x_{r,k}^i) - \nabla f_i(\boldsymbol x^i_r) \right)} - \eta  \overline{\boldsymbol y}^r; \nonumber \\
    & \overline{\boldsymbol x}^{r+1} = \overline{\boldsymbol x}_{r,K+1} = \overline{\boldsymbol x}_{r,K} - \eta \overline{\boldsymbol y}_{r,K}; \nonumber \\ 
& \overline{\boldsymbol y}_{r,k+1} = \overline{\boldsymbol y}^r + \frac 1 m \sum_{i=1}^m { \left( \nabla f_i(\boldsymbol x^i_{r,k+1}) - \nabla f_i(\boldsymbol x_r^i) \right) }; \nonumber \\
& \overline{\boldsymbol y}^{r+1} = \overline{\boldsymbol y}^r + \frac{1}{m} \sum_{i=1}^m{ \left ( \nabla f_i(\boldsymbol x^i_{r+1}) - \nabla f_i(\boldsymbol x^i_{r}) \right) }.
\end{align}

\noindent All our proofs are based on the following metrics:
\begin{align}\label{more_definis}
& {s_r^i \triangleq  s^i_{r,0} \; ; \; s^i_{r,k} \triangleq \| \boldsymbol x^i_{r,k} - \overline{\boldsymbol x}_{r,k} \|^2 } \; ; \; {S_r \triangleq S_{r,0} \, ; \, S_{r,k} = \sum_{i=1}^m {s^i_{r,k}}}.\nonumber \\
& \Gamma_r \triangleq \frac{\|\boldsymbol Y_r - \nabla \boldsymbol f(\boldsymbol X_r) \|^2}{m} \; ; \; G_r \triangleq \frac { \sum_{i=1}^m\sum_{k=0}^K{\|\nabla f(\boldsymbol x^i_{r,k})\|^2}} {m(K+1)}; \nonumber \\
&F_r \triangleq f(\overline{\boldsymbol x}^{r}) - f(\boldsymbol x^\star)\; ; \; D_r \triangleq  \sum_{i=1}^m \sum_{k=0}^K \left\| \boldsymbol x^i_{r,k} - \overline{\boldsymbol x}^r \right\|^2.
\end{align}
\subsection{Proof of Theorem \ref{th1} }\label{pr-th1}
The proof outline of Theorem 1 is as follows and it is constructed by proving a series of key lemmas. In specific, we need Lemma \ref{lemma1} to bound the client drift from the average $\|\boldsymbol x^i_{r,k} - \overline{\boldsymbol x}^{r} \|^2$ in $r$-th round due to local updates. We quantify the decrease of the suboptimality gap $F_r$ for total average loss $f$ by one step of local update in Lemma \ref{lemma2}. Combining the two results, we can obtain the contraction of suboptimality gap perturbed by consensus error $\|\boldsymbol X_r - \overline{\boldsymbol X}^r\|^2$ and tracking error $\|\boldsymbol Y_r - \nabla \boldsymbol f(\boldsymbol X_r)\|$ in Lemma \ref{lemma3}. This technique differs from existing analysis \cite{huang2023computation,liu2024decentralized,alghunaim2024local} that directly bounds the reduction of the suboptimality gap over communication rounds. Then we separately bound the consensus and tracking error along the communication round in Lemma \ref{lemma4} and Lemma \ref{lemma6}. Finally, integrating these error bounds in the above lemmas can prove the final result.


The subsequent lemma provides a bound for the drift  $D_r$ accumulated over one round before consensus averaging. 

\begin{lemma}\label{lemma1}
Suppose $f_i$s satisfy Assumption \ref{hessian}, if $\eta \leq \frac{1}{4\sqrt{5}\delta K}$, then for $r$-th round in local DGT, it holds that
\begin{align}\label{finaldrift}
& D_r \leq 6mK \cdot S_r + 80mK^3\eta^2 \cdot \Gamma_r + 80mK^3\eta^2 \cdot G_r. 
\end{align}

\end{lemma}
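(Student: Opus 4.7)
The target of Lemma~\ref{lemma1} is to control the intra-round drift $D_r$ via the initial consensus error $S_r$, the tracking error $\Gamma_r$, and the average squared gradient $G_r$. The strategy is to unroll the $K$ local updates, split each per-step increment into three structured pieces (tracking error, mean gradient, and a second-order heterogeneity residual), and then close a mildly self-referential inequality using the step size condition. First I would separate drift from initial consensus error through $\|\boldsymbol x^i_{r,k}-\overline{\boldsymbol x}^r\|^2 \le 2\|\boldsymbol x^i_{r,k}-\boldsymbol x^i_r\|^2 + 2 s^i_r$, which yields
\begin{equation*}
D_r \le 2 \sum_{i=1}^m \sum_{k=0}^K \|\boldsymbol x^i_{r,k}-\boldsymbol x^i_r\|^2 + 2(K+1)S_r,
\end{equation*}
so only the first sum needs attention (the case $K\ge 1$ being the interesting one).

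For that sum, unrolling the local DGT iteration in~\eqref{reiter} gives
\begin{equation*}
\boldsymbol x^i_{r,k}-\boldsymbol x^i_r = -\eta \sum_{t=0}^{k-1}\bigl[(\boldsymbol y^i_r - \nabla f(\boldsymbol x^i_r)) + \nabla f(\boldsymbol x^i_{r,t}) + \boldsymbol H^i_{r,t}\bigr],
\end{equation*}
where the Hessian-similarity residual $\boldsymbol H^i_{r,t} := [\nabla f_i(\boldsymbol x^i_{r,t})-\nabla f(\boldsymbol x^i_{r,t})] - [\nabla f_i(\boldsymbol x^i_r)-\nabla f(\boldsymbol x^i_r)]$ obeys $\|\boldsymbol H^i_{r,t}\|^2 \le \delta^2 \|\boldsymbol x^i_{r,t}-\boldsymbol x^i_r\|^2$ by Assumption~\ref{hessian}. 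Applying $\|a+b+c\|^2\le 3(\|a\|^2+\|b\|^2+\|c\|^2)$ together with Jensen across the $k$ summands, then summing $k=0,\dots,K$ while exchanging the order of $k$ and $t$ so that $\sum_{k=1}^K\sum_{t=0}^{k-1}(\cdot) \le K\sum_{t=0}^{K-1}(\cdot)$, one arrives at
\begin{equation*}
D^i_r \le 3\eta^2 K^3 \|\boldsymbol y^i_r - \nabla f(\boldsymbol x^i_r)\|^2 + 3\eta^2 K^2 \sum_{t=0}^{K}\|\nabla f(\boldsymbol x^i_{r,t})\|^2 + 3\eta^2 K^2 \delta^2 D^i_r,
\end{equation*}
where $D^i_r := \sum_{k=0}^K \|\boldsymbol x^i_{r,k}-\boldsymbol x^i_r\|^2$.

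Finally, the step size condition $\eta \le 1/(4\sqrt{5}\,\delta K)$ gives $3\eta^2 K^2 \delta^2 \le 3/80 \le 1/2$, so the self-referential term on the right is absorbed to the left, at the cost of doubling the remaining coefficients. Summing over $i\in[m]$, using $\sum_i \|\boldsymbol y^i_r - \nabla f(\boldsymbol x^i_r)\|^2 = m\Gamma_r$, $\sum_{i,t}\|\nabla f(\boldsymbol x^i_{r,t})\|^2 = m(K+1)G_r$, and $K+1 \le 2K$, I would combine the result with the $2(K+1)S_r$ piece to reach~\eqref{finaldrift}; the constants $6$ and $80$ in the statement are a loose envelope for what this bookkeeping produces. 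The main obstacle to anticipate is the Hessian-similarity self-reference, which would otherwise blow up linearly in $K$: the precise role of the $\eta = \mathcal{O}(1/(\delta K))$ bound is to contract this discrete Gronwall and pin down the $\eta^2 K^3$ scaling; the rest is routine application of Jensen's inequality and careful accounting of the double sums.
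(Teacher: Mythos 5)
Your proof is correct and follows essentially the same route as the paper: both arguments decompose the local update direction into the tracking error $\boldsymbol y^i_r - \nabla f(\boldsymbol x^i_r)$, the mean gradient $\nabla f(\boldsymbol x^i_{r,k})$, and a second-order heterogeneity residual controlled by Assumption~\ref{hessian}, and both use $\eta \lesssim 1/(\delta K)$ to absorb the resulting self-referential $\delta^2$ term. The only differences are bookkeeping — the paper runs a per-step contraction $(1+\tfrac{1}{2K})$ on $\|\boldsymbol x^i_{r,k}-\overline{\boldsymbol x}^r\|^2$ with $\overline{\boldsymbol x}^r$ as the intermediate point, while you peel off $2(K+1)S_r$ first and close one aggregated Gronwall inequality for $\sum_k\|\boldsymbol x^i_{r,k}-\boldsymbol x^i_r\|^2$ — and your constants land comfortably within the stated bound.
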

\begin{proof}
    See Appendix.\ref{pr-lma1}.
\end{proof}

Note that this bound is independent of the smoothness parameter $L$ but relies on the second-order heterogeneity parameter $\delta$, which is a main difference from the existing analysis {}{\cite{huang2023computation,liu2024decentralized,nguyen2023performance,ge2023gradient}}.

Then we show the descent of the average loss $f$ for one local update step, which is upper bounded by the tracking error, consensus error, and client drift.

\begin{lemma}\label{lemma2}
Suppose $f$ satisfies Assumption \ref{sm} and $f_i$s satisfy  Assumption \ref{hessian}, if $\eta \leq \min \left\{\frac 1 L, \frac{1}{30\delta K} \right\}$, then local DGT has for all { $k = 0,\cdots, K$ } that
\begin{align}\label{desl}
& f(\boldsymbol x_{r,k+1}^i) \leq f(\boldsymbol x_{r,k}^i) + \frac {3\eta} 2  \left\| \nabla f(\boldsymbol x_r^i) -\boldsymbol y_r^i  \right\|^2  - \frac{\eta}{2} \left\|\nabla f(\boldsymbol x^i_{r,k}) \right\|^2 \nonumber \\
& \; + \frac{3\eta\delta^2}2 \left\|\overline{\boldsymbol x}^r - \boldsymbol x^i_r \right\|^2  + \frac \delta {10(K+1)} \left\|\boldsymbol x^i_{r,k} - \overline{\boldsymbol x}^r \right\|^2.
\end{align}
\end{lemma}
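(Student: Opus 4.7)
The plan is to derive the inequality from a one-step $L$-smoothness descent for $f$ along the update $\boldsymbol x_{r,k+1}^i - \boldsymbol x_{r,k}^i = -\eta\,\boldsymbol y_{r,k}^i$, then isolate the ``mismatch'' $\boldsymbol y_{r,k}^i - \nabla f(\boldsymbol x_{r,k}^i)$ and control it using the inner GT recursion together with Assumption~\ref{hessian}. Starting from the descent inequality
\begin{equation*}
f(\boldsymbol x_{r,k+1}^i) \leq f(\boldsymbol x_{r,k}^i) - \eta \langle \nabla f(\boldsymbol x_{r,k}^i),\boldsymbol y_{r,k}^i\rangle + \tfrac{L\eta^2}{2}\|\boldsymbol y_{r,k}^i\|^2,
\end{equation*}
I would apply the polarization identity $-\langle a,b\rangle = \tfrac12(\|a-b\|^2-\|a\|^2-\|b\|^2)$ and absorb $\tfrac{L\eta^2}{2}\|\boldsymbol y_{r,k}^i\|^2$ into the resulting $-\tfrac{\eta}{2}\|\boldsymbol y_{r,k}^i\|^2$ by invoking $\eta \leq 1/L$. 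This yields
\begin{equation*}
f(\boldsymbol x_{r,k+1}^i) \leq f(\boldsymbol x_{r,k}^i) - \tfrac{\eta}{2}\|\nabla f(\boldsymbol x_{r,k}^i)\|^2 + \tfrac{\eta}{2}\|\boldsymbol y_{r,k}^i - \nabla f(\boldsymbol x_{r,k}^i)\|^2,
\end{equation*}
accounting for the $-\tfrac{\eta}{2}\|\nabla f(\boldsymbol x_{r,k}^i)\|^2$ term in the target.

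Next, I would telescope the inner GT update to get $\boldsymbol y_{r,k}^i = \boldsymbol y_r^i + \nabla f_i(\boldsymbol x_{r,k}^i) - \nabla f_i(\boldsymbol x_r^i)$. Inserting $\nabla f(\boldsymbol x_r^i)$ inside the mismatch decomposes it as the initial tracking error $\boldsymbol y_r^i - \nabla f(\boldsymbol x_r^i)$ plus the Hessian-similarity remainder $(\nabla f_i-\nabla f)(\boldsymbol x_{r,k}^i) - (\nabla f_i-\nabla f)(\boldsymbol x_r^i)$, whose norm is at most $\delta\|\boldsymbol x_{r,k}^i-\boldsymbol x_r^i\|$ by Assumption~\ref{hessian}. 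Applying Young's inequality with parameter $2$ (splitting $\|a+b\|^2$ as $3\|a\|^2 + \tfrac32\|b\|^2$) converts $\tfrac{\eta}{2}\|\boldsymbol y_{r,k}^i - \nabla f(\boldsymbol x_{r,k}^i)\|^2$ into the desired $\tfrac{3\eta}{2}\|\boldsymbol y_r^i - \nabla f(\boldsymbol x_r^i)\|^2$ plus a residual $\tfrac{3\eta\delta^2}{4}\|\boldsymbol x_{r,k}^i - \boldsymbol x_r^i\|^2$.

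Finally, I would split the residual around $\overline{\boldsymbol x}^r$ by the symmetric bound $\|a+b\|^2 \leq 2\|a\|^2 + 2\|b\|^2$, producing $\tfrac{3\eta\delta^2}{2}\|\boldsymbol x_{r,k}^i-\overline{\boldsymbol x}^r\|^2 + \tfrac{3\eta\delta^2}{2}\|\overline{\boldsymbol x}^r - \boldsymbol x_r^i\|^2$. The second summand already matches the lemma. For the first, the step-size budget $\eta \leq 1/(30\delta K)$ gives $\tfrac{3\eta\delta^2}{2} \leq \tfrac{\delta}{20K} \leq \tfrac{\delta}{10(K+1)}$ whenever $K\geq 1$, which is the only nontrivial regime. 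I do not anticipate any genuine obstacle; the one subtlety is to choose the Young's-inequality parameters so that, after the step-size budget is spent, the coefficient on $\|\boldsymbol x_{r,k}^i-\overline{\boldsymbol x}^r\|^2$ takes the precise form $\tfrac{\delta}{10(K+1)}$ rather than just $\mathcal O(\delta/K)$. This exact scaling is what will let the drift bound of Lemma~\ref{lemma1} (which carries a factor of $K$) be absorbed cleanly when the present inequality is summed over $k=0,\ldots,K$ in the next lemma.
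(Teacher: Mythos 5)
Your proposal is correct and follows essentially the same route as the paper's proof: descent lemma along the update $-\eta\boldsymbol y_{r,k}^i$ (with $\boldsymbol y_{r,k}^i=\boldsymbol y_r^i+\nabla f_i(\boldsymbol x_{r,k}^i)-\nabla f_i(\boldsymbol x_r^i)$ from telescoping), polarization plus $\eta\le 1/L$ to discard the squared-step term, decomposition of the mismatch into the round-initial tracking error and a Hessian-similarity remainder, and Young's inequality with the budget $\eta\le 1/(30\delta K)$ to land on the $\tfrac{\delta}{10(K+1)}$ coefficient. The only cosmetic difference is that you apply Assumption~\ref{hessian} once between $\boldsymbol x_{r,k}^i$ and $\boldsymbol x_r^i$ and then split the distance around $\overline{\boldsymbol x}^r$, whereas the paper inserts $\pm(\nabla f(\overline{\boldsymbol x}^r)-\nabla f_i(\overline{\boldsymbol x}^r))$ at the gradient level and invokes the assumption twice; the resulting constants are identical.
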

\begin{proof}
   See Appendix.\ref{pf:lem-2}.
\end{proof}

Combining Lemma \ref{lemma1} and Lemma \ref{lemma2}, we can prove the decrease of optimality gap $f(\overline{\boldsymbol x}^r) - f(\bm{x}^\star)$ along the round.  
\begin{lemma}\label{lemma3}
Under the same setting of Lemma \ref{lemma1} and \ref{lemma2}, in addition, {}{assume further that} Assumption 4 holds  for average loss $f$ and $\eta \leq \min \left\{\frac 1 L, \frac{1}{240\delta K}, \frac{1}{5\mu K} \right\}$, then local DGT has
\begin{align}\label{ddescent}
F_{r+1} &\leq \left(1-\frac{\mu {}{(K+1)} \eta}{8} \right) \cdot F_r + 4K\eta \cdot  \Gamma_r   \nonumber \\
& \quad + \Big(4\delta^2K\eta + 3(L+\delta)\Big) \cdot S_r - \frac{K\eta}{16} \cdot G_r.
\end{align}
\end{lemma}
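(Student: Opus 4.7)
The plan is to cascade the one-step descent inequality of Lemma~\ref{lemma2} over the $K+1$ local updates, pass from the average of per-agent losses to $f(\overline{\boldsymbol x}^{r+1})$ by Jensen's inequality, then combine Lemma~\ref{lemma1} with the PL property implied by $\mu$-strong convexity to extract an $F_r$ contraction while preserving a residual negative $G_r$ term.

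First I sum \eqref{desl} over $k=0,\ldots,K$ for each agent $i$ (telescoping the left-hand side to $f(\boldsymbol x_{r,K+1}^i) - f(\boldsymbol x_r^i)$) and then average over $i$; this rewrites the right-hand side cleanly in terms of $(K+1)\Gamma_r$, $(K+1)G_r$, $(K+1)\delta^2 S_r$ and $\delta D_r/(K+1)$ up to absolute constants. Since $\boldsymbol W$ is doubly stochastic we have $\overline{\boldsymbol x}^{r+1} = \frac{1}{m}\sum_i \boldsymbol x_{r,K+1}^i$, so convexity of $f$ yields $f(\overline{\boldsymbol x}^{r+1}) \leq \frac{1}{m}\sum_i f(\boldsymbol x_{r,K+1}^i)$; conversely, $L$-smoothness gives $\frac{1}{m}\sum_i f(\boldsymbol x_r^i) \leq f(\overline{\boldsymbol x}^r) + \frac{L}{2m}S_r$, the linear part averaging to zero. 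Subtracting $f^\star$ produces a preliminary inequality of the form $F_{r+1} \leq F_r + \alpha_S S_r + \alpha_\Gamma \Gamma_r - \alpha_G G_r + \alpha_D D_r$ with $\alpha_G = \Theta((K+1)\eta)$ and $\alpha_D = \Theta(\delta/(K+1))$. Plugging Lemma~\ref{lemma1} into the $D_r$ term redistributes it across $S_r$, $\Gamma_r$ and $G_r$; under $\eta \leq 1/L$ and $\eta \leq 1/(240\delta K)$ the induced $\eta^2 K^2$ and $\eta^3 K^3$ corrections are of lower order than the original coefficients and leave the stated form intact.

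The last step converts the surviving $-\Theta(K\eta)\,G_r$ into a contraction on $F_r$. I split that coefficient in half; for one half I apply
\[
\|\nabla f(\overline{\boldsymbol x}^r)\|^2 \leq 2\|\nabla f(\boldsymbol x_{r,k}^i)\|^2 + 2L^2\|\boldsymbol x_{r,k}^i - \overline{\boldsymbol x}^r\|^2
\]
averaged over $(i,k)$, combined with the PL inequality $\|\nabla f(\overline{\boldsymbol x}^r)\|^2 \geq 2\mu F_r$ (implied by Assumption~\ref{scv}), to obtain $G_r \geq \mu F_r - L^2 D_r/[m(K+1)]$; a second invocation of Lemma~\ref{lemma1} on this new $D_r$, together with the additional constraint $\eta \leq 1/(5\mu K)$, delivers the advertised contraction factor $(1-\mu(K+1)\eta/8)$ while retaining at least $-K\eta/16$ on $G_r$. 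The main difficulty is coefficient bookkeeping across the two applications of Lemma~\ref{lemma1}: every higher-order term generated in this absorption must be shown, under the three stated step-size constraints, to be strictly dominated by the principal term it would otherwise perturb, so that the final $\Gamma_r$ coefficient $4K\eta$ and $S_r$ coefficient $4\delta^2 K\eta + 3(L+\delta)$ emerge cleanly.
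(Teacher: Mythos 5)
Your opening reduction (telescoping Lemma~\ref{lemma2} over $k$, Jensen for $f(\overline{\boldsymbol x}^{r+1})$, smoothness for $\tfrac1m\sum_i f(\boldsymbol x^i_r)$, then Lemma~\ref{lemma1} for the $D_r$ term) is sound and produces a valid preliminary inequality. The gap is in your final step, where you convert the $-\Theta((K+1)\eta)G_r$ surplus into a contraction on $F_r$ via
\[
G_r \;\geq\; \mu F_r \;-\; \frac{L^2}{m(K+1)}\,D_r .
\]
The error term here carries the smoothness constant $L$ onto the accumulated drift. Multiplying by your coefficient $\tfrac{(K+1)\eta}{4}$ gives $\tfrac{\eta L^2}{4m}D_r$, and substituting Lemma~\ref{lemma1} ($D_r \leq 6mK S_r + 80mK^3\eta^2\Gamma_r + 80mK^3\eta^2 G_r$) produces terms of order $\eta L^2 K\, S_r$ and $L^2K^3\eta^3(\Gamma_r+G_r)$. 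Under the stated constraints $\eta \leq \min\{\tfrac1L,\tfrac1{240\delta K},\tfrac1{5\mu K}\}$ the best you can say is $\eta L^2K \leq LK$ and $L^2K^3\eta^3 \leq K^3\eta$, i.e.\ the $S_r$ coefficient is inflated by a factor $K$ over the target $3(L+\delta)$ and the $\Gamma_r$/$G_r$ coefficients by a factor $K^2$ over $4K\eta$ and $K\eta/16$. Absorbing them would force the extra condition $\eta \lesssim \tfrac1{LK}$, which is not assumed and would turn the $\tfrac{L}{\mu(K+1)}$ term of the final complexity back into $\tfrac{L}{\mu}$ --- precisely the dependence the lemma is designed to avoid.

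The paper's proof circumvents this by never routing the gradient-norm-to-suboptimality conversion through $\overline{\boldsymbol x}^r$. It applies strong convexity at each local iterate, $\|\nabla f(\boldsymbol x^i_{r,k})\|^2 \geq 2\mu\,(f(\boldsymbol x^i_{r,k})-f^\star)$, inside the per-step Lyapunov function $E^{r,k}_i = f(\boldsymbol x^i_{r,k})-f^\star + \delta(1+\tfrac1{K+1})^{K+1-k}\|\boldsymbol x^i_{r,k}-\overline{\boldsymbol x}^r\|^2$, whose geometrically weighted drift term contracts step by step and cancels the $\tfrac{\delta}{10(K+1)}\|\boldsymbol x^i_{r,k}-\overline{\boldsymbol x}^r\|^2$ increment from Lemma~\ref{lemma2}. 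This way only $\delta$ (never $L$) ever multiplies the drift --- which is exactly the point the paper highlights after Lemma~\ref{lemma1} --- and the $(K+1)$-fold product of per-step factors $(1-\tfrac{\mu\eta}{4})$ yields the contraction $1-\tfrac{\mu(K+1)\eta}{8}$ without any $\eta\lesssim\tfrac1{LK}$ requirement. To repair your argument you would need to replace the smoothness-based comparison $\|\nabla f(\overline{\boldsymbol x}^r)\|^2 \leq 2\|\nabla f(\boldsymbol x^i_{r,k})\|^2 + 2L^2\|\boldsymbol x^i_{r,k}-\overline{\boldsymbol x}^r\|^2$ with a mechanism that applies the PL/strong-convexity inequality at the local iterates themselves.
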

\begin{proof}
    See Appendix.\ref{pf:lem-3}.
\end{proof}

Next, we establish the dynamic of consensus error $S_r$.
\begin{lemma}\label{lemma4}
Under the same setting of Lemma~\ref{lemma1}, suppose $f_i$s satisfy Assumption \ref{hessian}, the mixing matrix $\boldsymbol W $  satisfies Assumption \ref{W}, and $\eta \leq  \frac{1-\rho}{8\sqrt{5}\delta K}$ in local DGT, it has
\begin{align}\label{dconsensus}
S_{r+1} \leq \left( \frac{1+\rho}{2}\right) \cdot S_r + \frac{90\rho^2K^2\eta^2}{1-\rho} \cdot \left( \Gamma_r + G_r \right).
\end{align}
\end{lemma}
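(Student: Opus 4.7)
The plan is to combine the mixing contraction of the gossip matrix $\boldsymbol W$ with a careful unrolling of the $K$ local updates, and then use Assumption~\ref{hessian} together with Lemma~\ref{lemma1} to absorb the resulting client-drift residue back into the consensus dynamic without ever invoking the smoothness constant $L$.

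First I would exploit the double stochasticity of $\boldsymbol W$. Writing $\boldsymbol X_{r+1} = (\boldsymbol X_{r,K}-\eta \boldsymbol Y_{r,K}) \boldsymbol W$ and noting that $\boldsymbol W \boldsymbol J = \boldsymbol J$ with $\boldsymbol J := (1/m)\boldsymbol 1_m \boldsymbol 1_m^T$, the averaged iterate satisfies $\overline{\boldsymbol X}^{r+1} = (\boldsymbol X_{r,K}-\eta \boldsymbol Y_{r,K}) \boldsymbol J$, hence
\begin{equation*}
\boldsymbol X_{r+1} - \overline{\boldsymbol X}^{r+1} = (\boldsymbol X_{r,K}-\eta \boldsymbol Y_{r,K})(\boldsymbol I - \boldsymbol J)(\boldsymbol W - \boldsymbol J),
\end{equation*}
and Assumption~\ref{W} gives $S_{r+1} \leq \rho^2 \| (\boldsymbol X_{r,K}-\eta \boldsymbol Y_{r,K})(\boldsymbol I-\boldsymbol J) \|^2$. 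Using the local recursion $\boldsymbol x^i_{r,k+1}-\boldsymbol x^i_{r,k} = -\eta \boldsymbol y^i_{r,k}$ and telescoping, I can rewrite $\boldsymbol X_{r,K}-\eta\boldsymbol Y_{r,K} = \boldsymbol X_r - \eta \sum_{j=0}^K \boldsymbol Y_{r,j}$. A Young inequality with parameter $\alpha = (1-\rho)/(2\rho)$ then splits the pre-mixing error into a consensus piece and a drift piece, yielding coefficient $\rho^2(1+\alpha) = \rho(1+\rho)/2 \leq (1+\rho)/2$ on $S_r$ and coefficient $\rho^2(1+1/\alpha) \leq 2\rho^2/(1-\rho)$ on $\eta^2(K+1)\sum_{j=0}^K \|\boldsymbol Y_{r,j}\|^2$.

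The key step is bounding $\sum_j \|\boldsymbol Y_{r,j}\|^2$ in terms of $\Gamma_r$, $G_r$, and $S_r$. From the tracking identity $\boldsymbol y^i_{r,j} = \boldsymbol y^i_r + \nabla f_i(\boldsymbol x^i_{r,j}) - \nabla f_i(\boldsymbol x^i_r)$, I would use the rearrangement
\begin{equation*}
\boldsymbol y^i_{r,j} = \bigl[\boldsymbol y^i_r - \nabla f(\boldsymbol x^i_r)\bigr] + \nabla f(\boldsymbol x^i_{r,j}) + \bigl[(\nabla f_i - \nabla f)(\boldsymbol x^i_{r,j}) - (\nabla f_i - \nabla f)(\boldsymbol x^i_r)\bigr],
\end{equation*}
so that after squaring and summing over $i,j$ the three pieces deliver $m(K+1)\Gamma_r$, $m(K+1)G_r$, and a residue controlled by $\delta^2 \sum_{i,j}\|\boldsymbol x^i_{r,j}-\boldsymbol x^i_r\|^2$ via Assumption~\ref{hessian}. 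Splitting $\|\boldsymbol x^i_{r,j}-\boldsymbol x^i_r\|^2 \leq 2\|\boldsymbol x^i_{r,j}-\overline{\boldsymbol x}^r\|^2 + 2\|\boldsymbol x^i_r - \overline{\boldsymbol x}^r\|^2$ and invoking Lemma~\ref{lemma1} converts that residue into fresh $S_r$ and $(\Gamma_r+G_r)$ contributions. Collecting everything gives an inequality of the schematic form
\begin{equation*}
S_{r+1} \leq \tfrac{\rho(1+\rho)}{2} S_r + \tfrac{c_1 \rho^2 K^2 \eta^2}{1-\rho}(\Gamma_r+G_r) + \tfrac{c_2 \rho^2 \delta^2 K^2 \eta^2}{1-\rho} S_r + \tfrac{c_3 \rho^2 \delta^2 K^4 \eta^4}{1-\rho}(\Gamma_r+G_r).
\end{equation*}
The step-size hypothesis $\eta \leq (1-\rho)/(8\sqrt{5}\delta K)$ forces $\delta^2 K^2 \eta^2 \leq (1-\rho)^2/320$, which is exactly small enough to absorb the extra $S_r$ term into $(1+\rho)/2 \cdot S_r$ and to dominate the $\eta^4$ tail by the $\eta^2$ leading term, matching the stated constant $90$.

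The main obstacle I foresee is the third step: the specific decomposition of $\boldsymbol Y_{r,j}$ above is what prevents $L$ from leaking into the final rate, but it then requires chasing a cascade of $S_r$ and $(\Gamma_r+G_r)$ terms through Lemma~\ref{lemma1}. Selecting the Young parameter $\alpha$ and the step-size cap in tandem so that every spurious $S_r$ contribution is absorbed into the contraction factor is the delicate bookkeeping that drives the precise constants.
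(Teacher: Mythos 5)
Your proposal is correct and follows essentially the same route as the paper's proof: contract the pre-mixing error through $\boldsymbol W-\boldsymbol J$ with a Young split, rewrite the accumulated local directions as tracking error plus average gradients plus a heterogeneity residual so that $L$ never enters, control the residual via Assumption~\ref{hessian} and Lemma~\ref{lemma1}, and absorb the spurious $S_r$ terms using the step-size cap. The only cosmetic differences are your choice of Young parameter and anchoring the heterogeneity residual at $\boldsymbol x^i_r$ rather than $\overline{\boldsymbol x}^r$, both of which are equivalent up to constants.
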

\begin{proof}
    See Appendix.\ref{pf:lem-4}.
\end{proof}

Our last step establishes the recursion of the tracking error term $\Gamma_r$. To this end, the following lemma is necessary.

\begin{lemma}\label{lemma5}
If $f_i$s satisfy Assumption \ref{hessian}, local DGT has
\vspace{-1mm}
\begin{align}
& \left\|\overline{\boldsymbol x}^{r+1} - \overline{\boldsymbol x}^r \right\|^2 \leq \frac{2\delta^2(K+1)\eta^2}{m} \cdot D_r + \frac{2(K+1)^2\eta^2}{m} \cdot G_r.
\end{align}
\end{lemma}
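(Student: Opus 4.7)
\textbf{Proof plan for Lemma \ref{lemma5}.} The target bound must be read off the averaged recursion~\eqref{eq:avg-local-GT}. The plan is to first telescope the per-round increment of $\overline{\boldsymbol x}^r$, then use the gradient-tracking invariance to cancel the $\overline{\boldsymbol y}^r$ contribution, and finally isolate the two quantities $D_r$ and $G_r$ via a consensus-centered decomposition that invokes Assumption~\ref{hessian}.

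Concretely, I would begin by noting that the averaged local recursion~\eqref{eq:avg-local-GT} has the same form for all $k=0,\ldots,K$ (the last step, expressed through $\overline{\boldsymbol y}_{r,K}$, can be unrolled back to $\overline{\boldsymbol y}^r$ using the $\overline{\boldsymbol y}_{r,k+1}$ formula). Telescoping over $k$ gives
\begin{equation*}
\overline{\boldsymbol x}^{r+1}-\overline{\boldsymbol x}^r \;=\; -\eta\sum_{k=0}^{K}\frac{1}{m}\sum_{i=1}^{m}\bigl(\nabla f_i(\boldsymbol x^i_{r,k})-\nabla f_i(\boldsymbol x^i_r)\bigr)\;-\;(K+1)\eta\,\overline{\boldsymbol y}^r.
\end{equation*}
The key simplification is the standard gradient-tracking invariance $\overline{\boldsymbol y}^r=\tfrac{1}{m}\sum_i \nabla f_i(\boldsymbol x^i_r)$, which follows by induction from the initialization $\overline{\boldsymbol y}^0=\tfrac{1}{m}\sum_i\nabla f_i(\boldsymbol x^i_0)$ together with the double stochasticity of $\boldsymbol W$. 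Substituting it cancels both the $\nabla f_i(\boldsymbol x^i_r)$ terms and the drift term, leaving the clean identity
\begin{equation*}
\overline{\boldsymbol x}^{r+1}-\overline{\boldsymbol x}^r \;=\; -\frac{\eta}{m}\sum_{k=0}^{K}\sum_{i=1}^{m}\nabla f_i(\boldsymbol x^i_{r,k}).
\end{equation*}

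From here, apply Cauchy--Schwarz on the sum over $k$ to get $\|\overline{\boldsymbol x}^{r+1}-\overline{\boldsymbol x}^r\|^2\le \eta^2(K+1)\sum_{k=0}^{K}\bigl\|\tfrac{1}{m}\sum_i\nabla f_i(\boldsymbol x^i_{r,k})\bigr\|^2$. For each inner quantity I would then use the \emph{consensus-centered} decomposition
\begin{equation*}
\frac{1}{m}\sum_{i=1}^{m}\nabla f_i(\boldsymbol x^i_{r,k}) = \frac{1}{m}\sum_{i=1}^{m}\nabla f(\boldsymbol x^i_{r,k}) + \frac{1}{m}\sum_{i=1}^{m}\Bigl[\bigl(\nabla f_i-\nabla f\bigr)(\boldsymbol x^i_{r,k})-\bigl(\nabla f_i-\nabla f\bigr)(\overline{\boldsymbol x}^r)\Bigr],
\end{equation*}
which is valid because $\sum_i(\nabla f_i-\nabla f)(\overline{\boldsymbol x}^r)=0$. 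This is the crux of the argument: by subtracting the zero-mean shift at $\overline{\boldsymbol x}^r$, I turn a pointwise heterogeneity expression into an \emph{increment} on which Assumption~\ref{hessian} bites. Squaring (with $\|a+b\|^2\le 2\|a\|^2+2\|b\|^2$) and using Jensen's inequality yields
\begin{equation*}
\Bigl\|\tfrac{1}{m}\textstyle\sum_i\nabla f_i(\boldsymbol x^i_{r,k})\Bigr\|^2 \;\le\; \frac{2}{m}\sum_{i=1}^{m}\|\nabla f(\boldsymbol x^i_{r,k})\|^2 + \frac{2\delta^2}{m}\sum_{i=1}^{m}\|\boldsymbol x^i_{r,k}-\overline{\boldsymbol x}^r\|^2.
\end{equation*}
Summing over $k=0,\ldots,K$ and identifying the definitions of $D_r$ and $G_r$ from~\eqref{more_definis} produces the claimed bound.

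The main obstacle is the decomposition step: a direct bound $\|\nabla f_i(\boldsymbol x^i_{r,k})\|^2\le 2\|\nabla f(\boldsymbol x^i_{r,k})\|^2+2\|(\nabla f_i-\nabla f)(\boldsymbol x^i_{r,k})\|^2$ would only surface a pointwise gradient-heterogeneity term (which Assumption~\ref{hessian} does \emph{not} control), preventing the result from involving the second-order constant $\delta$ in the correct way. Recognizing that the unwanted pointwise term is annihilated on average at the consensus point, and hence can be subtracted for free, is what turns the bound into one driven by $\delta$ times the client-drift $D_r$. The remaining steps are purely mechanical applications of Cauchy--Schwarz and Jensen.
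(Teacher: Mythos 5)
Your proposal is correct and follows essentially the same route as the paper's proof: both reduce the increment to $-\frac{\eta}{m}\sum_{i}\sum_{k}\nabla f_i(\boldsymbol x^i_{r,k})$ via the tracking invariance $\overline{\boldsymbol y}^r=\frac{1}{m}\sum_i\nabla f_i(\boldsymbol x^i_r)$, then apply exactly the consensus-centered decomposition $\nabla f_i(\boldsymbol x^i_{r,k})=\nabla f(\boldsymbol x^i_{r,k})+\big[(\nabla f_i-\nabla f)(\boldsymbol x^i_{r,k})-(\nabla f_i-\nabla f)(\overline{\boldsymbol x}^r)\big]$ (valid since the shift sums to zero) so that Assumption~\ref{hessian} controls the heterogeneity term, followed by Cauchy--Schwarz/Jensen. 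The only cosmetic difference is that you apply Cauchy--Schwarz over $k$ first and Jensen over $i$ afterward, whereas the paper bounds the double sum jointly; both land on the same final expression $\frac{2(K+1)\eta^2}{m}\big(\delta^2 D_r+\sum_{i,k}\|\nabla f(\boldsymbol x^i_{r,k})\|^2\big)$.
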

\begin{proof}
    See Appendix.\ref{pf:lem-5}.
\end{proof}

\begin{lemma}\label{lemma6}
Suppose that $f$ satisfies Assumption \ref{sm}, $f_i$s satisfy Assumption \ref{hessian}, and the mixing matrix $\boldsymbol W$  satisfies Assumption \ref{W}. If $\eta \leq \min \Big\{\frac 1 L, \frac{1}{5\mu K}, \frac{1-\rho}{8\sqrt{5}\delta K}, \frac{(1-\rho)^2}{64 \rho\sqrt{\delta^2+L^2}K} \Big\}$, then the tracking error generated by local DGT satisfies
\begin{align}\label{dtracking}
\Gamma_{r+1}  &\leq  \Big(\frac{1+\rho}{2}\Big) \cdot \Gamma_r  + \left( \frac{1188\rho^2(\delta^2+L^2)K^2\eta^2}{(1-\rho)^2}\right) \cdot G_r \nonumber \\
& \quad + \Big(\frac{48\rho(\delta^2+L^2)}{1-\rho}\Big) \cdot S_r.
\end{align}
\end{lemma}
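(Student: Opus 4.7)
\emph{Proof plan.} The strategy is to collapse the $K$ inner tracking updates into a single round-level recursion, extract a contraction factor $\rho$ by centering and invoking Assumption~\ref{W}, and then absorb the residual gradient increments using Assumption~\ref{hessian}, the $L$-smoothness of $f$, and the drift bounds of Lemmas~\ref{lemma1} and~\ref{lemma5}. A straightforward induction on the second line of~\eqref{eq:DGT-comp} yields $\boldsymbol Y_{r,K}=\boldsymbol Y_r+\nabla\boldsymbol F(\boldsymbol X_{r,K})-\nabla\boldsymbol F(\boldsymbol X_r)$; plugging this into the communication step of~\eqref{reiter}, the intermediate gradients telescope, giving the clean round-level recursion
\begin{equation*}
\boldsymbol Y_{r+1}=\bigl(\boldsymbol Y_r+\nabla\boldsymbol F(\boldsymbol X_{r+1})-\nabla\boldsymbol F(\boldsymbol X_r)\bigr)\boldsymbol W.
\end{equation*}
Averaging rows reveals the familiar tracking identity $\overline{\boldsymbol y}^{r+1}=(1/m)\sum_i\nabla f_i(\boldsymbol x^i_{r+1})$.

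I would then subtract $\overline{\boldsymbol y}^{r+1}\boldsymbol 1^\top$ from both sides. The resulting matrix has zero column sum, so multiplication by $\boldsymbol W$ on the right is equivalent to multiplication by $\boldsymbol W-J$ with $J:=(1/m)\boldsymbol 1\boldsymbol 1^\top$, producing
\begin{equation*}
\bigl\|\boldsymbol Y_{r+1}-\overline{\boldsymbol y}^{r+1}\boldsymbol 1^\top\bigr\|\leq \rho\,\bigl\|(\boldsymbol Y_r-\overline{\boldsymbol y}^r\boldsymbol 1^\top)+(\boldsymbol B_r-\bar{\boldsymbol b}^r\boldsymbol 1^\top)\bigr\|,
\end{equation*}
where $\boldsymbol B_r:=\nabla\boldsymbol F(\boldsymbol X_{r+1})-\nabla\boldsymbol F(\boldsymbol X_r)$ and $\bar{\boldsymbol b}^r:=\boldsymbol B_r\boldsymbol 1/m$. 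Splitting each column of $\boldsymbol B_r$ as $\nabla f+(\nabla f_i-\nabla f)$, the $L$-smoothness of $f$ and Assumption~\ref{hessian} together give $\|\boldsymbol B_r\|^2\leq 2(L^2+\delta^2)\|\boldsymbol X_{r+1}-\boldsymbol X_r\|^2$. Applying Young's inequality with weight $\alpha$ chosen so that $(1+\alpha)\rho^2\leq \tfrac{1+\rho}{2}$ (e.g.\ $\alpha=(1-\rho^2)/(2\rho^2)$) controls the first term on the right, while the $\|\boldsymbol B_r\|^2$ contribution is multiplied by a factor of order $\rho^2/(1-\rho)$.

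Two bookkeeping steps then close the argument. First, to move between the centered quantity $\|\boldsymbol Y_r-\overline{\boldsymbol y}^r\boldsymbol 1^\top\|^2$ and $m\Gamma_r$, I would bound the gap $\|\overline{\boldsymbol y}^r\boldsymbol 1^\top-\nabla\boldsymbol f(\boldsymbol X_r)\|^2$ by $O((L^2+\delta^2)S_r)$: the per-agent deviation decomposes as $\overline{\boldsymbol y}^r-\nabla f(\boldsymbol x^i_r)=(1/m)\sum_j[(\nabla f_j-\nabla f)(\boldsymbol x^j_r)-(\nabla f_j-\nabla f)(\overline{\boldsymbol x}^r)]+(1/m)\sum_j[\nabla f(\boldsymbol x^j_r)-\nabla f(\boldsymbol x^i_r)]$, and the two pieces are bounded by Assumption~\ref{hessian} and $L$-smoothness, respectively. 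The analogous bound at round $r+1$ is $O((L^2+\delta^2)S_{r+1})$, and Lemma~\ref{lemma4} lets me trade $S_{r+1}$ for $S_r$ plus a residual of order $\rho^2K^2\eta^2/(1-\rho)\cdot(\Gamma_r+G_r)$. Second, to evaluate $\|\boldsymbol X_{r+1}-\boldsymbol X_r\|^2$, I would combine Lemma~\ref{lemma5} (for the average displacement) with Lemma~\ref{lemma1} (for the drift $D_r$) to obtain $\|\boldsymbol X_{r+1}-\boldsymbol X_r\|^2\lesssim K^2\eta^2\,m(\Gamma_r+G_r)+K\cdot S_r$. Substituting and using the step-size restriction $\eta\lesssim(1-\rho)^2/[\rho\sqrt{L^2+\delta^2}K]$ to absorb the residual $\Gamma_r$ cross-term into the contraction produces the coefficients $\rho^2(L^2+\delta^2)K^2\eta^2/(1-\rho)^2$ on $G_r$ and $\rho(L^2+\delta^2)/(1-\rho)$ on $S_r$ stated in~\eqref{dtracking}.

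\emph{Main obstacle.} The principal difficulty is that $\Gamma_r$ is measured against $\nabla\boldsymbol f(\boldsymbol X_r)$ rather than against the running mean $\overline{\boldsymbol y}^r\boldsymbol 1^\top$ that is naturally contracted by $\boldsymbol W-J$. Bridging the two reference vectors unavoidably injects an additive $O((L^2+\delta^2)S_r)$ via Assumption~\ref{hessian} and smoothness, and this is precisely the source of the non-contracting $S_r$ term in~\eqref{dtracking}; the Young weights must be tuned so that it collapses cleanly into the $\rho/(1-\rho)$ coefficient rather than contaminating the contraction. A secondary source of complication is threading $\|\boldsymbol X_{r+1}-\boldsymbol X_r\|^2$ back through all $K$ local steps via Lemmas~\ref{lemma1} and~\ref{lemma5} without loosening the step-size dependence beyond the stated $(1-\rho)^2/[\rho\sqrt{L^2+\delta^2}K]$.
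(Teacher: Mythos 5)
Your plan is sound and would establish a bound of the stated form, but it routes through a different decomposition than the paper's. Both arguments start from the same collapsed round-level recursion $\boldsymbol Y_{r+1}=(\boldsymbol Y_r+\nabla\boldsymbol F(\boldsymbol X_{r+1})-\nabla\boldsymbol F(\boldsymbol X_r))\boldsymbol W$ and both close via Lemmas~\ref{lemma1}, \ref{lemma4} and \ref{lemma5} plus the step-size restriction to absorb the $\Gamma_r$ cross-terms. The difference is the reference point for the contraction. You center on the column mean $\overline{\boldsymbol y}^{r}\boldsymbol 1^\top$, which makes the $\rho$-contraction immediate but forces the two bridging steps you identify (converting between $\|\boldsymbol Y_r-\overline{\boldsymbol y}^r\boldsymbol 1^\top\|^2$ and $m\Gamma_r$ at cost $O((L^2+\delta^2)S)$ on each side), and you pay for the gradient increment via $\|\nabla\boldsymbol F(\boldsymbol X_{r+1})-\nabla\boldsymbol F(\boldsymbol X_r)\|^2\le 2(L^2+\delta^2)\|\boldsymbol X_{r+1}-\boldsymbol X_r\|^2$. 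The paper instead works directly with $\boldsymbol Y_{r+1}-\nabla\boldsymbol f(\boldsymbol X_{r+1})$ and splits it into five terms, using $(\boldsymbol Y_r-\nabla\boldsymbol F(\boldsymbol X_r))\boldsymbol J=\boldsymbol 0$ and $(\nabla\boldsymbol F(\overline{\boldsymbol X}^r)-\nabla\boldsymbol f(\overline{\boldsymbol X}^r))\boldsymbol J=\boldsymbol 0$ to make $\boldsymbol W$ act as $\boldsymbol W-\boldsymbol J$ on the leading piece with no bridging; in that split the round-to-round gradient change is measured only at the network averages, so it costs $\delta^2\|\overline{\boldsymbol x}^{r+1}-\overline{\boldsymbol x}^r\|^2$ (via Lemma~\ref{lemma5}) rather than the full $(L^2+\delta^2)\|\boldsymbol X_{r+1}-\boldsymbol X_r\|^2$, with the $L^2$ entering instead through the terms $[\nabla\boldsymbol F(\boldsymbol X_{r+1})-\nabla\boldsymbol f(\boldsymbol X_{r+1})-\nabla\boldsymbol F(\overline{\boldsymbol X}^{r+1})+\nabla\boldsymbol f(\overline{\boldsymbol X}^{r+1})]\boldsymbol W$ and $\nabla\boldsymbol f(\boldsymbol X_{r+1})(\boldsymbol W-\boldsymbol I)$, which produce the $(\delta^2+L^2)S_{r+1}$ contribution that Lemma~\ref{lemma4} converts back to $S_r$. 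The upshot is that your route stacks two Young weights on $\Gamma_r$ (one from the contraction, one from the bridging) and a heavier bound on the increment, so the absolute constants $48$ and $1188$ in~\eqref{dtracking} would come out differently; since those constants are only used up to the $\mathcal{O}(\cdot)$ level in Theorem~\ref{th1}, this does not affect the final result.
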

\begin{proof}
    See Appendix.\ref{pf:lem-6}.
\end{proof}

Having introduced the necessary lemmas, we prove  Theorem \ref{th1} now. Firstly, unrolling  \eqref{dconsensus} to $r=0$ gives
\begin{align}
S_r  &\leq  \frac{180\rho^2K^2\eta^2}{1-\rho} \cdot  \sum_{j=0}^{r-1}{\left( \frac{\rho+3}{4}\right)^{r-j} \left(\Gamma_j +  R_j \right) } ,
\end{align}
where we have used the fact that initialization $\boldsymbol x^1_{0,0} = \cdots = \boldsymbol x^m_{0,0}$ implies $S_0 = 0$. Applying $\frac{4}{1-\rho}$-slow increasing sequence [cf. Definition \ref{def:slow-seq}] $z_r$ such that $z_r \leq \Big(1+\frac{1-\rho}{8}\Big)^{r-j}z_j$ we can bound the weighted sum of $S_r$ as follows
\begin{align}\label{wconsen}
\sum_{r=0}^R{z_rS_r} &\leq \frac{180\rho^2K^2}{1-\rho} \sum_{r=0}^R{\eta^2 z_r \sum_{j=0}^{r-1}{\left(\frac{3+\rho}{4}\right)^{r-j} \left(\Gamma_j + R_j\right)}}  \nonumber \\
& \overset{(a)}{\leq} \frac{180\rho^2K^2}{1-\rho} \sum_{r=0}^R{\eta^2 \sum_{j=0}^{r-1}{z_j\left(\frac{7+\rho}{8}\right)^{r-j} \left(\Gamma_j + R_j\right)}}  \nonumber \\
& {}{= \frac{180{}{\eta^2}\rho^2K^2}{1-\rho} \sum_{j=0}^{R-1} z_j(\Gamma_j + R_j) \sum_{r=j+1}^R \left( \frac{7+\rho}{8}\right)^{r-j} } \nonumber \\
& {}{ \leq \frac{180 {}{\eta^2}\rho^2K^2}{1-\rho} \sum_{j=0}^{R-1} z_j(\Gamma_j + R_j)  \sum_{i=0}^{\infty} \left( 1 - \frac{1-\rho}{8}\right)^i} \nonumber \\
& \overset{(b)}{\leq} \frac{1440\rho^2K^2\eta^2}{(1-\rho)^2} \left( \sum_{r=0}^R{z_r\Gamma_r} + \sum_{r=0}^R{z_r G_r} \right) ,
\end{align}
where (a) is due to the property of $\frac{4}{1-\rho}$-slow increasing sequence and $\left(1 -\frac{1-\rho}{4} \right) \left(1 +\frac{1-\rho}{8} \right) \leq 1 -\frac{1-\rho}{8}$. (b) is because $\sum_{r=j+1}^R{\Big(1 - \frac{1-\rho}{8}\Big)^{r-j}} \leq \sum_{i=0}^{\infty}{ \left( 1 - \frac{1-\rho}{8} \right)^i } = \frac{8}{1-\rho}$. 

\noindent Similarly, unrolling \eqref{dtracking} gives
\begin{align}
\Gamma_r & \leq \frac{24\rho(\delta^2+L^2)}{1-\rho} \sum_{j=0}^{r-1}{\left(\frac{3+\rho}{4}\right)^{r-j}S_j} + \left(\frac{1+\rho}{2} \right)^r  \Gamma_0 \nonumber \\ 
& \quad + \frac{2376\rho^2(\delta^2+L^2)K^2\eta^2}{(1-\rho)^2} \sum_{j=0}^{r-1}{\Big( \frac{3+\rho}{4}\Big)^{r-j}{R_j}},
\end{align}
where $\Gamma_0 = \frac 1 m \sum_{i=1}^m { \left\| \nabla f_i(\boldsymbol x^i_{0,0}) - \nabla f(\boldsymbol x^i_{0,0})\right\|^2}$ is the gradient heterogeneity at  initialization. Weighted sum for $\Gamma_r$ by the same $\frac{4}{1-\rho}$-slow increasing sequence $z_r$ as in \eqref{wconsen} yields
\begin{align}\label{wtrack}
 \sum_{r=0}^R {z_r\Gamma_r} & \leq \frac{768\rho(\delta^2+L^2)}{(1-\rho)^2} \sum_{r=0}^R{z_rS_r} +  \sum_{r=0}^R {z_r \left(\frac{1+\rho}{2} \right)^r} \Gamma_0 \nonumber \\
 & \quad + \frac{19008\rho^2(\delta^2+L^2)K^2\eta^2}{(1-\rho)^3} \sum_{r=0}^R{z_r G_r} .
\end{align}
 Substituting \eqref{wtrack} into \eqref{wconsen} and  $\eta \leq \frac{(1-\rho)^2}{1536\rho\sqrt{\delta^2+L^2}K}$  gives
\begin{align}\label{S_r}
\sum_{r=0}^R {z_rS_r} & \lesssim \frac{\rho^2K^2\eta^2}{(1-\rho)^2}\sum_{r=0}^R {z_r \left( \left( 1-\frac{1-\rho}{2} \right)^r\Gamma_0 + G_r \right) }. 
\end{align}
Further substituting the above inequality into  \eqref{wtrack}  gives
\begin{align}\label{T_r}
\sum_{r=0}^R {z_r\Gamma_r} &\lesssim \frac{\rho^2(\delta^2+L^2)K^2\eta^2}{(1-\rho)^4} \sum_{r=0}^R {z_rG_r} \nonumber \\
& \quad + \sum_{r=0}^R {z_r \left( \frac{1+\rho}{2} \right)^r}  \Gamma_0 
\end{align}

\noindent Finally, setting sequence $z_r = \left( 1-\frac{(K+1)\mu\eta}{8} \right)z_{r+1}$ {}{and $\eta \leq \frac{1-\rho}{2K\mu}$ such that $ \left\{z_r \right\}$ is $\frac 4 {1-\rho}$-slow increasing sequence } and conducting weighted sum for \eqref{ddescent}:
\begin{align}\label{com-weight}
& \frac{K\eta}{16} \cdot \frac{\sum_{r=0}^R{z_r G_r}}{\sum_{r=0}^R{z_r}} \leq \frac{\sum_{r=0}^R{\left( \left(1-\frac{{}{(K+1)}\mu\eta}{8} \right) z_rF_r - z_rF_{r+1}\right)}}{\sum_{r=0}^R{z_r}}   \nonumber \\
& \quad + \frac{4(L+\delta)\sum_{r=0}^R{z_rS_r}}{\sum_{r=0}^R{z_r}}  + \frac{4K\eta\sum_{r=0}^R{z_r\Gamma_r}}{\sum_{r=0}^R{z_r}}  \nonumber \\
& \lesssim  \left(  \frac{(L+\delta)\rho^2K^2\eta^2}{(1-\rho)^2} + K\eta \right) \cdot \frac{\sum_{r=0}^R{z_r \left( 1-\frac{1-\rho}{2} \right)^r  }  }{\sum_{r=0}^R {z_r}  } \Gamma_0 \nonumber \\
& \quad + \frac{1}{\sum_{r=0}^R{z_r}} \sum_{r=0}^R{\left( \left(1-\frac{{}{(K+1)}\mu\eta}{8} \right) z_rF_r - z_rF_{r+1}\right)}   \nonumber \\
& \quad +  \left( \frac {\rho^2(L+\delta)K^2\eta^2}{(1-\rho)^2} + \frac{\rho^2(L^2+\delta^2)K^3\eta^3}{(1-\rho)^4} \right) \frac{\sum_{r=0}^R{z_r G_r}}{\sum_{r=0}^R{z_r}},
\end{align}
where the last inequality is due to \eqref{S_r} and \eqref{T_r}.
Setting $\eta \lesssim \min\left\{ \frac{(1-\rho)^2}{\rho^2(L+\delta)K}, \frac{(1-\rho)^2}{\rho\sqrt{\delta^2+L^2}K} \right\}$ to ensure that $\frac{\rho^2(L+\delta)K^2\eta^2}{(1-\rho)^2} \lesssim \frac{K\eta}{64}$ and $\frac{\rho^2(\delta^2+L^2)K^3\eta^3}{(1-\rho)^4} \lesssim \frac{K\eta}{64}$, \eqref{com-weight} can be simplified as
\begin{align}\label{final_express_rate}
\frac{K\eta}{32} \frac{\sum_{r=0}^{R}{z_r G_r}}{\sum_{r=0}^R{z_r}} &\lesssim  \text{\small $\frac{\sum_{r=0}^R \left( \left(1-\frac{{}{(K+1)}\mu\eta}{8} \right) z_rF_r  - z_rF_{r+1}\right) + \Gamma_0 }{\sum_{r=0}^R{z_r}}$}.
\end{align}

Collecting all conditions on $\eta$ that $\eta \lesssim  \min\Big\{ \frac 1 L, \frac{1-\rho}{ K\mu}, \frac{1-\rho}{K\delta}, \frac{(1-\rho)^2}{\rho K(\delta+L)} \Big\}
$ and {applying Lemma \ref{linearcon} by setting the quantities in this lemma as 
\(
    e_t = R_t, 
    r_{T+1} = F_{T+1}, 
    a = \frac{(K+1)\mu}{8}, 
    b = \frac{K}{16}, 
    c = 0,
    A = 0, 
    B =0, 
    r_0 = F_0 + \Gamma_0  
\)
, then can obtain \eqref{final_express_rate}. Based on all conditions on $\eta$, we can set the \(d\) in the Lemma \ref{linearcon} as
\begin{align}
d =2 \left( L + \frac{K\mu}{1-\rho} + \frac{K\delta}{1-\rho} + \frac{\rho K(L+\delta)}{(1-\rho)^2} \right),  \label{set_value-d} 
\end{align} based on the fact that for any positive $x_1, \cdots, x_n$, there is
\begin{align}
 \frac{1}{\sum_{i=1}^n {x_i}} \leq \min \left\{ \frac 1 {x_1}, \cdots, \frac 1 {x_n} \right\}.   
\end{align}
Based on \(d\) in \eqref{set_value-d}, the explicit expression of 
\(\eta\) is
\begin{align}
\eta = \frac{1}{ 2 \left(  L + \frac{K\mu}{1-\rho} + \frac{K\delta}{1-\rho} + \frac{\rho K(L+\delta)}{(1-\rho)^2} \right)  }, \label{formula_eta}
\end{align}
which is based on the step size selection as the second case in the proof of Lemma 15 in \cite{koloskova2020unified}, leading to the convergence rate in \eqref{eq:thm1-rate}.}
To achieve $f(\overline{\boldsymbol x}^{R+1}) - f^\star \leq \epsilon$, the communication rounds the local DGT takes is
$
  R = \Tilde{\mathcal O} \left( \left(\frac{L}{\mu K} + \frac{\delta }{\mu(1-\rho)} + \frac{1}{1-\rho} + \frac{\rho^2(\delta+L)}{\mu(1-\rho)^2} + \frac{\rho\sqrt{\delta^2+L^2}}{\mu(1-\rho)^2} \right )\log \frac{1}{\epsilon} \right). 
$ 
Thus, the complexity is reduced to \eqref{eq:complexity-scvx} based on $\rho^2(\delta+L) \leq \rho(\delta+L)$, $\sqrt{\delta^2+L^2}\leq \delta+L$.

\subsection{Proof of Theorem \ref{coro1} } \label{proof-DGT-PL}
The sketch of Proof of Theorem \ref{th1} could be extended to  Theorem \ref{coro1}. The strong convexity of average loss $f$ is used twice in the proof of Theorem \ref{th1}. One utilizes strong convexity of $f$ to obtain the contraction of suboptimality gap and another is to apply the Jensen's inequality. Thus, we just need to modify the proof for Lemma \ref{lemma3}. Due to the PL condition of $f$, the contraction of suboptimality gap can still be achieved. The main difference between the strongly convex case and weakly convex case is that there is an additional consensus error term, the following lemma bounds this extra consensus error at the last local update. 
\begin{lemma}\label{lemma15}
Suppose $f_i$s satisfy Assumption \ref{hessian}, if $\eta \leq \frac{1}{4\sqrt{5}\delta K}$, then for $r$-th communication round of local DGT, it holds that 
\begin{align}
\frac{1}{m}  \sum_{i=1}^m\| \boldsymbol x^i_{r,K+1}   & - \overline{\boldsymbol x}_{r,K+1} \|^2 \leq  \left( 6 + 48K^2\delta^2\eta^2 \right) \cdot S_r \nonumber \\
& \quad + \left( 336K^2\eta^2 + 640 \delta^2 K^4 \eta^4 \right) \cdot (\Gamma_r+G_r). 
\end{align}
\end{lemma}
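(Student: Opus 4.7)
The plan is as follows. From the update $\boldsymbol X_{r,K+1} = \boldsymbol X_{r,K} - \eta \boldsymbol Y_{r,K}$, subtracting the row average yields the per-agent identity
\[
\boldsymbol x^i_{r,K+1} - \overline{\boldsymbol x}_{r,K+1} = (\boldsymbol x^i_{r,K} - \overline{\boldsymbol x}_{r,K}) - \eta(\boldsymbol y^i_{r,K} - \overline{\boldsymbol y}_{r,K}),
\]
so Young's inequality $\|a+b\|^2 \leq 2\|a\|^2 + 2\|b\|^2$ together with summation over $i$ reduces the task to bounding $\sum_i \|\boldsymbol x^i_{r,K} - \overline{\boldsymbol x}_{r,K}\|^2$ and $\sum_i \|\boldsymbol y^i_{r,K} - \overline{\boldsymbol y}_{r,K}\|^2$ separately.

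For the first, I would use that $\overline{\boldsymbol x}_{r,K}$ minimizes $\sum_i \|\boldsymbol x^i_{r,K} - \boldsymbol z\|^2$ over $\boldsymbol z$, giving $\sum_i \|\boldsymbol x^i_{r,K} - \overline{\boldsymbol x}_{r,K}\|^2 \leq \sum_i \|\boldsymbol x^i_{r,K} - \overline{\boldsymbol x}^r\|^2$, and then run the single-step version (rather than the telescoped version defining $D_r$) of the recursion already used to prove Lemma~\ref{lemma1}. This yields a bound of the form $C_1 S_r + C_2 K^2\eta^2(\Gamma_r + G_r)$ at the specific index $k=K$; the crucial point is that stopping at $k=K$ keeps the drift factor at $K^2$ instead of $K^3$ and, importantly, keeps the $S_r$-coefficient an absolute constant rather than growing linearly in $K$ as in $D_r$.

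For the tracking-variable sum, I would exploit the gradient-tracking invariant $\overline{\boldsymbol y}_{r,K} = \frac{1}{m}\sum_j \nabla f_j(\boldsymbol x^j_{r,K})$ together with $\boldsymbol y^i_{r,K} = \boldsymbol y^i_r + \nabla f_i(\boldsymbol x^i_{r,K}) - \nabla f_i(\boldsymbol x^i_r)$ to decompose
\[
\boldsymbol y^i_{r,K} - \overline{\boldsymbol y}_{r,K} = (\boldsymbol y^i_r - \overline{\boldsymbol y}^r) + (\boldsymbol h_i - \overline{\boldsymbol h}), \quad \boldsymbol h_i := \nabla f_i(\boldsymbol x^i_{r,K}) - \nabla f_i(\boldsymbol x^i_r).
\]
The first piece I would control by inserting $\pm \nabla f(\boldsymbol x^i_r)$, so that $\Gamma_r$ absorbs the tracking residual and the centered variance of $\{\nabla f(\boldsymbol x^i_r)\}_i$ is upper-bounded by $\sum_i \|\nabla f(\boldsymbol x^i_r)\|^2 \leq m(K+1)G_r$. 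For $\boldsymbol h_i$, I would split it into the second-order heterogeneity part $(\nabla f_i - \nabla f)(\boldsymbol x^i_{r,K}) - (\nabla f_i - \nabla f)(\boldsymbol x^i_r)$, controlled by $\delta\|\boldsymbol x^i_{r,K} - \boldsymbol x^i_r\|$ via Assumption~\ref{hessian}, plus $\nabla f(\boldsymbol x^i_{r,K}) - \nabla f(\boldsymbol x^i_r)$, controlled crudely by $2\|\nabla f(\boldsymbol x^i_{r,K})\|^2 + 2\|\nabla f(\boldsymbol x^i_r)\|^2$ which feeds directly into $G_r$. This crude step is exactly where the $L$-smoothness constant is sidestepped, which explains why the stated inequality contains only $\delta$. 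Finally, $\|\boldsymbol x^i_{r,K} - \boldsymbol x^i_r\|^2 \leq 2\|\boldsymbol x^i_{r,K} - \overline{\boldsymbol x}^r\|^2 + 2 s_r^i$ routes the drift back to the first-step bound and to $S_r$.

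Substituting both bounds into the Young decomposition and using the stepsize hypothesis $\eta \leq \frac{1}{4\sqrt{5}\delta K}$ (equivalently $(K\eta\delta)^2 \leq \frac{1}{80}$) to absorb the higher-order cross terms that appear after combining, one obtains the announced coefficients $(6 + 48K^2\delta^2\eta^2)$ on $S_r$ and $(336K^2\eta^2 + 640\delta^2K^4\eta^4)$ on $\Gamma_r + G_r$. The main obstacle is constant-tracking rather than conceptual: the Young constants have to be picked so that the $\delta^2 K^2\eta^2$ contribution lands inside the $S_r$-coefficient without inflating the $(\Gamma_r + G_r)$-coefficient, and one must use the single-step drift bound rather than $D_r$ itself to prevent a spurious factor of $K$ from creeping into the $S_r$-coefficient; once these choices are fixed, the remainder is elementary expansion.
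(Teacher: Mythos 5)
Your argument is sound and reaches a bound of the required form, but it follows a genuinely different decomposition from the paper's. The paper centers everything at the round-start average: it writes $\boldsymbol x^i_{r,K+1} - \overline{\boldsymbol x}_{r,K+1} = (\boldsymbol x^i_{r,K+1} - \overline{\boldsymbol x}^r) - (\overline{\boldsymbol x}_{r,K+1} - \overline{\boldsymbol x}^r)$, applies Young's inequality, bounds the first piece by the per-index drift estimate \eqref{onedrift} at $k=K+1$ (exactly the ``single-step'' bound you identify as essential for keeping the $S_r$-coefficient an absolute constant), and bounds the second piece by Lemma~\ref{lemma5}, whose $D_r$ term is then controlled by Lemma~\ref{lemma1}; the $K$-growing part $48K^2\delta^2\eta^2$ of the $S_r$-coefficient arises precisely because the $6mKS_r$ inside the $D_r$ bound enters only through the $\delta^2\eta^2/m$ prefactor of Lemma~\ref{lemma5}. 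You instead peel off the last local update, $(\boldsymbol x^i_{r,K}-\overline{\boldsymbol x}_{r,K}) - \eta(\boldsymbol y^i_{r,K}-\overline{\boldsymbol y}_{r,K})$, handle the consensus term via the mean-minimization property plus \eqref{onedrift}, and derive from scratch a bound on $\sum_i\|\boldsymbol y^i_{r,K}-\overline{\boldsymbol y}_{r,K}\|^2$ using the tracking invariant $\overline{\boldsymbol y}_{r,K}=\frac1m\sum_j\nabla f_j(\boldsymbol x^j_{r,K})$ and Assumption~\ref{hessian}; each ingredient there (the insertion of $\pm\nabla f(\boldsymbol x^i_r)$, the $\delta$-splitting of $\boldsymbol h_i$, the crude $2\|\nabla f(\boldsymbol x^i_{r,K})\|^2+2\|\nabla f(\boldsymbol x^i_r)\|^2$ step that sidesteps $L$) is valid. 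What the paper's route buys is economy — it reuses Lemmas~\ref{lemma1} and \ref{lemma5} verbatim and needs no new estimate on the tracking variables; what yours buys is self-containedness and a slightly sharper $S_r$-contribution from the heterogeneity term ($\delta^2\eta^2$ rather than $K^2\delta^2\eta^2$ from that source). The one overstatement is your claim that the construction ``obtains the announced coefficients'': your decomposition will produce different absolute constants, so strictly it proves the lemma with the same functional form $\mathcal O(1+K^2\delta^2\eta^2)\cdot S_r + \mathcal O(K^2\eta^2+\delta^2K^4\eta^4)\cdot(\Gamma_r+G_r)$ rather than the literal numbers $6$, $48$, $336$, $640$; since Lemma~\ref{lemma16} consumes this bound only up to constants, that discrepancy is immaterial.
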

\begin{proof}
See Appendix.\ref{pf:lem-15}.
\end{proof}
The following lemma modifies the Lemma \ref{lemma3} for strongly convex $f$ to $\beta$-weakly convex $f$ with PL condition based on Lemma \ref
{weakly-con} and Lemma \ref{lemma15}.
\begin{lemma}\label{lemma16}
Suppose that non-convex average loss $f$ satisfies Assumption \ref{sm}, \ref{wcv}, \ref{PL}, local loss $f_i$s satisfy  Assumption \ref{hessian} and $\eta \lesssim  \left\{ \frac{1}{L}, \frac{1-\rho}{K\mu}, \frac{1-\rho}{K\delta}, \frac{1}{K\beta} \right\}$, then local DGT holds that
\begin{align}
F_{r+1} & \leq \left( 1 -\frac{\mu (K+1)\eta}{8}\right) \cdot F_r + 8K\eta \cdot  \Gamma_r - \frac{\eta K}{16} \cdot G_r \nonumber \\
& \quad \left( 8\delta^2K\eta + 6 \left( L+\delta+\beta \right) \right) \cdot S_r. 
\end{align}
\end{lemma}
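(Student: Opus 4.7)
The plan is to follow the skeleton of the proof of Lemma~\ref{lemma3}, replacing the two uses of strong convexity---the contraction of $F_r$ and the Jensen inequality at the terminal averaging---by the PL condition and by $\beta$-weak convexity of $f$, respectively. The extra consensus error produced at the post-update iterate $\boldsymbol x^i_{r,K+1}$ is precisely what Lemma~\ref{lemma15} is designed to control. Concretely, I would first telescope the per-step descent inequality~\eqref{desl} of Lemma~\ref{lemma2} over $k=0,\dots,K$ and average across agents. Using $\sum_i \|\boldsymbol x^i_r-\overline{\boldsymbol x}^r\|^2=S_r$, $\sum_i\|\boldsymbol y^i_r-\nabla f(\boldsymbol x^i_r)\|^2=m\Gamma_r$, and the definition of $D_r$, this yields
\[
\frac{1}{m}\sum_i f(\boldsymbol x^i_{r,K+1}) \leq \frac{1}{m}\sum_i f(\boldsymbol x^i_r) + \frac{3\eta(K+1)}{2}\Gamma_r + \frac{3\eta(K+1)\delta^2}{2m}S_r + \frac{\delta\, D_r}{10(K+1)m} - \frac{\eta(K+1)}{2}G_r.
\]
Lemma~\ref{lemma1} then replaces $D_r$ by $O(mK)\cdot S_r + O(mK^3\eta^2)\cdot(\Gamma_r+G_r)$, and the $L$-smoothness of $f$ replaces $\frac{1}{m}\sum_i f(\boldsymbol x^i_r)$ by $f(\overline{\boldsymbol x}^r)+\frac{L}{2m}S_r$.

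\textbf{Weak convexity and PL extraction.} Next I apply the $\beta$-weak convexity of $f$, namely the Jensen-type bound $f(\overline{\boldsymbol x}^{r+1}) \leq \frac{1}{m}\sum_i f(\boldsymbol x^i_{r,K+1}) + \frac{\beta}{2m}\sum_i \|\boldsymbol x^i_{r,K+1}-\overline{\boldsymbol x}_{r,K+1}\|^2$, which holds because $f+(\beta/2)\|\cdot\|^2$ is convex. Lemma~\ref{lemma15} bounds the terminal consensus error by $(6+48K^2\delta^2\eta^2)\,S_r+(336K^2\eta^2+640\delta^2K^4\eta^4)(\Gamma_r+G_r)$, and after multiplication by $\beta/2$ the $S_r$ piece delivers the clean $6\beta\,S_r$ contribution appearing in the statement, while the $\Gamma_r,G_r$ pieces are absorbed under the step-size hypotheses. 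To finish, I split the aggregated $-\eta(K+1)G_r/2$ term: combining the smoothness-based bound $\|\nabla f(\overline{\boldsymbol x}^r)\|^2 \leq 2G_r + \frac{2L^2}{m(K+1)}D_r$ with the PL inequality $\|\nabla f(\overline{\boldsymbol x}^r)\|^2 \geq 2\mu F_r$ converts a fraction of $G_r$ into the contraction $-\mu(K+1)\eta F_r/8$---with the residual $L^2D_r/m$ reabsorbed via Lemma~\ref{lemma1}---while leaving the remainder $-\eta K G_r/16$ intact for the subsequent weighted-sum argument of Theorem~\ref{coro1}.

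\textbf{Main obstacle.} The principal difficulty is the joint calibration of step-size conditions. The PL extraction feeds back a residual driven by $L^2D_r/m$, Lemma~\ref{lemma15} feeds back a residual of order $\beta(K^2\eta^2+\delta^2K^4\eta^4)(\Gamma_r+G_r)$, and both must be swallowed by the finite budget $\{8K\eta\Gamma_r,\,-\eta K G_r/16,\,6(L+\delta+\beta)S_r\}$ without degrading the $(K+1)$ factor on the contraction. This necessitates the additional $\beta$-dependent restriction on $\eta$ listed in the hypotheses of Lemma~\ref{lemma16}, beyond those required in Lemma~\ref{lemma3}, and is the ultimate source of the $\beta$ dependence that appears in the complexity bound~\eqref{eq:complexity-PL}.
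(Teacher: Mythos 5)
Your overall skeleton matches the paper's: reuse the per-step descent of Lemma~\ref{lemma2} and the drift bound of Lemma~\ref{lemma1}, replace the terminal Jensen step by the weak-convexity inequality of Lemma~\ref{weakly-con}, and control the resulting consensus error at $\boldsymbol x^i_{r,K+1}$ with Lemma~\ref{lemma15}; that part is exactly what the paper does and is fine. The genuine gap is in how you extract the contraction factor. You telescope \eqref{desl} first, keep the aggregated $-\tfrac{\eta(K+1)}{2}G_r$, and then convert a fraction of $G_r$ into $-\tfrac{\mu(K+1)\eta}{8}F_r$ via $2\mu F_r\le\|\nabla f(\overline{\boldsymbol x}^r)\|^2\le 2G_r+\tfrac{2L^2}{m(K+1)}D_r$. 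The residual this produces is $\tfrac{\eta L^2}{8m}D_r$, which after Lemma~\ref{lemma1} becomes $\tfrac{3\eta L^2K}{4}S_r+10L^2K^3\eta^3(\Gamma_r+G_r)$. The $G_r$ and $\Gamma_r$ pieces must be absorbed into the budget $-\tfrac{\eta K}{16}G_r$ and $8K\eta\,\Gamma_r$, which forces $L^2K^2\eta^2\lesssim1$, i.e.\ $\eta\lesssim\tfrac{1}{LK}$. That condition is not among the stated hypotheses and cannot be added harmlessly: it would replace the $\tfrac{L}{\mu(K+1)}$ term in \eqref{eq:complexity-PL} by $\tfrac{L}{\mu}$, eliminating exactly the benefit of local updates the lemma is meant to establish. (The $\tfrac{3\eta L^2K}{4}S_r$ piece likewise exceeds the claimed $6(L+\delta+\beta)$ coefficient on $S_r$ unless $\eta\lesssim\tfrac{1}{LK}$.)

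The paper avoids the $L^2$ residual entirely by never comparing $\nabla f(\boldsymbol x^i_{r,k})$ with $\nabla f(\overline{\boldsymbol x}^r)$: it applies the PL inequality pointwise at each local iterate, $\|\nabla f(\boldsymbol x^i_{r,k})\|^2\ge2\mu\bigl(f(\boldsymbol x^i_{r,k})-f^\star\bigr)$, inside the per-step recursion for the potential $E^{r,k}_i$ from the proof of Lemma~\ref{lemma3} (this is precisely step (b) of \eqref{poten}, which under Assumption~\ref{PL} requires no strong convexity). The contraction $(1-\tfrac{\mu\eta}{4})$ therefore multiplies the local function-value gaps directly and compounds over the $K+1$ local steps; smoothness is invoked only once at the very end, to pass from $\tfrac1m\sum_i f(\boldsymbol x^i_r)$ back to $f(\overline{\boldsymbol x}^r)$, at a cost of only $O(L)\,S_r$. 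If you carry the potential $E^{r,k}_i$ through the local updates and apply PL there, the rest of your proposal (weak convexity plus Lemma~\ref{lemma15}, with the extra $\Gamma_r$ and $G_r$ terms absorbed under $\eta\lesssim\min\{\tfrac{1-\rho}{K\delta},\tfrac1{K\beta}\}$) goes through and reproduces the stated constants.
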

\begin{proof}
See Appendix.\ref{pf:lem-16}. 
\end{proof}
We can compare the upper bounds for the decrease of the suboptimality gap $F_{r+1}$ in Lemma \ref{lemma16} with that in Lemma \ref{lemma3}. The main difference is that there is an additional coefficient $\beta$ in front of consensus error $S_r$, which is the weakly convex parameter. Thus, we can apply the same technique in proving Theorem \ref{th1} to finish the proof for Theorem \ref{coro1}. 
\subsection{Proof of Theorem \ref{thm2}}\label{pr-th2}
Recall the algorithm convergence metrics given in \eqref{ols_conv}
and let $\mathcal{X}^\star$ be the set of minimizers of $f$ and $f^\star$ be the corresponding minimum value. 

The proof of Theorem \ref{thm2} involves bounding the consensus error and decrease of suboptimality gap. In specific , Lemma \ref{lemma11} bounds the consensus error $\left \| \boldsymbol X_r - \overline{\boldsymbol X}^r \right \|$. Lemma \ref{lemma12} presents the descent of average loss $f$ by one communication round. Finally, integrating these two bounds to obtain the final result.

\begin{lemma}\label{lemma11}
Consider problem~\eqref{problemformula} in the over-parameterized setting satisfying Assumption~\ref{overpara}. Suppose that $f$ satisfies PL condition as Assumption~\ref{PL} and $f_i$s satisfy Assumption~\ref{rehessian}.  Let $\{ \bm{x}_{r,k}^i\}$ be the sequence generated by the local DGD algorithm under Assumption~\ref{W}, then it holds
\begin{align}\label{dgd-con}
S_{r+1} &\leq  \left( \frac{1+\rho} 2 \right) \cdot  S_r + \frac{64\rho^2K^2(\delta^2+\mu^2)\eta^2}{(1-\rho)\mu^2} \cdot G_r .  
\end{align}
\end{lemma}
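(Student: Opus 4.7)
The plan is to mirror the structure of the consensus-error recursion proved for local DGT in Lemma~\ref{lemma4}, adapted to the fact that local DGD has no gradient tracking variable, so the role played by $\Gamma_r$ must be replaced by a direct bound on the gradient heterogeneity. Assumptions~\ref{overpara} and~\ref{rehessian}, combined with the PL condition, are precisely what make this substitution viable. I would first use the communication step $\boldsymbol X_{r+1} = \boldsymbol X_{r,K+1}\boldsymbol W$ together with Assumption~\ref{W}(iii) to obtain $S_{r+1} \leq \rho^2\,\|\boldsymbol X_{r,K+1} - \overline{\boldsymbol X}_{r,K+1}\|^2$, and then unroll the $K+1$ local gradient steps by writing
\begin{equation*}
\boldsymbol X_{r,k+1} - \overline{\boldsymbol X}_{r,k+1} = (\boldsymbol X_{r,k} - \overline{\boldsymbol X}_{r,k}) - \eta\,\nabla \boldsymbol F(\boldsymbol X_{r,k})\Bigl(\boldsymbol I_m - \tfrac{1}{m}\mathbf{1}_m\mathbf{1}_m^T\Bigr),
\end{equation*}
applying Young's inequality with parameter of order $1/K$ at each step so that the compounding factor $(1+c/K)^K$ stays bounded by an absolute constant.

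The crux of the argument is bounding the gradient-disagreement sum by a multiple of $G_r$. Under Assumption~\ref{overpara} we have $\nabla f_i(\boldsymbol x^\star)=0$ for all $i$, hence $\nabla f(\boldsymbol x^\star)=0$ as well, so Assumption~\ref{rehessian} collapses to $\|\nabla f(\boldsymbol x)-\nabla f_i(\boldsymbol x)\|^2 \leq \delta^2\|\boldsymbol x-\boldsymbol x^\star\|^2$. The PL condition in Assumption~\ref{PL} implies the quadratic-growth bound $\|\boldsymbol x-\boldsymbol x^\star\|^2 \leq (2/\mu)(f(\boldsymbol x)-f^\star) \leq \|\nabla f(\boldsymbol x)\|^2/\mu^2$ when $\boldsymbol x^\star$ is chosen as the nearest shared minimizer. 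Combining these with the triangle inequality yields the key estimate
\begin{equation*}
\|\nabla f_i(\boldsymbol x)\|^2 \leq 2\|\nabla f(\boldsymbol x)\|^2 + 2\delta^2\|\boldsymbol x-\boldsymbol x^\star\|^2 \leq \frac{2(\delta^2+\mu^2)}{\mu^2}\|\nabla f(\boldsymbol x)\|^2,
\end{equation*}
which is exactly what produces the factor $(\delta^2+\mu^2)/\mu^2$ in~\eqref{dgd-con}.

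Plugging this estimate into the unrolled drift, multiplying by $\rho^2$, and choosing Young's parameter of order $(1-\rho)/K$ (consistent with the step-size regime $\eta \lesssim (1-\rho)/(K\mu)$ declared in Theorem~\ref{thm2}) so that $\rho^2(1+c/K)^K \leq (1+\rho)/2$ would give the recursion~\eqref{dgd-con} with the stated numerical constant. The main obstacle is twofold: first, organizing the bookkeeping so that the $K^2\eta^2$ factor is collected cleanly without picking up any spurious dependence on $L$ (unlike Lemma~\ref{lemma4}, where the tracking error $\Gamma_r$ absorbed this role); second, ensuring that the quadratic-growth bound can be applied at every inner iterate $\boldsymbol x^i_{r,k}$, which requires choosing the reference $\boldsymbol x^\star$ in Assumption~\ref{rehessian} to be the projection of each iterate onto the shared minimizer set and invoking the fact that the PL-induced quadratic growth holds globally on $\mathbb{R}^d$.
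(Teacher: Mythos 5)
Your proposal is correct and follows essentially the same route as the paper's proof: contract through $\boldsymbol W$ to get the $\rho^2$ factor, unroll the $K+1$ local steps with Young's inequality using a parameter of order $(1-\rho)/K$, and bound $\|\nabla f_i(\boldsymbol x^i_{r,k})\|^2$ by $\tfrac{2(\delta^2+\mu^2)}{\mu^2}\|\nabla f(\boldsymbol x^i_{r,k})\|^2$ via Assumptions~\ref{overpara} and~\ref{rehessian} together with the PL-implied error bound (the paper invokes Lemma~\ref{PL-EB-QG} directly where you rederive it through quadratic growth, and it likewise takes $\boldsymbol x^\star$ to be the projection $\pi(\boldsymbol x^i_{r,k})$ onto the shared minimizer set). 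The only detail left implicit in your sketch is the explicit choice $\epsilon_2 = (K+1)/\ln(1+\tfrac{1-\rho}{2})$ that yields the constant $64$, but this is exactly the bookkeeping the paper carries out.
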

\begin{proof}
    See Appendix.\ref{pf:lem-7}.
\end{proof}
Next, we derive the decrease of the suboptimality gap $f(\overline{\boldsymbol x}) - f^\star$ in one communication round. 
\begin{lemma}\label{lemma12}
Under the same setting of Lemma \ref{lemma11} and $\delta<\mu$, if the $\beta$-weak convexity in Assumption~\ref{wcv} holds for average loss $f$ holds, and $\eta \leq \min \left\{\frac 1 L, \frac 1 {2K \mu \zeta}, \frac{(1-\rho)(\mu^2 - \delta^2)}{512K\beta(\delta^2 + \mu^2)} \right\}$, then the iterates {}{generated} by local DGD satisfy
\begin{align}\label{dgd-dess}
F_{r+1} &\leq \left( 1 -\frac{{}{(K+1)}\eta \mu \zeta }{4}\right) F_r + 2(L+\beta)  S_r  - \frac {K\eta\zeta} {8}  G_r.
\end{align}
\end{lemma}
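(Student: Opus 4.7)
The plan is to derive a per-local-step descent inequality for $F_{r,k} := f(\overline{\boldsymbol x}_{r,k}) - f^\star$ (so $F_{r,0} = F_r$) and then accumulate it over the $K+1$ local updates within round $r$; since $\boldsymbol W$ is doubly stochastic one has $\overline{\boldsymbol x}^{r+1} = \overline{\boldsymbol x}_{r,K+1}$, so accumulating $K+1$ steps recovers the round-level statement. Applying $L$-smoothness of $f$ to the averaged recursion $\overline{\boldsymbol x}_{r,k+1} = \overline{\boldsymbol x}_{r,k} - \eta\,\overline{\nabla f}_{r,k}$ with $\overline{\nabla f}_{r,k} := \frac{1}{m}\sum_i \nabla f_i(\boldsymbol x^i_{r,k})$, and invoking the polarization identity together with $\eta \leq 1/L$, yields
\begin{equation*}
F_{r,k+1} \leq F_{r,k} - \tfrac{\eta}{2}\|\nabla f(\overline{\boldsymbol x}_{r,k})\|^2 + \tfrac{\eta}{2}\|\nabla f(\overline{\boldsymbol x}_{r,k}) - \overline{\nabla f}_{r,k}\|^2.
\end{equation*}

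The crux is controlling the deviation term. For each $i$, I would split $\nabla f(\overline{\boldsymbol x}_{r,k}) - \nabla f_i(\boldsymbol x^i_{r,k})$ into a smoothness piece $\nabla f(\overline{\boldsymbol x}_{r,k}) - \nabla f(\boldsymbol x^i_{r,k})$, which Assumption~\ref{sm} bounds by $L\|\boldsymbol x^i_{r,k} - \overline{\boldsymbol x}_{r,k}\|$, and a heterogeneity piece $\nabla f(\boldsymbol x^i_{r,k}) - \nabla f_i(\boldsymbol x^i_{r,k})$, which by Assumption~\ref{rehessian} combined with $\nabla f(\boldsymbol x^\star) = \nabla f_i(\boldsymbol x^\star) = 0$ (Assumption~\ref{overpara}) is bounded by $\delta\|\boldsymbol x^i_{r,k} - \boldsymbol x^\star\|$. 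Splitting $\|\boldsymbol x^i_{r,k} - \boldsymbol x^\star\|^2 \leq 2\|\boldsymbol x^i_{r,k} - \overline{\boldsymbol x}_{r,k}\|^2 + 2\|\overline{\boldsymbol x}_{r,k} - \boldsymbol x^\star\|^2$ and invoking the quadratic-growth consequence of PL$+$smooth (Assumptions~\ref{sm},~\ref{PL}) to bound $\|\overline{\boldsymbol x}_{r,k} - \boldsymbol x^\star\|^2 \leq \tfrac{2}{\mu}F_{r,k}$ produces a bound of the form
\begin{equation*}
\|\nabla f(\overline{\boldsymbol x}_{r,k}) - \overline{\nabla f}_{r,k}\|^2 \lesssim \tfrac{L^2+\delta^2}{m}\textstyle\sum_i \|\boldsymbol x^i_{r,k} - \overline{\boldsymbol x}_{r,k}\|^2 + \tfrac{\delta^2}{\mu} F_{r,k}.
\end{equation*}
Combined with PL $\|\nabla f(\overline{\boldsymbol x}_{r,k})\|^2 \geq 2\mu F_{r,k}$, the net coefficient of $F_{r,k}$ in the per-step descent becomes $1 - \eta\mu + O(\eta\delta^2/\mu)$, which with carefully chosen Young's-inequality constants equals $1 - c_1\eta\mu\zeta$. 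This is precisely the mechanism producing the factor $\zeta = 1 - (\delta/\mu)^2$, and also explains why the step-size constraint must scale like $1/(K\mu\zeta)$. The weak convexity Assumption~\ref{wcv} enters through the lower bound $f(\boldsymbol y) \geq f(\boldsymbol x) + \langle \nabla f(\boldsymbol x), \boldsymbol y - \boldsymbol x\rangle - \tfrac{\beta}{2}\|\boldsymbol y - \boldsymbol x\|^2$ when relating $f(\boldsymbol x^i_{r,k})$ to $f(\overline{\boldsymbol x}_{r,k})$ (or to $f(\overline{\boldsymbol x}^r)$), which unavoidably adds a $\beta$-weighted drift penalty and yields the $L+\beta$ coefficient.

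Unrolling from $k=0$ to $k=K$, with $\eta \lesssim 1/(K\mu\zeta)$ guaranteeing $(K+1)\eta\mu\zeta \leq 1$, the telescoped factor $(1-c_1\eta\mu\zeta)^{K+1}$ is upper bounded by $1 - \tfrac{(K+1)\eta\mu\zeta}{4}$ via the elementary inequality $(1-x)^n \leq 1 - nx/2$ valid for $nx \leq 1/2$. The accumulated drift $\sum_k\sum_i \|\boldsymbol x^i_{r,k} - \overline{\boldsymbol x}_{r,k}\|^2$ and the accumulated gradient $\sum_k\sum_i \|\nabla f(\boldsymbol x^i_{r,k})\|^2$ are then converted back to the round-level quantities $S_r$ and $G_r$ using an in-round drift bound analogous to Lemma~\ref{lemma1} (whose DGD counterpart underlies the proof of Lemma~\ref{lemma11}); retaining a fixed fraction of the per-step $-\|\nabla f(\boldsymbol x^i_{r,k})\|^2$ (rather than fully absorbing it through PL) leaves behind the desired $-\tfrac{K\eta\zeta}{8}G_r$ term. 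The main obstacle will be the delicate constant tracking: the clean factor $\zeta$ (rather than only an $O(\delta/\mu)$ correction) emerges only if the Young's-inequality constants in the deviation split make the heterogeneity's $\delta^2/\mu$ pullback match the $\mu$-scale PL progress exactly, while simultaneously the drift coefficient stays at $O(L+\beta)$ instead of inflating by $1/\zeta$ during the accumulation. The hypothesis $\delta < \mu$ is indispensable here, since it keeps $\zeta$ bounded away from zero; as the remark following the statement notes, the bound degrades as $\delta \to \mu$.
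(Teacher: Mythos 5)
Your route analyzes the descent of $f$ at the \emph{averaged} iterate $\overline{\boldsymbol x}_{r,k}$, whereas the paper's proof tracks $F^i_{r,k}=f(\boldsymbol x^i_{r,k})-f^\star$ at each \emph{agent's own} iterate. This difference is not cosmetic: it is exactly what lets the paper obtain the clean factor $\zeta=1-(\delta/\mu)^2$ on the full range $\delta<\mu$. Applying the descent lemma to the local step $\boldsymbol x^i_{r,k+1}=\boldsymbol x^i_{r,k}-\eta\nabla f_i(\boldsymbol x^i_{r,k})$ and polarizing the inner product yields $F^i_{r,k+1}\leq F^i_{r,k}-\tfrac{\eta}{2}\|\nabla f(\boldsymbol x^i_{r,k})\|^2+\tfrac{\eta}{2}\|\nabla f(\boldsymbol x^i_{r,k})-\nabla f_i(\boldsymbol x^i_{r,k})\|^2$ (plus a nonpositive $\|\nabla f_i\|^2$ term under $\eta\leq 1/L$); Assumption~\ref{rehessian} with $\nabla f_i(\boldsymbol x^\star)=0$ and then the error-bound consequence of PL (Lemma~\ref{PL-EB-QG}) give $\|\nabla f(\boldsymbol x^i_{r,k})-\nabla f_i(\boldsymbol x^i_{r,k})\|^2\leq \delta^2\|\boldsymbol x^i_{r,k}-\pi(\boldsymbol x^i_{r,k})\|^2\leq\tfrac{\delta^2}{\mu^2}\|\nabla f(\boldsymbol x^i_{r,k})\|^2$, so the heterogeneity penalty is absorbed into the \emph{same} gradient norm with coefficient exactly $\delta^2/\mu^2$ and no Young's slack, leaving $-\tfrac{\eta\zeta}{2}\|\nabla f(\boldsymbol x^i_{r,k})\|^2$. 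In your averaged-iterate version the heterogeneity term must be split off the consensus error and then converted to $F_{r,k}$ via quadratic growth; each of these steps costs a multiplicative constant $>1$ in front of $(\delta/\mu)^2$, so you would only prove contraction for $\delta<\mu/\sqrt{c}$ with some $c>1$, not for all $\delta<\mu$. You flag this yourself as "delicate constant tracking," but it is a structural obstruction of your decomposition, not a bookkeeping issue, and the proposal does not resolve it.

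Two further mismatches follow from the same choice of iterate. First, the retained negative term in the lemma is $-\tfrac{K\eta\zeta}{8}G_r$ with $G_r$ built from $\|\nabla f(\boldsymbol x^i_{r,k})\|^2$ at the agents' iterates; your per-step descent only produces $-\|\nabla f(\overline{\boldsymbol x}_{r,k})\|^2$, and converting one to the other needs another triangle-inequality step (with additional $L^2$-weighted consensus terms) that the proposal omits. Second, the role of $\beta$-weak convexity is misplaced: in the paper it enters only once, at the end of the round, through the Jensen-type inequality for weakly convex functions (Lemma~\ref{weakly-con}) used to pass from $\tfrac{1}{m}\sum_i f(\boldsymbol x^i_{r,K+1})$ up to $f(\overline{\boldsymbol x}^{r+1})$ (the consensus error there is controlled by Lemma~\ref{lemma11}, producing the $2(L+\beta)S_r$ coefficient together with the $L$-smoothness expansion of $\tfrac{1}{m}\sum_i f(\boldsymbol x^i_{r,0})$ around $\overline{\boldsymbol x}^r$, whose linear term vanishes by averaging). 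In your framework, which never leaves the averaged iterate, there is no step where weak convexity is needed, and the lower-bound form you cite would not produce the $\beta$-weighted drift penalty you claim. A minor additional point: quadratic growth controls the distance to the projection $\pi(\overline{\boldsymbol x}_{r,k})$, not to a fixed $\boldsymbol x^\star$; this is fixable because Assumption~\ref{rehessian} holds for any minimizer, but it should be stated.
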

\begin{proof}
    See Appendix.\ref{pf:lem-8}.
\end{proof}

The proof of Theorem \ref{thm2} combines Lemma \ref{lemma11} and Lemma \ref{lemma12}. In specific, unrolling the $S_r$ based on \eqref{dgd-con} we get
\begin{align}
S_r  \leq \frac{64\rho^2(\delta^2+\mu^2)K^2\eta^2}{(1-\rho)\mu^2} \cdot \sum_{j=0}^{r-1} {\Big(1 -\frac{1-\rho}{4} \Big)^{r-j}R_j},
\end{align}
Applying a $\frac{4}{1-\rho}$-slow increasing sequence $z_r$ satisfying  $z_r \leq \left(1+\frac{1-\rho}{8} \right)^{r-j}z_j$ to compute th weighted sum of $S_r$  in the same way as \eqref{wconsen} yields
\begin{align}\label{w-con}
& \sum_{r=0}^R {z_r S_r}  \leq \frac{512\rho^2(\delta^2+\mu^2)K^2\eta^2}{(1-\rho)^2\mu^2} \sum_{j=0}^R{z_jR_j}.
\end{align}

\noindent Applying the same weight sequence $z_r$ to \eqref{dgd-dess} further gives
\begin{align}
&\frac{\zeta K\eta}{8} \frac{\sum_{r}{z_r G_r}}{\sum_{r}{z_r}} \leq \frac{\sum_{r}{\left( \left( 1- \frac{{}{(K+1)}\eta\mu \zeta}{4} \right)z_rF_r -z_r F_{r+1}\right)}} {\sum_r{z_r}} \nonumber \\
& \quad + \frac{2(L+\beta)}{ \sum_r{z_r}} {\sum_r{z_rS_r}}  \nonumber \\
&{\leq } \frac{\sum_{r}{\left( \left( 1- \frac{{}{(K+1)}\eta\mu \zeta}{4} \right)z_rF_r - z_r F_{r+1}\right)}} {\sum_r{z_r}} + \frac{\zeta K\eta}{16} \frac{\sum_{r}{z_r G_r}}{\sum_{r}{z_r}}, 
\end{align}
where the last inequality is based on \eqref{w-con} and $\eta \leq \frac{(1-\rho)^2(\mu^2-\delta^2)}{16384\rho^2K(L+\beta)(\delta^2+\mu^2)}$. Choosing $z_r$ to satisfy $\frac{z_{r-1}}{z_r} = 1-\frac{{}{(K+1)}\eta\mu \zeta}{4}$ and guarantee that $z_r$ satisfies $z_r\leq \left(1+\frac{1-\rho}{8} \right)^{r-j}z_j$, we further require the step size  $\eta \leq \frac{1-\rho}{4K\mu\zeta}$. Finally, telescoping summation gives
\begin{align}
\frac{\sum_{r}{z_r G_r}}{\sum_{r}{z_r}} + \frac{16z_RF_{R+1}}{K\eta \zeta \sum_r{z_r}} \leq \frac{16}{K\eta \zeta \sum_r{z_r}}F_0.
\end{align}
 Tuning $\eta$ in the same way as Theorem \ref{th1} finishes the proof.


\subsection{Proof of Theorem \ref{thm3}}\label{pr-th3}

In this proof, we use $e_{r,k} \triangleq \left\| \overline{\boldsymbol x}_{r,k} - \boldsymbol x^\star \right\|^2$ as the optimization error metric, where  $\boldsymbol x^\star$ is the minimum $\ell_2$-norm solution. Denote for short $ e_r \triangleq e_{r,0}$. The technique of proving Theorem \ref{thm3} is similar to Theorem \ref{thm2}, consisting of three steps: establishing bound for $\left \|\overline{\boldsymbol x}^r - \boldsymbol x^\star \right\|^2$ in Lemma \ref{lemma13}, recursion of consensus error in Lemma \ref{lemma14}. Finally, integrating these two bounds to finish the final proof. A key distinction in  over-parameterized linear regression lies in the exploitation of the linearity of the gradient and the fact that the Hessian matrix is a constant independent of $\bm{x}$.

\begin{lemma}\label{lemma13}
Let $\{\boldsymbol x_{r,k}^i\}$ be generated by the local DGD algorithm for the overparameterized linear regression problem under Assumption \ref{W} with initialization $\boldsymbol x^i_{0,0} = \boldsymbol 0, \forall i=1,\cdots, m$. If  $\eta \leq \frac 1 L$, then  $\forall \epsilon_1 >0$ it holds 
\begin{align}\label{opterr}
e_{r,k+1} &\leq (1-\eta \mu)^2(1+\epsilon_1)e_{r,k} + \eta^2\delta^2\left( 1+\frac{1}{\epsilon_1}\right) S_{r,k},
\end{align}
{}{where the  \( S_{r,k} : = \frac 1 m \sum_{i=1}^m \left\| \overline{\boldsymbol x}_{r,k} - \boldsymbol x^i_{r,k} \right\|^2 \).}
\end{lemma}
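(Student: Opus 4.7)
The plan is to exploit the linearity of the gradient for least squares, which makes the analysis cleaner than the general case. Write $\boldsymbol H_i := \tfrac{1}{n}\boldsymbol A_i^T\boldsymbol A_i$ and $\overline{\boldsymbol H} := \tfrac{1}{m}\sum_i \boldsymbol H_i$. Since $\nabla f_i(\boldsymbol x^\star)=0$ for every $i$ under Assumption~\ref{overpara}, one can rewrite $\nabla f_i(\boldsymbol x^i_{r,k}) = \boldsymbol H_i(\boldsymbol x^i_{r,k}-\boldsymbol x^\star)$. Averaging the local DGD update and splitting $\boldsymbol x^i_{r,k}-\boldsymbol x^\star = (\boldsymbol x^i_{r,k}-\overline{\boldsymbol x}_{r,k}) + (\overline{\boldsymbol x}_{r,k}-\boldsymbol x^\star)$, the $\overline{\boldsymbol H}$-contribution to the drift term collapses because $\tfrac{1}{m}\sum_i(\boldsymbol x^i_{r,k}-\overline{\boldsymbol x}_{r,k})=\boldsymbol 0$. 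This yields the exact identity
\begin{equation*}
\overline{\boldsymbol x}_{r,k+1}-\boldsymbol x^\star = (\boldsymbol I-\eta\overline{\boldsymbol H})(\overline{\boldsymbol x}_{r,k}-\boldsymbol x^\star)\,-\,\tfrac{\eta}{m}\sum_{i=1}^m(\boldsymbol H_i-\overline{\boldsymbol H})(\boldsymbol x^i_{r,k}-\overline{\boldsymbol x}_{r,k}),
\end{equation*}
which cleanly separates a contraction part from a heterogeneity-induced perturbation.

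Next I would apply Young's inequality with parameter $\epsilon_1$ to the squared norm of this identity. The second (perturbation) term is handled by Jensen's inequality to move the square inside the average, followed by the observation that for the quadratic case Assumption~\ref{rehessian} is equivalent to the operator-norm bound $\|\boldsymbol H_i-\overline{\boldsymbol H}\|_2\le \delta$ (the vector $\boldsymbol x-\boldsymbol x^\star$ can range over all of $\mathbb{R}^d$ as $\boldsymbol x$ varies). Hence $\|(\boldsymbol H_i-\overline{\boldsymbol H})(\boldsymbol x^i_{r,k}-\overline{\boldsymbol x}_{r,k})\|^2 \le \delta^2\|\boldsymbol x^i_{r,k}-\overline{\boldsymbol x}_{r,k}\|^2$, and averaging over $i$ produces exactly $\delta^2 S_{r,k}$.

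For the contraction term, the main subtlety is that $\overline{\boldsymbol H}$ is only positive semidefinite in the over-parameterized regime ($d>N$), so $\boldsymbol I-\eta\overline{\boldsymbol H}$ has an eigenvalue $1$ on $\ker\overline{\boldsymbol H}$. The fix is to argue that $\overline{\boldsymbol x}_{r,k}-\boldsymbol x^\star$ lies entirely in $\operatorname{range}(\overline{\boldsymbol H})$, the row space of the stacked design matrix. This is where the initialization $\boldsymbol x^i_{0,0}=\boldsymbol 0$ matters: $\boldsymbol x^\star$ is the minimum-$\ell_2$-norm solution and hence lives in the row space, and every subsequent local gradient step adds a vector in $\operatorname{range}(\boldsymbol A_i^T)\subseteq\operatorname{range}(\overline{\boldsymbol H})$, while the gossip step is a linear combination preserving this subspace. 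An induction over the joint $(r,k)$ index thus keeps all iterates, their average, and the difference $\overline{\boldsymbol x}_{r,k}-\boldsymbol x^\star$ inside $\operatorname{range}(\overline{\boldsymbol H})$. On this subspace $\overline{\boldsymbol H}\succeq \mu\boldsymbol I$ by definition of $\mu$ as the smallest nonzero eigenvalue, so $\|\boldsymbol I-\eta\overline{\boldsymbol H}\|_2\le 1-\eta\mu$ whenever $\eta\le 1/L$, giving the $(1-\eta\mu)^2$ contraction.

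The main obstacle I anticipate is this invariant-subspace argument: one must verify carefully that every update (both the $K$ local gradient steps and the gossip averaging with $\boldsymbol W$) preserves the row-space inclusion, because otherwise the proof loses control over the null-space component and the contraction degrades to $1$. Once the invariance is established, combining the contraction bound, the Jensen plus heterogeneity bound, and Young's inequality gives exactly the claimed inequality.
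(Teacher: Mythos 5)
Your proposal follows essentially the same route as the paper's proof: both derive the exact identity $\overline{\boldsymbol x}_{r,k+1}-\boldsymbol x^\star = (\boldsymbol I-\eta\overline{\boldsymbol H})(\overline{\boldsymbol x}_{r,k}-\boldsymbol x^\star)+\frac{\eta}{m}\sum_i(\overline{\boldsymbol H}-\boldsymbol H_i)(\boldsymbol x^i_{r,k}-\overline{\boldsymbol x}_{r,k})$ by splitting each iterate around the network average and using the interpolation property, then apply Young's inequality with parameter $\epsilon_1$ and the operator-norm form of the heterogeneity assumption to obtain the stated bound. The only difference is that you spell out the row-space invariance needed to justify the $(1-\eta\mu)$ contraction with $\mu$ the smallest \emph{nonzero} eigenvalue, a point the paper's proof leaves implicit (it is addressed only in the remark following Theorem~\ref{thm3}); your explicit induction is correct and, if anything, makes the argument more complete.
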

\begin{proof}
    See Appendix.\ref{pf:lem-9}.
\end{proof}

\begin{lemma}\label{lemma14}
In the same setting as Lemma \ref{lemma13}, if $\eta \leq \min \left\{ \frac 1 L, \frac{\mu(1-\rho)}{12K\delta^2}\right\}$, then the consensus error has 
\begin{align}\label{recon-2}
\sum_{k=0}^K {}{S_{r,k}} & \leq {}{2(K+1)S_r + \frac{32(K+1)K\delta^2\eta^2}{1-\rho}} \nonumber \sum_{k=0}^K {}{e_{r,k}}, \\
S_{r+1} & \leq \left(\frac{1+\rho}{2}\right) S_r + \frac{32K\delta^2\eta^2}{1-\rho} \sum_{k=0}^K e_{r,k}.
\end{align}

\end{lemma}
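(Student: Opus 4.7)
The plan is to iterate a per-step consensus-error bound over the $K+1$ local updates in round $r$, then compose with the gossip contraction for the communication step. The key algebraic observation is that for interpolating over-parameterized least squares the gradient is linear: $\nabla f_i(\boldsymbol x)=\boldsymbol H_i(\boldsymbol x-\boldsymbol x^\star)$ with $\boldsymbol H_i=\tfrac{1}{n}\boldsymbol A_i^T\boldsymbol A_i$ and $\boldsymbol x^\star$ the minimum-$\ell_2$-norm interpolating solution. Subtracting the averaged update from the local one yields the clean decomposition
\begin{equation*}
\boldsymbol x^i_{r,k+1}-\overline{\boldsymbol x}_{r,k+1} = (\boldsymbol I-\eta\boldsymbol H)(\boldsymbol x^i_{r,k}-\overline{\boldsymbol x}_{r,k}) + \eta\,\boldsymbol \xi^i_{r,k},
\end{equation*}
where $\boldsymbol H=\tfrac{1}{m}\sum_i \boldsymbol H_i$ and $\boldsymbol \xi^i_{r,k}=(\boldsymbol H-\boldsymbol H_i)(\boldsymbol x^i_{r,k}-\boldsymbol x^\star)-\tfrac{1}{m}\sum_j(\boldsymbol H-\boldsymbol H_j)(\boldsymbol x^j_{r,k}-\boldsymbol x^\star)$ is the mean-zero heterogeneity residual. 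Assumption~\ref{rehessian} in the linear setting is equivalent to $\|\boldsymbol H-\boldsymbol H_i\|_2\leq\delta$, so splitting $\boldsymbol x^j_{r,k}-\boldsymbol x^\star = (\boldsymbol x^j_{r,k}-\overline{\boldsymbol x}_{r,k})+(\overline{\boldsymbol x}_{r,k}-\boldsymbol x^\star)$ and averaging gives $\tfrac{1}{m}\sum_i\|\boldsymbol \xi^i_{r,k}\|^2 \lesssim \delta^2(S_{r,k}+e_{r,k})$.

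Next I apply Young's inequality with parameter $\alpha \sim (1-\rho)/K$ to the decomposition above, using the non-expansion $\|\boldsymbol I-\eta\boldsymbol H\|_2\leq 1$ (guaranteed by $\eta\leq 1/L$), to obtain a per-step recursion of the form
\begin{equation*}
S_{r,k+1} \leq (1+\gamma)\,S_{r,k} + C\,e_{r,k}.
\end{equation*}
The specific choice $\alpha \propto (1-\rho)/K$ fixes $C = \mathcal{O}(K\delta^2\eta^2/(1-\rho))$, and the step-size hypothesis $\eta\leq \mu(1-\rho)/(12K\delta^2)$ is the threshold that forces the inflation $\gamma$ to remain of order $(1-\rho)/K$. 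Unrolling $K+1$ steps with the telescoping estimate $(1+\gamma)^{K+1}\leq 2$, then summing over $k=0,\ldots,K$, yields the first inequality directly: each $e_{r,k}$ is counted at most $K+1$ times in the resulting double sum, producing the claimed coefficient $32(K+1)K\delta^2\eta^2/(1-\rho)$.

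For the second inequality, the gossip step reads $\boldsymbol X_{r+1,0}=\boldsymbol X_{r,K+1}\boldsymbol W$; since $\boldsymbol W$ is doubly stochastic, the average is preserved, and Assumption~\ref{W} gives $S_{r+1}\leq \rho^2 S_{r,K+1}$. Composing with the unrolled per-step bound reduces the claim to verifying the scalar inequality $\rho^2(1+\gamma)^{K+1}\leq (1+\rho)/2$ for all $\rho\in[0,1)$, which is straightforward under the chosen $\gamma\lesssim (1-\rho)/K$. The main obstacle is constant-tuning: achieving the target heterogeneity coefficient $\tfrac{32K\delta^2\eta^2}{1-\rho}$ rather than the looser $\mathcal{O}(\eta\delta^2/\mu)$ that a naive Young's parameter $\alpha \sim \eta\mu$ would produce forces me into the scaling $\alpha\sim(1-\rho)/K$, and the particular step-size constant in the lemma is precisely what makes the induced $(1+1/\alpha)\eta^2\delta^2$ correction small enough to be absorbed back into $\gamma$ while simultaneously respecting the $\rho^2(1+\gamma)^{K+1}\leq(1+\rho)/2$ budget.
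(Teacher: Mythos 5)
Your setup is sound and in fact cleaner than the paper's: you work with deviations from the average and a mean-zero residual $\boldsymbol\xi^i_{r,k}$, whereas the paper runs the recursion on pairwise differences $\boldsymbol x^i_{r,k}-\boldsymbol x^j_{r,k}$ and must separately argue that the summed cross inner products of the heterogeneity terms form a complete square and can be dropped. The decomposition $\boldsymbol x^i_{r,k+1}-\overline{\boldsymbol x}_{r,k+1}=(\boldsymbol I-\eta\boldsymbol H)(\boldsymbol x^i_{r,k}-\overline{\boldsymbol x}_{r,k})+\eta\boldsymbol\xi^i_{r,k}$ and the bound $\tfrac1m\sum_i\|\boldsymbol\xi^i_{r,k}\|^2\lesssim\delta^2(S_{r,k}+e_{r,k})$ are both correct.

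The gap is in the step where you invoke only the non-expansion $\|\boldsymbol I-\eta\boldsymbol H\|_2\le 1$. With Young's parameter $\alpha\sim(1-\rho)/K$, the per-step recursion becomes $S_{r,k+1}\le\bigl[(1+\alpha)+(1+\tfrac1\alpha)C\delta^2\eta^2\bigr]S_{r,k}+(1+\tfrac1\alpha)C\delta^2\eta^2\,e_{r,k}$, and the heterogeneity-induced inflation is of order $\tfrac{K\delta^2\eta^2}{1-\rho}$. Under the lemma's hypotheses this is only bounded by $\tfrac{\mu\eta}{12}$ (use $\tfrac{K\delta^2\eta}{1-\rho}\le\tfrac{\mu}{12}$), so over $K+1$ local steps the accumulated blow-up is $\exp\bigl(\mathcal O((K+1)\mu\eta)\bigr)$, and $(K+1)\mu\eta$ is \emph{not} $\mathcal O(1)$ under $\eta\le\min\{\tfrac1L,\tfrac{\mu(1-\rho)}{12K\delta^2}\}$ alone (e.g.\ take $\mu=L$ and $\delta^2\asymp\mu L(1-\rho)/K$, so that $\eta=1/L$ is admissible and the exponent is $\Theta(K)$). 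So your claim that the stated step size forces $\gamma=\mathcal O((1-\rho)/K)$ does not hold, and $(1+\gamma)^{K+1}\le 2$ fails. The paper closes exactly this hole by extracting a genuine per-step \emph{contraction} $(1-\eta\mu)$ from the co-coercivity of $f$ restricted to the row space of the stacked data matrix (where all iterates and their deviations live under zero initialization), so that the $2(1+\epsilon_2)\delta^2\eta^2\le(1+\tfrac1{\epsilon_2})\tfrac{\eta\mu}{2}$ inflation is absorbed into the $-\eta\mu$ decrease; this is precisely the role of the condition $\eta\le\tfrac{\mu(1-\rho)}{12K\delta^2}$. Your argument is repaired the same way: on $\mathrm{row}(\boldsymbol A)$ the matrix $\boldsymbol I-\eta\boldsymbol H$ contracts by $1-\eta\mu$ (not merely $1$), and using that factor in the Young step lets the step-size hypothesis cancel the $(1+\tfrac1\alpha)\delta^2\eta^2$ term. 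Alternatively you would need the extra restriction $\eta\lesssim(1-\rho)/(K\mu)$, which appears in Theorem~\ref{thm3} but is not among the hypotheses of this lemma.
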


\begin{proof}
    See Appendix.\ref{pf:lem-10}.
\end{proof}
Setting $\epsilon_1 = \frac{\eta\mu}{2}$ in \eqref{opterr} and unrolling along the local updates yields $\forall k = 0, 1, \cdots, K$
\begin{align} \label{onestepopt}
e_{r,k} \leq e_r + \left( \eta^2 \delta^2 + \eta \frac{2\delta^2}{\mu} \right) \sum_{i=0}^K S_{r,i}.
\end{align}
Summing  from $k=0$ to $K$ and substituting \eqref{onestepopt} into \eqref{recon-2}:
\begin{align}\label{ols_consensus_error}
S_{r+1} &\leq \left(\frac{1+\rho}{2} \right) \cdot S_r  + \frac{128K^3\eta^3\delta^3\left(\eta+\frac{2\delta}{\mu}\right)}{1-\rho} \cdot \frac{\sum_{i=0}^K S_{r,i}}{K+1}\nonumber \\
& \quad + \frac{64K^2\delta^2\eta^2}{1-\rho} \cdot  e_r
\end{align}
Then unrolling \eqref{onestepopt} until $k=K+1$ has
\begin{align}\label{ols_opt_error}
&  e_{r+1}  =e_{r,K+1} \leq \left( 1- \frac{\eta\mu}{2} \right)^{K+1} e_r + \eta\delta^2\left( \eta + \frac 2 \mu \right) \sum_{i=0}^K { S_{r,i} } \nonumber \\
& \quad \leq \left( 1 -\frac{(K+1)\eta\mu}{4} \right) e_r + 2K\delta^2\eta\left(\eta+\frac 2 \mu \right)\frac{\sum_{i=0}^K S_{r,i}}{K+1},
\end{align}
{}{
\noindent where the last line is due to the Bernoulli inequality \((1+x)^r \geq 1+rx\), for \(x \geq -1\) and \(r \in \mathbb{R} \setminus (0,1)\). Here, \(r = K+1\) and \(x = -\frac{\eta\mu}{2} \geq -1\) due to \(\eta \leq \frac{1-\rho}{K\mu}\). Based on the Bernoulli inequality, we have \(\frac{1}{(1-\frac{\eta\mu}{2})^{-(K+1)}} \leq \frac{1}{1+\frac{\eta\mu(K+1)}{2}} \leq 1-\frac{(K+1)\eta\mu}{4}\), where the last inequality is because \(\eta \leq \frac{1-\rho}{K\mu}\).
}
{}{Then substituting \eqref{onestepopt} into the first inequality in Lemma \ref{lemma14} gives}
\begin{align}
\sum_{k=0}^K S_{r,k} \leq 4 S_r + \frac{128K^2\delta^2\eta^2}{1-\rho} e_r.
\end{align}

\noindent Combining \eqref{ols_consensus_error} and \eqref{ols_opt_error} gives 
\begin{align}
& e_{r+1}  \leq  \left( 1 -\frac{(K+1)\eta\mu}{4} \right)  e_r + 8K\eta\frac{\delta^2}{\mu}  \frac{\sum_{i=0}^K {   S_{r,i}  } }{K+1},  \label{ols_fina_1}\\
& S_{r+1} \leq \left(\frac{1+\rho}{2} \right)  S_r + \frac{64K^2\delta^2\eta^2}{1-\rho} \left( e_r + \frac{8K\delta^2\eta \sum_{i=0}^K S_{r,i}}{\mu(K+1)} \right),  \label{ols_fina_2} \\
& \frac{\sum_{i=0}^K S_{r,i} }{K+1} \leq 4 S_r + \frac{128K^2\delta^2\eta^2}{1-\rho} e_r,  \label{ols_fina_3}
\end{align}
\noindent where \eqref{ols_fina_1} is based on \eqref{ols_opt_error} and $\eta \leq \frac{1-\rho}{K\mu} \leq \frac{2}{\mu}$; and \eqref{ols_fina_2} is based on \eqref{ols_opt_error}. Substituting \eqref{ols_fina_3} into \eqref{ols_fina_1} and \eqref{ols_fina_2} has
\begin{align}
& e_{r+1}  \leq \left( 1 - \frac{(K+1)\mu\eta}{4} + \frac{1024K^3\delta^4\eta^3}{\mu(1-\rho)} \right) e_r + 32K \frac{\delta^2}\mu \eta S_r, \label{ols-finals_1}\\
 & S_{r+1} \leq \left( \frac{1+\rho}{2} + \frac{2048K^3\delta^4\eta^3}{\mu(1-\rho)} \right)  S_r  + \frac{64K^2\delta^2\eta^2}{1-\rho} \nonumber \\
 & \quad \quad \quad \quad \cdot \left( 1 + \frac{K\delta^2\eta}{\mu} \right) e_r, \label{ols-finals-2}
\end{align}
where the last inequality is due to $\eta \leq \min \left \{\frac{1-\rho}{32K\mu}, \frac{\mu(1-\rho)}{32K\delta^2} \right\}$ such that $\frac{128K^2\delta^2\eta^2}{1-\rho} \leq \frac{1}{8}$. 

Finally, summing \eqref{ols-finals_1}  and \eqref{ols-finals-2} gives
\begin{align}\label{ols_final_conver}
e_{r+1}  + S_{r+1}  &\leq \text{\small $\left(  1 - \frac{(K+1)\mu\eta}{4} + \frac{64K^2\delta^2\eta^2}{1-\rho}  + \frac{1088K^3\delta^4\eta^3}{\mu(1-\rho)}    \right)$ }\nonumber \\
& \quad \cdot e_r  + \left(\frac{1+\rho}{2} + 32K \eta \frac{\delta^2}\mu  + \frac{2048K^3\delta^4\eta^3}{\mu(1-\rho)} \right)  S_r \nonumber  \\
& {\leq } \left( 1 - \frac{(K+1)\mu\eta}{8} \right)  \left(  e_r + S_r \right),
\end{align}
where the last inequality is due to $\eta \leq \min\left\{ \frac{\mu(1-\rho)}{1024K\delta^2}, \frac{1-\rho}{32K\mu}  \right\}$ that $\max \left\{\frac{64K^2\delta^2\eta^2}{1-\rho}, \frac{1088K^3\delta^4\eta^3}{\mu(1-\rho)}\right\}  \leq \frac{(K+1)\mu\eta}{16}$, $\max\left\{\frac{2048K^3\delta^4\eta^3}{\mu(1-\rho)}, 32K\frac{\delta^2}{\mu}\eta \right\} \leq \frac{1-\rho}{8}$ and $\frac{(K+1)\mu\eta}{8} \leq \frac{1-\rho}{4}$. The linear convergence rate {}{follows directly, given} the step size condition $\eta \lesssim \min\left\{\frac 1 L, \frac{1-\rho}{K\mu}, \frac{\mu(1-\rho)}{K\delta^2}\right\}$.

\section{Numerical experiments}\label{num_res}
This section validates our theoretical results by experiments. We test the performance of local DGT on the distributed ridge logistic regression (DRLR) problem, which is strongly convex. The results on synthetic data and real-world datasets are displayed in Section \ref{sec:sim-syn-logit}, Section \ref{sec:sim-rea-logit}, and Section \ref{nn_test}. In Section \ref{sec:sim-OLS}, we simulate the over-parameterized linear regression for local DGD and DGT. For all methods with different numbers of local updates, step size $\eta$ is chosen to be the one that yields the fastest linear rate. 

\subsection{Local DGT for DRLR on synthetic data}\label{sec:sim-syn-logit}
We conduct ridge logistic regression for local DGT with each agent $i$'s local loss being  $f_i(\boldsymbol x) = \sum_{j=1}^n {\ln\left(1+\exp \left( -y_{i,j}\boldsymbol a_{i,j}^T \boldsymbol x \right)\right)} + \mu \|\boldsymbol x\|^2$, where $\boldsymbol a_{i,j} \in \mathbb R^d$ and $y_{i,j} \in \{-1,+1\}$ represent the $j$-th sample feature and the corresponding label at agent $i$, respectively. We devise two experimental scenarios to illustrate the influence of network connectivity and heterogeneity on the local updates. The optimality measure is chosen as $\|\nabla f(\overline{\boldsymbol x}^r)\|$ and the regularization parameter $\mu$ is set as $10^{-4}$ for synthetic data. We set the number of agents $m = 20$ and split the dataset uniformly at random to all of the agents.

\textbf{Influence of network connectivity}. 
\emph{Data generation.} Each agent has $n = 1000$ local samples and the model
dimension is set as $d = 5$. Local model parameters are generated as $\boldsymbol x_i = \boldsymbol x_b + \boldsymbol v_i, \forall i \in [m]$, where $\boldsymbol x_b \sim \mathcal N(\boldsymbol 0, \boldsymbol I_d)$ and $\boldsymbol v_i \sim \mathcal N(\bm{0},\boldsymbol I_d)$ are normal random vectors. The local  feature vectors for each agent $i$  are i.i.d. generated as $\boldsymbol a_{i,j} \sim \mathcal N(0.2 i \times \mathbf{1}_d,0.55\times \boldsymbol I_d), \forall i \in [m]$. Note that the expectation of the normal distributions depends on $i$, thus yielding heterogeneity in the features across the agents. Labels are generated as $s_{i,j} \sim 1+ \mathcal U(0,1)$ with $\mathcal U(0,1)$ being the standard uniform distribution, then if $s_{i,j}\leq 1+\exp\left( - \boldsymbol a_{i,j}^T \boldsymbol x_i\right)$, we set $y_{i,j} = 1$; otherwise, $y_{i,j} = -1$.

\emph{Network setting.} We simulate local DGT over three networks having $m = 20$ agents but with different connectivities. Each network is generated from random Erd\H{o}s R\'enyi (ER) graphs, where each edge between two distinct agents is included independently with probability $p$. We set edge activation probabilities as  $p = 0.04, 0.7, 1$ with corresponding network connectivity $\rho = 0.9664, 0.5946, 0$ (fully connected). The results are shown in the first row of Fig.\ref{fig1}, which indicates that when network connectivity is good, increasing the number of local updates can significantly reduce the communication cost. On the contrary, poorly connected networks result in useless local updates.

\begin{figure}[htbp]
\centering
\subfigure[$\rho=0.9664$]{
\begin{minipage}[t]{0.32\linewidth}
\centering
\includegraphics[scale=0.2]{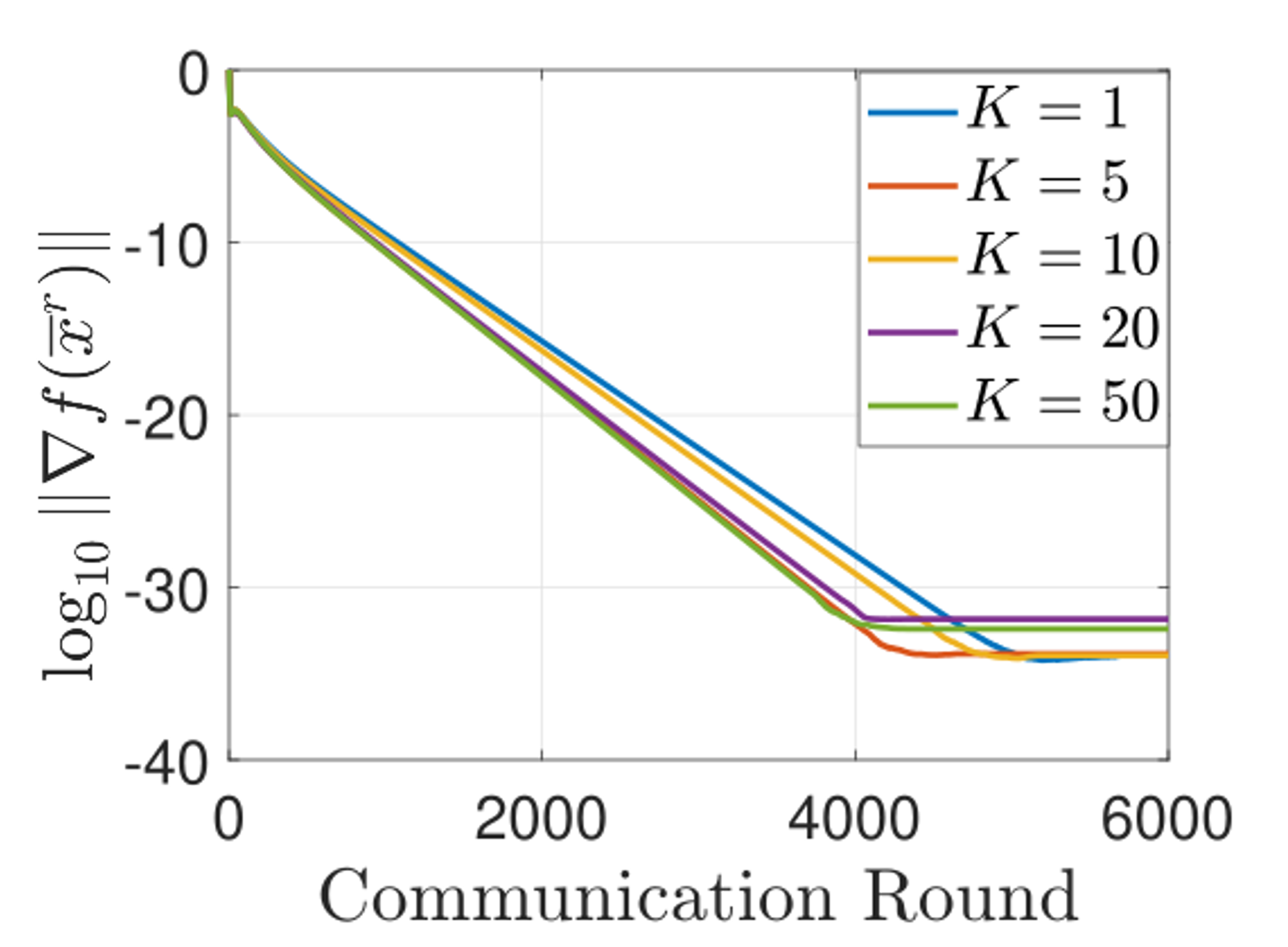}
\end{minipage}%
}%
\subfigure[$\rho = 0.5946$]{
\begin{minipage}[t]{0.32\linewidth}
\centering
\includegraphics[scale=0.2]{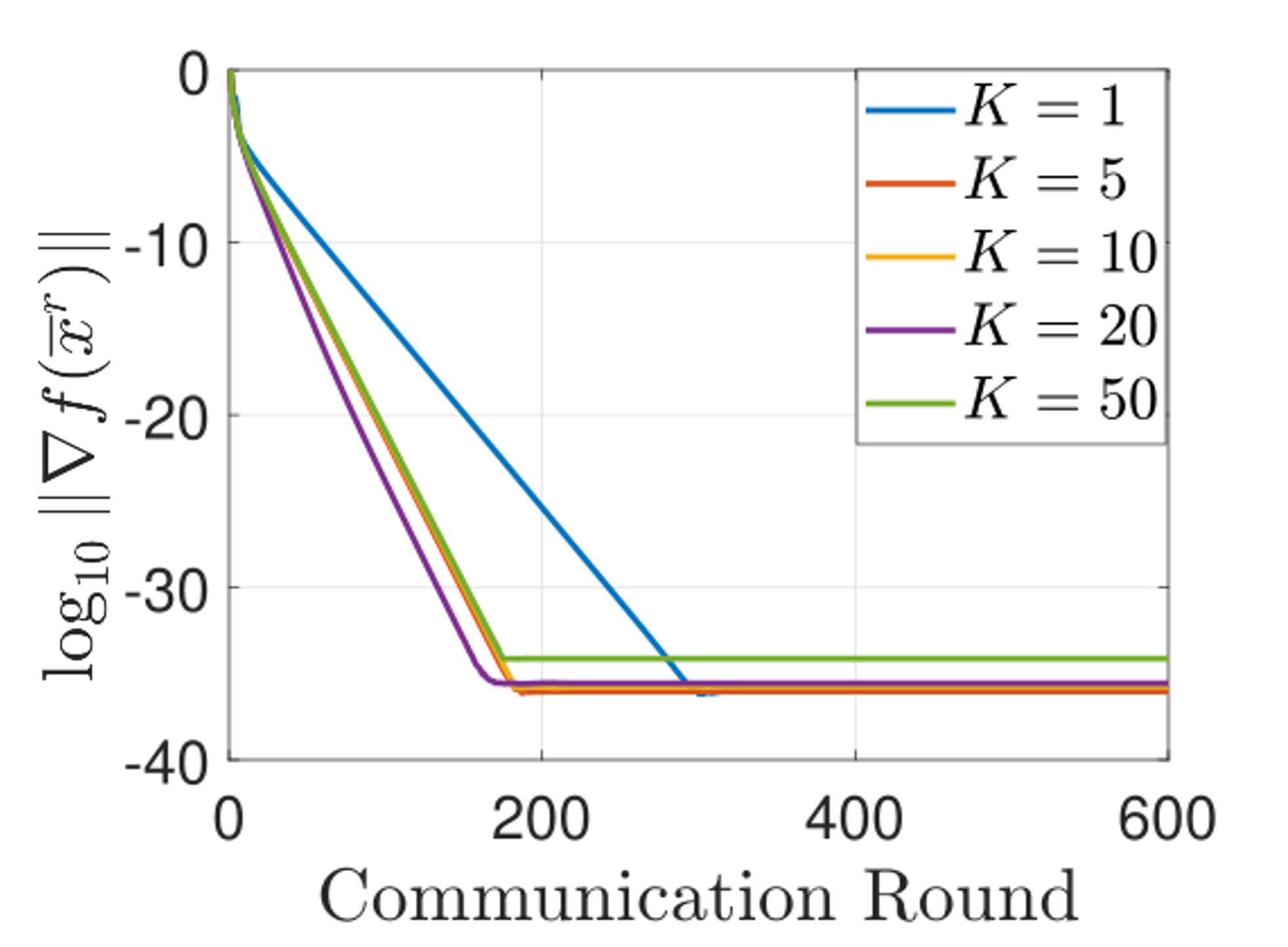}
\end{minipage}%
}%
\subfigure[$\rho=0$]{
\begin{minipage}[t]{0.32\linewidth}
\centering
\includegraphics[scale=0.2]{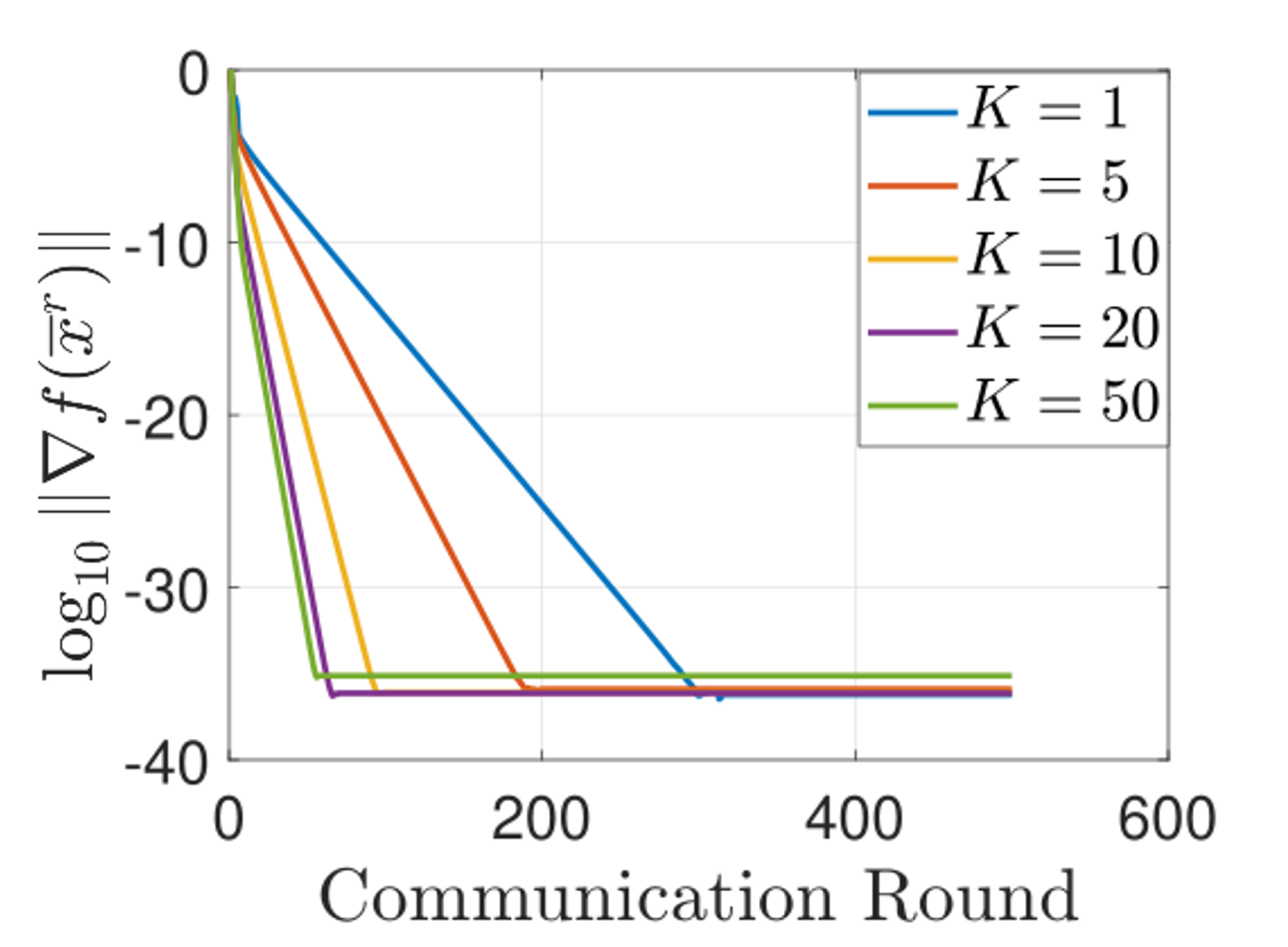}
\end{minipage}%
}%

\subfigure[Low]{
\begin{minipage}[t]{0.3\linewidth}
\centering
\includegraphics[scale=0.2]{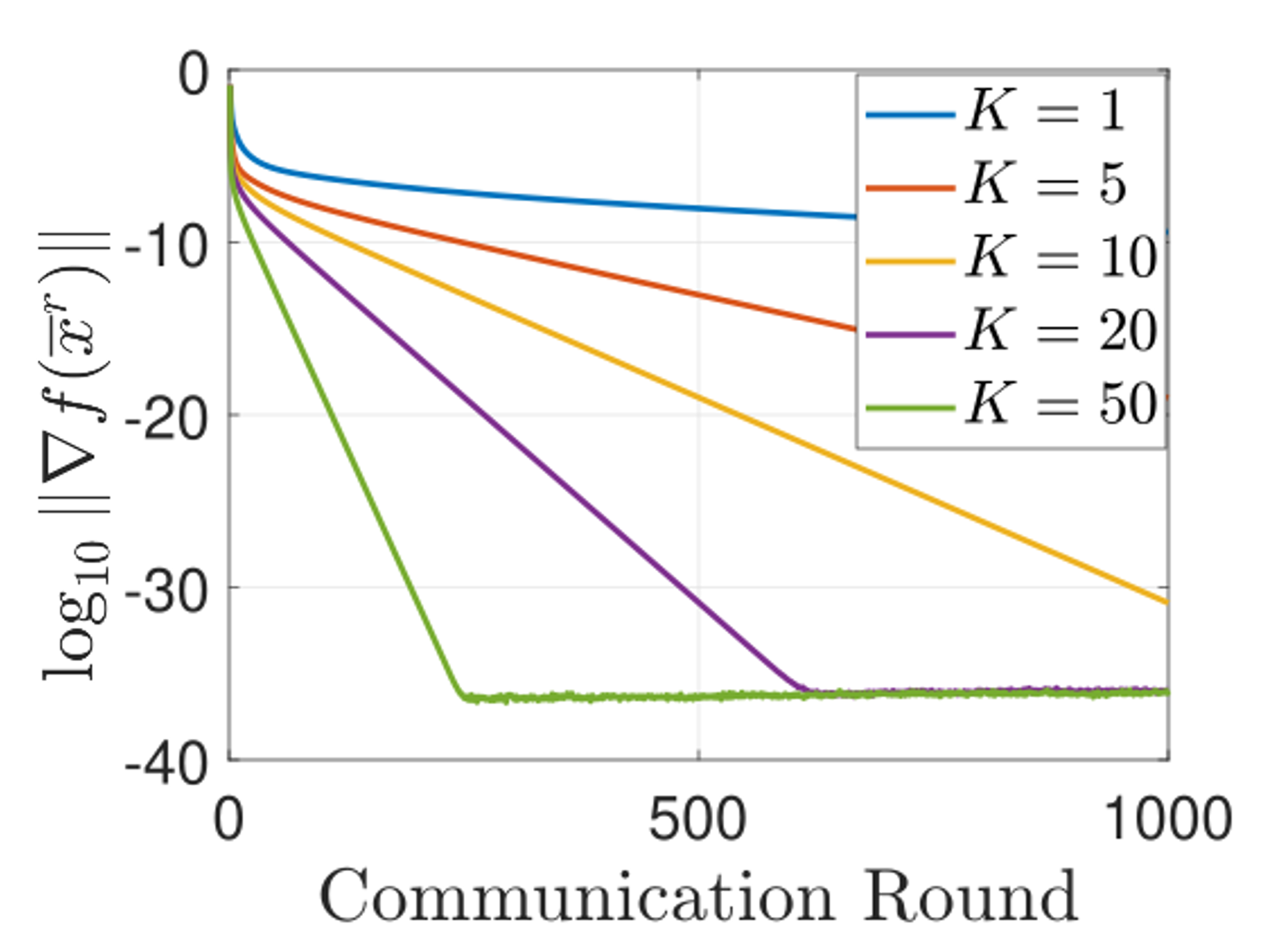}
\end{minipage}
}%
\subfigure[Moderate]{
\begin{minipage}[t]{0.3\linewidth}
\centering
\includegraphics[scale=0.2]{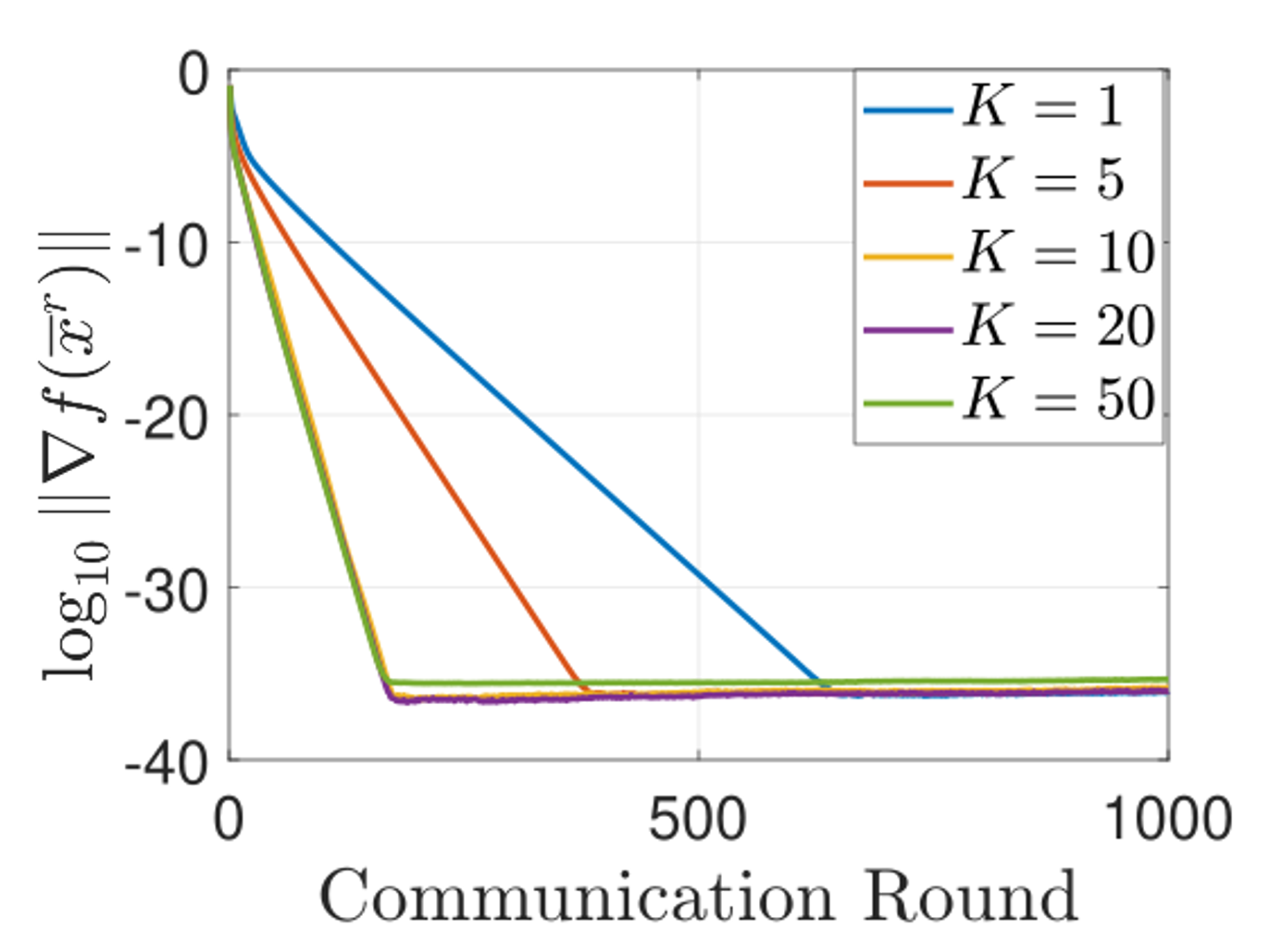}
\end{minipage}
}%
\subfigure[High]{
\begin{minipage}[t]{0.3\linewidth}
\centering
\includegraphics[scale=0.2]{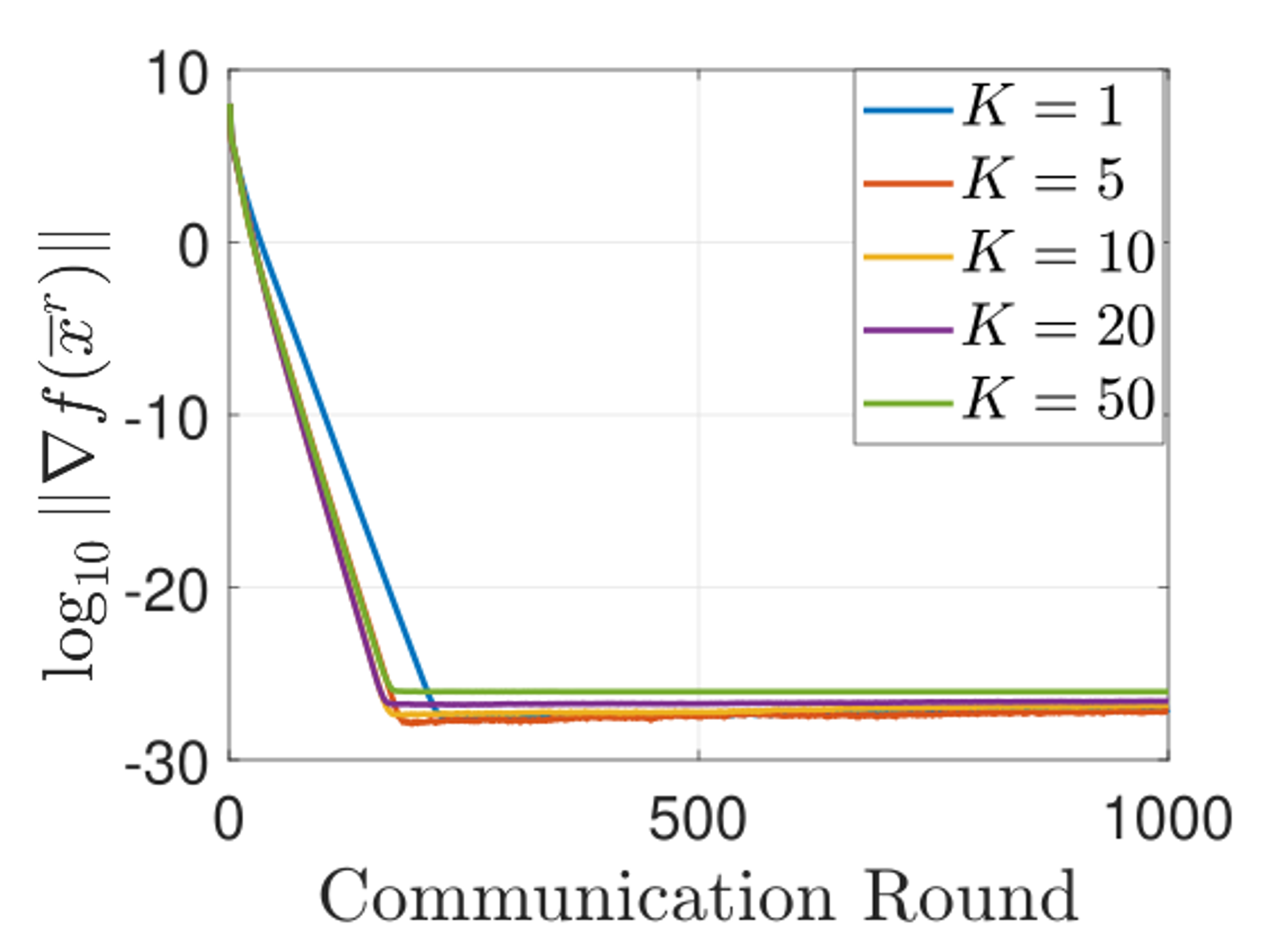}
\end{minipage}
}%
\centering
\caption{Local DGT applied to the ridge logistic regression. First row: influence of network connectivity. Second row: influence of  heterogeneity degree.}
\label{fig1}
\end{figure}

\textbf{Influence of heterogeneity.} {In this experiment, we fix a communication network with $\rho = 0.5946$ and run local DGT on three distinct datasets, with the goal of testing the influence of heterogeneity. These three groups of data are generated with varying degrees of heterogeneity.}

\emph{Data generalization}. The local sample size is $n=100$ and the model dimension is $d=80$. All three groups adopt the same generation approach that $\boldsymbol x_i = \boldsymbol x_b + \boldsymbol v_i, \forall i \in [m]$, where $\boldsymbol x_b \sim \mathcal N(\boldsymbol 0, \boldsymbol I_d)$ and $\boldsymbol v_i \sim \mathcal N(0,\boldsymbol I_d)$ are normal random vectors. For the $p$-th group ($p=1, 2, 3$), the samples are generated as follows to control the heterogeneity degree. For the first agent, its local features are i.i.d. generated as $\boldsymbol a_{1,j} \sim \mathcal{N} (\mathbf{0}, \bm{I}_d)$. Then for the $i$-th agent, we generate its feature as $\bm{a}_{i,j} = \bm{a}_{1,j} + \delta_p \cdot \mathcal{N} (\bm{0}, \bm{I}_d)$, $\forall i = 2, \cdots, m$.  The first group has low heterogeneity ($\delta_1 = 0.99$), the second group has moderate heterogeneity ($\delta_2 = 4$), and the last group exhibits high heterogeneity ($\delta_3 = 10^4$). We generate the label $y_{i,j}$ in the same way as the previous setting.

\emph{Network setting.} The three groups of data are tested on the same network, which is an Erd\H{o}s R\'{e}nyi (ER) graph with $\rho = 0.5946$. The results are given in the second row of Fig. \ref{fig1}.
(d), (e), and (f) show datasets with low, moderate, and high degrees of heterogeneity, respectively. These results show that less heterogeneity can improve the effectiveness of local updates by increasing $K$.

\begin{figure}[htbp]
\centering
\subfigure[$\rho=0.8500$]{
\begin{minipage}[t]{0.3\linewidth}
\centering
\includegraphics[scale=0.19]{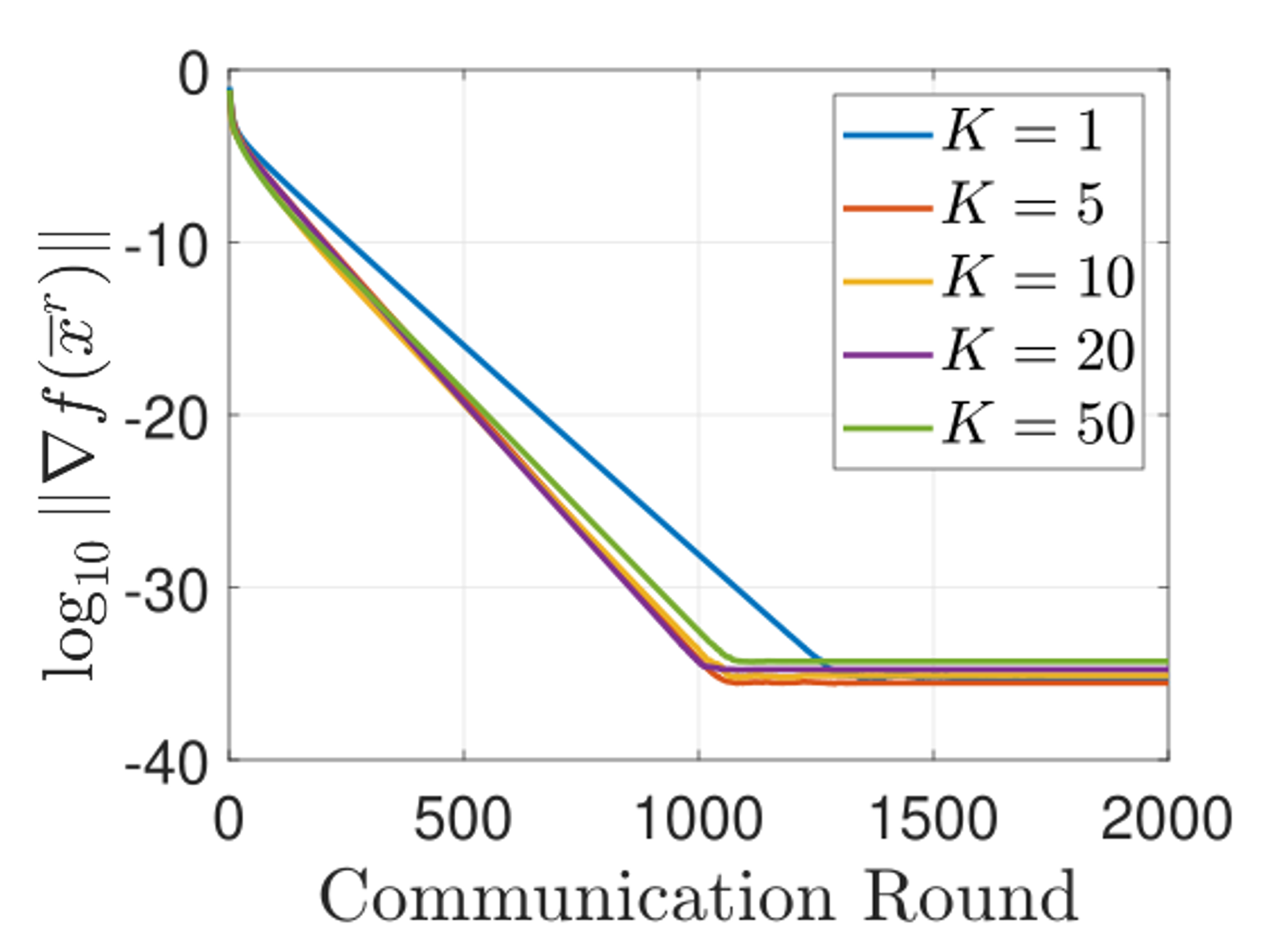}
\end{minipage}%
}%
\subfigure[$\rho=0.3164$]{
\begin{minipage}[t]{0.3\linewidth}
\centering
\includegraphics[scale=0.18]{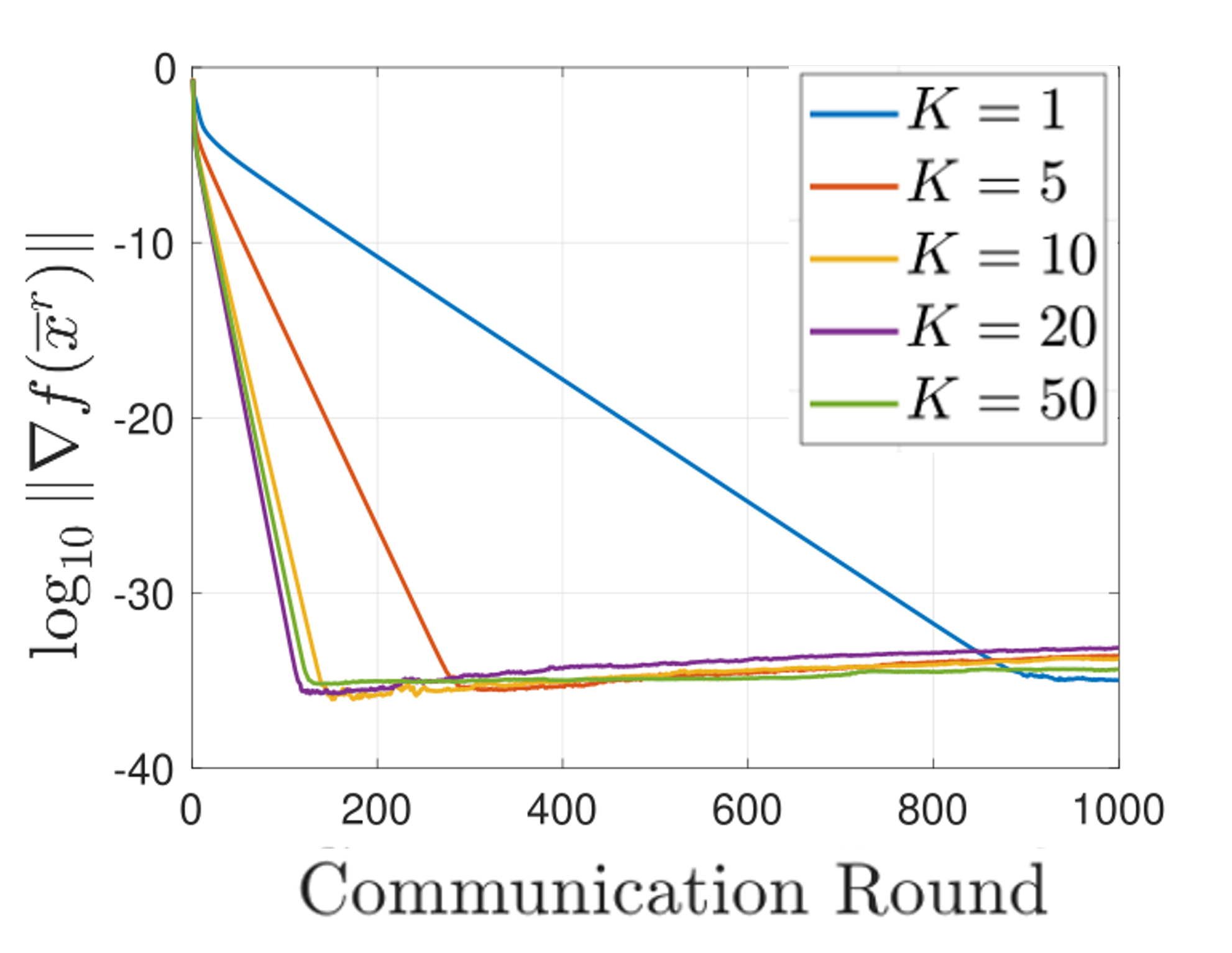}
\end{minipage}%
}%
\subfigure[$\rho=0$]{
\begin{minipage}[t]{0.3\linewidth}
\centering
\includegraphics[scale=0.19]{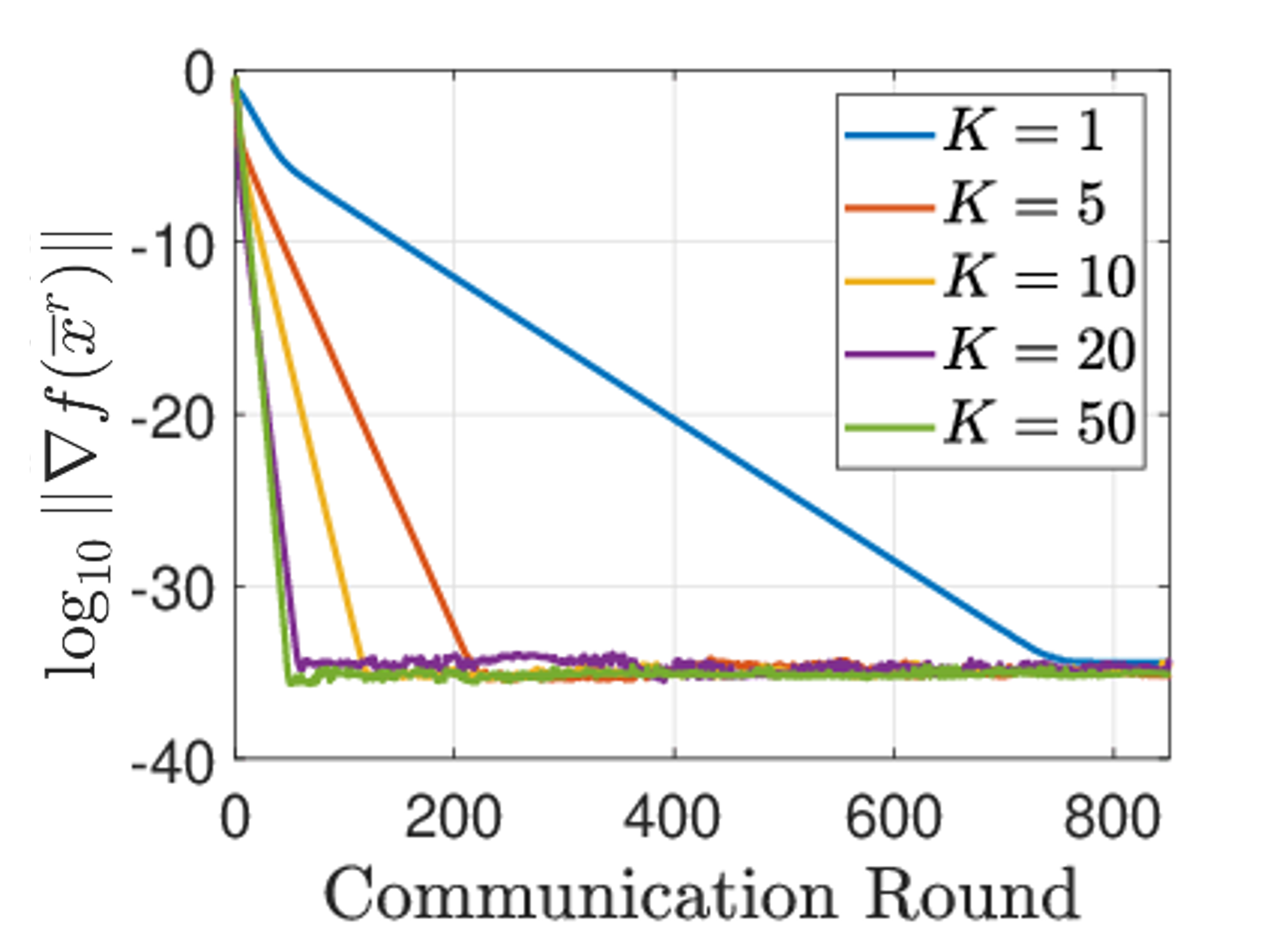}
\end{minipage}%
}

\subfigure[$\rho=0.8500$]{
\begin{minipage}[t]{0.3\linewidth}
\centering
\includegraphics[scale=0.19]{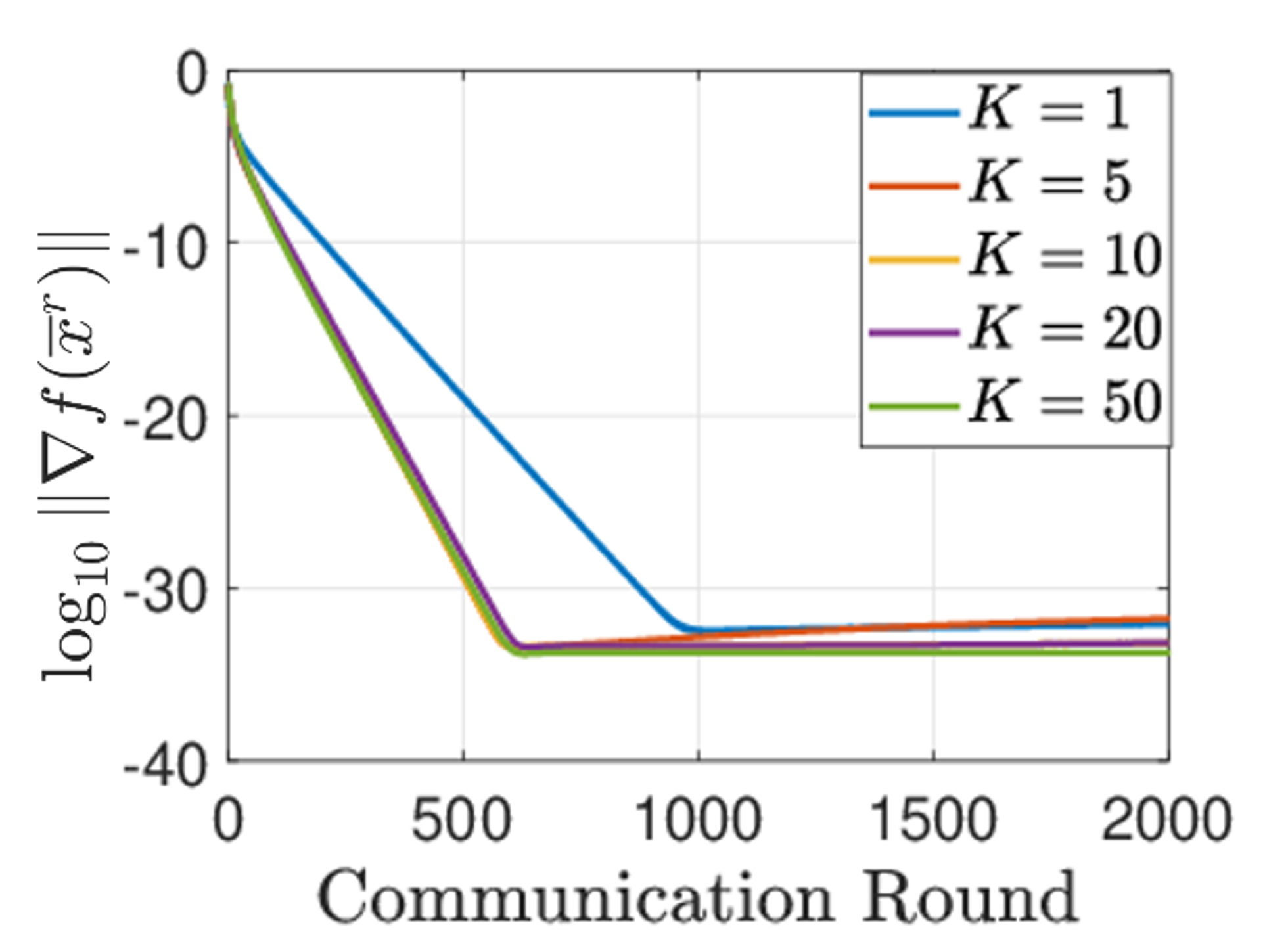}
\end{minipage}%
}%
\subfigure[$\rho=0.3164$]{
\begin{minipage}[t]{0.3\linewidth}
\centering
\includegraphics[scale=0.18]{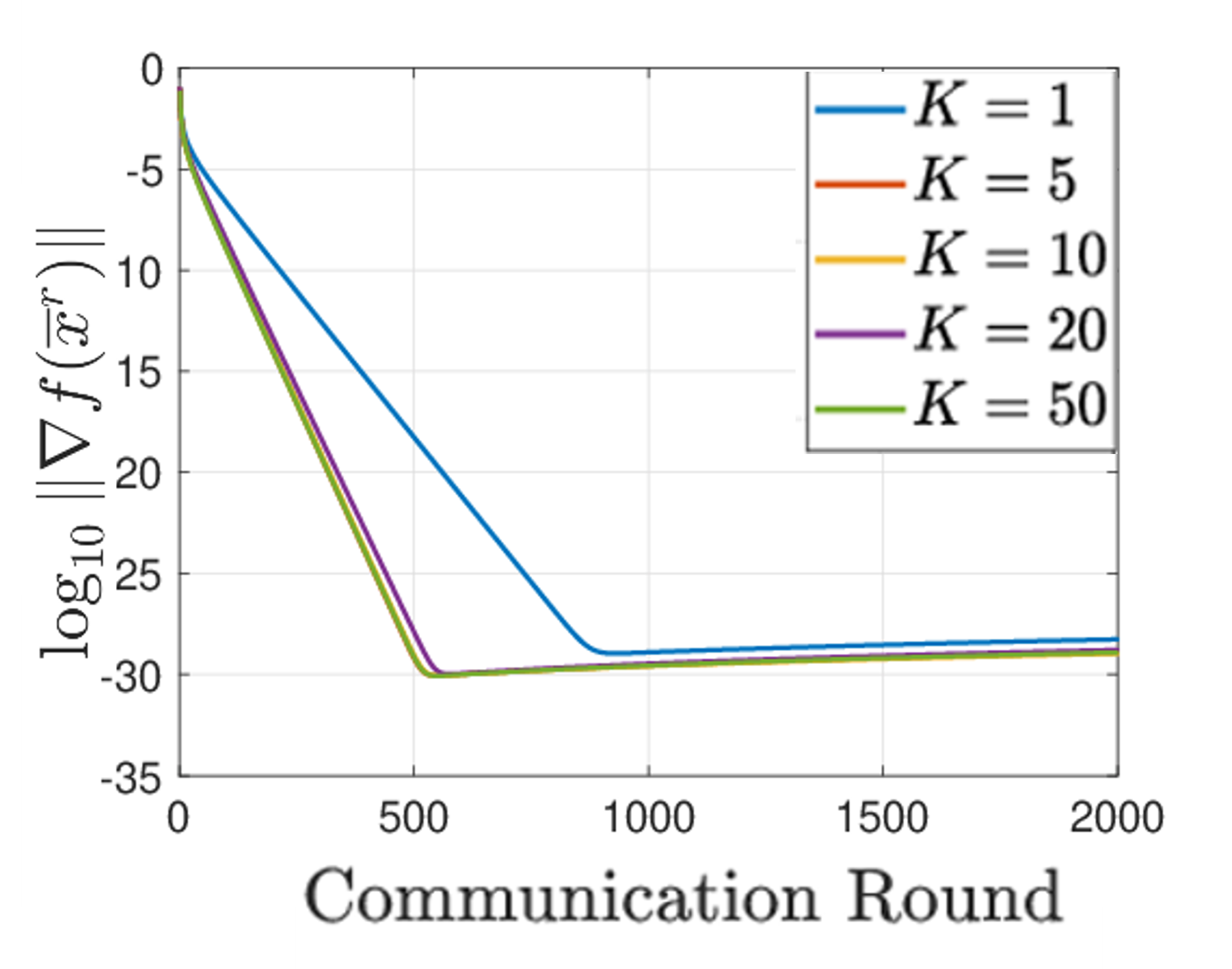}
\end{minipage}%
}%
\subfigure[$\rho=0$]{
\begin{minipage}[t]{0.3\linewidth}
\centering
\includegraphics[scale=0.19]{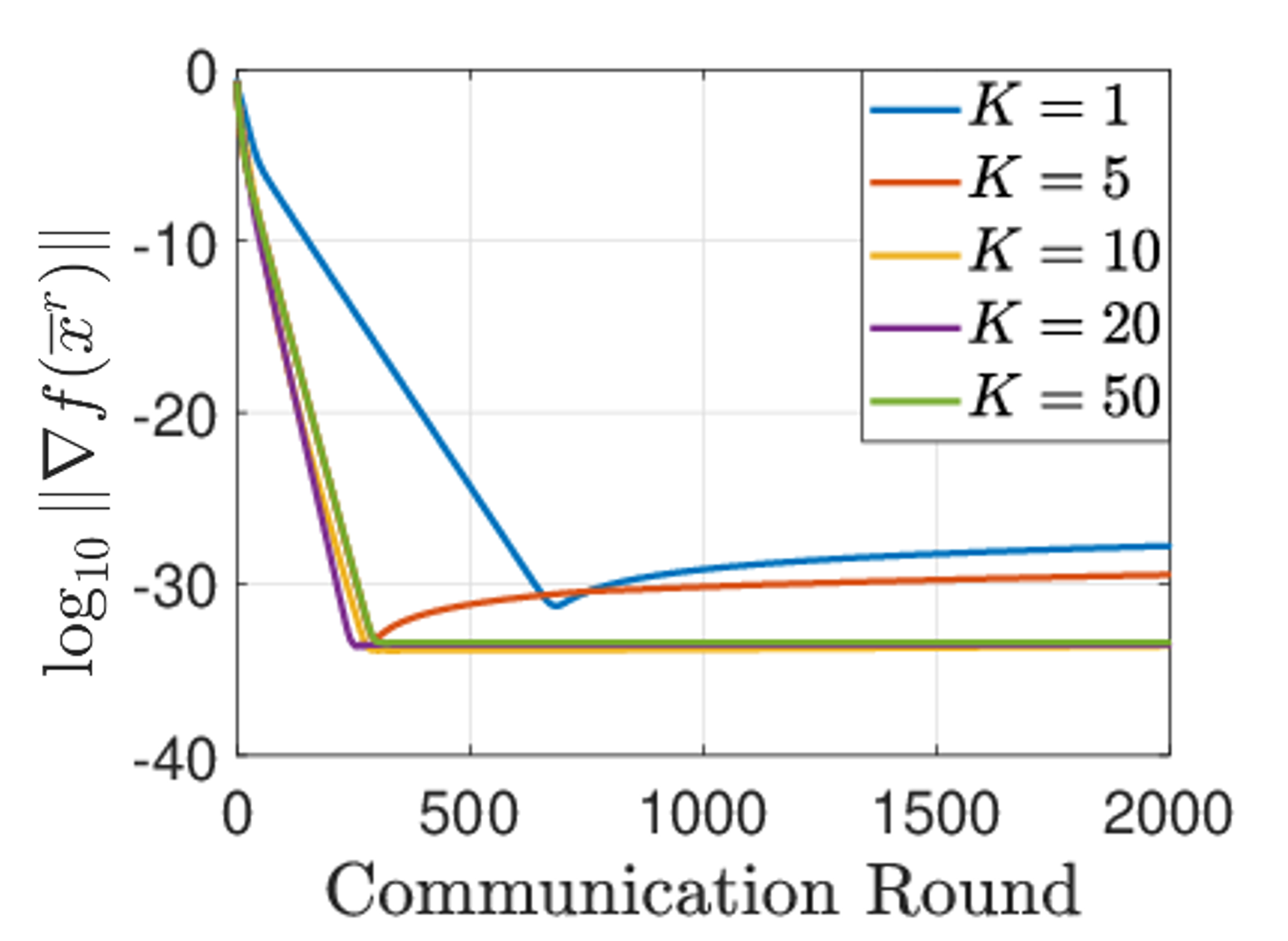}
\end{minipage}%
}%
\caption{ {}{Convergence for local DGT (top) and LED (bottom) for solving DRLR on \enquote{a9a} dataset under moderate heterogeneity degree. The values of $\rho$ are set $\rho=0.85, 0.3164, 0$. } }
\label{fig2}
\end{figure}

\begin{figure}[htbp]
\centering
\subfigure[$\rho=0.8500$]{
\begin{minipage}[t]{0.3\linewidth}
\centering
\includegraphics[scale=0.2]{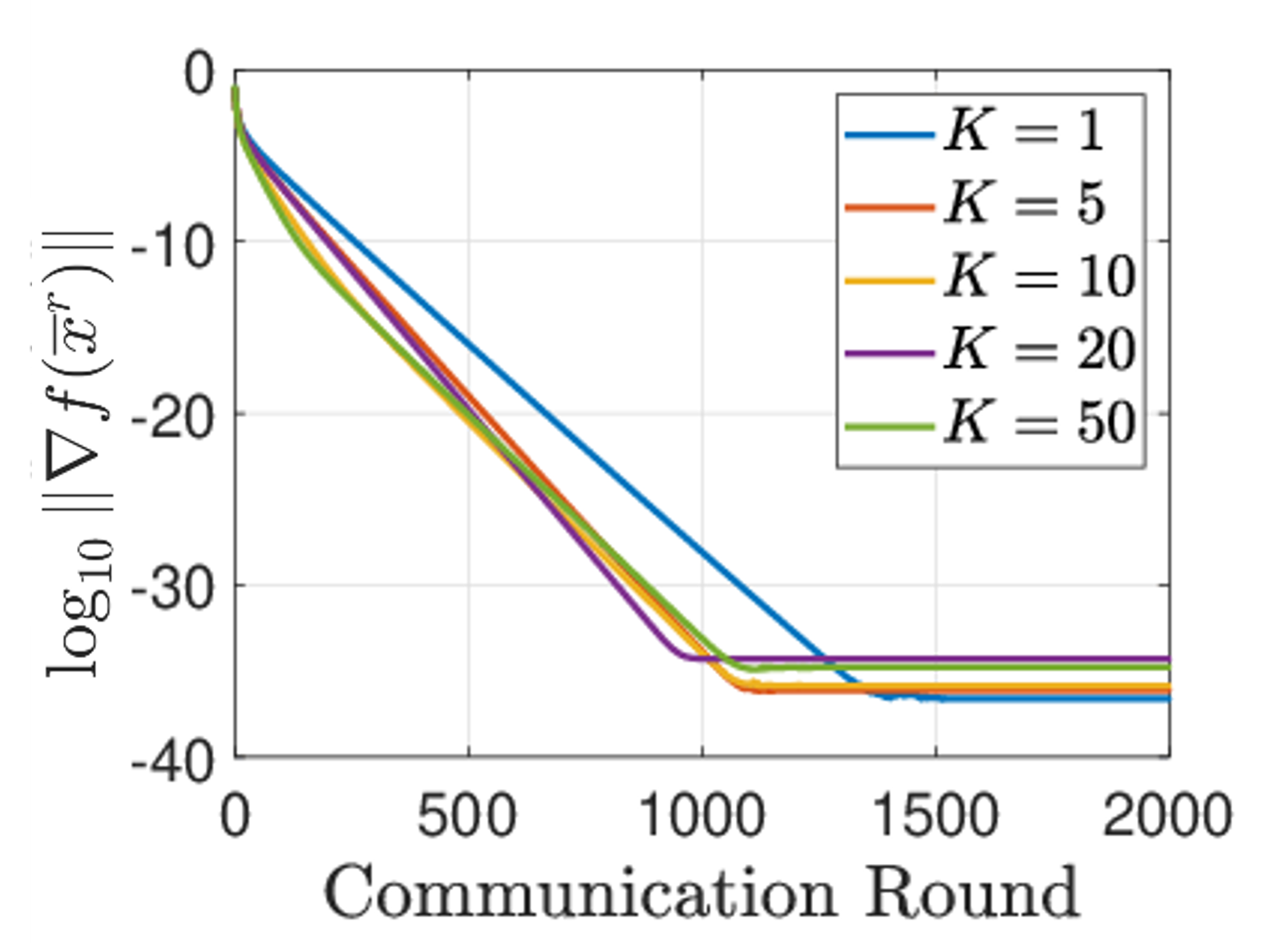}
\end{minipage}%
}%
\subfigure[$\rho=0.3164$]{
\begin{minipage}[t]{0.3\linewidth}
\centering
\includegraphics[scale=0.18]{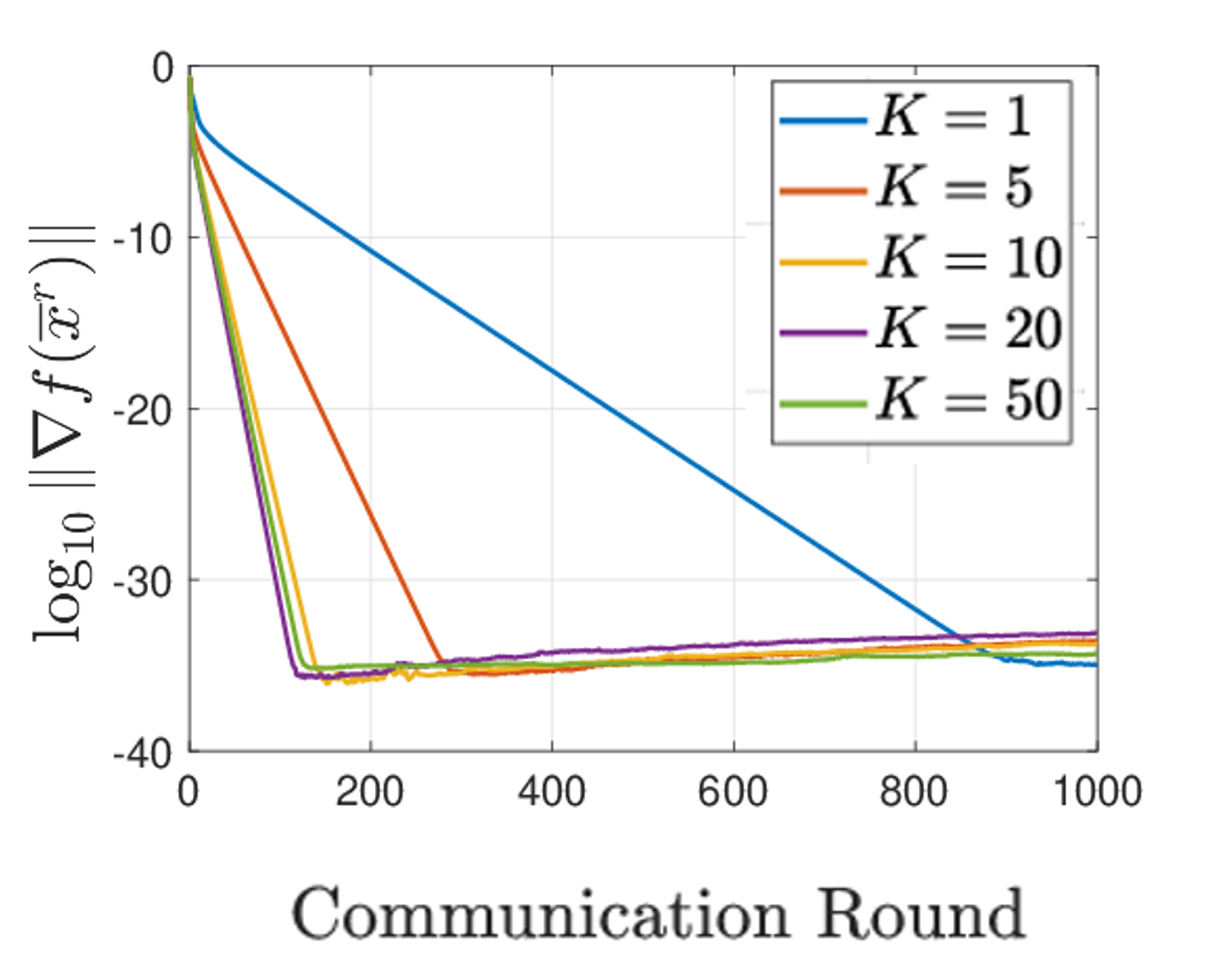}
\end{minipage}%
}%
\subfigure[$\rho=0$]{
\begin{minipage}[t]{0.3\linewidth}
\centering
\includegraphics[scale=0.2]{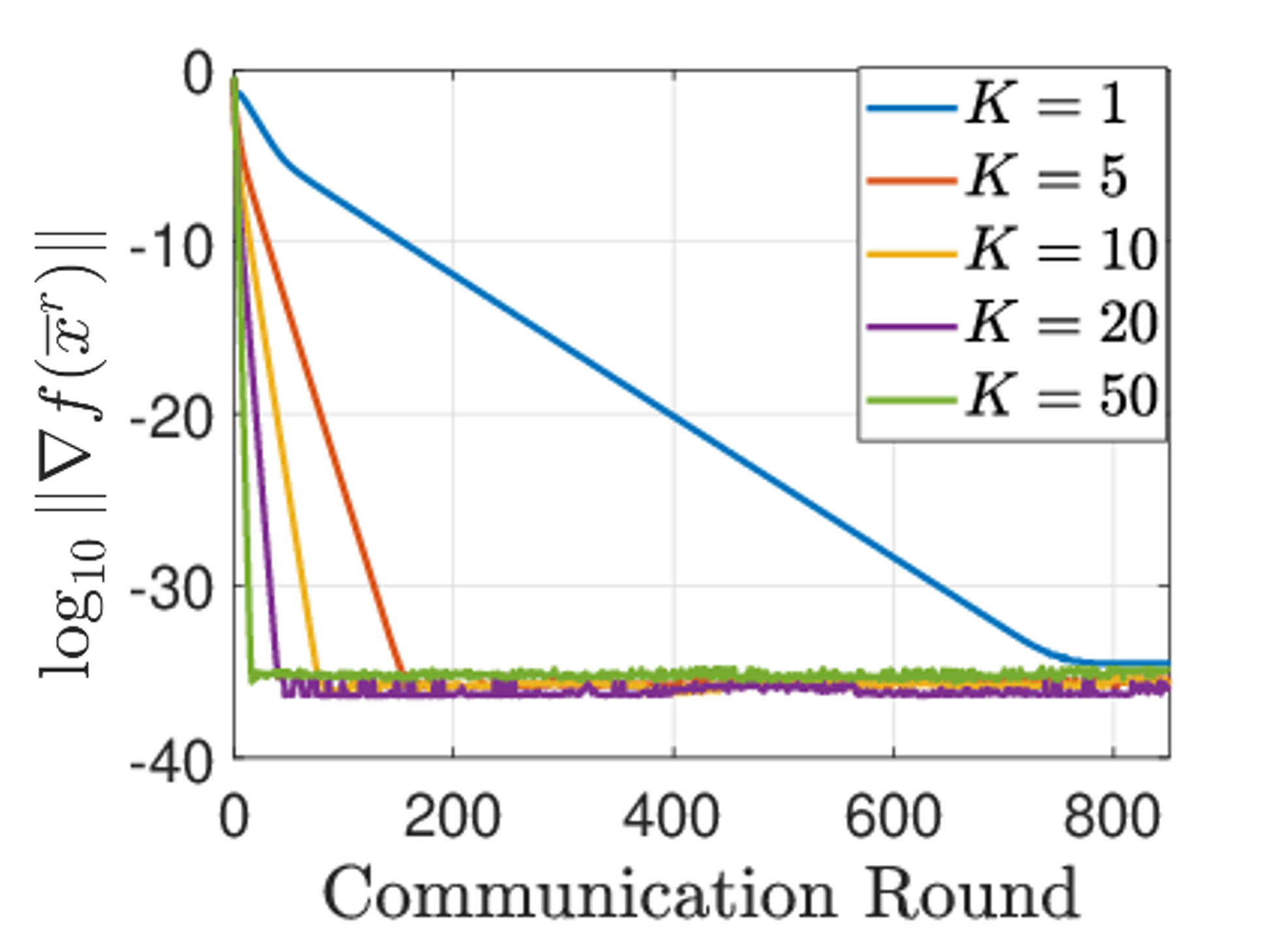}
\end{minipage}%
}

\subfigure[$\rho=0.8500$]{
\begin{minipage}[t]{0.3\linewidth}
\centering
\includegraphics[scale=0.2]{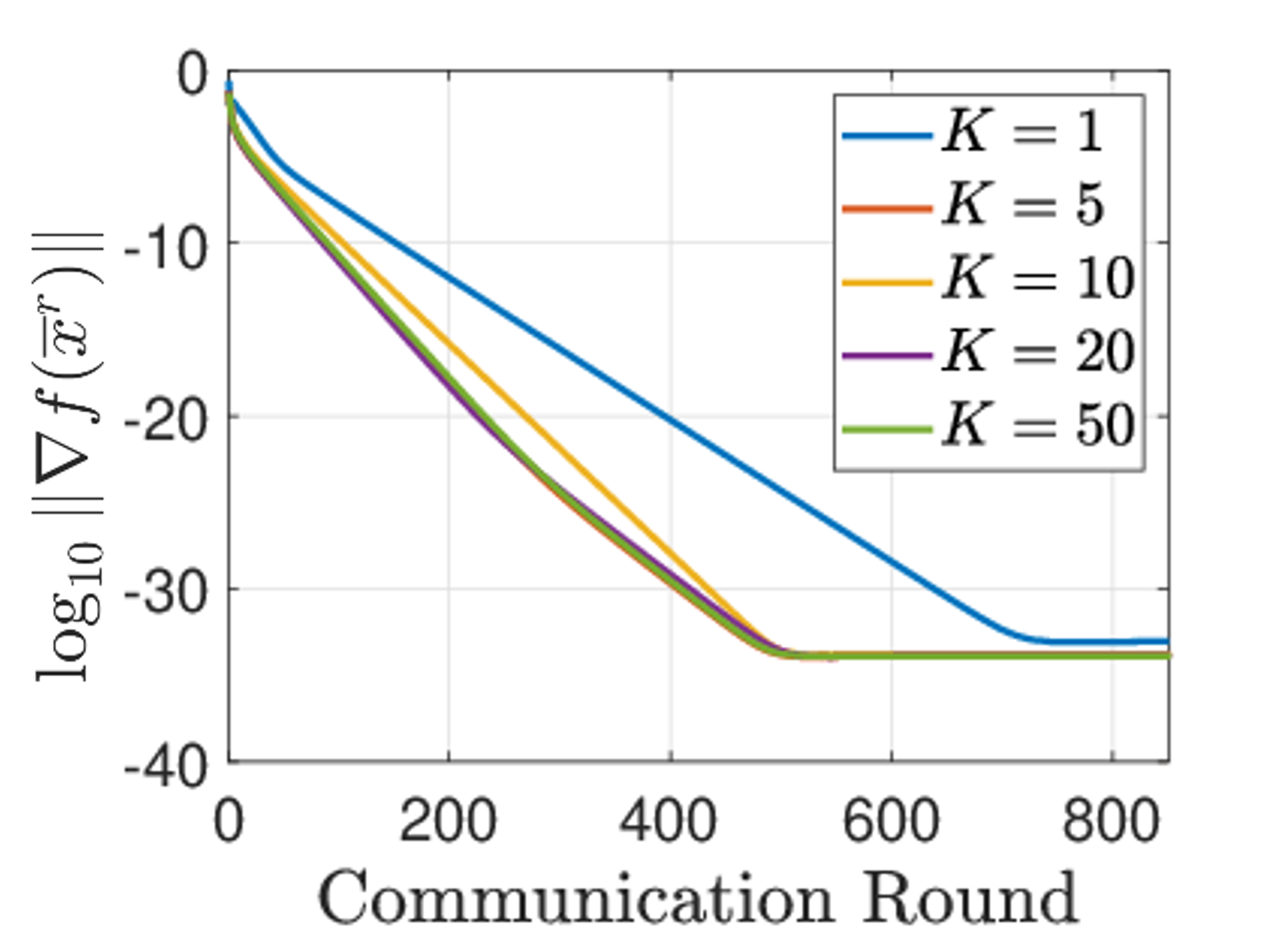}
\end{minipage}%
}%
\subfigure[$\rho=0.3164$]{
\begin{minipage}[t]{0.3\linewidth}
\centering
\includegraphics[scale=0.19]{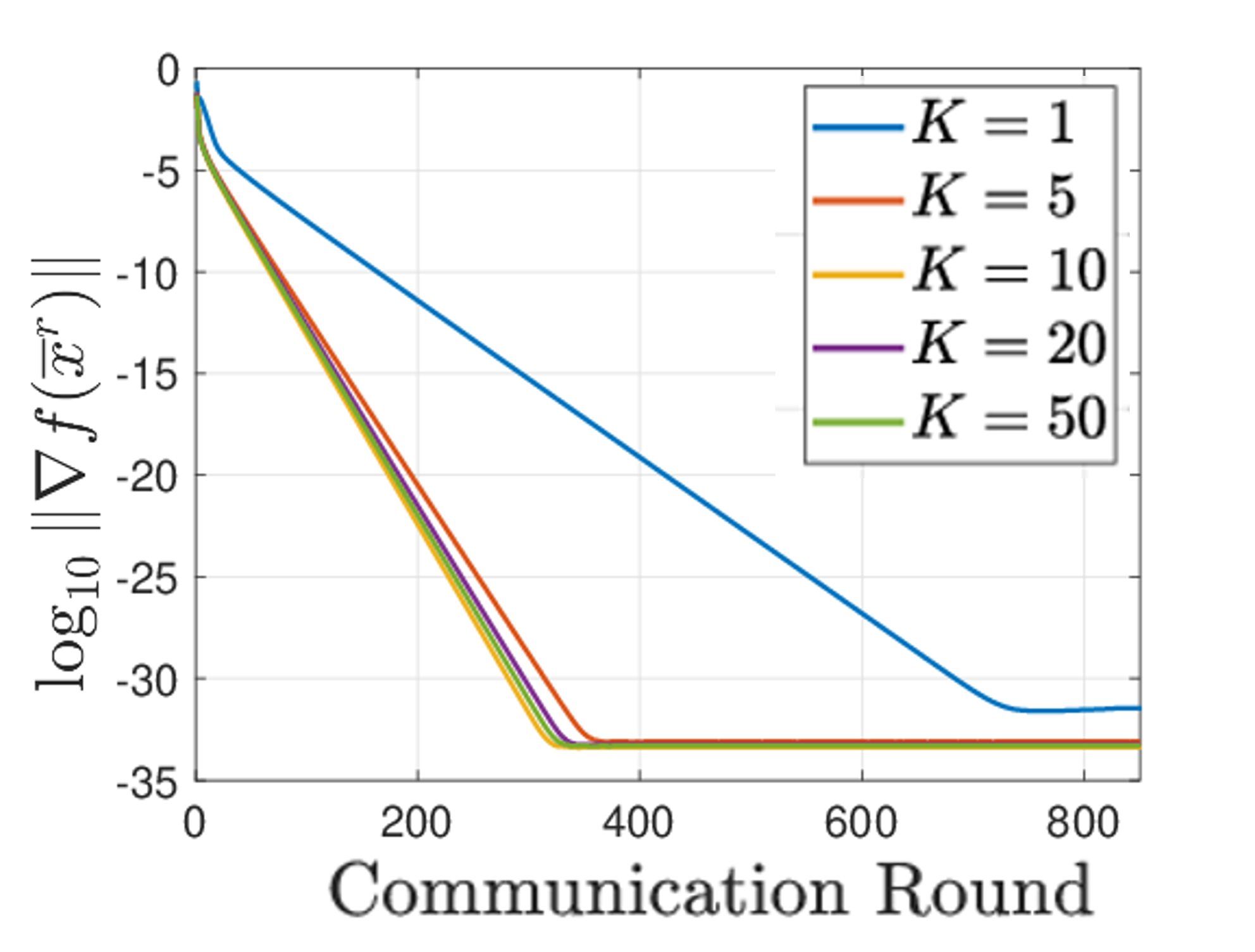}
\end{minipage}%
}%
\subfigure[$\rho=0$]{
\begin{minipage}[t]{0.3\linewidth}
\centering
\includegraphics[scale=0.2]{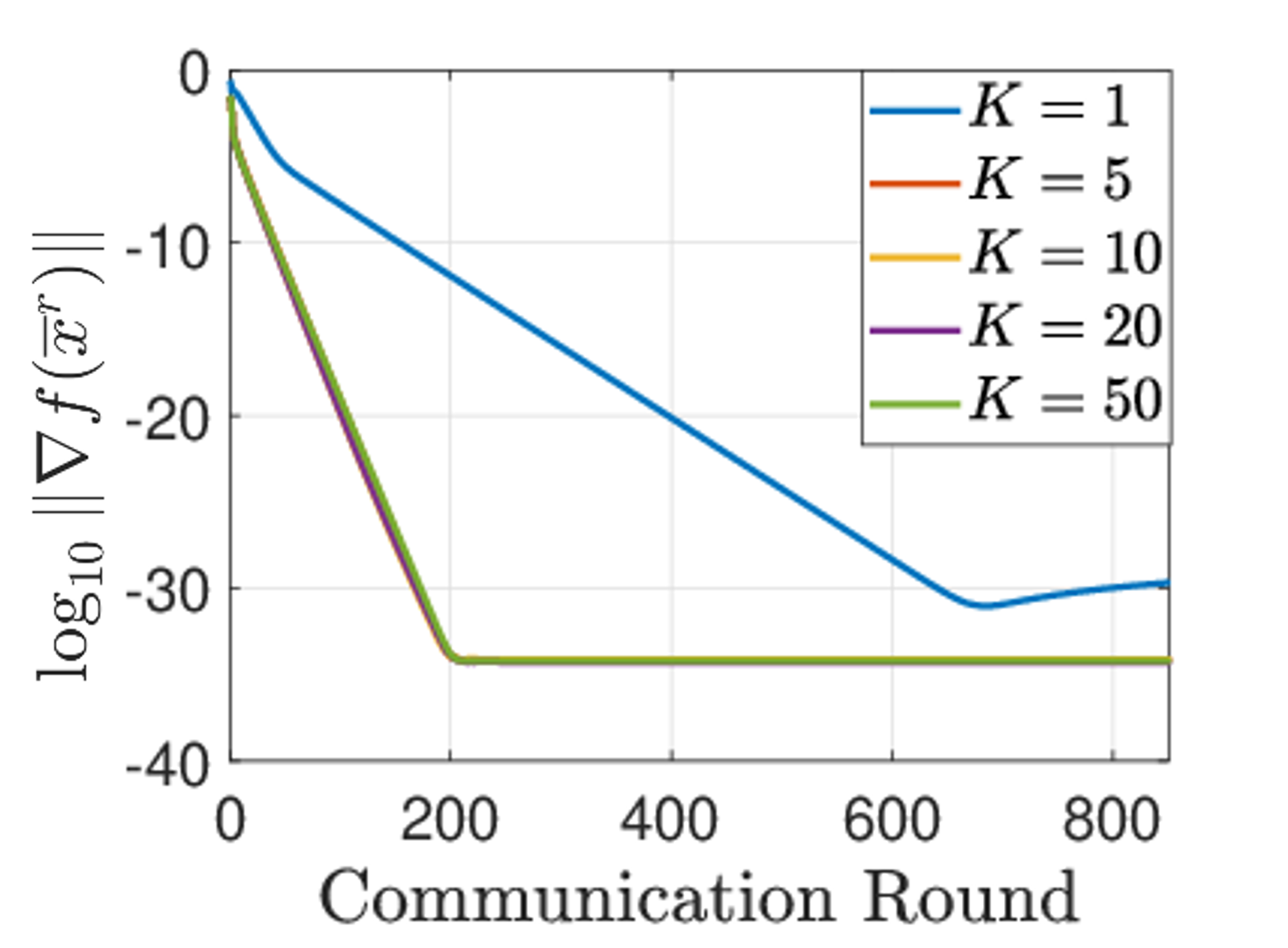}
\end{minipage}%
}%
\caption{ {}{Convergence for local DGT (top) and LED (bottom) for solving DRLR on \enquote{a9a} dataset under low heterogeneity degree. The values of $\rho$ are set $\rho=0.85, 0.3164, 0$. } }
\label{fig3}
\end{figure}

{}{
\begin{remark}
Careful readers may note that the curves in Fig. \ref{fig1}. (f) under the high heterogeneity setting converge faster than that of Fig. \ref{fig1}. (d) under the low heterogeneity setting. This is due to the discrepancy in the datasets generated from two different scenarios.
\end{remark}
}

The complete experimental results illustrate local updates can effectively reduce communication complexity when the network connectivity is relatively good or data heterogeneity is mild. However, in cases where local updates are useful, increasing $K$ beyond certain thresholds offers marginal benefit. These observations corroborate the result in Theorem \ref{th1}.

\subsection{Local DGT for DRLR on real-world dataset} \label{sec:sim-rea-logit} In this part, we use  DRLR loss for binary classification on the real-world \enquote{a9a} dataset, which could be downloaded from LIBSVM repository \cite{chang2011libsvm}. We set the number of agents $m = 50$ and assign the same number of training samples to agents.

\emph{Data distribution}. The feature dimension is $d=122$ and the number of training samples is 15000, with 11500 and 3500 for class 1 and class 2, respectively. The number of test samples is 1281, with 935 and 346 for class 1 and class 2, respectively. We properly allocate samples among agents to construct two different degrees of heterogeneity. For moderate heterogeneity, we allocate the training samples of the first 38 agents to be class 1, while the training samples of the remaining agents are class 2. For low heterogeneity, we allocate data uniformly that there are 230 samples with class 1 and 70 samples with class 2 in each agent.

\emph{Network setting}. We conduct the local DGT on networks with three different connectivities $\rho = 0.85, 0.3164, 0$ (fully connected). We set the regularization parameter $\mu=1$ in all cases and compare the performance of local DGT with the method LED \cite{alghunaim2024local}, which is the most recent decentralized method with local updates.

It can be observed that when network connectivity is sufficient, local DGT is more communication efficient than LED. This conclusion is based on the following phenomenons. (1) Under the moderate heterogeneity case, Fig. \ref{fig2}. (b) and Fig. \ref{fig2}. (e) shows the comparison of local DGT and LED under network $\rho=0.3164$. When $K=1$ and $K=5$, the performance of both methods is almost identical. However, as $K$ increases to 10, the communication overhead of local DGT continues to decrease, whereas the number of communication rounds for LED does not show a similar reduction.
(2) In the case of low heterogeneity, Fig. \ref{fig3}. (b) and Fig. \ref{fig3}. (e) present the comparison results of local DGT and LED under network setting $\rho=0.3164$. When $K=1$, the efficiency of LED is slightly higher than that of local DGT. However, with the increase of  $K$, the number of communication rounds required by local DGT becomes less than that of LED.
\begin{figure}[htbp]
\centering
\subfigure[Low heterogeneity ]{
\begin{minipage}[t]{0.45\linewidth}
\centering
\includegraphics[scale=0.38]{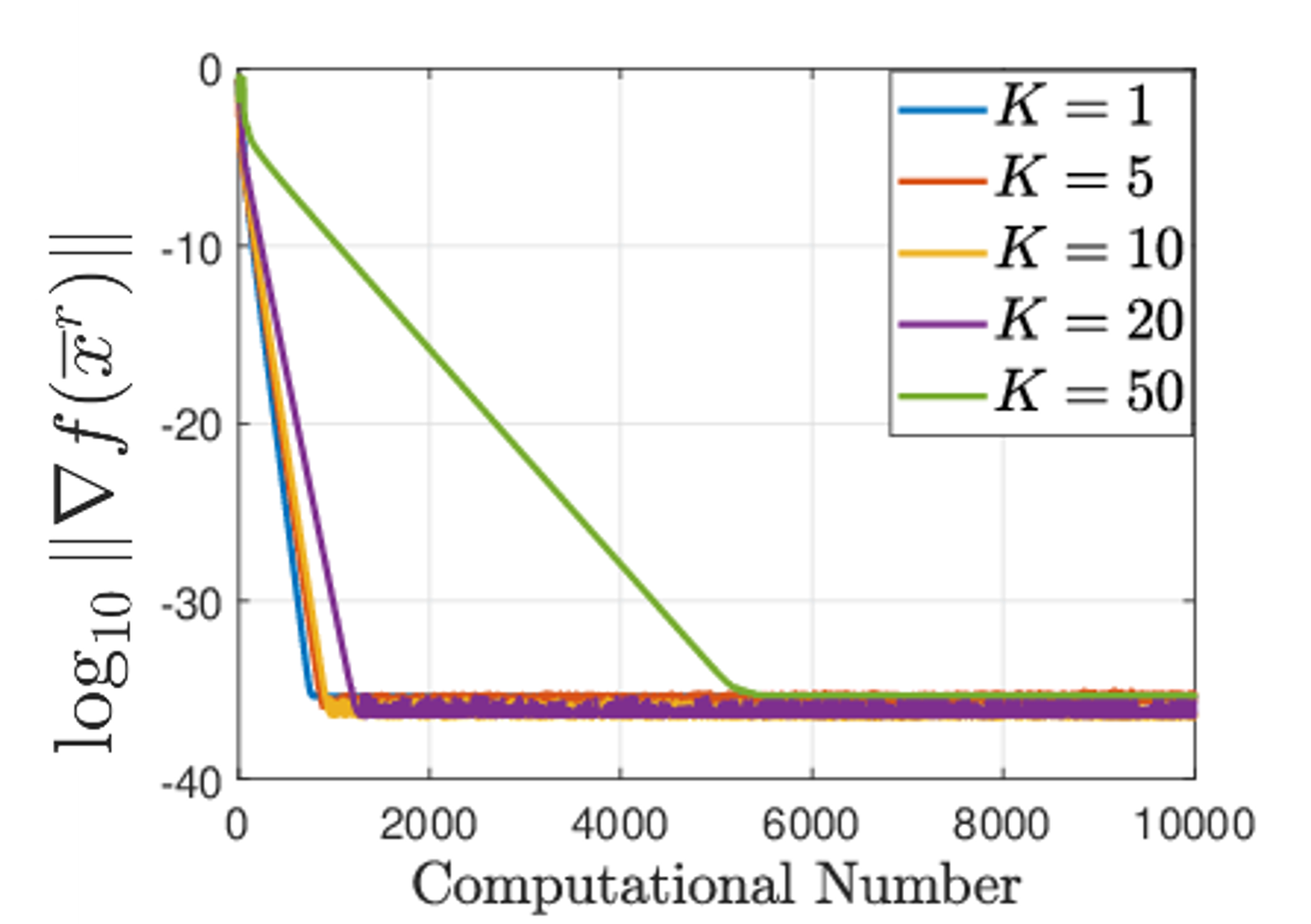}
\end{minipage}%
}%
\subfigure[Moderate heterogeneity]{
\begin{minipage}[t]{0.45\linewidth}
\centering
\includegraphics[scale=0.38]{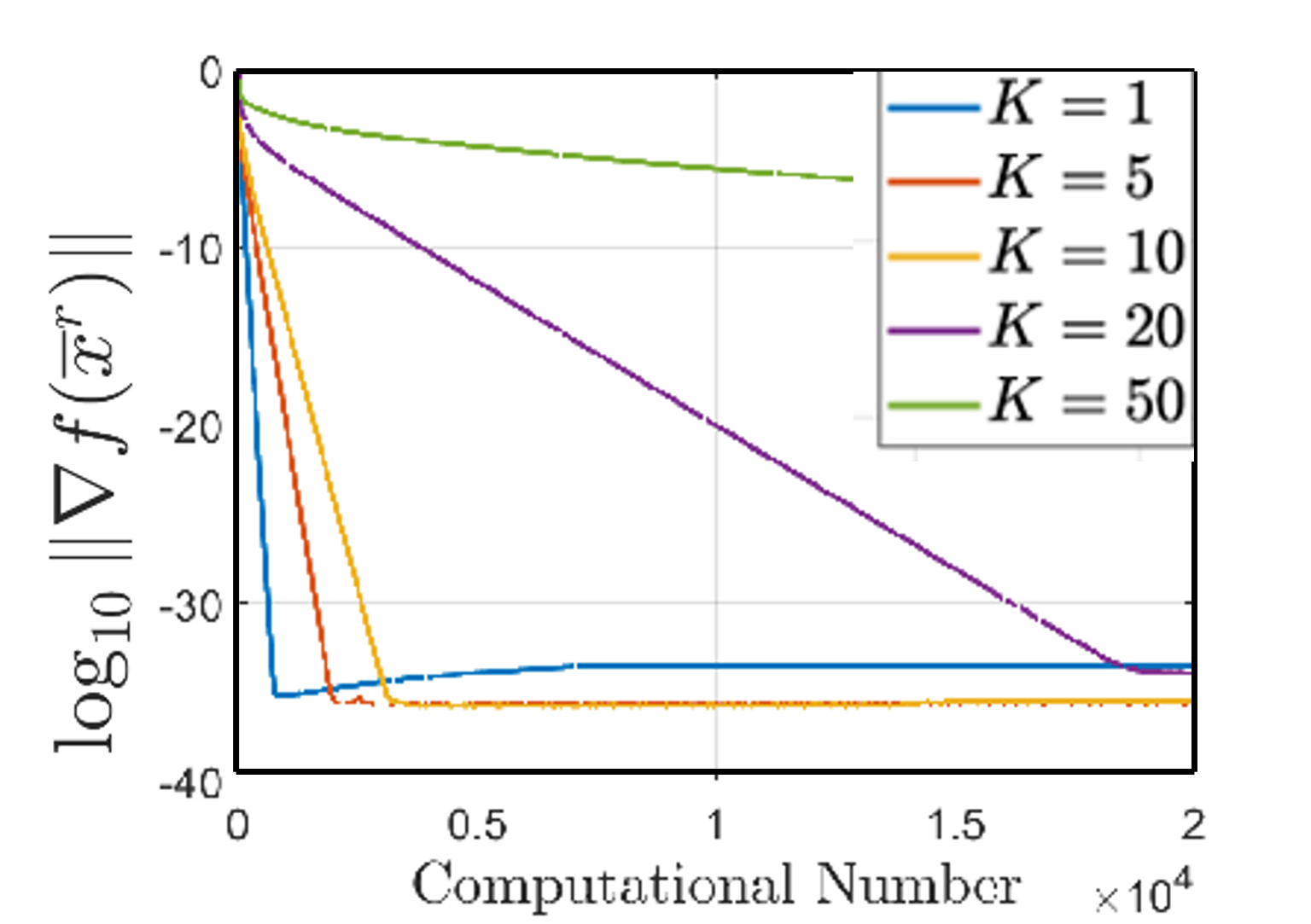}
\end{minipage}%
}%

\subfigure[Low heterogeneity]{
\begin{minipage}[t]{0.45\linewidth}
\centering
\includegraphics[scale=0.38]{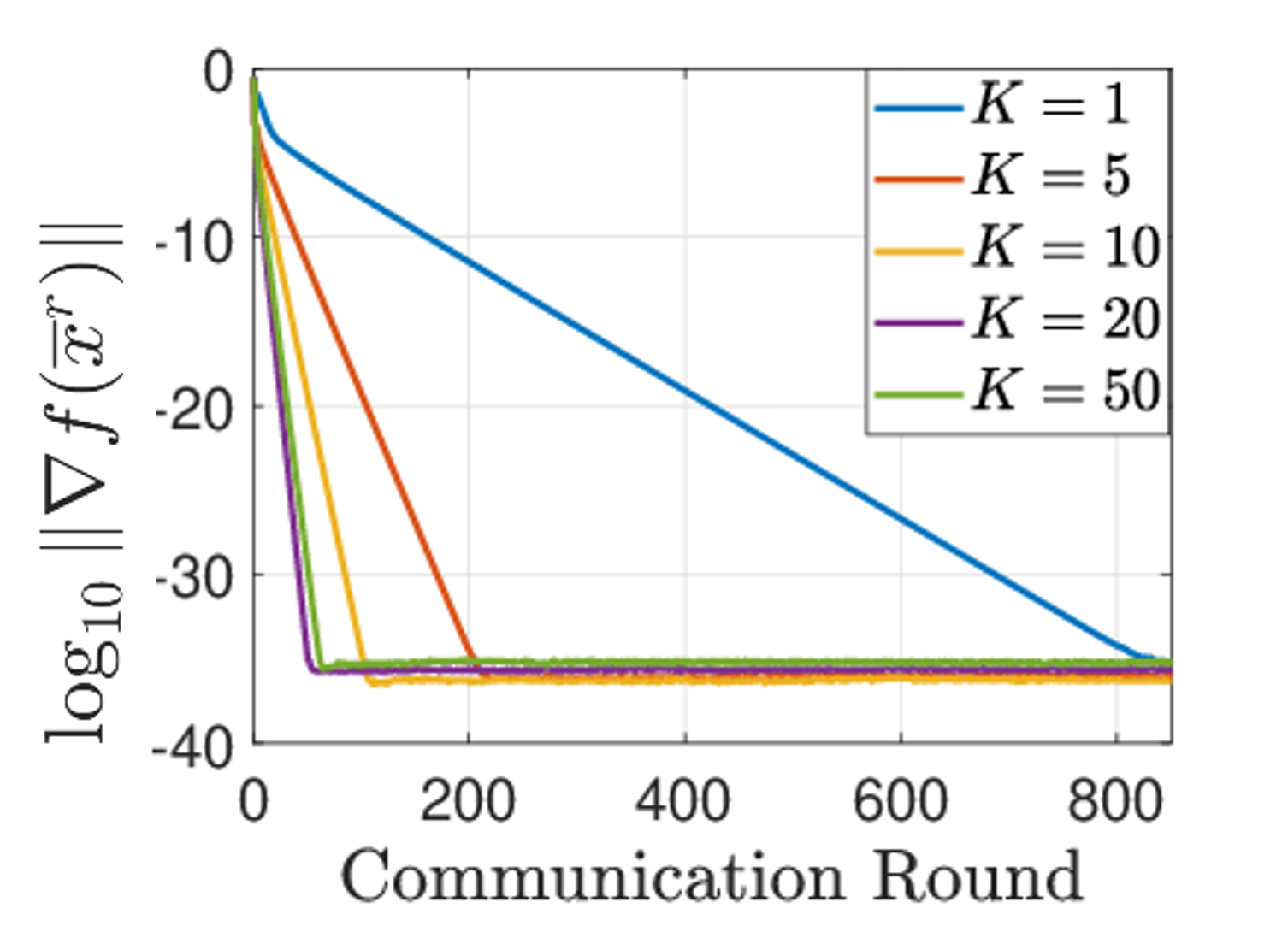}
\end{minipage}%
}%
\subfigure[Moderate heterogeneity]{
\begin{minipage}[t]{0.45\linewidth}
\centering
\includegraphics[scale=0.38]{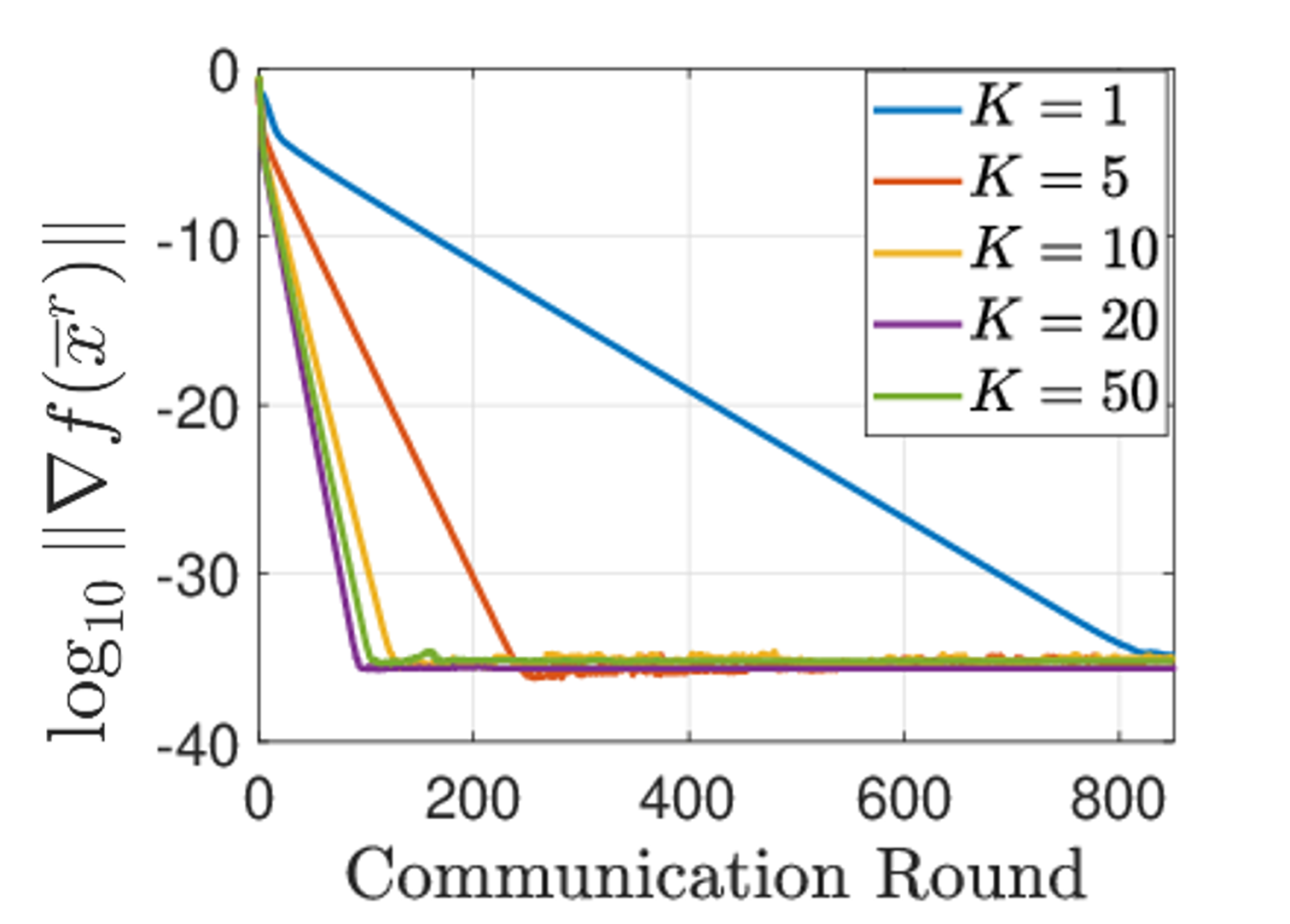}
\end{minipage}%
}%
\caption{ Convergence with respect to computational number and communication round for local DGT for solving DRLR on “a9a” dataset under low and moderate heterogeneity with $\rho=0.1095$. }
\label{fig5}
\end{figure}

\textbf{Trade-off between computation and communication.} {}{Fig. \ref{fig5}} considers both computation and communication costs of local DGT for solving DRLR on ``a9a” dataset.

Fig. \ref{fig5}. (a) shows that in the low heterogeneity when \(K \leq 20\), increasing the number of local updates (significantly reducing the communication costs) will only sacrifice negligible computational overhead. However, when \(K=50\), although it can spend minimal communication rounds, it requires the most computational cost. Thus, there is a trade-off between computation and communication for the number of local updates. In this setting, \(K=20\) can achieve the best trade-off, which is consistent with the conclusion obtained from Fig. \ref{fig5}. (c) that plots the convergence with respect to the communication round.

Fig. \ref{fig5}. (b) shows that in the moderate heterogeneity when \(K \leq 10\), increasing the number of local updates (significantly reducing the communication costs) will only sacrifice negligible computational overhead. However, when \(K \geq 20\), although it can spend fewer communication rounds, it requires significantly more computational cost. Thus, in this setting, \(K=10\) can achieve the best trade-off, which is consistent with the conclusion obtained from Fig. \ref{fig5}. (d) that plots the convergence with respect to the communication round.

\subsection{Decentralized training for deep neural networks}\label{nn_test}
Besides above strongly convex setting, we also have conducted decentralized training highly non-convex VGG-like deep convolutional neural network (CNN) on MNIST \cite{lecun1998gradient}, a widely used benchmark for image classification tasks. The MNIST dataset comprises 60,000 training images and 10,000 testing images of handwritten digits, each represented as a $28\times 28$ grayscale image. The images are labeled with digits from 0 to 9, making this a 10-class classification problem.

We also compare local DGD and local DGT with the three latest decentralized methods with local updates, which are decentralized Scaffnew \cite{mishchenko2022proxskip}, local exact diffusion (LED) \cite{alghunaim2024local} and K-GT \cite{liu2024decentralized} on two data distributions which are corresponding to different heterogeneity settings. The following are details of our experiments.

The communication network has 10 agents and is generated by {}{Erd\H{o}s R\'enyi (ER)} graph with $\rho = 0.1778$. All methods are conducted on the same network. We normalized the pixel values to the range $[-1,1]$ to preprocess the data.

We considered two distinct data distribution settings. \emph{Uniform distribution:} In this setting, the dataset was uniformly split among the agents, ensuring that each agent received an approximately equal portion of the data to simulate homogeneous conditions. \emph{Heterogeneous distribution:} Each agent was assigned data from a subset of randomly selected classes. Specifically, each agent received data from 5 classes, chosen randomly from the 10 available classes. This setup introduced variability in the data distribution across agents, reflecting practical challenges in federated and decentralized learning.

 The performance of each method was evaluated based on the test accuracy over multiple communication rounds. For each method, we conducted experiments with different numbers of local updates per communication round, specifically $K = 1, 5, 10, 20$. For each method and each $K$, we use the best constant learning rate tuned from $\left\{0.0001,0.0005,0.001,0.005,0.01,0.05,0.1,0.5\right\}$.
\begin{figure}[htbp]
\centering
\subfigure[$K=1$]{
\begin{minipage}[t]{0.45\linewidth}
\centering
\includegraphics[scale=0.25]{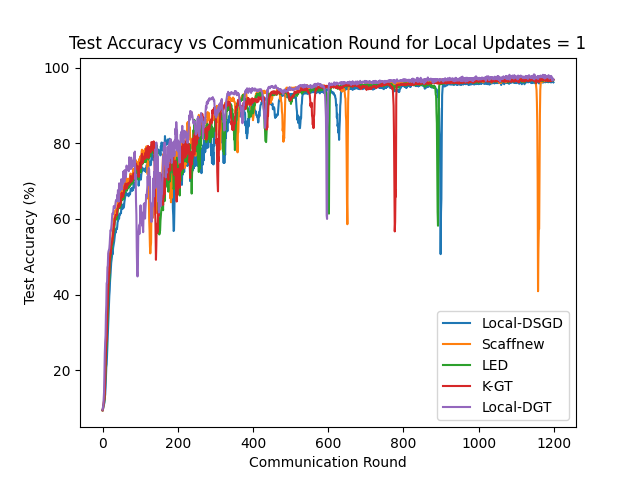}
\end{minipage}%
}%
\subfigure[$K=5$]{
\begin{minipage}[t]{0.45\linewidth}
\centering
\includegraphics[scale=0.25]{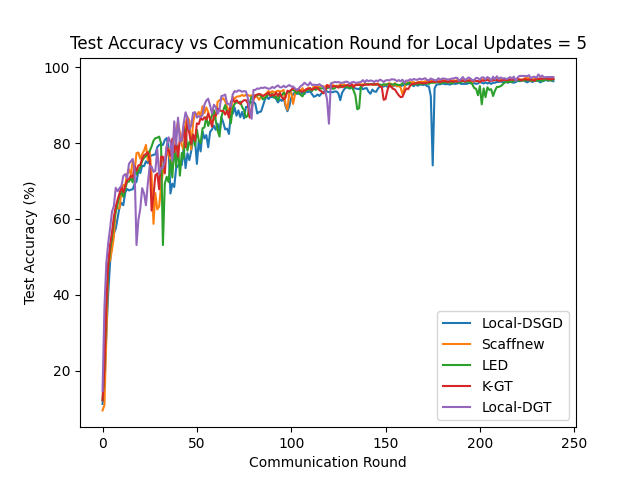}
\end{minipage}%
}%

\subfigure[$K=10$]{
\begin{minipage}[t]{0.45\linewidth}
\centering
\includegraphics[scale=0.25]{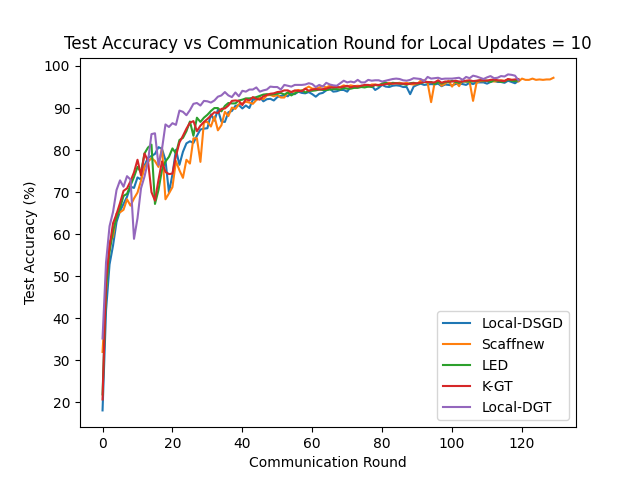}
\end{minipage}%
}%
\subfigure[$K=20$]{
\begin{minipage}[t]{0.45\linewidth}
\centering
\includegraphics[scale=0.25]{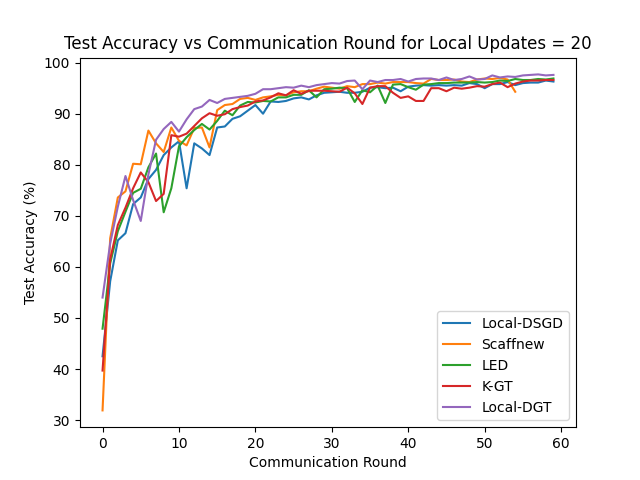}
\end{minipage}%
}%
\caption{Uniform distribution: generalization performance on MNIST for all methods under different numbers of local updates. }
\label{fig6}
\end{figure}

 Fig. \ref{fig6} and Fig. \ref{fig7} show compared results under the uniform distribution setting and heterogeneous distribution, respectively. It can be observed that in the uniform data distribution setting, all methods have almost the same competitive performance that can achieve high test accuracy even for training the highly non-convex deep CNN model. In addition, with the increasing $K$, all methods use the decreasing number of communication rounds to achieve final convergence. Specifically, the number of communication rounds to achieve the same accuracy is reduced from 600 to 30 when we increase $K$ from 1 to 20 for local DGT. This significant reduction in communication overhead benefits from the local updates. When comes to the heterogeneous setting, local DSGD has slower convergence with increasing $K$. This is because local DSGD suffers more severe client drift in this setting due to a lack of gradient corrections, which are used in the other four methods. Thus, the performance of these four methods is almost unaffected in this setting. The local update is also effective in this case, which shows that the communication rounds for local DGT decrease from 800 to 60 when $K$ increases from 1 to 20  for achieving the same test accuracy. 

\begin{figure}[htbp]
\centering
\subfigure[$K=1$]{
\begin{minipage}[t]{0.45\linewidth}
\centering
\includegraphics[scale=0.25]{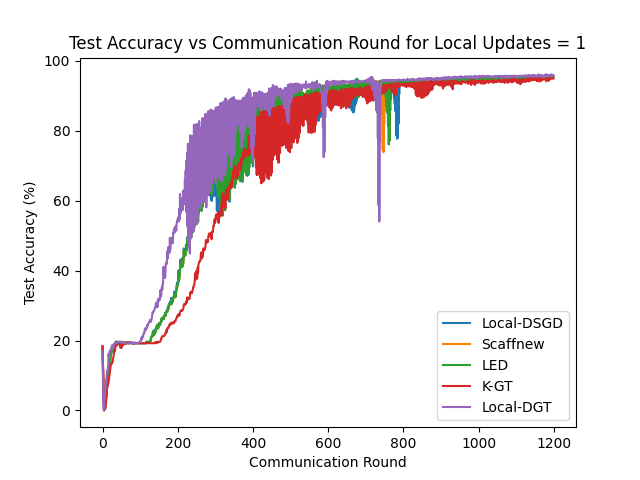}
\end{minipage}%
}%
\subfigure[$K=5$]{
\begin{minipage}[t]{0.45\linewidth}
\centering
\includegraphics[scale=0.25]{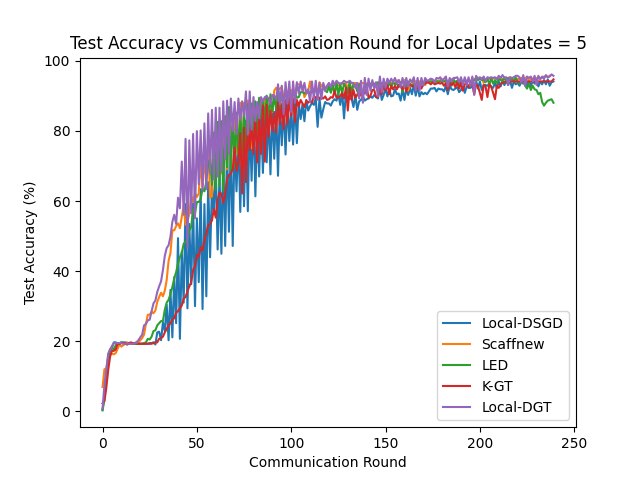}
\end{minipage}%
}%

\subfigure[$K=10$]{
\begin{minipage}[t]{0.45\linewidth}
\centering
\includegraphics[scale=0.25]{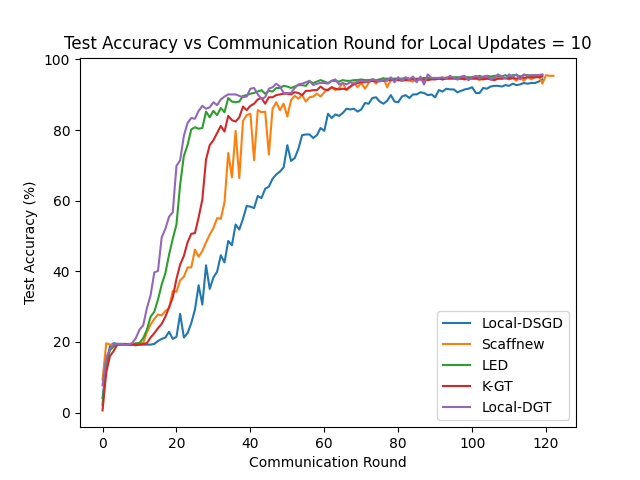}
\end{minipage}%
}%
\subfigure[$K=20$]{
\begin{minipage}[t]{0.45\linewidth}
\centering
\includegraphics[scale=0.25]{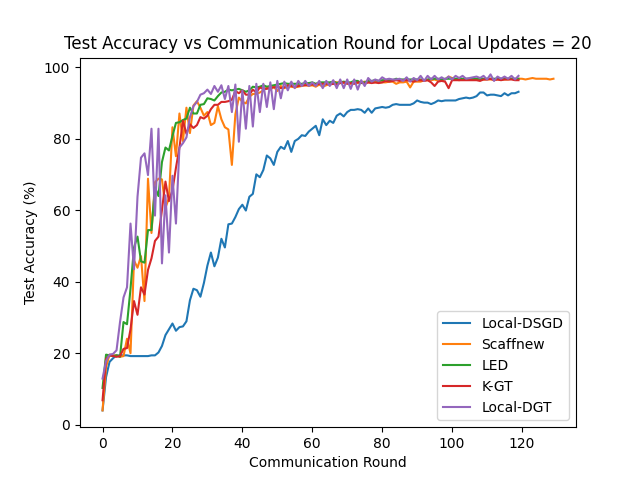}
\end{minipage}%
}%
\caption{Heterogeneous distribution: generalization performance on MNIST for all methods under different numbers of local updates. }
\label{fig7}
\end{figure}

Based on the above results, we can observe that even for solving highly non-convex deep neural networks, gradient correction-based methods with local updates can effectively reduce communication costs. Local DGT can achieve competitive performance as existing methods under both settings while local DGD suffers degradation with increasing local updates under heterogeneous settings.

\subsection{Over-parameterized linear regression}\label{sec:sim-OLS}
In this section, we conduct the over-parameterized least squares simulation to explore the usefulness of local updates for local DGT and local DGD. The local loss for the $i$-th agent is $f_i (\bm{x}) = \frac 1 {2n}\|\boldsymbol A_i\boldsymbol x - \boldsymbol b_i\|^2$. We  compare two methods in two settings with different degrees of data heterogeneity. The optimality measure is  set as $\|\overline{\bm{x}}^r - \bm{x}^\star\|$, i.e., the distance to the minimum norm solution $\bm x^\star$.
\begin{figure}[htbp]
\centering
\subfigure[$\rho=0.9676$]{
\begin{minipage}[t]{0.31\linewidth}
\centering
\includegraphics[scale=0.25]{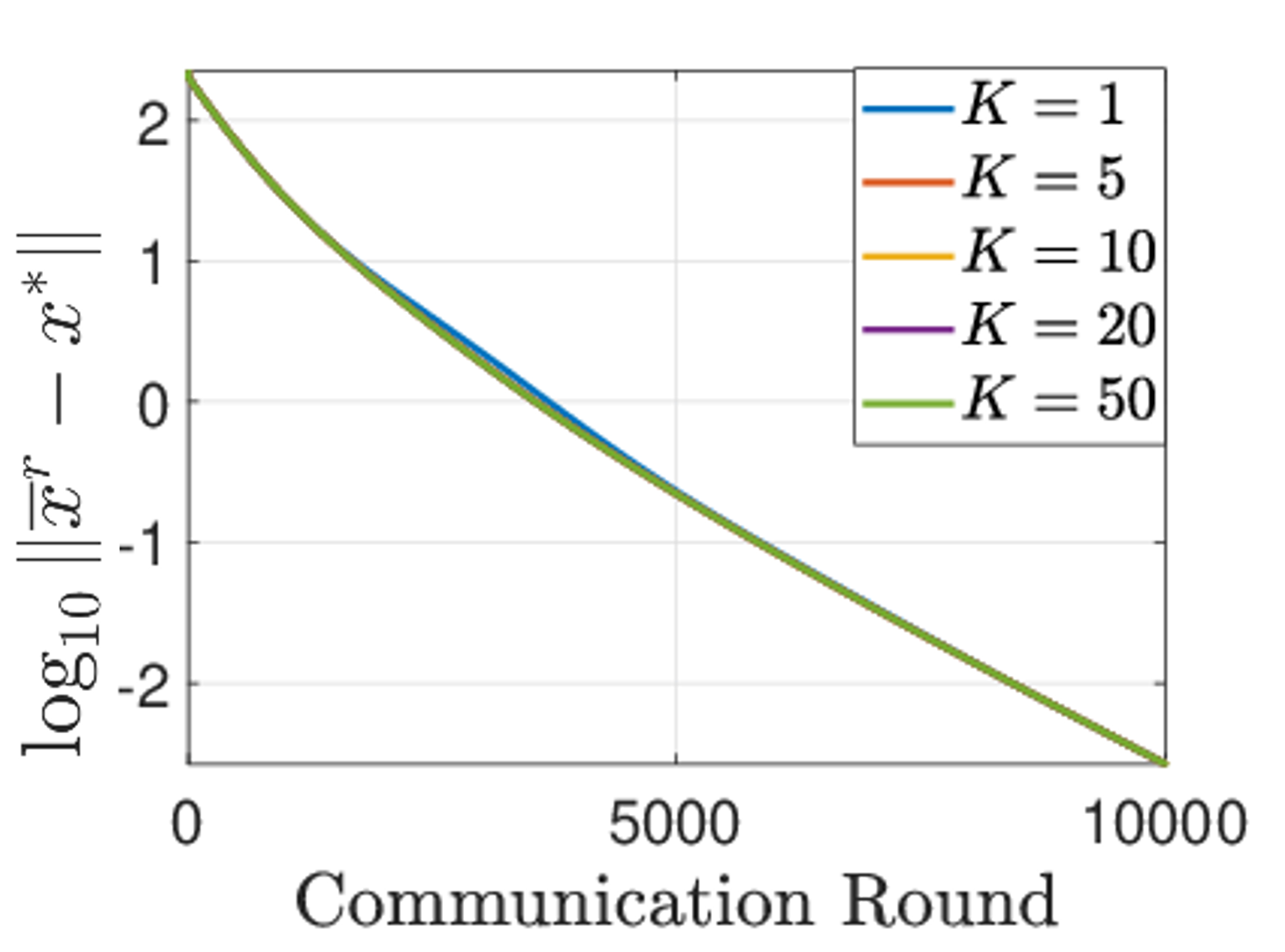}
\end{minipage}%
}%
\subfigure[$\rho=0.8924$]{
\begin{minipage}[t]{0.31\linewidth}
\centering
\includegraphics[scale=0.25]{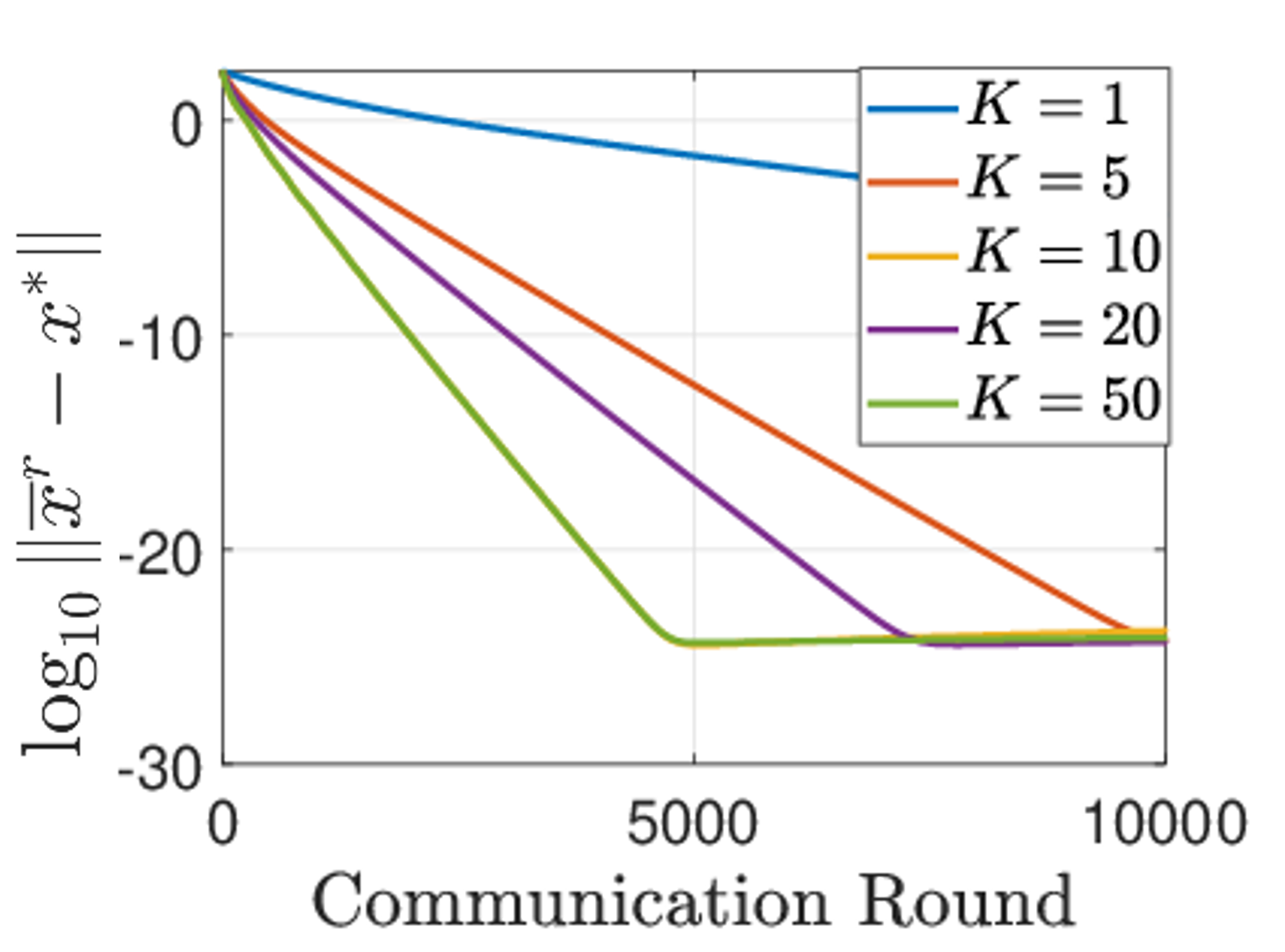}
\end{minipage}%
}%
\subfigure[$\rho=0$]{
\begin{minipage}[t]{0.31\linewidth}
\centering
\includegraphics[scale=0.25]{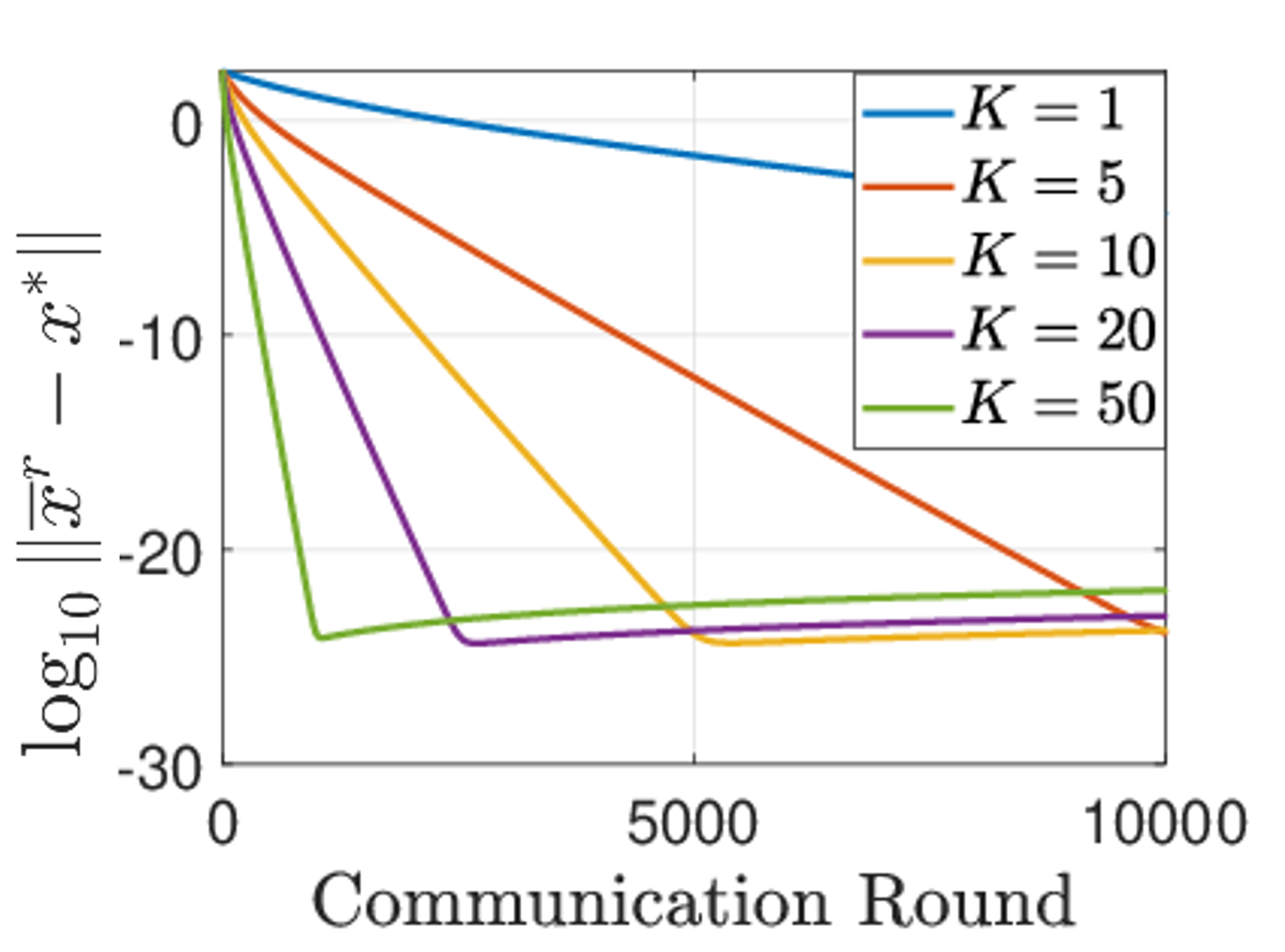}
\end{minipage}%
}%

\subfigure[$\rho=0.9676$]{
\begin{minipage}[t]{0.295\linewidth}
\centering
\includegraphics[scale=0.25]{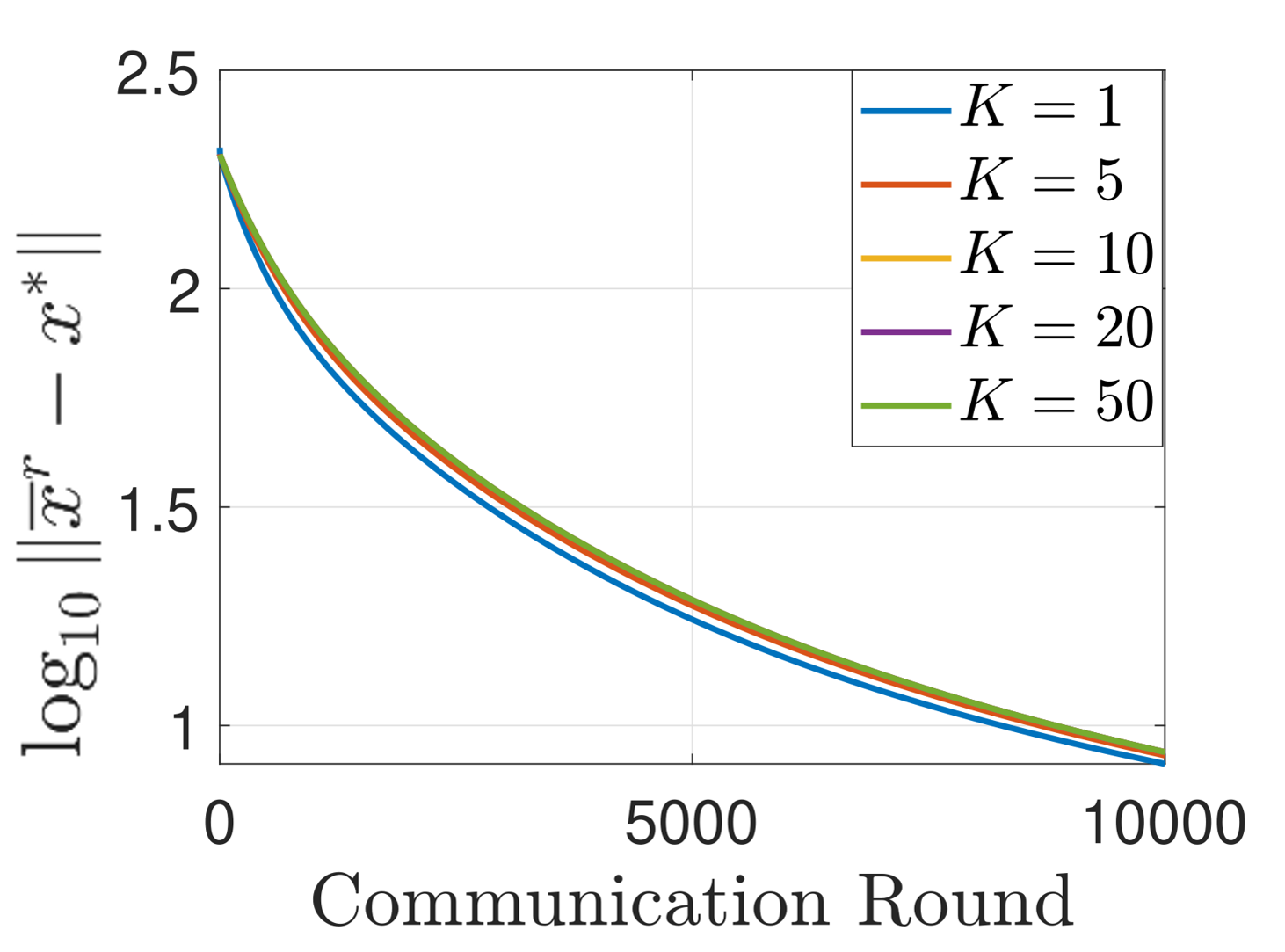}
\end{minipage}
}%
\subfigure[$\rho=0.8924$]{
\begin{minipage}[t]{0.295\linewidth}
\centering
\includegraphics[scale=0.25]{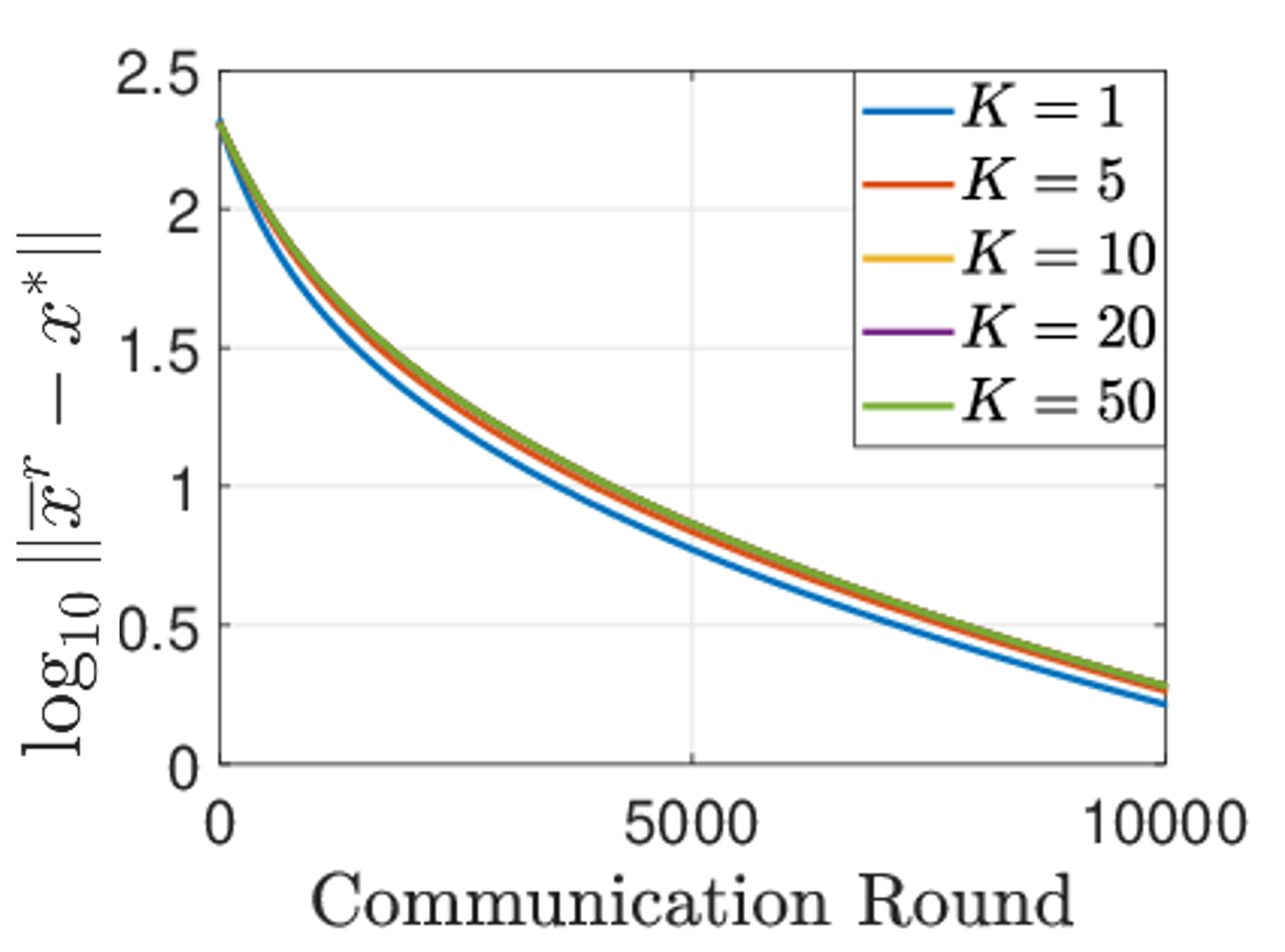}
\end{minipage}
}%
\subfigure[$\rho=0$]{
\begin{minipage}[t]{0.295\linewidth}
\centering
\includegraphics[scale=0.25]{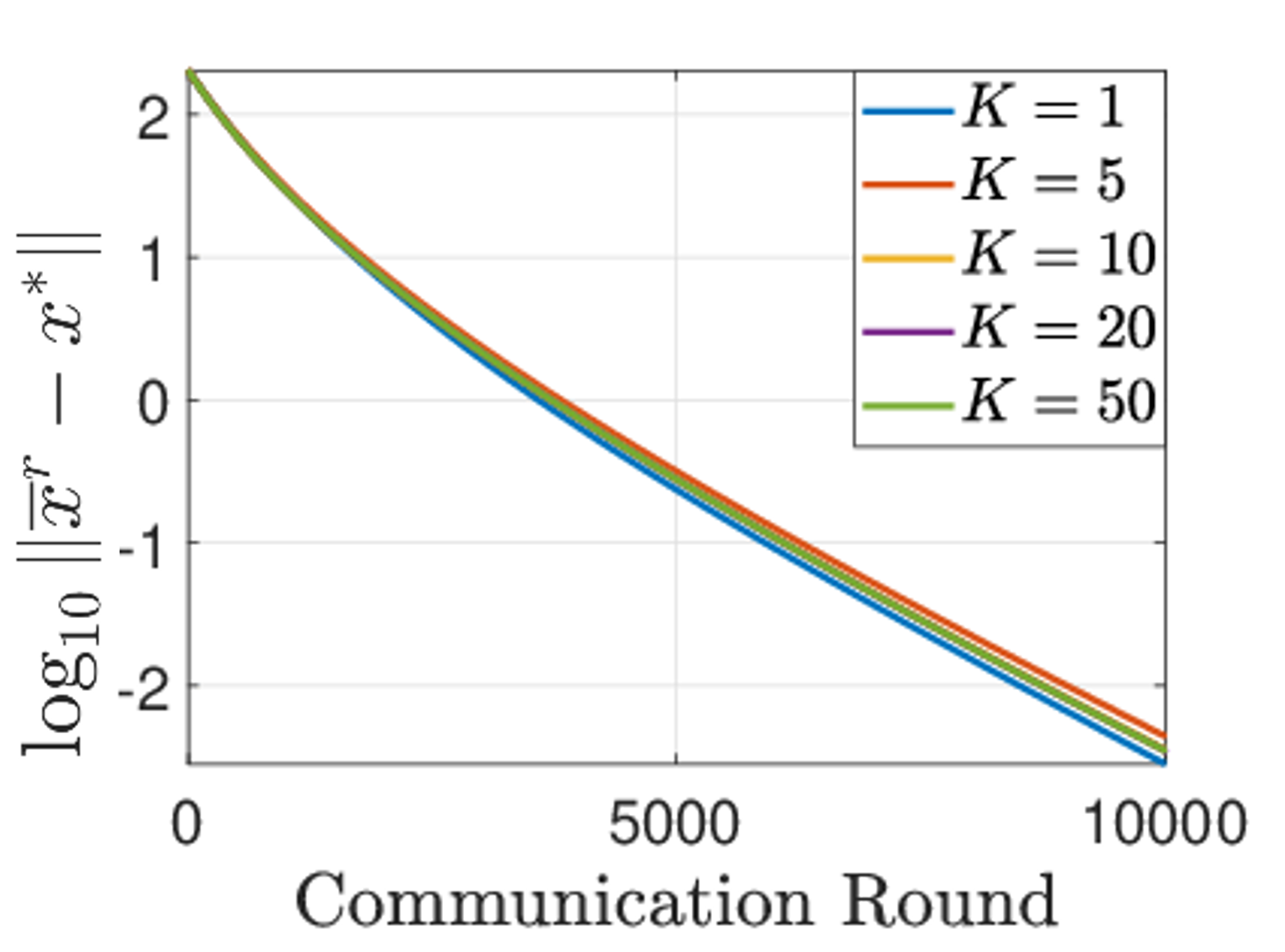}
\end{minipage}
}%
\centering
\caption{Influence of network connectivity under moderate data heterogeneity. First row: local DGT. Second row:
 local DGD.}
\label{fig8}
\end{figure}

\textbf{Moderate heterogeneity.} 
The convergence results are provided in Fig. \ref{fig8}. As $\rho$ decreases, the first row shows increasing local updates in local DGT become more effective in reducing communication. In contrast, the second row reveals increasing local updates in local DGD is less useful. This can be explained by Theorem \ref{thm3}. When $\frac{\delta}{\mu}$ is large, Eq.~\eqref{eq:complexity-OLS} shows the communication complexity will be dominated by the second term that scales with $(\frac{\delta}{\mu})^2$. As such, increasing $K$ will not significantly reduce communication. However, the complexity of local DGT only scales with $\frac{\delta}{\mu}$, and consequently increasing $K$ is more effective compared to local DGD. 

\begin{figure}[htbp]
\centering
\subfigure[$\rho=0.9664$]{
\begin{minipage}[t]{0.315\linewidth}
\centering
\includegraphics[scale=0.25]{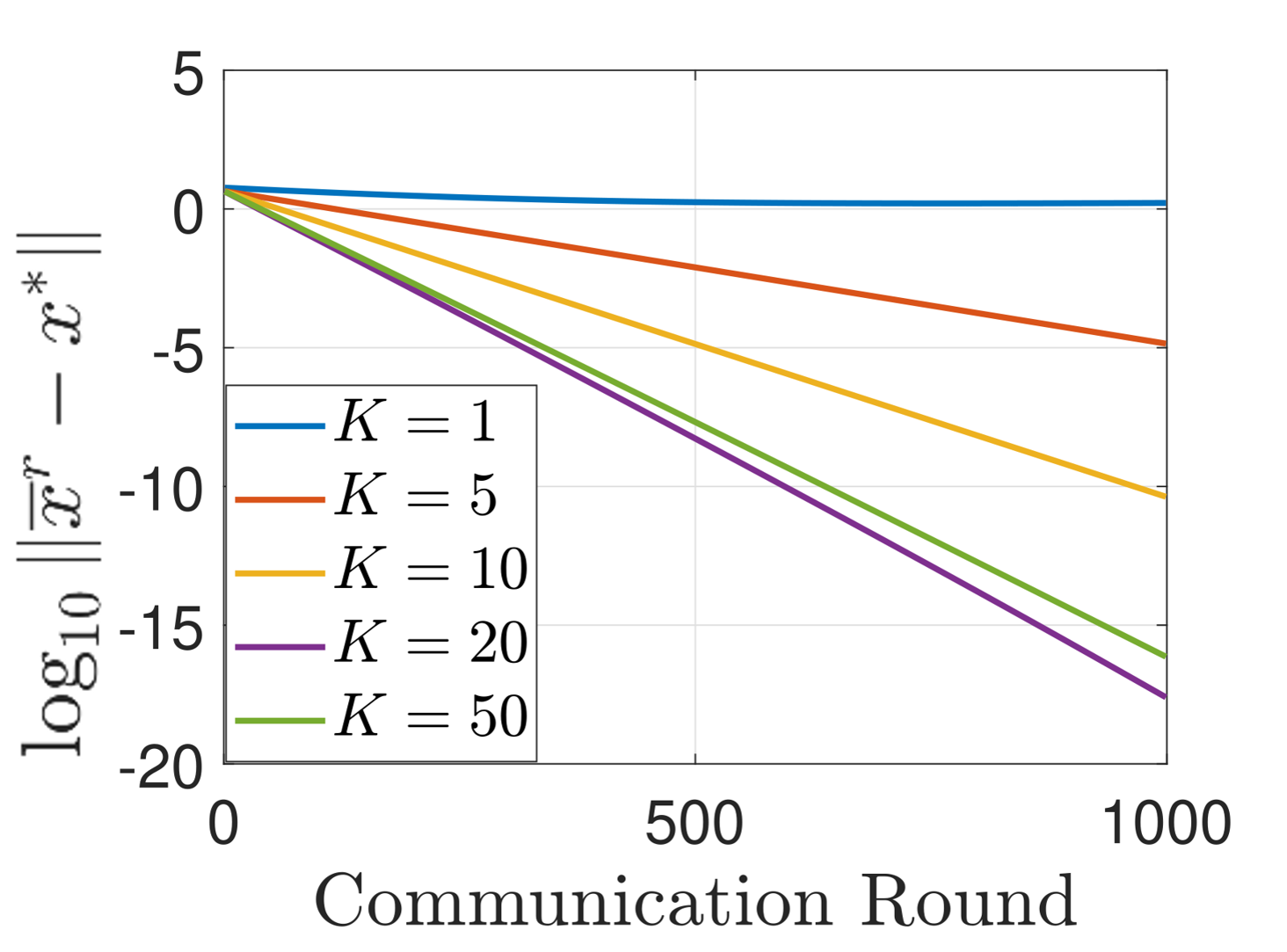}
\end{minipage}%
}%
\subfigure[$\rho=0.8924$]{
\begin{minipage}[t]{0.315\linewidth}
\centering
\includegraphics[scale=0.25]{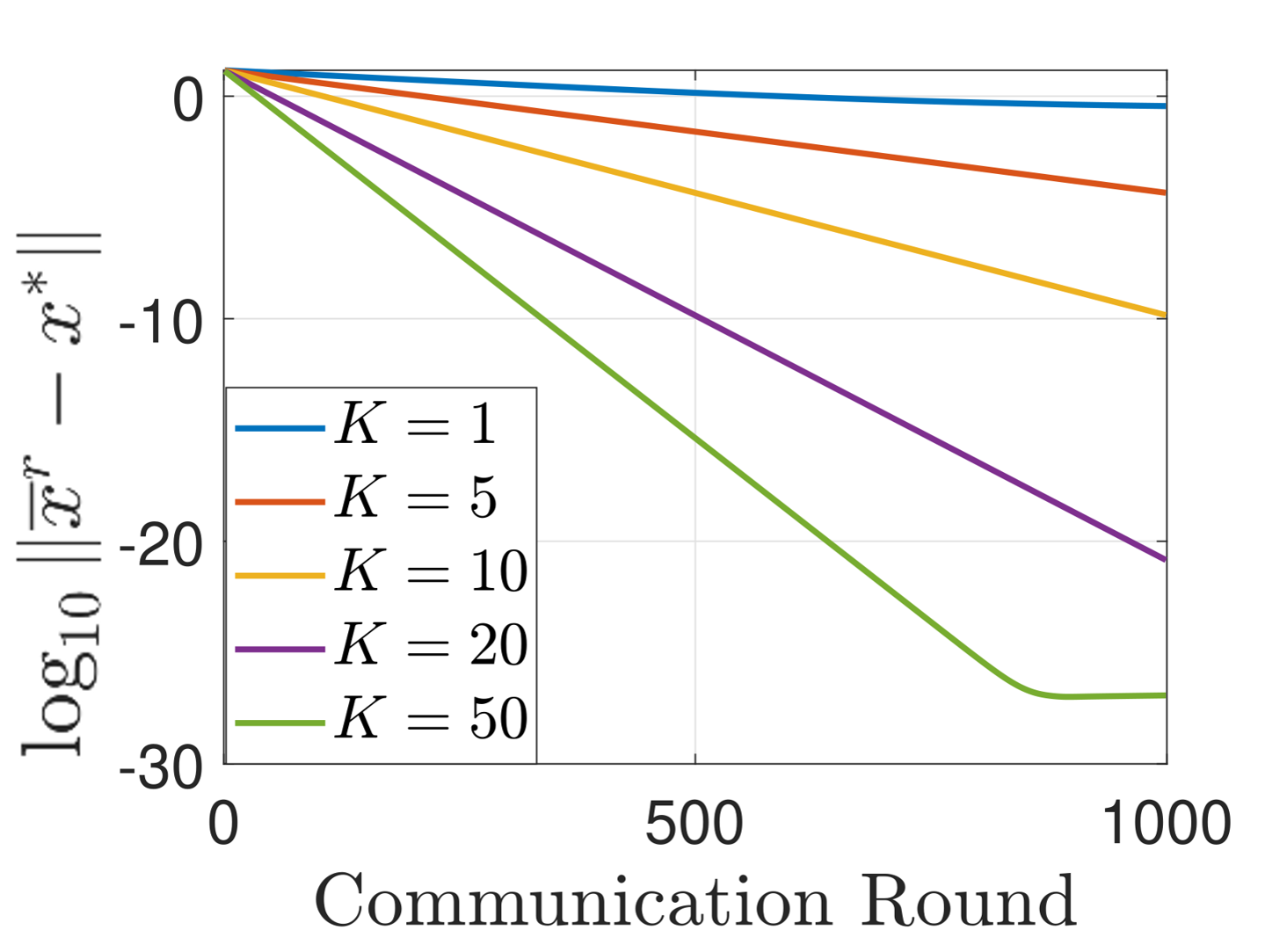}
\end{minipage}%
}%
\subfigure[$\rho=0$]{
\begin{minipage}[t]{0.315\linewidth}
\centering
\includegraphics[scale=0.25]{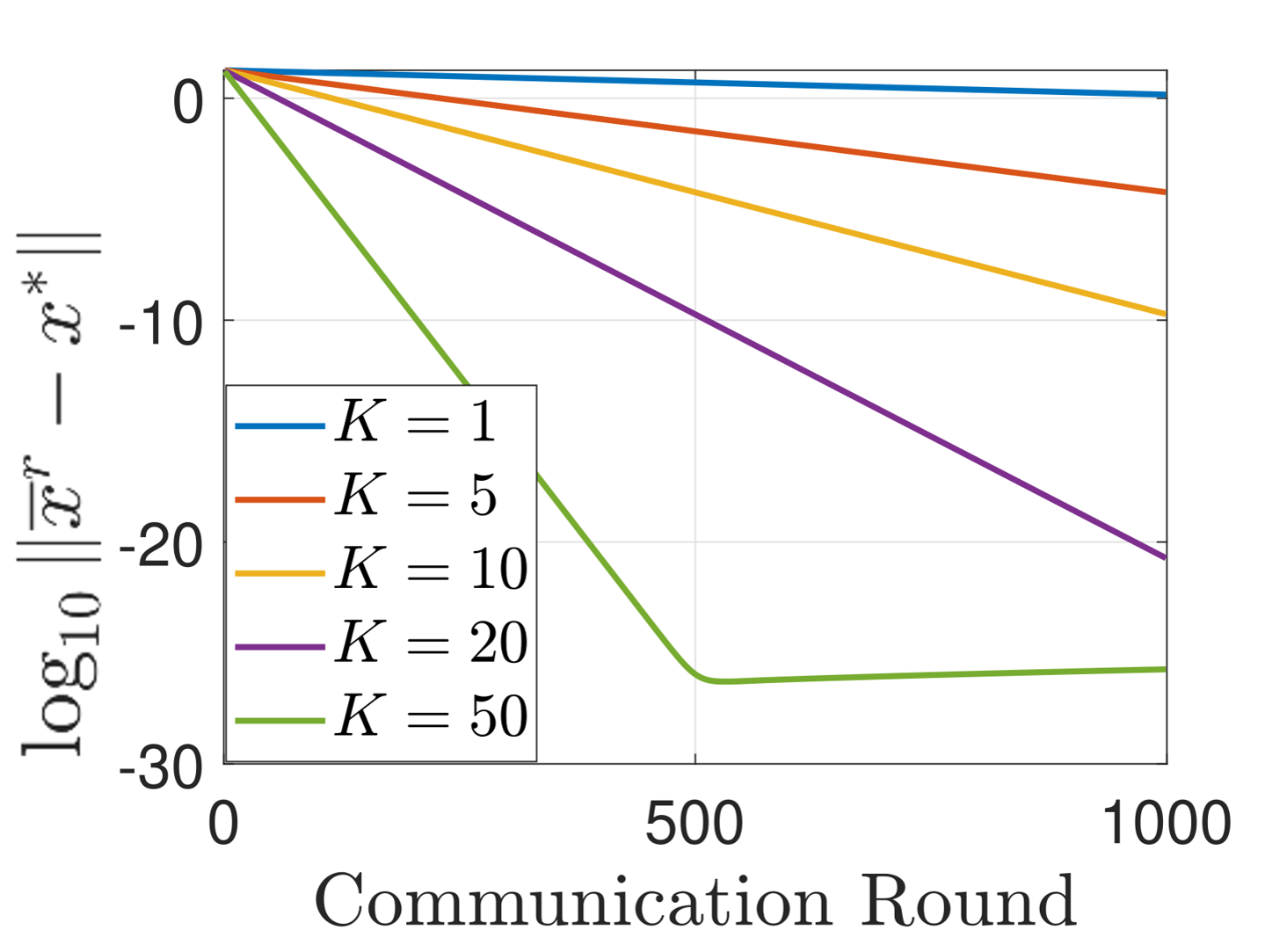}
\end{minipage}%
}%

\subfigure[$\rho=0.9664$]{
\begin{minipage}[t]{0.3\linewidth}
\centering
\includegraphics[scale=0.25]{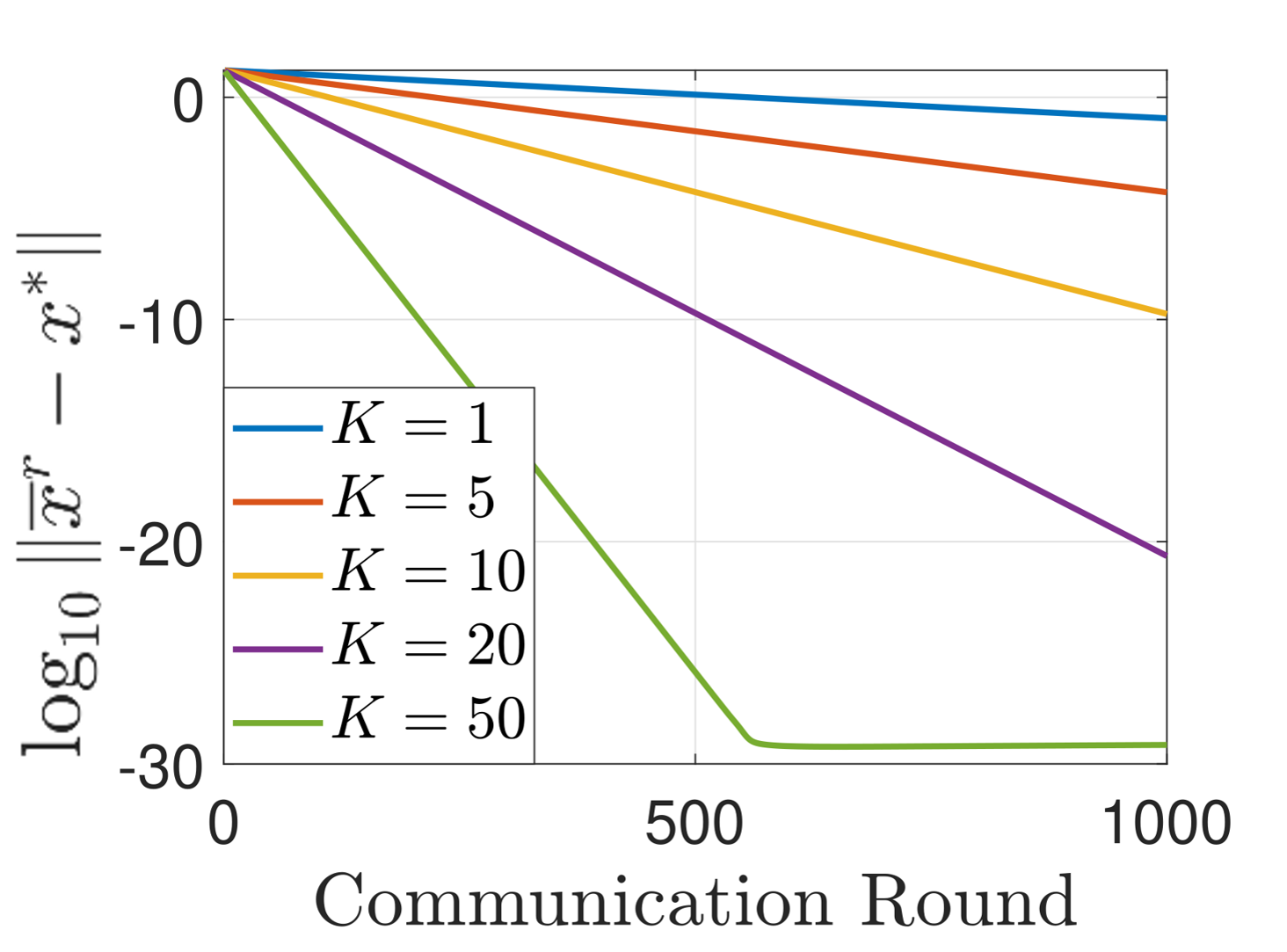}
\end{minipage}
}%
\subfigure[$\rho=0.8924$]{
\begin{minipage}[t]{0.3\linewidth}
\centering
\includegraphics[scale=0.25]{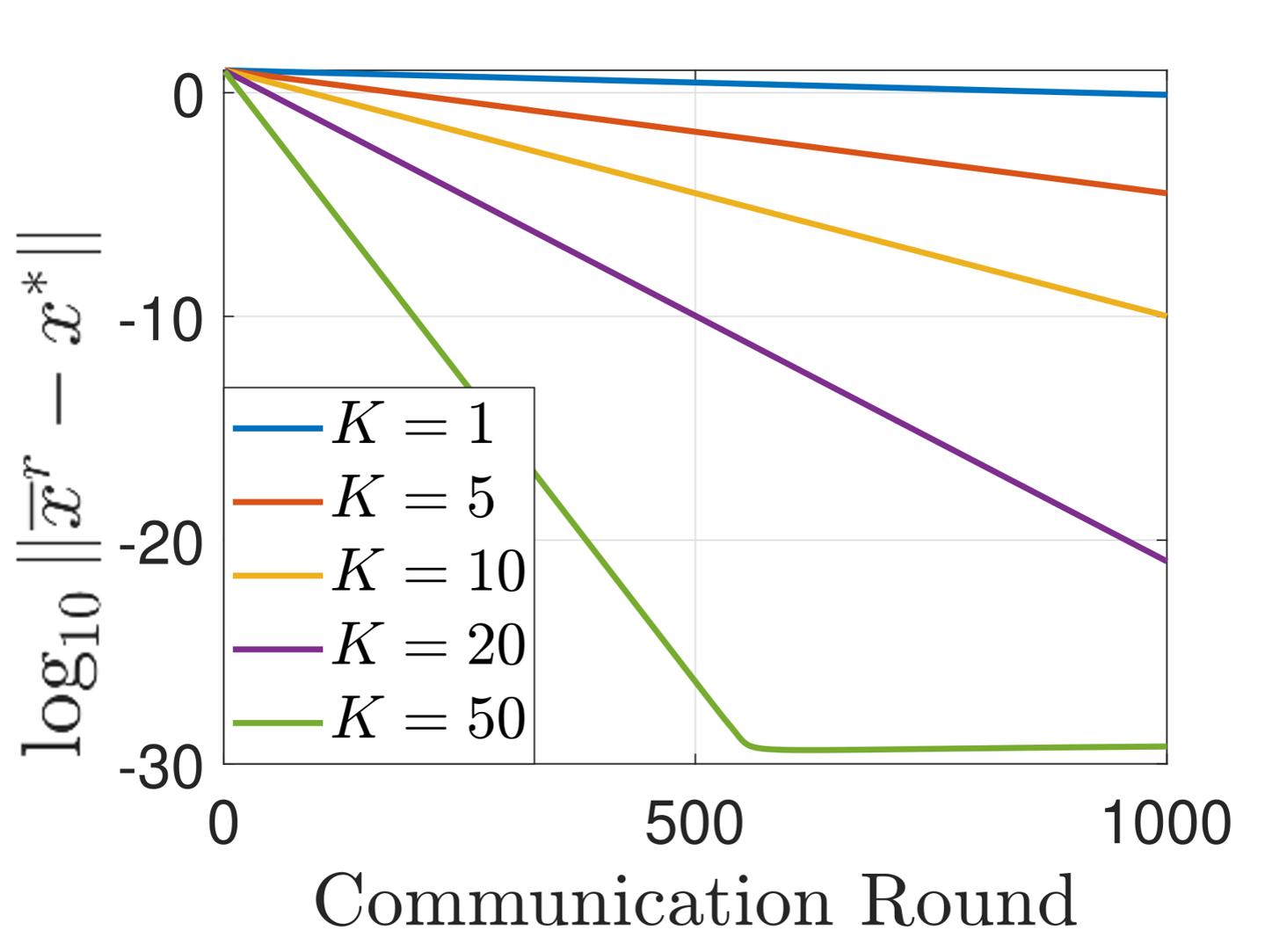}
\end{minipage}
}%
\subfigure[$\rho=0$]{
\begin{minipage}[t]{0.3\linewidth}
\centering
\includegraphics[scale=0.25]{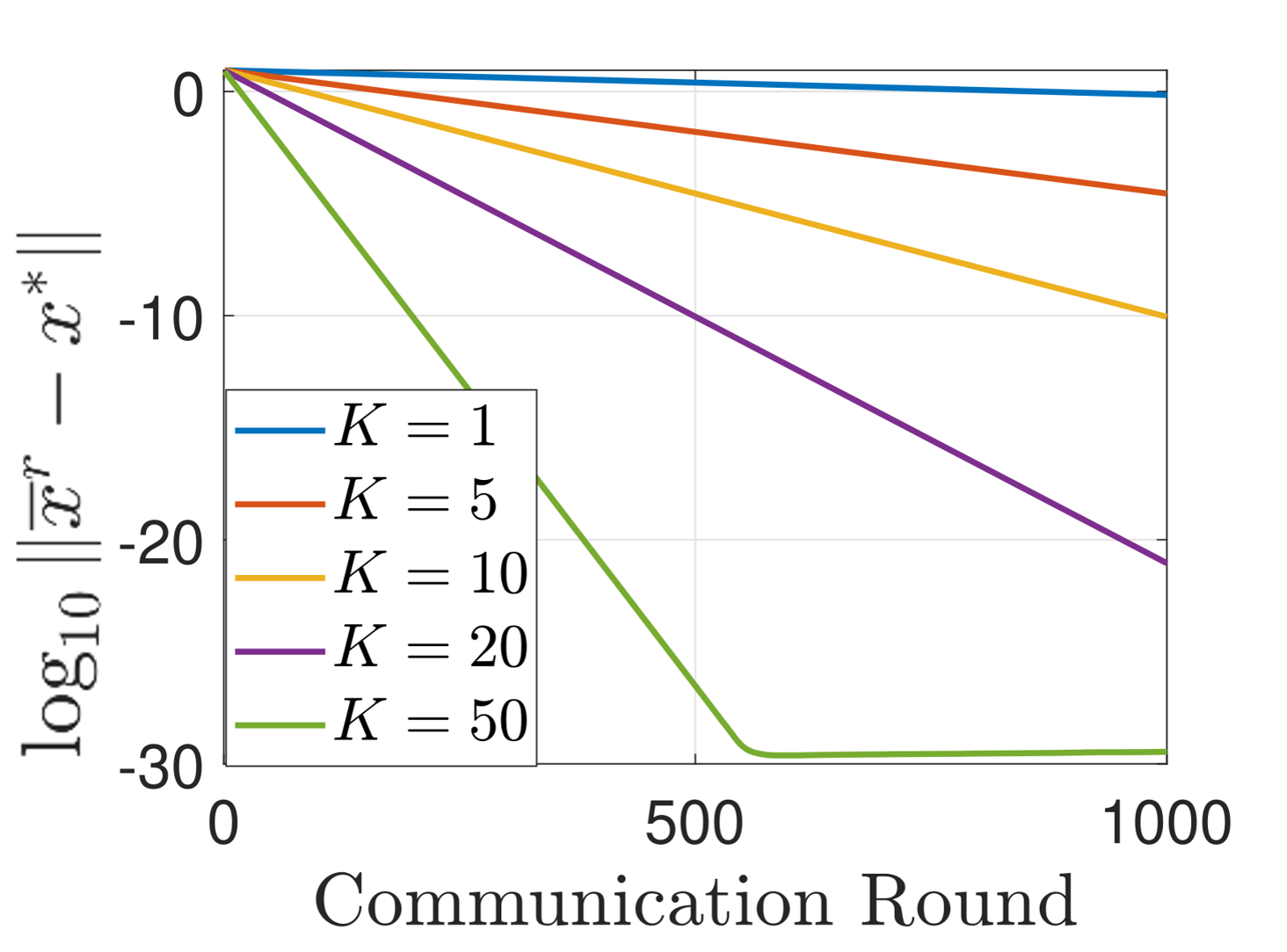}
\end{minipage}
}%
\centering
\caption{Influence of network connectivity under low data heterogeneity. First row: local DGT. Second row:
 local DGD.}
\label{fig9}
\end{figure}

Fig. \ref{fig8} also shows the impact of network connectivity. For local DGT, Eq.~\eqref{eq:complexity-OLS-DGT} shows decreasing $\rho$ reduces the last two terms, thus increasing $K$ would be more efficient (first row). On the other hand, Eq.~\eqref{eq:complexity-OLS} shows {}{decreasing} $\rho$ reduces the second term and improves the communication complexity for all choices of $K$ when it is dominating (second row).

\textbf{Low heterogeneity.}
The communication complexity is shown in Fig. \ref{fig9}. 
For local DGT, either increasing $K$ or decreasing $\rho$ helps reduce communication. For small $\rho$, the effectiveness of increasing $K$ is more significant. This can be explained by~\eqref{eq:complexity-OLS-DGT} for small $\delta$. For local DGD, increasing $K$ helps reduce communication, but reducing $\rho$ does not yield significant improvement. {}{This is because the first term \(\frac{L}{\mu}\) dominates when \(\delta\) is small in equation \eqref{eq:complexity-OLS}.}

Comparing two methods, we see local DGT does not outperform  DGD, and is even worse for poorly connected networks. This numerically reveals that local DGD may be more communication efficient than DGT for problems with small $\delta$ and large $\rho$.

\section{Conclusions}\label{conclu}
We considered two decentralized optimization methods, DGT and DGD, and studied the effectiveness of employing multiple local updates in reducing communication costs. By studying local DGT for strongly convex problems and local DGD in the over-parameterization regime, our result reveals an interesting yet intuitive finding that the convergence rate of the algorithms will be affected by data heterogeneity through the second-order terms when the discrepancy of the first-order terms is eliminated. Through the expressions of the communication complexity, we demonstrated increasing local updates will be advantageous when the datasets are similar and network connectivity is sufficiently good.

\ifCLASSOPTIONcaptionsoff
  \newpage
\fi

\bibliographystyle{IEEEtran}
\bibliography{revised}

\newpage

\section{Appendix}\label{appen}
\subsection{Proof of Lemma \ref{lemma1} }\label{pr-lma1}
Denote $d^i_{r,k} \triangleq \| \boldsymbol x^i_{r,k} - \overline{\boldsymbol x}^r\|^2$ and $F^i_{r,k} \triangleq f(\boldsymbol x^i_{r,k}) - f^\star$.
Based on the iteration of local DGT algorithm, we have
{
\begin{align}
\boldsymbol x_{r,k+1}^i - \overline{\boldsymbol x}^{r} &= \boldsymbol x_{r,k}^i - \overline{\boldsymbol x}^{r} -\eta \Big( \boldsymbol y_{r}^i - \nabla f(\boldsymbol x_{r}^i)\Big)  \nonumber \\
& \quad - \eta  \Big(\nabla f_i(\boldsymbol x_{r,k}^i) - \nabla f(\boldsymbol x_{r,k}^i) \Big)  \nonumber \\
& \quad - \eta \Big( \nabla f_i(\overline{\boldsymbol x}^{r}) - \nabla f(\overline{\boldsymbol x}^{r}) \Big)  \nonumber \\
 & \quad - \eta   \nabla f(\boldsymbol x_{r,k}^i) - \eta  \Big(\nabla f_i(\overline{ \boldsymbol x}^{r})  -  \nabla f(\overline{ \boldsymbol x}^{r})\Big)  \nonumber \\
 & \quad -  \Big( \nabla f_i(\boldsymbol x_{r}^i)  - \nabla f(\boldsymbol x_{r}^i) \Big), 
\end{align}
}
which gives
\begin{align}\label{onestepdrift-lem5}
d^i_{r,k+1} & {\leq} \left(1+\frac 1 {2K} \right)   \left\| \boldsymbol x_{r,k}^i - \overline{\boldsymbol x}^r \right\|^2 + 20K\eta^2  \left\|\boldsymbol y_{r}^i - \nabla f(\boldsymbol x_{r}^i) \right\|^2  \nonumber \\
& \quad + 20K\delta^2\eta^2  \left\| \overline{\boldsymbol x}^r - \boldsymbol x_r^i \right\|^2 + 20K\eta^2 \left\|\nabla f(\boldsymbol  x_{r,k}^i) \right\|^2,
\end{align}
where the inequality uses (i) and (ii) with $a=4K$ in Lemma \ref{trieq}, Assumption~\ref{hessian} and $\eta \leq \frac{1}{4\sqrt{5}\delta K}$. Unrolling the above inequality to the beginning of $r$-th round  yields
\begin{align}\label{onedrift}
d^i_{r,k}  &\leq \left(1+\frac 1 {2K}\right)^k s^i_r + 20K\eta^2 \sum_{j=0}^{k-1} \left( 1+ \frac 1 {2K} \right)^{k-1-j}   \nonumber \\
& \quad \cdot \Big( \left\|\boldsymbol y_r^i   - \nabla f(\boldsymbol x_{r}^i)\right\|^2 + \delta^2 s^i_r  + 20K\eta^2 + \left\|\nabla f(\boldsymbol x_{r,j}^i)\right\|^2 \Big)  \nonumber \\
& \leq 3 s^i_r + 40K\eta^2 (K\left\|\boldsymbol y_r^i - \nabla f(\boldsymbol x_r^i) \right\|^2 + \sum_{j=0}^K\left\|\nabla f(\boldsymbol x_{r,j}^i)\right\|^2),
\end{align}
where the last inequality is due to $(1+\frac 1 {2K})^K \leq \sqrt{e} \leq 2$ and $\eta \leq \frac{1}{4\sqrt{5}\delta K}$. Summing up the drift over $i \in [M]$ and  $k \in [K]$ completes the proof.


\subsection{Proof of Lemma \ref{lemma2} }\label{pf:lem-2}
Applying the descent lemma along the local updates gives
\begin{align}
& f(\boldsymbol x^i_{r,k+1}) = f\Big(\boldsymbol x^i_{r,k} - \eta \Big( \boldsymbol y^i_r + \nabla f_i(\boldsymbol x^i_{r,k}) - \nabla f_i(\boldsymbol x^i_r) \Big)\Big) \nonumber \\
& \leq  f(\boldsymbol x^i_{r,k}) - \eta \Big\langle \nabla f(\boldsymbol x^i_{r,k}), \boldsymbol y^i_r + \nabla f_i(\boldsymbol x^i_{r,k}) - \nabla f_i(\boldsymbol x^i_r) \Big\rangle \nonumber \\
& \quad  \; + \frac L 2 \eta^2  \left\|  \boldsymbol y^i_r + \nabla f_i(\boldsymbol x^i_{r,k}) - \nabla f_i(\boldsymbol x^i_r) \right\|^2 \nonumber \\
& =  f(\boldsymbol x^i_{r,k}) + \frac \eta 2 \left\| \nabla f(\boldsymbol x^i_{r,k}) -\boldsymbol y^i_r   - \nabla f_i(\boldsymbol x^i_{r,k}) + \nabla f_i(\boldsymbol x^i_r) \right \|^2 \nonumber \\
& \quad  - \frac \eta 2 \left( \left\|\nabla f(\boldsymbol x^i_{r,k}) \right\|^2 + \left\| \boldsymbol y^i_r + \nabla f_i(\boldsymbol x^i_{r,k}) - \nabla f_i(\boldsymbol x^i_r) \right\|^2  \right) \nonumber \\
 & \quad  + \frac L 2 \eta^2 \left \|  \boldsymbol y^i_r + \nabla f_i(\boldsymbol x^i_{r,k})   - \nabla f_i(\boldsymbol x^i_r) \right\|^2 \nonumber \nonumber \\
 & =  f(\boldsymbol x^i_{r,k}) - \frac \eta 2 (1-L\eta) \left\|  \boldsymbol y^i_r + \nabla f_i(\boldsymbol x^i_{r,k}) - \nabla f_i(\boldsymbol x^i_r) \right \|^2  \nonumber \\
 & \quad  + \frac \eta 2 \left \| \nabla f(\boldsymbol x^i_{r,k}) -\boldsymbol y^i_r - \nabla f_i(\boldsymbol x^i_{r,k})  + \nabla f_i(\boldsymbol x^i_r) \right \|^2    \nonumber \\
 & \quad   -\frac \eta 2  \left\|\nabla f(\boldsymbol x^i_{r,k}) \right\|^2  \nonumber \\  
 &{=}  f(\boldsymbol x^i_{r,k}) + \frac{\eta}{2} \left\|  \nabla f(\boldsymbol x^i_r) -\boldsymbol y^i_r + \Big( \nabla f(\boldsymbol x^i_{r,k}) - \nabla f_i(\boldsymbol x^i_{r,k}) \Big)  \right. \nonumber \\
 & \quad \;\left. -  \Big( \nabla f(\boldsymbol x^i_r) - \nabla f_i(\boldsymbol x^i_r)\Big) \right\|^2 - \frac{\eta}{2} \left\|\nabla f(\boldsymbol x^i_{r,k})\right\|^2 \nonumber \\
 & \quad \; + \left (-\frac \eta 2+\frac{L\eta^2}{2} \right) \left\|  \boldsymbol y^i_r + \nabla f_i(\boldsymbol x^i_{r,k}) - \nabla f_i(\boldsymbol x^i_r) \right \|^2 \nonumber \\
 & = f(\boldsymbol x^i_{r,k}) + \frac{\eta}{2} \left\| \nabla f(\boldsymbol x^i_r) -\boldsymbol y^i_r + \Big( \nabla f(\boldsymbol x^i_{r,k}) - \nabla f_i(\boldsymbol x^i_{r,k})\Big)  \right. \nonumber \\
 & \quad \; \left. - \Big( \nabla f(\overline{\boldsymbol x}^r) - \nabla f_i(\overline{\boldsymbol x}^r)\Big) + \Big( \nabla f(\overline{\boldsymbol x}^r) - \nabla f_i(\overline{\boldsymbol x}^r)\Big)  \right. \nonumber \\
 & \quad \; \left. - \Big( \nabla f(\boldsymbol x^i_r) - \nabla f_i(\boldsymbol x^i_r)  \Big) \right\|^2 - \frac{\eta}{2} \left\|\nabla f(\boldsymbol x^i_{r,k}) \right\|^2 \nonumber \\
& \quad \; + \left (-\frac \eta 2+\frac{L\eta^2}{2} \right) \cdot \left\|  \boldsymbol y^i_r + \nabla f_i(\boldsymbol x^i_{r,k}) - \nabla f_i(\boldsymbol x^i_r) \right\|^2 \nonumber \\
& \leq f(\boldsymbol x^i_{r,k}) + \frac 3 2 \eta \left\| \nabla f(\boldsymbol x^i_r) -\boldsymbol y^i_r \right \|^2 + \frac 3 2 \delta^2\eta  \left \|\boldsymbol x^i_r - \overline{\boldsymbol x}^r \right\|^2 \nonumber \\
& \quad \; - \frac{\eta}{2}\left\|   \nabla f(\boldsymbol x^i_{r,k}) \right\|^2 + \frac \delta {10(K+1)}  \left\|\boldsymbol x^i_{r,k} - \overline{\boldsymbol x}^r \right\|^2,
\end{align}
where the last inequality is due to (i) in Lemma \ref{trieq}, Assumption~\ref{hessian} and $\eta \leq \min \left\{ \frac 1 L, \frac{1}{30\delta K} \right\} $.

\subsection{Proof of Lemma \ref{lemma3} }\label{pf:lem-3}
Define $E^{r,k}_i := f(\boldsymbol x^i_{r,k}) - f(\bm{x}^\star) + \delta \left(1+\frac 1 {K+1} \right)^{K+1-k} d^i_{r,k}$, $E_i^r \triangleq E_i^{r,0}$. 
Based on Eq.~\eqref{onestepdrift-lem5} in Lemma \ref{lemma1}, we have 
\begin{align}\label{onestepdrift}
\delta & \left( \frac{K+2}{K+1} \right)^{K+1-k} d^i_{r,k} \overset{(a)}{\leq}  \delta \left(\frac{K+2}{K+1}\right)^{K+1-k}\left(\frac{5K+8}{5K+5}\right)  \nonumber \\
& \quad  \cdot d^i_{r,k-1} - \frac{\delta}{10(K+1)} d^i_{r,k-1} + 60K\delta\eta^2 \left\|\boldsymbol y^i_r - \nabla f(\boldsymbol x^i_r) \right\|^2 \nonumber \\
& \quad + 60K\delta^3\eta^2 \cdot s^i_r + 60K\delta\eta^2 \cdot \left\|\nabla f(\boldsymbol x^i_{r,k-1}) \right\|^2 \nonumber \\
& \overset{(b)}{\leq} \left( \left(\frac{5K+4}{5K+5}\right)\delta\left(\frac{K+2}{K+1}\right)^{K-k+2} - \frac{\delta}{10(K+1)} \right)  d^i_{r,k-1}  \nonumber \\ 
& \quad  + 60K\delta\eta^2 \left( \left\|\boldsymbol y^i_r - \nabla f(\boldsymbol x^i_r) \right\|^2 + \left\|\nabla f(\boldsymbol x^i_{r,k-1}) \right\|^2 + \delta^2 s^i_r\right), \nonumber \\
\end{align}
where (a) is based on $\delta \left(1+\frac 1 {K+1} \right)^{K+1-k}\leq 3\delta$ and (b) is due to $\left(1+\frac 1 {2(K+1)} + \frac 1 {10(K+1)}\right)\leq \left(1-\frac 1 {5(K+1)}\right)\left(1+\frac 1 {K+1}\right)$. 

Combining \eqref{desl} in Lemma \ref{lemma2} and \eqref{onestepdrift} yields 
\begin{align}\label{poten}
E^{r,k}_i & \leq \Big( F^i_{r,k-1} + \left(\frac{5K+4}{5K+5}\right)\delta \left(\frac{K+2}{K+1}\right)^{K-k+2} d^i_{r,k-1} \Big)   \nonumber \\
& \quad +  \left(\frac 3 2 \eta +60K\delta\eta^2\right) \left( \left\|\boldsymbol y^i_r - \nabla f(\boldsymbol x^i_r)\right\|^2  + \delta^2 s^i_r\right)   \nonumber \\ 
& \quad + \left (-\frac \eta 2 + 60K\delta\eta^2 \right) \|\nabla f(\boldsymbol x^i_{r,k-1})\|^2  \nonumber \\
& \overset{(a)}{\leq}  \Big( F^i_{r,k-1} + \left(\frac{5K+4}{5K+5}\right)\delta \left(\frac{K+2}{K+1}\right)^{K-k+2} d^i_{r,k-1} \Big)  \nonumber \\
& \quad + 2\eta \delta^2 s^i_r  + 2\eta\left\|\boldsymbol y^i_r - \nabla f(\boldsymbol x^i_r)\right\|^2 - \frac \eta 4 \left\|\nabla f(\boldsymbol x^i_{r,k-1}) \right\|^2 \nonumber \\
& \overset{(b)}{\leq}   \left(1-\frac{\mu\eta}{4}\right)F^i_{r,k-1} + \delta \left(\frac{5K+4}{5K+5} \right) \left(\frac{K+2}{K+1}\right)^{K-k+2}    \nonumber \\
 & \quad \cdot \text {\small $ d^i_{r,k-1} + 2\eta\left\|\boldsymbol y^i_r - \nabla f(\boldsymbol x^i_r) \right\|^2 + 2\delta^2 \eta s^i_r - \frac \eta 8 \left\|\nabla f(\boldsymbol x^i_{r,k-1}) \right\|^2$ } \nonumber \\
 &  \overset{(c)}{\leq}  \left(1-\frac{\mu\eta}{4}\right)  E^{r,k-1}_i + 2\eta \left\|\boldsymbol y^i_r - \nabla f(\boldsymbol x^i_r)\right\|^2 + 2\delta^2\eta s^i_r\nonumber \\
 & \quad - \frac \eta 8 \left\|\nabla f(\boldsymbol x^i_{r,k-1})\right\|^2,
\end{align}
where (a) is due to $\eta \leq \frac{1}{240\delta K}$ such that $60K\delta \eta^2 \leq \frac \eta 4$ and (c) is due to $\eta \leq \frac{1}{5\mu K}$ that gives $1-\frac{1}{5(K+1)} \leq 1-\frac{\mu \eta}{4}$, and (b) is due to the strong convexity of the average loss $f$ that  $\|\nabla f(\boldsymbol x^i_{r,k}) \|^2 \geq 2\mu (f(\boldsymbol x^i_{r,k}) - f(\boldsymbol x^\star))$. 

Unrolling the recursion of  $E_i^{r,k}$ along local updates and summing up across all agents $i$, we obtain
\begin{align}\label{finaldescent}
& F_{r+1} \overset{(a)}{\leq} \frac 1 m \sum_{i=1}^m{E^{r,K+1}_i} \nonumber \\
 &\overset{(b)}{\leq} \Big(1 - \frac{\mu\eta}{4}\Big)^{K+1}  \frac 1 m \sum_{i=1}^m{E^r_i} + 2\eta\sum_{k=0}^K{\left( 1-\frac{\mu\eta} 4 \right)^{K-k}} \nonumber \\ 
 & \quad \cdot ( \Gamma_r +  \delta^2 S_r) - \frac \eta 8 \cdot \frac{\sum_{i=1}^m \sum_{k=0}^K{(1-\frac{\mu\eta}{4})^{K-k} \left\|\nabla f(\boldsymbol x^i_{r,k}) \right\|^2}} m \nonumber \\
 & \overset{(c)}{\leq} \Big(1 - \frac{\mu\eta}{4}\Big)^{{}{(K+1)}}\frac 1 m \sum_{i=1}^m{\Big(f(\overline{\boldsymbol x}^r) + \Big\langle \nabla f(\overline{\boldsymbol x}^r), \boldsymbol x^i_r - \overline{\boldsymbol x}^r\Big\rangle - f(\boldsymbol x^\star) }\nonumber \\
 & \quad { + \frac{L+6\delta}2 s^i_r \Big)} - \frac \eta 8 \cdot \frac{\sum_{i=1}^m \sum_{k=0}^K{(1-\frac{\mu\eta}{4})^{K-k} \left\|\nabla f(\boldsymbol x^i_{r,k}) \right\|^2}} m \nonumber \\ 
 & \quad + 4K\eta (\Gamma_r + \delta^2 S_r) \nonumber \\
 & \overset{(d)}{\leq} \left(1-\frac{\mu{}{(K+1)} \eta}{8} \right) F_r + \left(4\delta^2K\eta + 3(L+\delta)\right) S_r \nonumber \\
 & \quad + \frac{K\eta(64\Gamma_r-G_r)}{16},
 \end{align}
 where (a) is due to the convexity of $f$ and the identity $\overline{\boldsymbol x}^{r+1} = \overline{\boldsymbol x}_{r,K+1}$, (b) is due to \eqref{poten}, and (c) is based on $(1+\frac{1}{K+1})^{K+1}\leq 3$ the $L$-smooth of  $f$. The last inequality (d) is due to $\eta \leq \frac{1}{5\mu K} < \frac 2 {\mu K}$ and Bernoulli inequality, which implies   $\frac{1}{(1-\frac{\mu\eta}{4})^{-K}} \leq \frac{1}{1+\frac{K\mu\eta}{4}} \leq 1-\frac{K\mu\eta}{8}$ under $\eta \leq \frac{1}{5\mu K}$.

\subsection{Proof of Lemma \ref{lemma4} }\label{pf:lem-4}
We first rewrite the update \eqref{reiter} as 
\begin{align}
&\boldsymbol X_{r,K} = \boldsymbol X_r - \eta \Big( K \left(\boldsymbol Y_r -  \nabla  \boldsymbol F(\boldsymbol X_r) \right) + \sum_{k=0}^{K-1}{\nabla \boldsymbol F(\boldsymbol X_{r,k})} \Big) \nonumber \\
& \boldsymbol Y_{r,K} = \boldsymbol Y_r + \nabla \boldsymbol F(\boldsymbol X_{r,K}) - \nabla \boldsymbol F(\boldsymbol X_r).
\end{align}
Consequently we have the iteration of consensus error as
\begin{align}
\boldsymbol X_r - \overline{\boldsymbol X}^r &=  \boldsymbol X_r (\boldsymbol W - \boldsymbol J) -\eta \sum_{k=0}^K\nabla \boldsymbol F(\boldsymbol X_{r,k}) (\boldsymbol W - \boldsymbol J) \nonumber \\
& - \eta (K+1) (\boldsymbol Y_r -  \nabla \boldsymbol F(\boldsymbol X_r)) (\boldsymbol W - \boldsymbol J).
\end{align}
Thus, there is
\begin{align}
S_{r+1} &\leq  \rho^2  \left(1+\frac 1 \alpha \right) \cdot S_r  +  \rho^2 (1+\alpha)\eta^2  \  \nonumber \\ 
 & \quad \cdot \left\|(K+1) \left(\boldsymbol Y_r  - \nabla \boldsymbol F(\boldsymbol X_r) \right) + \sum_{k=0}^K{\nabla \boldsymbol F(\boldsymbol X_{r,k})} \right\|^2   \nonumber \\ 
 & =  \rho^2 \left(1+\frac 1 \alpha \right) \cdot S_r + \rho^2(1+\alpha)\eta^2 \sum_{i=1}^m \Big\|(K+1) \nonumber \\
 & \quad \cdot \left(\boldsymbol y^i_r - \nabla f(\boldsymbol x^i_r) \right) + \sum_{k=0}^K \nabla f(\boldsymbol x^i_{r,k}) + (K+1)  \nonumber \\
 & \quad  \cdot  ( \nabla f(\boldsymbol x^i_r) - \nabla f_i(\boldsymbol x^i_r)  - ( \nabla f(\overline{\boldsymbol x}^r)  - \nabla f_i(\overline{\boldsymbol x}^r) ) ) \nonumber \\
 & \quad  + \sum_{k=0}^K  \nabla f(\overline{\boldsymbol x}^r) - \nabla f_i(\overline{\boldsymbol x}^r)  -  \nabla f(\boldsymbol x^i_{r,k})  + \nabla f_i(\boldsymbol x^i_{r,k}) \Big\|^2. 
 \end{align}
Applying (ii) in Lemma \ref{trieq} and Assumption \ref{hessian} leads to
\begin{align}\label{conerrs}
& S_{r+1} \leq \rho^2\left(1+\frac 1 \alpha \right) \cdot S_r + \rho^2(1+\alpha)\sum_{i=1}^m 4\eta^2  \nonumber \\
& \cdot \left( (K+1)^2\left(\|\boldsymbol y^i_r - \nabla f(\boldsymbol x^i_r) \|^2 + \delta^2 s^i_r\right)  + \left \|\sum_{k=0}^K \nabla f(\boldsymbol x^i_{r,k}) \right\|^2   \right) \nonumber \\
& + 4\eta^2\Big\| \sum_{k=0}^K  \nabla f(\overline{\boldsymbol x}^r) - \nabla f_i(\overline{\boldsymbol x}^r) - ( \nabla f(\boldsymbol x^i_{r,k}) - \nabla f_i(\boldsymbol x^i_{r,k}) )  \Big\|^2    \nonumber \\
  & \leq \left( \rho^2 \left(1+\frac 1 \alpha \right) + 4\rho^2\delta^2(1+\alpha)(K+1)^2\eta^2 \right) S_r \nonumber \\
 & \quad +4\rho^2(1+\alpha)(K+1)\eta^2  \sum_{i=1}^m \sum_{k=0}^K {\left\|\nabla f(\boldsymbol x^i_{r,k}) \right\|^2} \nonumber \\
 & \quad {}{+}  4\rho^2(1+\alpha)(K+1)\eta^2 \left( m(K+1)\Gamma_r + \delta^2 D_r \right).
 \end{align}
To guarantee the contraction of the consensus error, we choose  $\alpha = \frac 4 {1-\rho}$. Then  based on $\eta \leq \frac{1-\rho}{8\sqrt{5} \delta K} $, we have $\rho^2\Big(1+\frac 1 \alpha + 4\delta^2(1+\alpha)(K+1)^2\eta^2\Big) \leq \frac {\rho(1+\rho)} 2$ and \eqref{conerrs} becomes 
\begin{align}\label{eq:bound-for-tracking-err}
 S_{r+1} & \overset{(a)}{\leq} \left(\frac{\rho(1+\rho)}{2} + \frac{240\rho^2\delta^2K^2\eta^2}{1-\rho}\right) S_r + \left( \Gamma_r + G_r \right) \nonumber\\
 & \quad \cdot \frac{3200m\rho^2\delta^2K^4\eta^4 + 80m\rho^2K^2\eta^2}{1-\rho} \nonumber \\
 & \overset{(b)}{\leq} \Big(\frac{1+\rho} 2\Big) \cdot S_r + \frac{90m\rho^2K^2\eta^2}{1-\rho} \left( \Gamma_r + G_r \right),
 \end{align}
 where (a) is due to  \eqref{finaldrift} in Lemma \ref{lemma1} and  (b) is due to $\eta\leq \frac{1-\rho}{8\sqrt{5}\delta K} \leq \frac{1-\rho}{4\sqrt{15}\delta K}$ such that $\frac{240\rho^2\delta^2K^2\eta^2}{1-\rho} \leq \frac{(1-\rho)(1+\rho)}{2}$ and $3200\rho^2\delta^2K^4\eta^4 \leq 10\rho^2K^2\eta^2$.

 \subsection{Proof of Lemma \ref{lemma5} }\label{pf:lem-5}
 The recursion for average of the local copies follows
 \begin{align}
 & \overline{\boldsymbol x}^{r+1} - \overline{\boldsymbol x}^r = \overline{\boldsymbol x}_{r,K} - \eta \overline{\boldsymbol y}_{r,K} - \overline{\boldsymbol x}^r \nonumber \\
 & \quad =\frac{\eta}{m} \sum_{i=1}^m \left(  (K+1) \left( \boldsymbol y^i_r - \nabla f_i(\boldsymbol x^i_r)\right) + \sum_{k=0}^K \nabla f_i(\boldsymbol x^i_{r,k})      \right)  \nonumber  \\
& \quad = \text { \small $\frac \eta m \sum_{i=1}^m   \sum_{k=0}^K  \left(  \nabla f_i(\boldsymbol x^i_{r,k}) - \nabla f(\boldsymbol x^i_{r,k})  - (\nabla f_i(\overline{\boldsymbol x}^r) - \nabla f(\overline{\boldsymbol x}^r)) \right) $ }\nonumber \\
& \quad \quad     + \frac \eta m \sum_{i=1}^m \sum_{k=0}^K \nabla f(\boldsymbol x^i_{r,k}).  
 \end{align}
 Thus, there is
\begin{align}\label{diffav}
& \left\|\overline{\boldsymbol x}^{r+1} - \overline{\boldsymbol x}^r \right\|^2 
{\leq} \frac{2(K+1)\eta^2}{m} \cdot \sum_{i=1}^m   \sum_{k=0}^K{ \left\|\nabla f(\boldsymbol x^i_{r,k}) \right\|^2}  \nonumber \\
& \quad  +  \left\|(\nabla f_i(\boldsymbol x^i_{r,k}) - \nabla f(\boldsymbol x^i_{r,k}) )  - (\nabla f_i(\overline{\boldsymbol x}^r) - \nabla f(\overline{\boldsymbol x}^r)) \right\|^2   \nonumber \\
& \quad  {\leq} \frac{2(K+1)\eta^2}{m} \left( \delta^2 D_r +  \sum_{i=1}^m\sum_{k=0}^K \left\|\nabla f(\boldsymbol x^i_{r,k}) \right\|^2 \right).
\end{align}


 
\subsection{Proof of Lemma \ref{lemma6} }\label{pf:lem-6}
Based on the iteration \eqref{reiter}, we have $\boldsymbol Y_{r+1} = \Big(\boldsymbol Y_r + \nabla \boldsymbol F(\boldsymbol X_{r+1}) - \nabla \boldsymbol F(\boldsymbol X_r) \Big) \boldsymbol W$. Then the tracking error has 
\begin{align}\label{tempY}
&\boldsymbol Y_r - \nabla f\left( \boldsymbol X_r\right){=} (\boldsymbol Y_r - \nabla \boldsymbol f(\boldsymbol X_r))(\boldsymbol W - \boldsymbol J)   \nonumber \\
& \quad  + [  \nabla \boldsymbol F(\boldsymbol X_r)  - \nabla \boldsymbol f(\boldsymbol X_r)  -  \nabla \boldsymbol F(\overline{\boldsymbol X}^r) + \nabla \boldsymbol f(\overline{\boldsymbol X}^r))] \boldsymbol J \nonumber \\
& \quad  + \text{\small $[  \nabla \boldsymbol F(\boldsymbol X_{r+1})  - \nabla \boldsymbol f(\boldsymbol X_{r+1})  -  \nabla \boldsymbol F(\overline{\boldsymbol X}^{r+1}) + \nabla \boldsymbol f(\overline{\boldsymbol X}^{r+1})  ] \boldsymbol W $} \nonumber \\
& \quad  + [ \nabla \boldsymbol F(\overline{\boldsymbol X}^{r+1}) - \nabla \boldsymbol f(\overline{\boldsymbol X}^{r+1})  -  \nabla \boldsymbol F(\boldsymbol X_r) + \nabla \boldsymbol f(\boldsymbol X_r)] \boldsymbol W \nonumber \\
& \quad   + \nabla \boldsymbol f(\boldsymbol X_{r+1}) (\boldsymbol W - \boldsymbol I) \nonumber \\
& = (\boldsymbol Y_r - \nabla \boldsymbol f(\boldsymbol X_r)) (\boldsymbol W - \boldsymbol J)  \nonumber \\
& \quad + [ \nabla \boldsymbol F(\boldsymbol X_r)  - \nabla \boldsymbol f(\boldsymbol X_r)  -  \nabla \boldsymbol F(\overline{\boldsymbol X}^r) + \nabla \boldsymbol f(\overline{\boldsymbol X}^r) ](\boldsymbol J - \boldsymbol W)  \nonumber \\
& \quad + \text{\small $[   \nabla \boldsymbol F(\boldsymbol X_{r+1})  - \nabla \boldsymbol f(\boldsymbol X_{r+1})  -  \nabla \boldsymbol F(\overline{\boldsymbol X}^{r+1}) + \nabla \boldsymbol f(\overline{\boldsymbol X}^{r+1})  ] \boldsymbol W $}\nonumber \\
 &\quad + [  \nabla \boldsymbol F(\overline{\boldsymbol X}^{r+1}) - \nabla \boldsymbol f(\overline{\boldsymbol X}^{r+1})  -  \nabla \boldsymbol F(\overline{\boldsymbol X}^r) + \nabla \boldsymbol f(\overline{\boldsymbol X}^r) ]  \nonumber \\
& \quad  \cdot  (\boldsymbol W -  \boldsymbol J)  + \nabla \boldsymbol f(\boldsymbol X_{r+1}) (\boldsymbol W - \boldsymbol I) \nonumber \\
& m \Gamma_r  \overset{(b)}{\leq} m\rho^2\left(1+\frac 1 \beta \right) \Gamma_r  +  4m\rho^2\delta^2(1+\beta) \cdot  \|\overline{\boldsymbol X}^{r+1} - \overline{\boldsymbol X}^r \|^2 \nonumber \\
& \quad + 4(\delta^2+L^2) (1+\beta)  S_{r+1} + 4m\rho^2\delta^2(1+\beta) S_r, 
\end{align}
where (a) holds {since $(\boldsymbol Y_r - \nabla \boldsymbol F(\boldsymbol X_r))\boldsymbol J = \bm{0}$ from the last equation in \eqref{eq:avg-local-GT} and $( \nabla \boldsymbol F(\overline{\boldsymbol X}^r) - \nabla \boldsymbol f(\overline{\boldsymbol X}^r) ) \bm{J} = \bm{0}$}, and (b) is due to Assumption \ref{W} and Lemma \ref{trieq}. To guarantee the contraction of tracking error, we set $\beta = \frac 2 {1-\rho}$ such that $\rho^2 \left(1+\frac 1 \beta \right) \leq \rho \left(\frac{1+\rho}2\right)$. Step size condition that $\eta \leq \frac{(1-\rho)^2}{64\rho\sqrt{\delta^2+L^2}K}$ results in following recursive bound
\begin{align}
\Gamma_{r+1} &\leq \frac {\left(1+\rho\right)^2} 4 \Gamma_r  + \frac{24\rho(\delta^2+L^2)}{1-\rho}  S_r \nonumber \\
& \quad  + \frac{528\rho^2\delta^2(\delta^2+L^2)K\eta^2}{(1-\rho)^2} \cdot \left ( \frac{D_r}{m} + 2 G_r  \right) \nonumber \\
& {\leq} \Big(\frac{1+\rho}{2}\Big) \Gamma_r + \frac{\rho(\delta^2+L^2)}{1-\rho} \left(  48 S_r + \frac{1188\rho K^2\eta^2}{1-\rho} G_r\right).
\end{align}

\subsection{Proof of Lemma \ref{lemma15}}\label{pf:lem-15}
\vspace{-7mm}
\begin{align}
\frac{1}{m} \sum_{i=1}^m & \| \boldsymbol x^i_{r,K+1} - \overline{\boldsymbol x}_{r,K+1} \|^2  \nonumber \\
& =  \frac{1}{m} \sum_{i=1}^m{ \left\| \boldsymbol x^i_{r,K+1} - \overline{\boldsymbol x}_{r+1}  \right\|^2} \nonumber \\
& \overset{(a)}{\leq} \frac{2}{m} \sum_{i=1}^m {\left\| \boldsymbol x^i_{r,K+1} - \overline{\boldsymbol x}_r \right\|^2} + 2 \left\| \overline{\boldsymbol x}_{r+1} - \overline{\boldsymbol x}_r \right\|^2 \nonumber \\
& \overset{(b)}{\leq} 6 S_r + {320K^2\eta^2} \cdot \left( \Gamma_r + G_r \right) +\frac{8K\delta^2\eta^2}{m} \cdot D_r \nonumber \\
& \quad + 16K^2\eta^2 \cdot  G_r\nonumber \\
&\overset{(c)}{\leq} \left( 6 + 48K^2\delta^2\eta^2 \right) \cdot S_r + 336 
K^2 \eta^2 \cdot ( \Gamma_r + G_r ) \nonumber \\
& \quad + 640\delta^2K^4\eta^4 \cdot (\Gamma_r + G_r ).
\end{align}
The first equality is due to the double stochastic property of $\boldsymbol W$ in Assumption \ref{W}. (a) is due to (i) of Lemma \ref{trieq}, (b) is due to \eqref{onedrift} and Lemma \ref{lemma5} and (c) is based on Lemma \ref{lemma1}.

\subsection{Proof of Lemma \ref{lemma16}}\label{pf:lem-16}
\vspace{-6mm}
\begin{align}
F_{r+1} & \overset{(a)}{\leq} \frac 1 m \sum_{i=1}^m{E^{r,K+1}_i} + \frac \beta {2m} \sum_{i=1}^m {\left\|  \boldsymbol x^i_{r,K+1} - \overline{\boldsymbol x}_{r,K+1}  \right\|^2} \nonumber \\
& \overset{(b)}{\leq} \left(1-\frac{\mu (K+1) \eta}{8}\right) \cdot F_r   \nonumber \\
& \quad + \left(4K\eta + 320 \beta K^2\eta^2 + 640 \beta\delta^2 K^4 \eta^4 \right) \cdot \Gamma_r \nonumber \\
& \quad + \left( 4\delta^2K\eta + 6(L+\delta+\beta) + 48\beta\delta^2K^2 \eta^2 \right) \cdot S_r \nonumber \\
&  \quad \text{ \small $-\left(\frac{\eta K}8 - \frac{\mu K^2\eta^2}{16} - 336\beta K^2\eta ^2 - 640 \beta \delta^2K^4 \eta^4 \right) G_r$ } \nonumber \\
& \leq \left( 1 -\frac{\mu (K+1)\eta}{8}\right) \cdot F_r + 8K\eta \cdot  \Gamma_r  \nonumber \\
& \quad + \left( 8\delta^2K\eta + 6 \left( L+\delta+\beta \right) \right) \cdot S_r - \frac{\eta K}{16} \cdot G_r,
\end{align}
(a) is based on the property of weakly convex functions as Lemma \ref{weakly-con}
 and (b) is substituting the result of Lemma \ref{lemma15}. The last inequality is based on the step size condition that $\eta \lesssim \min \left\{ \frac{1-\rho}{K\delta}, \frac{1-\rho}{K\beta} \right\}$.

\subsection{Proof of Lemma \ref{lemma11} }\label{pf:lem-7}
Denote $\pi(\boldsymbol x^i_{r,K})$ as the projection of $\boldsymbol x^i_{r,K}$ onto $\boldsymbol{\mathcal X}^\star$.
\begin{align}\label{process_con}
&S_{r+1} = \frac 1  m\|\boldsymbol X_{r,K+1}\boldsymbol W(\boldsymbol I - \boldsymbol J)\|^2 = \frac 1  m\|\boldsymbol X_{r,K+1}(\boldsymbol W - \boldsymbol J)\|^2 \nonumber \\
& \leq \frac{\rho^2} m  \|\boldsymbol X_{r,K+1}(\boldsymbol I - \boldsymbol J)\|^2  = \rho^2S_{r,K+1} \nonumber \\
& = \frac {\rho^2} m \left\|\Big( \boldsymbol X_{r,K} - \eta \boldsymbol F(\boldsymbol X_{r,K})\Big) (\boldsymbol I - \boldsymbol J) \right\|^2 \nonumber \\
& \overset{(a)}{\leq} \left(1+ \frac 1 {\epsilon_2}\right)\rho^2 S_{r,K} + \frac{1+\epsilon_2}{m} \rho^2 \eta^2 \sum_{i=1}^m {\|\nabla f_i(\boldsymbol x^i_{r,K})\|^2 } \nonumber \\
& =  \frac{1+\epsilon_2}{m} \rho^2 \eta^2 \sum_{i=1}^m  \left\|\nabla f_i(\boldsymbol x^i_{r,K})  - \nabla f(\boldsymbol x^i_{r,K}) + \nabla f(\boldsymbol x^i_{r,K}) \right\|^2 \nonumber \\
& \quad + \left(1+\frac 1 {\epsilon_2}\right) \rho^2 S_{r,K} \nonumber \\
& \overset{(b)}{\leq} \left( 1+\frac 1 {\epsilon_2} \right)\rho^2 S_{r,K} + \frac{2(1+\epsilon_2)}{m} \rho^2\eta^2 \sum_{i=1}^m  \|\nabla f(\boldsymbol x^i_{r,K})\|^2  \nonumber \\
& \; + \text{ \small $\left\| \nabla f_i(\boldsymbol x^i_{r,K})  - \nabla f(\boldsymbol x^i_{r,K}) - \nabla f_i( \pi(\boldsymbol x^i_{r,K}) ) + \nabla f( \pi(\boldsymbol x^i_{r,K})  ) \right\|^2$ }  \nonumber \\
& \; {\leq} \frac{2(1+\epsilon_2)}{m} \rho^2 \eta^2 \sum_{i=1}^m  \delta^2 \|\boldsymbol x^i_{r,K}  - \pi(\boldsymbol x^i_{r,K}) \|^2   + \|\nabla f(\boldsymbol x^i_{r,K})\|^2  \nonumber \\ 
& \quad  +  \left(1+\frac 1 {\epsilon_2} \right)\rho^2 S_{r,K} \nonumber \\
& \overset{(c)}{\leq} \left(1+\frac 1 {\epsilon_2}\right)\rho^2 S_{r,K} + 2 (1+\epsilon_2)\left(1+\frac {\delta^2}{\mu^2}\right) \rho^2\eta^2 \cdot \nonumber \\
& \quad \frac{\sum_{i=1}^m \|\nabla f(\boldsymbol x^i_{r,K})\|^2 } m \nonumber \\
& \overset{}{\leq} \rho^2 \left(1+\frac 1 {\epsilon_2}\right)^{K+1} S_r + 2\rho^2\left(1+\epsilon_2 \right)\left(1+\frac{\delta^2}{\mu^2}\right)\eta^2 \cdot \nonumber \\
& \quad \sum_{k=0}^K \left(1+\frac 1 {\epsilon_2}\right)^{K-k}  \frac{\sum_{i=1}^m {\|\nabla f(\boldsymbol x^i_{r,k})\|^2 }} m  \nonumber \\
& \overset{(d)}{\leq} \Big( \frac{1+\rho} 2 \Big) \cdot S_r + \frac{64\rho^2K^2(\delta^2+\mu^2)\eta^2}{(1-\rho)\mu^2} \cdot G_r,
\end{align}
where  (a) is due to (i) of Lemma \ref{trieq} and (b) is due to  Assumption \ref{overpara} that the local loss functions and the average loss have the same zero gradients at optimal solutions. (c) is due to Lemma \ref{PL-EB-QG}, which states that PL implies the EB condition.  The last inequality  (d) is by setting $\epsilon_2 = \frac 1 {\ln \left(1+\frac{1-\rho}{2} \right)} (K+1)$, which gives $\left(  1+\frac{1}{\epsilon_2} \right)^{K+1} = \left(1+\frac{\ln \left (1+\frac{1-\rho}{2} \right)}{K+1} \right)^{K+1} \leq 1+\frac{1-\rho}{2}$ and $\rho\left(1+\frac{1-\rho}{2} \right) \leq 1- \frac{1-\rho}{2}$. Finally, $\left( 1+\frac{1-\rho}{2} \right)\left(1+\frac{K+1}{\ln(1+\frac{1-\rho}{2})}\right) \leq \frac{16K}{1-\rho}$, which is based on Jensen inequality that $\frac{x}{1+x} \leq \ln(1+x)$, for $x>-1$.

\subsection{Proof of Lemma \ref{lemma12} }\label{pf:lem-8}
\vspace{-6mm}
\begin{align}\label{f_step}
&F^i_{r,k+1} \leq f(\boldsymbol x_{r,k}^i - \eta \nabla f_i(\boldsymbol x_{r,k}^i)) -f^\star \nonumber \\
& \quad  \leq F^i_{r,k} - \eta \langle  \nabla f(\boldsymbol x_{r,k}^i), \nabla f_i(\boldsymbol x_{r,k}^i)\rangle  + \frac L 2 \eta^2 \|\nabla f_i(\boldsymbol x_{r,k}^i)\|^2 \nonumber \\
& \quad = F^i_{r,k} + \frac{\eta}{2} \|\nabla f(\boldsymbol x_{r,k}^i) -\nabla f_i(\boldsymbol x_{r,k}^i)\|^2  \nonumber \\
& \quad \quad - \frac \eta 2 \|\nabla f(\boldsymbol x_{r,k}^i)\|^2 - \frac \eta 2 \|\nabla f_i(\boldsymbol x_{r,k}^i)\|^2  + \frac L 2 \eta^2 \|\nabla f_i(\boldsymbol x^i_{r,k})\|^2 \nonumber \\
& \quad = F^i_{r,k} - \frac{\eta}{2} \|\nabla f(\boldsymbol x_{r,k}^i)\|^2 -\frac {\eta\left(  1-L\eta \right)} 2 \|\nabla f_i(\boldsymbol x_{r,k}^i)\|^2 + \frac \eta 2   \nonumber \\
& \quad \; \cdot \text{ \small $\|(\nabla f(\boldsymbol x_{r,k}^i) - \nabla f_i(\boldsymbol x_{r,k}^i))  - (\nabla f( \pi(\boldsymbol x^i_{r,k} ) ) - \nabla f_i( \pi(\boldsymbol x^i_{r,k} ))\|^2$ }   \nonumber \\
& \quad \overset{(a)}{\leq} F^i_{r,k} + \frac \eta 2 \delta^2 \|\boldsymbol x_{r,k}^i -{\pi(\boldsymbol x^i_{r,k}) }\|^2 - \frac \eta 2 \|\nabla f(\boldsymbol x_{r,k}^i)\|^2 \nonumber \\
& \quad \overset{(b)}{\leq}F^i_{r,k} + \left(\frac \eta 2 \frac{\delta^2}{\mu^2} - \frac{\eta}{2} \right)\|\nabla f(\boldsymbol x_{r,k}^i)\|^2 \nonumber \\
& \quad = F^i_{r,k} - 2\times \frac {\eta\zeta}  4 \|\nabla f(\boldsymbol x_{r,k}^i)\|^2  \nonumber \\
& \quad \overset{(c)}{\leq}  \left( 1- \frac{\eta \mu\zeta}{2}\right) F^i_{r,k}  - \frac {\eta\zeta} 4  \|\nabla f(\boldsymbol x_{r,k}^i)\|^2,  
\end{align}
where (a) is due to $\eta \leq \frac 1 L$ and   Assumption~\ref{rehessian}. (b) and (c) are due to the EB and PL condition, respectively. Then based on the $\beta$-weakly convex and $L$-smooth property of $f$, there is 
\begin{align}\label{dgddescent}
&F_{r+1} \overset{(a)}{\leq} \frac{\sum_{i=1}^m F^i_{r,K+1}}{m}  + \frac \beta {2m^2} \sum_{i=1}^{m-1}\sum_{j=i+1}^m \|\boldsymbol x^i_{r,K+1}  - \boldsymbol x^j_{r,K+1}\|^2  \nonumber \\
& \overset{(b)}{\leq} \frac{\sum_{i=1}^m F^i_{r,K+1} }{m} + 2\beta  S_r + \frac{32\beta K(\delta^2+\mu^2)\eta^2}{(1-\rho)\mu^2}  \nonumber \\
& \quad \cdot  \frac{\sum_{k=0}^K\sum_{i=1}^m \|\nabla f(\boldsymbol x^i_{r,k})\|^2}{m}  \nonumber \\
& \overset{(c)}{\leq}  \left( 1 - \frac{\eta\mu\zeta}{2}\right)^{K+1} \frac 1 m \sum_{i=1}^mf(\overline{\boldsymbol x}^r) - f^\star + \langle \nabla f(\overline{\boldsymbol x}^r), \boldsymbol x_r^i - \overline{\boldsymbol x}^r \rangle \nonumber \\
& \quad    +  \text{\small $\frac L 2\|\boldsymbol x_r^i - \overline{\boldsymbol x}^r\|^2 - \frac {\zeta\eta} 4 \left(1-\frac{\eta\mu\zeta}2\right)^K  \frac{\sum_{i=1}^m\sum_{k=0}^K\|\nabla f(\boldsymbol x_{r,k}^i)\|^2}{m} $}\nonumber \\
& \quad  + 2\beta  S_r + \frac{32 \beta K(\delta^2+\mu^2)\eta^2}{(1-\rho)\mu^2} \cdot   \frac{\sum_{k=0}^K\sum_{i=1}^m \|\nabla f(\boldsymbol x^i_{r,k})\|^2}{m}  \nonumber \\
& \overset{(d)}{\leq} \left(1-\frac{\mu\zeta}{2}\right)^{{}{K+1}}  F_r  - \left(\frac{ \eta\zeta}{4} - \frac{\mu K\eta^2\zeta^2} 8 - \frac{32K\beta(\delta^2+\mu^2)\eta^2}{(1-\rho)\mu^2} \right) \nonumber \\
& \quad \cdot \frac{\sum_{k=0}^K\sum_{i=1}^m\|\nabla f(\boldsymbol x^i_{r,k})\|^2}{m} + \left( \frac{ L}{2} + 2\beta \right) S_r \nonumber \\
& \overset{(e)}{\leq}  \left(1- \frac{\eta\mu\zeta}{2}\right)^{{}{K+1}} F_r - \frac {\zeta K\eta} {8}    \frac{\sum_{i=1}^m\sum_{k=0}^K{\|\nabla f({}{\boldsymbol x^i_{r,k}})\|^2}}{m(K+1)} \nonumber \\
& \quad + 2 (L+\beta)  S_r \nonumber \\
& \overset{(f)}{\leq} \left( 1 -\frac{{{}{(K+1)}}\eta\mu\zeta}{4}\right)   F_r  - \frac {\zeta K\eta} {8}  \frac{\sum_{i=1}^m\sum_{k=0}^K{\|\nabla f(\boldsymbol x_{r,k}^i)\|^2}}{m(K+1)} \nonumber \\
& \quad + 2(L+\beta)S_r,
\end{align}
where (a) is due to  Lemma \ref{weakly-con},  (b) is obtained by substituting into \eqref{dgd-con}, 
(c) is due to unrolling the recursion in \eqref{f_step} along the local updates, $L$-smooth of $f$ and
\begin{align}\label{total_des}
& \frac{\sum_{i=1}^m F^i_{r,K+1} }{m} \leq \left( 1-\frac{\eta\mu\zeta}{2}\right)^{K+1} \cdot \frac{\sum_{i=1}^m  F^i_r }{m}  - \frac{\zeta\eta}{4} \cdot \nonumber \\
& \quad \frac{\sum_{i=1}^m{\sum_{k=0}^K{\left(1-\frac{\eta\mu\zeta}{2}\right)^{K-k}\|\nabla f(\boldsymbol x_{r,k}^i)\|^2}}}{m}.
\end{align}
\noindent (d) is based on Bernoulli inequality that $\left(1-\frac{\eta\mu\zeta}{2}\right)^K \geq 1 - \frac{K\eta\mu\zeta}{2} $ and $\eta \leq \frac{1}{2K \mu\zeta}$, (e) is due to $\eta \leq \frac{1}{2K \mu\zeta}$ and $\eta \leq \frac{(1-\rho)(\mu^2 - \delta^2)}{512K\beta(\delta^2 + \mu^2)}$ such that $\frac{\mu K\zeta^2\eta^2}{8} \leq \frac{\zeta\eta}{16}$ and 
$ \frac{32K\beta(\delta^2+\mu^2)\eta^2}{(1-\rho)\mu^2} \leq \frac{\zeta\eta}{16}$. (f) is based on Bernoulli inequality that $\left(1-\frac{\eta\mu \zeta}{2}\right)^K = \frac{1}{\left(1-\frac{\eta\mu \zeta}{2}\right)^{-K}} \leq \frac{1}{1+\frac{K\eta\mu\zeta}{2}} \leq 1-\frac{K\eta\mu\zeta}{4}$.

\subsection{Proof of Lemma \ref{lemma13} }\label{pf:lem-9}
Based on the iteration in local DGD, we have the property that $\forall k = 0, 1, \cdots, K$
\vspace{-2mm}
\begin{align}
& \overline{\boldsymbol x}_{r,k+1} = \overline{\boldsymbol x}_{r,k} - \frac{\eta}{m}\sum_{i=1}^m{\nabla f_i(\boldsymbol x_{r,k}^i)}\label{aviter} \\
& \boldsymbol X^{\perp}_{r,k+1} = \left(\boldsymbol X^{\perp}_{r,k} - \eta \nabla \boldsymbol F(\boldsymbol X_{r,k}) \right) \left( \boldsymbol I_m  - \boldsymbol J \right),\label{coniter} \\
& \boldsymbol X_{r+1,0}^\perp = \boldsymbol X_{r,K+1} \boldsymbol W\left( \boldsymbol I_m - \boldsymbol J \right) = \boldsymbol X_{r,K+1}^\perp \left( \boldsymbol W - \boldsymbol J \right) .\label{conseneqr}
\end{align}
According to (\ref{aviter}) and the explicit formula of gradient of quadratic function, the distance to optimum has the formula 
\begin{align}
& \overline {\boldsymbol x}_{r,k+1} - \boldsymbol x^\star = \overline {\boldsymbol x}_{r,k} - \boldsymbol x^\star - \eta \frac 1 m \sum_{i=1}^m{\left(\boldsymbol c_i \boldsymbol x^i_{r,k} - \frac 1 n \boldsymbol A_i^T \boldsymbol b_i\right)} \nonumber \\
& \quad = \overline {\boldsymbol x}_{r,k} - \boldsymbol x^\star - \frac \eta m \sum_{i=1}^m{\boldsymbol c_i \boldsymbol x^i_{r,k} } + \frac \eta m\sum_{i=1}^m{\frac{\boldsymbol A_i^T \boldsymbol b_i}{n}} \nonumber  \\
& \quad = \overline {\boldsymbol x}_{r,k} -  \boldsymbol x^\star  - \frac \eta m \sum_{i=1}^m \left(\boldsymbol c_i - \boldsymbol c \right)\left(\boldsymbol x^i_{r,k} - \overline {\boldsymbol x}_{r,k} \right) \nonumber \\
& \quad \quad - \frac \eta m\sum_{i=1}^m{\boldsymbol c \left(\boldsymbol x^i_{r,k} - \overline{\boldsymbol x}_{r,k} \right)} - \frac \eta m \sum_{i=1}^m\boldsymbol c_i \overline {\boldsymbol x}_{r,k}  + \eta \frac{\boldsymbol A^T \boldsymbol b}{N} \nonumber \\
& \quad = \overline {\boldsymbol x}_{r,k} - \boldsymbol x^\star - \frac \eta m \sum_{i=1}^m (\boldsymbol c_i - \boldsymbol c)(\boldsymbol x^i_{r,k} - \overline {\boldsymbol x}_{r,k} )  \nonumber \\
& \quad \quad - \eta \left(\boldsymbol c \overline{\boldsymbol x}_{r,k} - \frac{\boldsymbol A^T \boldsymbol b}{N} \right) \nonumber \\
& \quad \overset{(a)}{=} \overline {\boldsymbol x}_{r,k} -  \boldsymbol x^\star - \frac \eta m \sum_{i=1}^m(\boldsymbol c_i - \boldsymbol c)(\boldsymbol x^i_{r,k} - \overline {\boldsymbol x}_{r,k} ) \nonumber \\
& \quad \quad - \eta \left(\boldsymbol c \overline{\boldsymbol x}_{r,k} - \frac{\boldsymbol A^T \boldsymbol A \boldsymbol x^\star}{N} \right) \nonumber \\
& \quad = (\boldsymbol I - \eta \boldsymbol c)(\overline{\boldsymbol x}_{r,k} - \boldsymbol x^\star) + \frac \eta m \sum_{i=1}^m{(\boldsymbol c - \boldsymbol c_i)(\boldsymbol x^i_{r,k} - \overline{\boldsymbol x}_{r,k})} \nonumber \\
& e_{r,k+1} {\leq} (1-\eta\mu)^2(1+\epsilon_1) e_{r,k} + \eta^2 \delta^2 \left( 1+\frac{1}{\epsilon_1} \right) S_{r,k},
\end{align}

\noindent where $\boldsymbol c_i \triangleq  \frac{\boldsymbol A_i^T\boldsymbol A_i}{n} $ and   $\boldsymbol c \triangleq  \frac{1}{m} \sum_{i=1}^m {\boldsymbol c_i}$. (a) is due to the interpolation over-parameterization property. The last inequality is due to $\eta \leq \frac 1 L$, (i) of Lemma \ref{trieq} and Assumption \ref{hessian}.

\subsection{Proof of Lemma \ref{lemma14} }\label{pf:lem-10}
\vspace{-6mm}
\begin{align}
\boldsymbol x^i_{r,K} - \boldsymbol x^j_{r,K } &= \boldsymbol x^i_{r,K-1} - \boldsymbol x^j_{r,K-1}  \nonumber \\
& \quad - \eta \Big(\nabla f_i(\boldsymbol x^i_{r,K-1})- \nabla f_j(\boldsymbol x^j_{r,K-1})\Big) \nonumber \\
& = \boldsymbol x^i_{r,K-1} - \boldsymbol x^j_{r,K-1}  \nonumber \\
& \quad - \eta \left( \nabla f(\boldsymbol x^i_{r,K-1}) - \nabla f(\boldsymbol x^j_{r,K-1}) \right) \nonumber \\
& \quad + \eta \left( \nabla f(\boldsymbol x^i_{r,K-1}) - \nabla f_i (\boldsymbol x^i_{r,K-1}) \right) \nonumber \\
& \quad + \eta \left( \nabla f_j(\boldsymbol x^j_{r,K-1})  - \nabla f(\boldsymbol x^j_{r,K-1})\right).
\end{align}
Then there is 
\begin{align}\label{scvfold}
\vspace{1mm}
& \left\|\boldsymbol x^i_{r,K} - \boldsymbol x^j_{r,K } \right\|^2  \nonumber \\
& \; \leq  \text{\small $\left( 1+\frac{1}{\epsilon_2}\right) \left\| \boldsymbol x^i_{r,K-1} - \boldsymbol x^j_{r,K-1} - \eta \left( \nabla f(\boldsymbol x^i_{r,K-1})  - \nabla f(\boldsymbol x^j_{r,K-1}) \right) \right\|^2 $} \nonumber \\
& \quad + \left(1+\epsilon_2\right)\eta^2  \cdot \left\| \nabla f(\boldsymbol x^i_{r,K-1}) - \nabla f_i(\boldsymbol x^i_{r,K-1})\right\|^2  \nonumber \\
& \quad  + \left(1+\epsilon_2\right)\eta^2 \left\|\nabla f_j(\boldsymbol x^j_{r,K-1}) - \nabla f(\boldsymbol x^j_{r,K-1}) \right\|^2 - 2\left(1+\epsilon_2 \right)\eta^2  \nonumber \\
& \quad  \langle \nabla f(\boldsymbol x^i_{r,K-1}) - \nabla f_i(\boldsymbol x^i_{r,K-1}), \nabla f(\boldsymbol x^j_{r,K-1}) - \nabla f_j(\boldsymbol x^j_{r,K-1})\rangle  \nonumber \\
& \; \overset{(a)}{\leq } \left( 1+\frac{1}{\epsilon_2} \right) \left[ -  \frac{2\eta}{L+\mu} \left\| \nabla f(\boldsymbol x^i_{r,K-1}) - \nabla f(\boldsymbol x^j_{r,K-1})\right\|^2 \right.\nonumber \\
& \quad \left. + \left\| \boldsymbol x^i_{r,K-1} - \boldsymbol x^j_{r,K-1}\right\|^2 + \eta^2 \left\| \nabla f(\boldsymbol x^i_{r,K-1})  - \nabla f(\boldsymbol x^j_{r,K-1})\right\|^2 \right.  \nonumber \\
& \quad  \left. -2\eta \frac{L\mu}{L+\mu} \left\| \boldsymbol x^i_{r,K-1} - \boldsymbol x^j_{r,K-1} \right\|^2 \right] - 2\left(1+\epsilon_2 \right)\eta^2 \nonumber \\
& \quad  \langle \nabla f(\boldsymbol x^i_{r,K-1}) - \nabla f_i(\boldsymbol x^i_{r,K-1}), \nabla f(\boldsymbol x^j_{r,K-1}) - \nabla f_j(\boldsymbol x^j_{r,K-1})\rangle \nonumber \\
& \quad + \left(1+\epsilon_2 \right)\eta^2 \cdot \left\| (\boldsymbol c -\boldsymbol c_i)(\boldsymbol x^i_{r,K-1} - \boldsymbol x^\star) \right\|^2 + \left(1+\epsilon_2 \right)\eta^2 \nonumber \\
& \quad \cdot  \left\| (\boldsymbol c -\boldsymbol c_j)  (\boldsymbol x^j_{r,K-1} - \boldsymbol x^\star)  \right\|^2  \nonumber \\  
&\overset{(b)}{\leq} \left( 1 + \frac{1}{\epsilon_2}\right)\left(1-2\eta \frac{L\mu}{L+\mu}\right)\cdot \left\| \boldsymbol x^i_{r,K-1} -  \boldsymbol x^j_{r,K-1} \right\|^2 \nonumber \\
& \;  + 2\left(1+\epsilon_2 \right)\delta^2\eta^2 \left( \left\| \boldsymbol x^i_{r,K-1} - \overline{\boldsymbol x}_{r,K-1}\right\|^2 + \left\| \overline{\boldsymbol x}_{r,K-1} - \boldsymbol x^\star\right\|^2 \right.  \nonumber \\ 
& \;  \left.  +  \left\| \boldsymbol x^j_{r,K-1} - \overline{\boldsymbol x}_{r,K-1}\right\|^2  + \left\| \overline{\boldsymbol x}_{r,K-1} - \boldsymbol x^\star\right\|^2 \right) - 2\left(1+\epsilon_2 \right)\eta^2 \nonumber \\
& \;  \cdot  \text{\small $\langle \nabla f(\boldsymbol x^i_{r,K-1}) - \nabla f_i(\boldsymbol x^i_{r,K-1}), \nabla f(\boldsymbol x^j_{r,K-1}) - \nabla f_j(\boldsymbol x^j_{r,K-1})\rangle $} \nonumber \\
&\leq  \left( 1 + \frac{1}{\epsilon_2}\right)\left(1-\eta \mu\right)\cdot \left\| \boldsymbol x^i_{r,K-1} -  \boldsymbol x^j_{r,K-1} \right\|^2  + 2\left(1+\epsilon_2 \right)\delta^2  \nonumber \\
& \; \eta^2 \left( \left\| \boldsymbol x^i_{r,K-1} - \overline{\boldsymbol x}_{r,K-1}\right\|^2 + \left\| \overline{\boldsymbol x}_{r,K-1} - \boldsymbol x^\star\right\|^2 \right. \nonumber \\
& \;  \left. +  \left\| \boldsymbol x^j_{r,K-1} - \overline{\boldsymbol x}_{r,K-1}\right\|^2 + \left\| \overline{\boldsymbol x}_{r,K-1} - \boldsymbol x^\star\right\|^2 \right) - 2\left(1+\epsilon_2 \right)\eta^2 \nonumber \\
& \;  \cdot  \text{\small $\langle \nabla f(\boldsymbol x^i_{r,K-1}) - \nabla f_i(\boldsymbol x^i_{r,K-1}), \nabla f(\boldsymbol x^j_{r,K-1}) - \nabla f_j(\boldsymbol x^j_{r,K-1})\rangle$},
\end{align}
{where (a) is due to the co-coercive property of restricted $\mu$-strongly convex and $L$-smooth of $f$ with row space of $\boldsymbol A$ that $\langle \nabla f(\boldsymbol x) - \nabla f(\boldsymbol y), \boldsymbol x - \boldsymbol y \rangle \geq \frac{L\mu}{L+\mu}\|\boldsymbol x - \boldsymbol y\|^2 + \frac{1}{L+\mu} \left\| \nabla f(\boldsymbol x) - \nabla f(\boldsymbol y) \right\|^2$ for all $\boldsymbol x, \boldsymbol y$ lie in row space of $\boldsymbol A$.} (b) is due to $\eta \leq \frac 1 L$. 

Combining with the  consensus error equality
$\frac{1}{m} \sum_{i=1}^m\left\|\boldsymbol x^i_r - \overline{\boldsymbol x}_r \right\|^2  = \frac {\sum_{i=1}^{m}\sum_{j=1}^m{\left\|\boldsymbol x^i_r - \boldsymbol x^j_r \right\|^2}} {2m^2}$ gives
\begin{align}\label{r,K}
& S_{r+1} \leq \rho^2 S_{r,K+1}  = \frac{1}{m} \sum_{i=1}^m{\left\|\boldsymbol x^i_{r,K+1} - \overline{\boldsymbol x}_{r,K+1} \right\|^2 } \nonumber \\
& \quad \leq  \rho^2\left( 1 + \frac{1}{\epsilon_2} \right) \left( 1-\mu\eta \right)  \frac{\sum_{i=1}^m\sum_{j=1}^m{\left\| \boldsymbol x^i_{r,K} - \boldsymbol x^j_{r,K}\right\|^2}}{2m^2}  \nonumber \\
& \quad \quad + 2\rho^2\left( 1+\epsilon_2 \right)\delta^2\eta^2 \left( S_{r,K} +  e_{r,K} \right)\nonumber \\
& \quad \quad  - 2\left(1+\epsilon_2 \right)\eta^2 {\left\|  \sum_{i=1}^m{\nabla f(\boldsymbol x^i_{r,K}) - \nabla f_i(\boldsymbol x^i_{r,K})} \right\|^2} \nonumber \\
& \quad \leq \rho^2\left(1+\frac 1 {\epsilon_2} \right)\left(1-\eta \mu \right)\cdot S_{r,K} + 2\rho^2\left(1+\epsilon_2\right)\delta^2\eta^2\cdot \nonumber \\
& \quad \quad \left( S_{r,K} + e_{r,K} \right) \nonumber \\
& \quad = \rho^2\left[\left(1+\frac 1 {\epsilon_2} \right) \left(1-\mu\eta \right) + 2\left(1+\epsilon_2\right)\delta^2\eta^2   \right]  S_{r,K}\nonumber \\
& \quad \quad + 2\left(1+\epsilon_2\right)\rho^2\delta^2\eta^2 \cdot e_{r,K} \nonumber \\
& \quad \leq {\rho^2 a^{K+1}} S_r  + \rho^2 b\sum_{i=0}^{K}a^{K-i} e_{r,i},
\end{align}
where the term $\left\| \sum_{i=1}^m { \left(  \nabla f\left(\boldsymbol x^i_{r,K} \right) - \nabla f_i\left( \boldsymbol x^i_{r,K} \right) \right)  } \right\|^2$ is due to the summation of inner product term $\langle \nabla f(\boldsymbol x^i_{r,K-1}) - \nabla f_i(\boldsymbol x^i_{r,K-1}) , \nabla f(\boldsymbol x^j_{r,K-1}) - \nabla f_j(\boldsymbol x^j_{r,K-1})\rangle$ in \eqref{scvfold}.  

For simplicity, we define
$a \triangleq \left(1+\frac 1 {\epsilon_2} \right)\left(1 - \eta \mu \right) + b, b \triangleq 2\left(1+\epsilon_2\right)\delta^2\eta^2$. 
Then choosing $\eta$ to satisfy the following condition $2(1+\epsilon_2) \eta^2\delta^2 \leq \left(1+\frac{1}{\epsilon_2}\right)\frac{\eta\mu}{2} \Longrightarrow \eta \leq \frac {\mu}{4\epsilon_2 \delta^2}, \quad a \leq \left(1 +\frac{1}{\epsilon_2}\right)\left(1 - \frac{\eta\mu}{2}\right)$ leads to
\begin{align}\label{recon}
& S_{r+1} \leq \rho^2 \left[\left(1 +\frac{1}{\epsilon_2}\right)\left(1 - \frac{\eta\mu}{2}\right) \right]^{K+1} S_r \nonumber \\
& \quad \quad + 2(1+\epsilon_2)\left(1+\frac 1 {\epsilon_2} \right)^K\delta^2 \sum_{i=0}^{K}{\eta^2 e_{r,i} } \nonumber \\
& \quad \leq \rho \left(1 + \frac{1}{\epsilon_2}\right)^{K+1}  S_r + 2\delta^2(1+\epsilon_2)\left(1 +\frac{1}{\epsilon_2}\right)^K  \sum_{i=0}^{K}\eta^2 e_{r,i} \nonumber \\
 & \quad \leq \left( \frac{1+\rho}{2}\right) \cdot S_r + \frac{32K\delta^2}{1-\rho} \sum_{i=0}^{K}{\eta^2 e_{r,i}},
\end{align}
\noindent where we set $\epsilon_2 = \frac{1}{\ln\left(1+\frac{1-\rho}{2}\right)}(K+1)$, then $\left(1+\frac{1}{\epsilon_2} \right)^{K+1} = \left(1+\frac{\ln\left(1+\frac{1-\rho}{2}\right)}{K+1}\right)^{K+1} \leq {1+\frac{1-\rho}{2}}$. The last inequality is due to $1+\frac{K+1}{\ln \left(1+\frac{1-\rho}{2} \right)} \leq \frac{8K}{1-\rho}$ based on Jensen inequality. {Based on \eqref{r,K}, there is 
\begin{align}
S_{r,k} &\leq a^k S_r + b \sum_{i=0}^{k-1} a^{k-1-i} e_{r,i} \nonumber \\
 & \overset{(a)}{\leq} \left( 1 + \frac 1 {\epsilon_2}\right)^K \cdot S_r  + 2(1+\epsilon_2) \left(1 + \frac 1 {\epsilon_2}\right)^K \delta^2 \eta^2 \sum_{i=0}^{k-1} e_{r,i} \nonumber \\
 & \overset{(b)}{\leq} \left( 1+\frac{1-\rho}{2} \right) \cdot S_r + 2(1 + \epsilon_2) \left( 1 + \frac{1-\rho} 2 \right) \delta^2 \eta^2 \sum_{i=0}^{k-1} e_{r,i} \nonumber \\
 & \overset{(c)}{\leq} 2 S_r + 4 \delta^2 \eta^2 (1+\epsilon_2) \sum_{i=0}^{k-1} e_{r,i} \nonumber \\
 & \overset{(d)}{\leq}  2 S_r + \frac{32K\delta^2\eta^2}{1-\rho} \sum_{i=0}^{k-1} e_{r,i},
\end{align}
where (a) follows from the definitions of $a$ and $b$, (c) is due to $1+\frac{1-\rho}{2} <2$ and (b), (d) is due to the setting of $\epsilon_2$. Summing over $k$ would obtain the first upper bound in the lemma. }

\subsection{Additional Definitions and Lemmas}
We first recall the error bound condition of $L$-smooth function $f: \mathbb{R}^d \to \mathbb{R}$. Let set $\boldsymbol{\mathcal X}^\star$ denote the minimizers of $f$ with minimum value $f^\star$. Given $\bm{x}$, let $\pi(\boldsymbol x)$ be the Euclidean projection of $\bm{x}$ onto $\boldsymbol{\mathcal X}^\star$.
\begin{definition}
(Error Bound (EB)) The function $f$ satisfies error bound condition with parameter $\mu$ if 
$
 \| \nabla f(\boldsymbol x) \| \geq \mu\| \boldsymbol x - \pi(\boldsymbol x)\|  , \quad \forall \boldsymbol x \in \mathbb{R}^d. 
$

\end{definition}

The following definition of $\tau$-slow sequence given in \cite{stich2020error} is useful for unrolling recursions and deriving convergence rates.
\begin{definition}\label{def:slow-seq} 
The sequence $\{a_t\}_{t\geq 0}$ of positive values is $\tau$-slow decreasing for parameter $\tau>0$ if 
\begin{align*}
a_{t+1} \leq a_t,\quad \forall t \geq 0 \quad \text{and}, \quad a_{t+1}\left( 1+ \frac{1}{2\tau}\right) \geq a_t,\quad \forall t \geq 0.
\end{align*}
The sequence $\{a_t\}_{t\geq 0}$ is $\tau$-slow increasing if $\{a_t^{-1}\}_{t \geq 0}$ is $\tau$-slow decreasing.
\end{definition}

\begin{lemma}[\!\!\!\cite{karimi2016linear}]\label{PL-EB-QG}
{
If $f$ satisfies the PL condition with parameter $\mu$, then the error bound condition holds for $f$ with the same parameter $\mu$.
}
\end{lemma}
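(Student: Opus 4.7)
\medskip
\noindent\textbf{Proof proposal for Lemma \ref{PL-EB-QG}.}
The plan is to prove the implication PL $\Rightarrow$ EB by passing through the quadratic growth (QG) condition as an intermediate step, namely $f(\bm{x}) - f^\star \geq \frac{\mu}{2}\|\bm{x} - \pi(\bm{x})\|^2$. Once QG with constant $\mu$ is available, EB with constant $\mu$ falls out of PL in one line:
\begin{equation*}
\|\nabla f(\bm{x})\|^2 \;\geq\; 2\mu\bigl(f(\bm{x}) - f^\star\bigr) \;\geq\; 2\mu \cdot \tfrac{\mu}{2}\|\bm{x} - \pi(\bm{x})\|^2 \;=\; \mu^2 \|\bm{x} - \pi(\bm{x})\|^2.
\end{equation*}
So the content of the proof will be the implication PL $\Rightarrow$ QG with constant $\mu$.

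To establish this, I would use the standard gradient flow argument of Karimi--Nutini--Schmidt. Fix $\bm{x}\in\mathbb{R}^d$ and consider the flow $\dot{\bm{x}}(t) = -\nabla f(\bm{x}(t))$ with $\bm{x}(0) = \bm{x}$. Set $g(t) := f(\bm{x}(t)) - f^\star \geq 0$. Then $\dot{g}(t) = -\|\nabla f(\bm{x}(t))\|^2 \leq -2\mu\,g(t)$ by PL, so $g$ decreases to $0$ exponentially, and in particular $f(\bm{x}(t)) \downarrow f^\star$.

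The core estimate is obtained by differentiating $\sqrt{g(t)}$. Using PL in the form $\|\nabla f(\bm{x}(t))\|\geq \sqrt{2\mu\,g(t)}$, I get
\begin{equation*}
-\tfrac{d}{dt}\sqrt{g(t)} \;=\; \frac{\|\nabla f(\bm{x}(t))\|^2}{2\sqrt{g(t)}} \;\geq\; \sqrt{\tfrac{\mu}{2}}\,\|\nabla f(\bm{x}(t))\| \;=\; \sqrt{\tfrac{\mu}{2}}\,\|\dot{\bm{x}}(t)\|.
\end{equation*}
Integrating over $[0,\infty)$ and using $g(\infty)=0$ yields $\sqrt{f(\bm{x}) - f^\star} \geq \sqrt{\mu/2}\,\int_0^\infty \|\dot{\bm{x}}(t)\|\,dt$. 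The integral on the right is the arclength of the flow, which bounds the displacement $\|\bm{x} - \bm{x}(\infty)\|$, and since $\bm{x}(\infty)$ lies in $\mathcal{X}^\star$, this is at least $\|\bm{x} - \pi(\bm{x})\|$. Squaring gives QG with constant $\mu$, completing the argument.

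The main obstacle is making the gradient flow argument rigorous: one must argue that $\bm{x}(t)$ has finite arclength and converges to some $\bm{x}(\infty) \in \mathcal{X}^\star$. The finite arclength is actually a byproduct of the key estimate itself (the right-hand side of the integrated inequality is bounded by $\sqrt{(f(\bm{x}) - f^\star)\cdot 2/\mu}$), so convergence of the trajectory follows from Cauchy-ness. A technically cleaner alternative, if one prefers to avoid ODE machinery, is to replace the flow by proximal gradient iterates with diminishing step size and pass to the limit; this sidesteps existence/uniqueness of the flow but involves more bookkeeping. Either way, the structural three-step template (PL $\Rightarrow$ QG via a Lyapunov-style $\sqrt{g}$ estimate, then PL + QG $\Rightarrow$ EB) is what drives the result.
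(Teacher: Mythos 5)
Your proposal is correct and follows essentially the same route as the cited source: the paper does not prove Lemma~\ref{PL-EB-QG} itself but imports it from \cite{karimi2016linear}, where the argument is exactly your chain PL $\Rightarrow$ QG (via the gradient-flow estimate on $\sqrt{f(\bm{x}(t))-f^\star}$, using $L$-smoothness of $f$ from Assumption~\ref{sm} to guarantee the flow exists) followed by PL $+$ QG $\Rightarrow$ EB with the same constant $\mu$. The only technicality worth flagging is the division by $\sqrt{g(t)}$ when $g(t)=0$, which is handled by integrating only up to the first time the trajectory reaches the optimal value; otherwise the argument is complete.
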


\begin{lemma}[\!\!\cite{zlobec2004jensen}]\label{weakly-con}
  If $f:\mathbb{R}^d \to \mathbb{R}$ is  $\rho$-weakly convex, then $\forall \boldsymbol x_1, \cdots, \boldsymbol x_m \in \mathbb{R}^d$ and $a_i \geq 0 $, $i \in [m]$ such that $\sum_{i=1}^m{a_i} =1$, the following holds:
 \begin{align*}
f\left( \sum_{i=1}^m {a_i \boldsymbol x_i}\right) \leq \sum_{i=1}^m {a_if(\boldsymbol x_i)} + \frac{\rho}{2} \sum_{i=1}^m \sum_{j=1}^m {a_ia_j\|\boldsymbol x_i -\boldsymbol x_j\|^2}.
 \end{align*}
\end{lemma}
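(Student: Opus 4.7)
The plan is to reduce the inequality to ordinary (convex) Jensen's inequality via the standard convexification trick for weakly convex functions. Define $g(\boldsymbol x) := f(\boldsymbol x) + \frac{\rho}{2}\|\boldsymbol x\|^2$. By Assumption~\ref{wcv} (the definition of $\rho$-weak convexity used in this paper), $g$ is convex on $\mathbb R^d$. Since the weights $a_i \geq 0$ sum to one, classical Jensen's inequality applied to $g$ yields
\begin{equation*}
g\Bigl(\sum_{i=1}^m a_i \boldsymbol x_i\Bigr) \leq \sum_{i=1}^m a_i\, g(\boldsymbol x_i),
\end{equation*}
which, after substituting the definition of $g$ and rearranging, becomes
\begin{equation*}
f\Bigl(\sum_{i=1}^m a_i \boldsymbol x_i\Bigr) \leq \sum_{i=1}^m a_i f(\boldsymbol x_i) + \frac{\rho}{2}\Bigl(\sum_{i=1}^m a_i \|\boldsymbol x_i\|^2 - \Bigl\|\sum_{i=1}^m a_i \boldsymbol x_i\Bigr\|^2\Bigr).
\end{equation*}

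The second step is the purely algebraic identity
\begin{equation*}
\sum_{i=1}^m a_i \|\boldsymbol x_i\|^2 - \Bigl\|\sum_{i=1}^m a_i \boldsymbol x_i\Bigr\|^2 = \frac{1}{2}\sum_{i=1}^m\sum_{j=1}^m a_i a_j \|\boldsymbol x_i - \boldsymbol x_j\|^2,
\end{equation*}
which I would verify in one line by expanding $\|\boldsymbol x_i - \boldsymbol x_j\|^2 = \|\boldsymbol x_i\|^2 - 2\langle \boldsymbol x_i,\boldsymbol x_j\rangle + \|\boldsymbol x_j\|^2$, using symmetry in $(i,j)$ and the fact that $\sum_i a_i = 1$. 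Plugging this identity into the previous display produces the bound
\begin{equation*}
f\Bigl(\sum_i a_i \boldsymbol x_i\Bigr) \leq \sum_i a_i f(\boldsymbol x_i) + \frac{\rho}{4}\sum_{i,j} a_i a_j \|\boldsymbol x_i - \boldsymbol x_j\|^2,
\end{equation*}
which is already tighter than the claimed estimate; the stated inequality with coefficient $\rho/2$ then follows immediately by monotonicity.

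There is essentially no analytical obstacle here: the only subtlety is bookkeeping the factor of $2$ coming from the double sum versus the single sum representation of the weighted pairwise variance (this is why the bound I obtain has $\rho/4$ rather than $\rho/2$, and is consistent with how the lemma is invoked in \eqref{dgddescent}, where the sum is written as $\sum_{i<j}$ rather than $\sum_{i,j}$). No additional smoothness, differentiability, or coercivity assumption on $f$ is needed beyond the weak convexity hypothesis, and the proof is independent of all the decentralized-optimization machinery in the rest of the paper.
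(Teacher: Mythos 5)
Your proof is correct and complete. The paper itself does not prove this lemma --- it is imported by citation from \cite{zlobec2004jensen} --- so there is no in-paper argument to compare against; the convexification trick $g = f + \tfrac{\rho}{2}\|\cdot\|^2$ followed by ordinary Jensen and the weighted-variance identity is exactly the standard route, and your one-line verification of the identity
\begin{equation*}
\sum_{i=1}^m a_i \|\boldsymbol x_i\|^2 - \Bigl\|\sum_{i=1}^m a_i \boldsymbol x_i\Bigr\|^2 = \frac{1}{2}\sum_{i=1}^m\sum_{j=1}^m a_i a_j \|\boldsymbol x_i - \boldsymbol x_j\|^2
\end{equation*}
checks out. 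Your side observation is also accurate and worth keeping: the argument actually yields the sharper constant $\rho/4$ on the full double sum (equivalently $\rho/2$ on the sum over $i<j$), which is the form implicitly used when the lemma is invoked in \eqref{dgddescent} and in the proof of Lemma~\ref{lemma16}; the statement with $\rho/2$ on the full double sum is simply a weaker consequence. No smoothness or differentiability of $f$ is needed, as you say.
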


\begin{lemma}\label{linearcon}
(\!\!\cite{koloskova2020unified}) If non-negative sequences $\{r_t\}_{t\geq 0}, \{e_t\}_{t \geq 0}$ satisfy the following condition for some constants $a, b>0, c, A, B \geq 0$, 
\begin{align}\label{conver_rate_expre}
 \frac{\sum_{t=0}^T{bz_te_t}}{2z_T} &\leq \frac{1}{z_T} \sum_{t=0}^T { \left( \frac{(1-a\eta_t)z_t}{\eta_t}r_t - \frac{z_t}{\eta_t}r_{t+1} \right) } \nonumber \\
 & \quad + \frac{c}{Z_T} \sum_{t=0}^T {z_t\eta_t} + \frac{64BA}{Z_T} \sum_{t=0}^T {z_t\eta_t^2}.    
\end{align}
then there exists a constant step size $\eta_t = \eta \leq \frac 1 d$ such that for weights $z_t = \left(1-a\eta\right)^{-(t+1)}$ and $Z_T = \sum_{t=0}^T {z_t}$ it holds:   
\begin{align}
\frac{1}{2Z_T} &\sum_{t=0}^T {bz_te_t} + a r_{T+1} \nonumber \\
& \leq \Tilde{\mathcal O} \left( r_0 d \exp\left[  -\frac{a(T+1)}{d}\right]   + \frac{c}{aT} + \frac{AB}{a^2T^2}\right),
\end{align}
where $\Tilde{\mathcal O}$ hides polylogarithmic factors. 
\end{lemma}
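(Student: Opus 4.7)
The statement is a standard step-size tuning lemma of the type developed in \cite{koloskova2020unified}. The proof has two stages: first, exploit the specific choice of weights $z_t = (1 - a\eta)^{-(t+1)}$ to collapse the telescoping factor on the right-hand side; second, optimize the constant step size $\eta \in (0, 1/d]$ to balance an exponential-decay term against the residual noise $c\eta + 64AB\eta^2$.

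\textbf{Step 1: telescoping.} With constant $\eta_t = \eta$ the weights satisfy $(1-a\eta) z_t = z_{t-1}$ under the convention $z_{-1} = 1$. Hence
\[
\sum_{t=0}^T \Bigl( \tfrac{(1 - a\eta)z_t}{\eta} r_t - \tfrac{z_t}{\eta} r_{t+1} \Bigr) \;=\; \tfrac{1}{\eta}\sum_{t=0}^T \bigl( z_{t-1} r_t - z_t r_{t+1} \bigr) \;=\; \tfrac{r_0 - z_T r_{T+1}}{\eta},
\]
while $\sum_t z_t \eta = \eta Z_T$ and $\sum_t z_t \eta^2 = \eta^2 Z_T$. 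Substituting into the hypothesis (and normalizing by $Z_T$ throughout, which matches the intended form of the lemma) yields the intermediate inequality
\[
\tfrac{1}{2 Z_T} \sum_{t=0}^T b z_t e_t \;+\; \tfrac{z_T\, r_{T+1}}{\eta\, Z_T} \;\leq\; \tfrac{r_0}{\eta\, Z_T} \;+\; c\eta \;+\; 64\, AB\, \eta^2.
\]

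\textbf{Step 2: geometric-series estimates.} A direct summation gives $Z_T = \bigl((1 - a\eta)^{-(T+1)} - 1\bigr)/(a\eta)$. Two consequences suffice: whenever $(1-a\eta)^{-(T+1)} \geq 2$ one has $\tfrac{1}{\eta Z_T} \leq 2a (1 - a\eta)^{T+1} \leq 2a\,\exp(-a\eta(T+1))$, and unconditionally $\tfrac{z_T}{\eta Z_T} \geq a$. The latter absorbs the $r_{T+1}$ term into the desired $a\, r_{T+1}$ on the left, producing
\[
\tfrac{1}{2 Z_T} \sum_{t=0}^T b z_t e_t \;+\; a\, r_{T+1} \;\leq\; 2 a\, r_0 \exp\bigl(-a\eta(T+1)\bigr) \;+\; c\eta \;+\; 64\, AB\, \eta^2.
\]

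\textbf{Step 3: tuning $\eta$.} This is the main work. Following the three-case template of the proof of Lemma~15 in \cite{koloskova2020unified}, I distinguish the regime where the hard cap $\eta = 1/d$ is binding, the regime where balancing the exponential against $c\eta$ forces $\eta \asymp \log\bigl(\max(1,\, a^2 r_0 T / c)\bigr)/(aT)$, and the regime where balancing against $64 AB \eta^2$ forces $\eta \asymp \bigl(\log(\cdot)\bigr)^{1/3}/(aT)^{2/3}$. Taking $\eta$ to be the minimum of these three candidates and using $\exp(-a(T+1)/d)\cdot d/a$ to bound the exponential when $\eta = 1/d$, the three contributions combine into the claimed $\tilde{\mathcal{O}}\bigl(r_0 d\, e^{-a(T+1)/d} + c/(aT) + AB/(a^2 T^2)\bigr)$; the polylog factors hidden in $\tilde{\mathcal{O}}$ arise from the $\log(\cdot)$ inside the step-size choices.

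\textbf{Main obstacle.} Steps 1 and 2 are routine identities and elementary geometric-series bounds. The technical heart is Step~3: verifying that the three-regime step-size choice yields exactly the stated rate, and in particular that the factor $d$ appears inside the exponent (it comes from the constraint $\eta \leq 1/d$, not from the noise balance) while the $AB$ term contributes $1/T^2$ rather than $1/T$. I would mirror the case analysis of \cite{koloskova2020unified} verbatim, since the form of our bound after Step~2 matches theirs once $r_0$ is identified with their initial suboptimality proxy.
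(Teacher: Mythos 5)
The paper does not prove this lemma at all: it is imported verbatim by citation to \cite{koloskova2020unified} (their Lemma~15), so there is no in-paper argument to compare against. Your reconstruction is correct and is essentially the canonical proof of that cited lemma --- telescoping under the geometric weights $z_t=(1-a\eta)^{-(t+1)}$, the closed form $Z_T=((1-a\eta)^{-(T+1)}-1)/(a\eta)$ with the bounds $z_T/(\eta Z_T)\geq a$ and $1/(\eta Z_T)\lesssim a\,e^{-a\eta(T+1)}$, followed by the standard case analysis on $\eta\leq 1/d$ (your reading of the $z_T$ vs.\ $Z_T$ normalization as a typo, to be taken as $Z_T$ throughout, matches the source).
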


\begin{lemma}\label{trieq}
 Let $\{\boldsymbol v_1, \cdots, \boldsymbol v_\tau\}$ be $\tau$ vectors in $\mathbb R^d$. Then the following properties hold\\
$(i) \left\|  \boldsymbol v_i+\boldsymbol v_j\right\|^2 \leq (1+a)\left\|\boldsymbol v_i\right\|^2 + (1+\frac 1 a)\left\| \boldsymbol v_j \right\|^2, \forall a>0, \\
(ii) \left\| \sum_{i=1}^\tau {\boldsymbol v_i} \right\|^2 \leq \tau \sum_{i=1}^\tau {\left\| \boldsymbol v_i\right\|^2}.$
\end{lemma}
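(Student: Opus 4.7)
The plan is to prove both inequalities from first principles using standard tools (the polarization identity, Young's inequality, the triangle inequality, and Cauchy--Schwarz). Neither part has any real obstacle; both are classical and appear as routine steps throughout the rest of the paper, so the proposal is only a sanity-check derivation.

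For (i), I would expand the squared Euclidean norm via the inner product:
\begin{equation*}
\|\boldsymbol v_i + \boldsymbol v_j\|^2 = \|\boldsymbol v_i\|^2 + 2\langle \boldsymbol v_i, \boldsymbol v_j\rangle + \|\boldsymbol v_j\|^2.
\end{equation*}
Then I would apply Young's inequality in the form $2\langle u, w\rangle \leq a\|u\|^2 + \tfrac{1}{a}\|w\|^2$, valid for any $a>0$, which itself follows by expanding $\|\sqrt{a}\,u - \tfrac{1}{\sqrt{a}}\,w\|^2 \geq 0$. Substituting with $u=\boldsymbol v_i$ and $w=\boldsymbol v_j$ gives $2\langle \boldsymbol v_i,\boldsymbol v_j\rangle \leq a\|\boldsymbol v_i\|^2 + \tfrac{1}{a}\|\boldsymbol v_j\|^2$, and plugging this into the expansion yields the claimed bound $(1+a)\|\boldsymbol v_i\|^2 + (1+\tfrac{1}{a})\|\boldsymbol v_j\|^2$.

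For (ii), I would first use the triangle inequality $\|\sum_{i=1}^\tau \boldsymbol v_i\| \leq \sum_{i=1}^\tau \|\boldsymbol v_i\|$, square both sides, and then apply the Cauchy--Schwarz inequality to the inner product of the all-ones vector $\mathbf{1}_\tau$ with the vector of norms $(\|\boldsymbol v_1\|,\ldots,\|\boldsymbol v_\tau\|)$:
\begin{equation*}
\Bigl(\sum_{i=1}^\tau \|\boldsymbol v_i\|\Bigr)^2 = \langle \mathbf{1}_\tau, (\|\boldsymbol v_1\|,\ldots,\|\boldsymbol v_\tau\|)\rangle^2 \leq \|\mathbf{1}_\tau\|^2 \sum_{i=1}^\tau \|\boldsymbol v_i\|^2 = \tau \sum_{i=1}^\tau \|\boldsymbol v_i\|^2.
\end{equation*}
Chaining these two inequalities gives exactly (ii). Alternatively, one can derive (ii) inductively from (i) by grouping $\sum_{i=1}^\tau \boldsymbol v_i = \boldsymbol v_1 + \sum_{i=2}^\tau \boldsymbol v_i$ and choosing $a$ appropriately at each step, but the Cauchy--Schwarz route is the cleanest. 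Since both arguments are one-line consequences of standard inequalities, no step is expected to be difficult.
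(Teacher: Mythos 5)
Your proof is correct: part (i) is the standard expansion of the squared norm followed by Young's inequality, and part (ii) follows from the triangle inequality and Cauchy--Schwarz exactly as you describe. The paper itself states this lemma as a standard auxiliary fact without providing a proof, so your derivation is the canonical one and there is nothing to compare against.
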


\end{document}